\newcommand{\ps}[1]{\langle #1 \rangle}
\newcommand*\colourcheck[1]{%
  \expandafter\newcommand\csname gcmark\endcsname{\textcolor{#1}{\ding{52}}}%
}
\newcommand*\colourxmark[1]{%
  \expandafter\newcommand\csname rxmark\endcsname{\textcolor{#1}{\ding{55}}}%
}
\definecolor{mygreen}{HTML}{02862a}
\definecolor{myred}{HTML}{9a0000}
\newcommand{\algname}[1]{{\sf \footnotesize #1}\xspace}
\icmltitlerunning{Improved Sample Complexity for Fisher-non-degenerate Policies}
\begin{document}

%\maketitle

\twocolumn[

%\icmltitle{Stochastic Policy Gradient Methods: \\ Improved Global Convergence and Sample Complexity}
%\icmltitle{Stochastic Policy Gradient Methods: \\ Improved Global Convergence and Sample Complexity for Fisher-non-degenerate Parameterized Policies}
\icmltitle{Stochastic Policy Gradient Methods: \\ Improved Sample Complexity for Fisher-non-degenerate Policies}

\begin{icmlauthorlist}
\icmlauthor{Ilyas Fatkhullin}{ETH_CS}
\icmlauthor{Anas Barakat}{ETH_CS}
\icmlauthor{Anastasia Kireeva}{ETH_math}
\icmlauthor{Niao He}{ETH_CS}
\end{icmlauthorlist}

\icmlaffiliation{ETH_CS}{Department of Computer Science, ETH Zurich, Switzerland}
\icmlaffiliation{ETH_math}{Department of Mathematics, ETH Zurich, Switzerland}
\icmlcorrespondingauthor{I.F.}{ilyas.fn979@gmail.com}

% You may provide any keywords that you
% find helpful for describing your paper; these are used to populate
% the "keywords" metadata in the PDF but will not be shown in the document
\icmlkeywords{Machine Learning, ICML}

\vskip 0.3in
]

% this must go after the closing bracket ] following \twocolumn[ ...

% This command actually creates the footnote in the first column
% listing the affiliations and the copyright notice.
% The command takes one argument, which is text to display at the start of the footnote.
% The \icmlEqualContribution command is standard text for equal contribution.
% Remove it (just {}) if you do not need this facility.

\printAffiliationsAndNotice{}  % leave blank if no need to mention equal contribution
%\printAffiliationsAndNotice{\icmlEqualContribution} % otherwise use the standard text.

\begin{abstract}
Recently, the impressive empirical success of policy gradient (PG) methods has catalyzed the development of their theoretical foundations. Despite the huge efforts directed at the design of efficient stochastic PG-type algorithms, the understanding of their convergence to a globally optimal policy is still limited. In this work, we develop improved global convergence 
guarantees for a general class of Fisher-non-degenerate parameterized policies which allows to address the case of continuous state action spaces.
First, we propose a Normalized Policy Gradient method with Implicit Gradient Transport (\algname{N-PG-IGT}) and derive a~$\tilde{\mathcal{O}}(\varepsilon^{-2.5})$ sample complexity of this method for finding a global $\varepsilon$-optimal policy. Improving over the previously known~$\tilde{\mathcal{O}}(\varepsilon^{-3})$ complexity, this algorithm does not require the use of importance sampling or second-order information and samples only one trajectory per iteration. Second, we further improve this complexity to~$\tilde{ \mathcal{\mathcal{O}} }(\varepsilon^{-2})$ by considering a Hessian-Aided Recursive Policy Gradient (\algname{(N)-HARPG}) algorithm enhanced with a correction based on a Hessian-vector product. Interestingly, both algorithms are $(i)$~{simple} and {easy to implement}: single-loop, do not require large batches of trajectories and sample at most two trajectories per iteration; $(ii)$~{computationally} and  {memory efficient}: they do not require expensive subroutines at each iteration and can be implemented with memory linear in the dimension of parameters.

\end{abstract}

\section{Introduction}
%\section{INTRODUCTION}
\label{sec:introduction}

Dating back to the works of \citet{REINFORCE_Williams_1992,PGM_Sutton_1999}, %,baxter-bartlett01}, 
Policy Gradient (PG) methods constitute one of the most popular and efficient classes of Reinforcement Learning (RL) algorithms. 
Combined with deep neural networks, modern PG algorithms \citep{silver-et-al14,schulman-et-al15trpo,schulman-et-al17ppo} have  shown impressive empirical success in several challenging tasks involving large and even continuous state and action spaces.

In this class of methods, in order to learn a parameterized policy, the agent performs stochastic gradient ascent in the policy parameter space to maximize its expected return $J(\theta)$, where $\theta \in \R^d$ are parameters of the policy parameterization. 
Given the non-concavity of the policy optimization objective, 
%(see for e.g. \cite{agarwal-et-al21}),
several works in the literature have focused on the convergence of the policy parameter %in policy gradient methods 
to an~$\varepsilon$-approximate first-order stationary point (FOSP), i.e., $\norm{ \nabla J(\theta) } \leq \varepsilon$. 
This performance measure allows to define the sample complexity of PG methods as the number of random trajectory simulations (or samples) required to find an $\varepsilon$-approximate FOSP of the expected return function. 
In particular, using a similar analysis to stochastic gradient descent in nonconvex smooth optimization~\citep{ghadimi-lan13}, %exploiting the smoothness of the expected return function, 
it has been shown that \algname{Vanilla-PG} (\algname{REINFORCE}) enjoys a $\tilde{\mathcal{O}}(\varepsilon^{-4})$ sample complexity (see for e.g., \citet{Vanilla_PL_Yuan_21}).
In the last few years, %inspired 
spurred by the development of variance-reduction techniques in stochastic optimization, %a line of research proposed 
there has been a tremendous amount of work in designing variance-reduced variants of the PG method to improve sample efficiency. 
% However, the adaptation of variance-reduction technique to PG method is not trivial and meets challenges attributed to non-stationarity of sampling distribution. 
Unlike in classical supervised learning, the sampling distribution is non-stationary in RL due to policy update. Therefore, adapting variance-reduction techniques to PG methods requires to carefully address this non-stationarity. %of the sampling distribution. 
The majority of works address this issue by introducing importance sampling (IS) weights to correct the distribution shift. However, the use of the IS mechanism often requires a strong unverifiable assumption stating that the variance of IS weights is bounded. %\footnote{\cite{zhang-et-al21} provide a gradient truncation mechanism to control the IS weights by utilizing the special structure of the softmax parameterization. However, it is unclear whether this approach can be extended to a more general class of policies.} 
Recently, \citet{shen-et-al19,salehkaleybar-et-al22} proposed a promising alternative to IS, which incorporates the second-order information to correct the distribution shift due to the policy update. In terms of sample complexity, both approaches guarantee to find an~$\varepsilon$-FOSP with~$\tilde{\mathcal{O}}(\varepsilon^{-3})$ samples. For stochastic optimization, this complexity was recently shown to be optimal and unimprovable even when higher order stochastic oracle and smoothness are available~\citep{arjevani-et-al20}. Therefore, one needs to utilize 
%a more intricate structure of the policy optimization
the special structure of the policy optimization 
problem to improve the sample complexity of PG methods.
% problem to establish a better sample complexity for PG methods. 
%Reducing the high variance of the policy gradient estimates, these methods achieve a~$\mathcal{O}(\varepsilon^{-3})$ sample complexity (see Section~\ref{sec:related-work} for a review). This rate is known to be optimal in stochastic optimization~\citep{arjevani-et-al20}.  Most of prior variance-reduced PG methods use importance sampling to account for the non-stationarity of the sampling distribution. Indeed, unlike in standard stochastic optimization, in RL, the distribution of the trajectories is shifted due to the policy update at each iteration. As a consequence of the use of the importance sampling mechanism to address the non-stationarity issue, most of previous works require an unverifiable assumption stating that the variance of the importance sampling weights is bounded. To address this issue, two different approaches were recently proposed in the literature. \cite{zhang-et-al21} provide a gradient truncation mechanism to control the importance sampling weights when using a softmax parameterization whereas \citep{shen-et-al19,salehkaleybar-et-al22} incorporate second-order information without resorting to importance sampling. Both approaches guarantee a~$\mathcal{O}(\varepsilon^{-3})$ FOSP sample complexity. 

\begin{table}[h]
	\caption{Summary of complexity results for \textit{computationally efficient} PG methods. %The first row of the table reports the number of samples to find an $\varepsilon$-FOSP. The second row shows the number of samples to reach $\varepsilon$-global optimality under Assumptions~\ref{hyp:fisher-non-degenerate} and \ref{hyp:tranf-compatib-fun-approx}. 
 The first row of the table reports the number of samples for finding an $\varepsilon$-FOSP, and the second one is for $\varepsilon$-global optimum under Assumptions~\ref{hyp:fisher-non-degenerate} and \ref{hyp:tranf-compatib-fun-approx}. 
    }
	\label{table:related-works2}
	%\footnotesize
	\centering
  \begin{threeparttable}
	\begin{tabular}{c c c c}
		%\hline
		\toprule
		   &  \makecell{\algname{Vanilla-PG}}   & \makecell{\algname{N-PG-IGT}} &  \makecell{\algname{(N)-HARPG} } \\
		\hline
		\makecell{FOSP}  & \makecell{$\tilde{\mathcal{O}}(\varepsilon^{-4})$\\{}} & \makecell{$\tilde{\mathcal{O}}(\varepsilon^{-3.5})$\\ { (new)\tnote{1} } } & \makecell{$\tilde{\mathcal{O}}(\varepsilon^{-3})$\\ {} }\\
	\hline	
  %\cline{2-8}
        \makecell{Global}  & \makecell{$\tilde{\mathcal{O}}(\varepsilon^{-3})$\\ {} } & \makecell{$\tilde{\mathcal{O}}(\varepsilon^{-2.5})$\\ (new) } & \makecell{$\tilde{\mathcal{O}}(\varepsilon^{-2})$\\ (new) }\\
		\bottomrule
    \end{tabular}
      \begin{tablenotes}%\footnotesize
\footnotesize{\item[1]The method based on implicit gradient transport was previously studied in stochastic optimization for finding FOSP \citep{MomentumImprovesNSGD_Cutkosky_2020}, however, we are not aware of its application in RL. }
  \end{tablenotes}
  \end{threeparttable}
\end{table}
More recently, a line of research focused on establishing global optimality convergence rates and deriving the sample complexity to reach an $\varepsilon$-approximate global optimum of the expected return function  (i. e., $J^* - J(\theta) \leq \varepsilon$ where~$J^*$ is the optimal return) 
\citep{agarwal-et-al21,fazel-et-al18,Optimizing_LQR_21,bhandari-russo19,mei-et-al20,zhang-et-al20glob-conv-pg,Zhang_Kim_O’Donoghue_Boyd21} instead of a mere $\varepsilon$-approximate FOSP. This growing body of work leverages additional structure of the policy optimization problem under the form of gradient dominance \citep{agarwal-et-al21}, \L{}ojasiewicz-like inequalities \citep{mei-et-al20,mei-et-al21} or hidden convexity \citep{zhang-et-al21}. As previously exploited in deterministic optimization \citep{lojasiewicz63,polyak63,kurdyka98} (see also for e.g., \citet{attouch-et-al09}) and later on in stochastic optimization \citep{Fontaine_SGD_dynamics,KL_PAGER_Fatkhullin,Scaman_SGD_KL_2022,Masiha_SCRN_KL}, these structural properties guarantee that the non-concave policy optimization objective has no suboptimal stationary points and allow to derive convergence rates for the function value. Most of these works assume the access to exact policy gradients to show fast convergence rates \citep{mei-et-al20,mei-et-al21,xiao22}. 

In contrast, the practically important case of \textit{stochastic} gradients, is addressed only in a few recent works \citep{liu-et-al20,ding-et-al22,Vanilla_PL_Yuan_21,Masiha_SCRN_KL}. This line of works establishes global convergence guarantees of the expected return (up to a bias induced by the policy parameterization). 
%which may not contain all stochastic policies).
A key tool for their analysis is a relaxed weak gradient dominance inequality satisfied by the expected return function for the class of \textit{Fisher-non-degenerate parameterized policies} (FND) which includes Gaussian policies as a canonical example. Drawing on this structural property, \citet{liu-et-al20,ding-et-al22,Vanilla_PL_Yuan_21} achieve a~$\tilde{\mathcal{O}}(\varepsilon^{-3})$ global optimality rate up to a bias error term due to policy parameterization. Very recently, \citet{Masiha_SCRN_KL} further improve this complexity to~$\tilde{\mathcal{O}}(\varepsilon^{-2.5})$ by analyzing a stochastic second-order method. However, their algorithm needs to approximately solve a cubic sub-problem, which requires additional~$\tilde{\mathcal{O}}(\varepsilon^{-3})$ operations at each iteration and introduces extra computational burden. 
A natural question that emerges from these recent results is the following: 
%and may negate the improvement in sample complexity.
%Only few recent works consider the case of stochastic policy gradients to establish global convergence guarantees \citep{liu-et-al20,zhang-et-al21,ding-et-al22,Vanilla_PL_Yuan_21,Masiha_SCRN_KL} up to a bias induced by the policy parameterization which may not contain all the stochastic policies. Our present work belongs to this new line of works focusing on global convergence rates. A key tool for most of their analysis is a relaxed weak gradient dominance inequality satisfied by the expected return function for the class of Fisher-non-degenerate parameterized policies including Gaussian policies (under mild assumptions). Drawing on this structural property, \cite{ding-et-al22,Vanilla_PL_Yuan_21,liu-et-al20} achieve a~$\mathcal{O}(\varepsilon^{-3})$ global optimality rate up to a bias error term due to policy parameterization. A natural question that emerges from these recent results is the following: 
\begin{quote}
%\centering
%Fisher-non-degenerate
    \textit{Is it possible to improve $\tilde{\mathcal{O}}(\varepsilon^{-3})$ sample complexity for general FND parameterized policies using a computationally efficient PG algorithm?}   
\end{quote}
%\textit{Can we design a practical PG algorithm improving the~$\mathcal{O}(\varepsilon^{-3})$ global convergence sample complexity?} 
%By practical PG algorithms, we mean single-loop algorithms without involving computationally expensive subroutines that do not require large unpractical batch sizes for policy gradient estimation nor checkpoints and for which no unverifiable assumption like boundedness of the variance of importance sampling weights is needed like in prior work.

In this work, we answer this question in the affirmative. Our contributions are summarized as follows.

%For the class of smooth Fisher-non-degenerate policies, 
%We show that a normalized momentum-based PG method with implicit gradient transport enjoys a~$\tilde{\mathcal{O}}(\varepsilon^{-2.5})$ global convergence sample complexity up to a bias induced by policy parameterization. Interestingly, the algorithm does not use the importance sampling mechanism, does not use curvature information such as Hessian estimates and only samples a single trajectory at each iteration. 
%\newpage
%\noindent\textbf{Contributions.}
\subsection{Summary of contributions}
\begin{itemize}
\item First, we propose a normalized momentum-based PG method with implicit gradient transport (\algname{N-PG-IGT}) and establish a~$\tilde{\mathcal{O}}(\varepsilon^{-2.5})$ global convergence sample complexity. Interestingly, this algorithm does not require the IS mechanism, does not use curvature information such as Hessian estimates and only samples a single trajectory at each iteration. 
% \ilyas{We can add here or later that our methods are also computationally efficient (not only sample-efficient) because they do not require additional heavy sub-routines unlike recently proposed \algname{SCRN} and the classical \algname{NPG}. \ab{To insert maybe in the section with algos or related works since it needs comparison to some works.} }

\item Next, we consider two variants of Hessian-aided momentum-based PG method: $(i)$ \algname{N-HARPG} -- with normalization, and $(ii)$ \algname{HARPG} -- without normalization. We offer a refined convergence analysis for both variants and establish the improved global convergence achieving a~$\tilde{\mathcal{O}}(\varepsilon^{-2})$ sample complexity. Unlike most prior work, our analysis does not require the strong assumption of boundedness of IS weights since \algname{HARPG} does not use IS. Moreover, these algorithms are single-loop, only require to sample two trajectories per iteration and can be implemented efficiently with memory and computational complexity similar to Hessian-free methods. 
%We further improve the global convergence sample complexity to~$\tilde{\mathcal{O}}(\varepsilon^{-2})$ for a Hessian aided (optionally normalized) momentum-based PG method implementing a recursive variance reduction technique. Unlike most prior work, the algorithm does not require the unverifiable assumption of boundedness of the importance sampling weights variance since it does not make use of importance sampling. Moreover, the algorithm is single-loop, only requires to sample two trajectories per iteration and can be implemented efficiently with memory and computational complexity similar to Hessian-free methods. 
%The normalized variant of the proposed algorithm coincides with the SHARP algorithm recently proposed in \cite{salehka} for which only a~$\mathcal{O}(\varepsilon^{-3})$ first order stationary convergence guarantee was known.

\end{itemize}

%These improvements rely on the assumption that the parameterized policy is Fisher-non-degenerate and satisfies some smoothness conditions.
We highlight that our improved sample complexities are derived under the same (or less restrictive) assumptions compared to previous related work in this setting \citep{liu-et-al20,ding-et-al21,Vanilla_PL_Yuan_21,Masiha_SCRN_KL}.
We refer the reader to Tables~\ref{table:related-works2} and~\ref{table:related-works} for a contextualization of our contributions in the literature.

\subsection{Related work}

We briefly review the literature most closely related to our results about the global convergence of stochastic %\algname{REINFORCE}-type 
PG methods. We defer a more detailed literature review including a discussion of first-order stationarity of variance-reduced PG methods and global optimality of exact PG methods to Appendix~\ref{sec:related-work}. For convenience, we refer the reader to Table~\ref{table:related-works} for an overview of related works and our main contributions. 
Only a few recent works provide global convergence guarantees for the case of stochastic policy gradients \citep{liu-et-al20,zhang-et-al21,ding-et-al21,ding-et-al22,Vanilla_PL_Yuan_21,Masiha_SCRN_KL}. 
 %A key tool for most of their analysis is a relaxed weak gradient dominance condition which is satisfied by a class of parameterized policies. 
 
 \noindent\textbf{Existing results under our setting.}
For the class of \textit{Fisher-non-degenerate} parameterized policies,
 \citet{liu-et-al20} propose a stochastic variance-reduced %PG method
 and Natural PG methods 
 for which they establish a~$\tilde{\mathcal{O}}(\varepsilon^{-3})$ sample complexity for finding an $\varepsilon$-globally optimal policy (up to a compatible function approximation error due to policy parameterization).
 %for a class of Fisher-non-degenerate parameterized policies. 
 %Introducing a single-loop and finite-batch momentum-based PG algorithm,  
 %based on \algname{STORM} \citep{cutkosky-orabona19}, 
 Later, \citet{ding-et-al22} achieve the same 
 %~$\tilde{\mathcal{O}}(\varepsilon^{-3})$ global optimality 
 sample complexity %with a single-loop and finite-batch   
 %for the same class of Fisher-non-degenerate parameterized policies. 
 using a single-loop and finite-batch momentum-based PG algorithm.
 However, the above works require the strong assumption of bounded IS weights variance. 
 Recently, \citet{Vanilla_PL_Yuan_21} showed the same~$\tilde{\mathcal{O}}(\varepsilon^{-3})$ sample complexity for stochastic \algname{Vanilla-PG} for the same policy class under similar assumptions.   
 Very recently, \citet{Masiha_SCRN_KL} propose a stochastic cubic regularized method (\algname{SCRN}) achieving a~$\tilde{\mathcal{O}}(\varepsilon^{-2.5})$ sample complexity under similar assumptions. Nevertheless, at each iteration, their method needs a heavy subroutine to solve the cubic regularized sub-problem and requires large batches of samples to estimate the gradient and the Hessian (where batch-sizes depend on the dimension $d$ and the desired accuracy $\varepsilon$, see Lemma~11 therein). \citet{Masiha_SCRN_KL} further improve the complexity to a~$\tilde{\mathcal{O}}(\varepsilon^{-2})$ considering a variance-reduced variant of \algname{SCRN} (see Algorithm~2 therein) for stochastic optimization.   
 %in the RL setting (see Remark~10 therein). 
%  However, this result requires average Hessian Lipschitz continuity (Assumption~12). If such an assumption is exploitable in the context of stochastic optimization when samples are i.i.d., it cannot be immediately used for RL. 
However, this algorithm also requires large batch-sizes. Moreover, adapting this method to RL is not trivial, since it would require introducing IS weights for both stochastic gradients and Hessians and assuming a bound on their variance to account for non-stationarity inherent to RL. 

% \noindent\textbf{Stochastic PG improving over~$\tilde{\mathcal{O}}(\varepsilon^{-3})$ global sample complexity for tabular and soft-max policy parameterizations.}
\noindent\textbf{Global sample complexity for tabular and soft-max policies.}
In the tabular setting, \citet{lan22} shows a~$\tilde{\mathcal{O}}(\varepsilon^{-2})$ sample complexity for a stochastic policy mirror descent method. However, 
their analysis is specific to the tabular setting with discrete state action spaces where no parameterization is used.
%the analysis of \citet{lan22} crucially relies on the structure of direct policy parameterization, which is only reasonable for discrete state-action setting. %In an orthogonal direction, 
Under the soft-max policy parameterization, \citet{zhang-et-al21} obtain a global $\varepsilon$-optimal policy using~$\tilde{\mathcal{O}}(\varepsilon^{-2})$ samples 
%by exploiting a hidden convexity structure and using an overparameterized policy 
via a double-loop variance-reduced PG algorithm with large batch sizes. 
Interestingly, this algorithm implements a gradient truncation mechanism to relax the boundedness assumption on the IS weights variance by utilizing the special structure of the softmax policy parameterization. 
We highlight that \citet{zhang-et-al21} focus on the softmax policy parameterization only and consider a hard-to-verify overparameterization assumption (Assumption~5.11) to exploit hidden convexity. 
% that guarantees that the policy optimization problem can be reformulated as a convex problem on the space of occupancy measures. 
%{\blue {In this work, we consider the class of Fisher-non-degenerate parameterized policies which does not contain the softmax policy in general. (ToDo: Revise this sentence.) }}Notice also that Gaussian policies which are Fisher-non-degenerate allow to address continuous action spaces unlike the softmax policy. \citet{ding-et-al21} establish a~$\tilde{\mathcal{O}}(\varepsilon^{-2})$ sample complexity to reach a globally optimal policy for the entropy-regularized RL problem (instead of the original problem) with the softmax parameterization. 
% \cite{lan22} shows a~$\tilde{\mathcal{O}}(\varepsilon^{-2})$ sample complexity for a stochastic policy mirror descent method but the analysis is restricted to the tabular setting for which no policy parameterization is used. 
%\noindent\textbf{Global convergence of stochastic Natural PG.} \blue{ Merge this paragraph with the above one and revise.} 
%\citet{lan22} shows a~$\tilde{\mathcal{O}}(\varepsilon^{-2})$ sample complexity for a stochastic policy mirror descent method but the analysis is restricted to the tabular setting for which no policy parameterization is used. 
Very recently, when finalizing the preparation of this work, we came across the works of \citet{alfano-rebeschini22,yuan-et-al22log-linear} analyzing the convergence of \algname{NPG} 
%Natural PG (\algname{NPG}) 
and its Q-version (\algname{Q-NPG}) for log-linear policies. In particular, \citet{yuan-et-al22log-linear} achieves a~$\tilde{\mathcal{O}}(\varepsilon^{-2})$ sample complexity using the framework of compatible function approximation. %Our algorithms and analysis are totally different.
%and do not rely on mirror descent-like analysis. 
Furthermore, at each iteration, \algname{NPG} and~\algname{Q-NPG} require using SGD as a subroutine to solve a regression problem to find the update direction and rely on complex sampling subprocedures (Algorithms 3, 4 p.~28-30). 
Complementing all these works considering direct and softmax parameterization for discrete and finite state action spaces, our analysis addresses the case of continuous state action spaces via the FND policy class, which is a fundamentally different problem setting.  
%In contrast, we consider a different policy class covering the case of continuous state-action spaces, our algorithms are single-loop and require at most two sampled trajectories (using the current policy) at each iteration.
% We briefly mention that non-asymptotic guarantees were also recently established for actor-critic 

% %\noindent\textbf{Paper organization.} 
% The remainder of the paper is organized as follows. %Section~\ref{sec:related-work} describes related work and puts our contributions in context. 
% In Section~\ref{sec:preliminaries}, we introduce useful notations and background regarding policy optimization. Section~\ref{sec:methods} is devoted to the presentation of our proposed PG algorithms. Our main global convergence results are presented in Section~\ref{sec:convergence_analysis}. Finally, experimental results are described in Section~\ref{sec:experiments}.

\begin{table*}[t]
	\caption{Overview of related works for global optimality convergence for FND policies. ``\textbf{Sample complexity}" reports the number of sample trajectories (up to a logarithmic factor) required to achieve the desired accuracy, %$\wcO\rb{\cdot}$
	%under relaxed weak gradient dominance  (Assumption~\ref{hyp:relaxed-w-grad-dom}), 
	i.e., sample complexity to achieve $J^* - \Exp{J(\theta_T)} \leq \varepsilon + \fr{\sqrt{\varepsilon_{\text{bias}}}}{1-\gamma}$ when the guarantee is for the ``\textbf{Last iterate}" and $\min_{t = 0, \cdots, T} J^* - \Exp{J(\theta_T)} \leq \varepsilon +\fr{\sqrt{\varepsilon_{\text{bias}}}}{1-\gamma}$ for the "best" iterate (see Assumption~\ref{hyp:tranf-compatib-fun-approx} for a definition of~$\varepsilon_{\text{bias}}$).
	``\textbf{No IS}" means that importance sampling (IS) is not used in the method. If the algorithm uses IS, then it is additionally assumed that the variance of IS weights is bounded, i.e., $\mathbb{V}\rb{\fr{p(\tau|\theta)}{p(\tau|\theta')}} \leq W, \forall\, \theta, \theta' \in \R^d$. 
 ``\textbf{No 2nd smooth.}" means second-order smoothness of the policy parameterization (Assumption~\ref{hyp:lipschitz-hessian}) is not needed. ``\textbf{No batch}" means that the algorithm does not require mini-batch. We refer to a method as ``\textbf{Comp. efficient}" if at each iteration it requires only $\cO\rb{ d }$ number of arithmetic operations. In contrast, \algname{SCRN} and \algname{Natural-PG} require additional subroutines, see \citet[Remark~6]{Masiha_SCRN_KL} and \citet[Theorem~4.9]{liu-et-al20}. \\
 \footnotesize{
%  $^\dagger$We do not include the variance reduced variant of \algname{SCRN} from \citet{Masiha_SCRN_KL} in this table because its analysis in optimization requires individual smoothness of stochastic gradients and hessians, see \citet[Assumption~12, Appendix A.3]{Masiha_SCRN_KL}. \\
 $^\star$
%  The complexity bounds are presented in $\wcO\rb{\cdot}$ notation, which 
 Sample complexity
 hides the dependence on constants $M_g$, $M_h$, $l_2$, $r_{max}$, $\mu_F$ and poly-logarithmic factors in $\varepsilon^{-1}$. 
 }
 }
	\label{table:related-works}
	%\footnotesize
	\centering
	\begin{NiceTabular}{|c|c|ccccc|}
		\hline
		  %\bf Algorithm & \bf \makecell{Sample \\ complexity$^\star$}  & \bf \makecell{Last\\iterate} &\bf \makecell{No \\ IS} & \bf \makecell{No 2nd \\ smooth.} & \bf \makecell{No \\ batch } & \bf \makecell{Comp. \\ efficient }\\
         %Algorithm &  \makecell{Sample  complexity$^\star$}  &  \makecell{Last iterate} & \makecell{No  IS} &  \makecell{No 2nd smooth.} &  \makecell{No  batch } &  \makecell{Comp. efficient }\\
         \bf Algorithm & \bf \makecell{Sample complexity$^\star$}  & \bf \makecell{Last iterate} &\bf \makecell{No IS} & \bf \makecell{No 2nd \\ smooth.} & \bf \makecell{No batch } & \bf \makecell{Comp. \\ efficient }\\
		\hline
		\makecell{\algname{Vanilla-PG}\\\citep{Vanilla_PL_Yuan_21}}  & \makecell{$\fr{1}{ (1-\gamma)^{5}} \varepsilon^{-3}$} & \gcmark & \makecell{\gcmark} & \makecell{\gcmark} & \gcmark & \gcmark \\
	\hline	
  %\cline{2-8}
        \makecell{\algname{Natural-PG}\\ \citep{liu-et-al20}}  & \makecell{$\frac{1}{(1-\gamma)^{6}}\varepsilon^{-3} $} & \rxmark & \makecell{\gcmark} & \makecell{\gcmark} & \rxmark & \rxmark \\
		\hline
		\makecell{\algname{SRVR-PG}\\ \citep{liu-et-al20}}  & \makecell{$ \fr{1 + W (1-\g)^3}{(1-\g)^{5.5}} \varepsilon^{-3}  $} & \rxmark & \makecell{\rxmark} & \makecell{\gcmark} & \rxmark & \gcmark \\
		\hline
  %\cline{2-8}
		 \makecell{\algname{STORM-PG-F}\\\citep{ding-et-al22}}  & $\fr{1 + W^{\nfr{3}{2}}}{(1-\g)^{12}} \varepsilon^{-3}$ & \rxmark & \makecell{\rxmark}& \makecell{\gcmark} & \gcmark & \gcmark\\
		\hline
  %\cline{2-8}
		% \makecell{\algname{N-MPG}\\This paper}  & $\fr{1}{\mu^{3} (1-\gamma)^{6}} \varepsilon^{-3}$ & \gcmark & \makecell{\gcmark}& \makecell{\gcmark} & \gcmark & \gcmark\\
	  % \hline	
    %\cline{2-8}
		\makecell{\algname{SCRN}%$^\dagger$   
		\\\citep{Masiha_SCRN_KL}}  & \makecell{$\fr{1} { (1-\gamma)^{7.5}} \varepsilon^{-2.5}$} & \gcmark  & \makecell{\gcmark}& \makecell{\rxmark} & \rxmark & \rxmark\\
		\hline
        %\cline{2-8}
		 \rowcolor{linen}  \makecell{\algname{N-PG-IGT}\\This paper}  & $\fr{1}{ (1-\gamma)^{7.5}} \varepsilon^{-2.5}$ & \gcmark & \makecell{\gcmark}& \makecell{\rxmark} & \gcmark & \gcmark\\
		%\cline{2-8}
        \hline
		  \rowcolor{linen}   \makecell{\algname{(N)-HARPG}\\ This paper}  &   \makecell{$\fr{1}{(1-\gamma)^6} \varepsilon^{-2}$ } &  \gcmark  &  \makecell{\gcmark}& \makecell{\gcmark} & \gcmark & \gcmark\\
		\hline
    \end{NiceTabular}
\end{table*}

\section{Preliminaries}
%\section{PRELIMINARIES}
\label{sec:preliminaries}

%\noindent\textbf{Markov Decision Process.}
%\noindent\textbf{Problem formulation.} 
\subsection{Problem formulation}

%We consider a discrete-time discounted Markov Decision Process (MDP)~\citep{puterman14} $\mathcal{M} = (\mathcal{S}, \mathcal{A}, \mathcal{P}, r, \rho, \gamma)$, where $\mathcal{S}$~is the finite state space, $\mathcal{A}$~is the finite action space, $\mathcal{P}: \mathcal{S}\times\mathcal{A}\times\mathcal{S} \to [0,1]$ is the state transition probability kernel, $r:\mathcal{S} \times \mathcal{A} \to [-r_{\max}, r_{\max}]$ is the reward function which is bounded by~$r_{\max}>0$, $\rho$~is the initial state probability distribution, and $\gamma \in (0,1)$ is the discount factor. %~\cite{Sutton1998}
We consider a discrete-time discounted Markov Decision Process (MDP)~\citep{puterman14} $\mathcal{M} = (\mathcal{S}, \mathcal{A}, \mathcal{P}, r, \rho, \gamma)$, where $\mathcal{S}$~and $\mathcal{A}$~are (possibly continuous) state and action spaces respectively, $\mathcal{P}: \mathcal{S}\times\mathcal{A}\times\mathcal{S} \to \R_+$ is the state transition density, $r:\mathcal{S} \times \mathcal{A} \to [-r_{\max}, r_{\max}]$ is the reward function which is bounded by~$r_{\max}>0$, $\rho$~is the initial state distribution, and $\gamma \in (0,1)$ is the discount factor. %~\cite{Sutton1998}
%A randomized stationary Markovian policy, which we will simply call a policy, is a mapping $\pi: \mathcal{S} \to \mathcal{P}(\mathcal{A})$ specifies for each~$s \in \mathcal{S}$ a probability measure over the set of actions $\mathcal{A}$ by~$\pi(\cdot|s) \in \Delta(\mathcal{A})$ where $\Delta(\mathcal{A})$ is the simplex over the finite action space~$\mathcal{A}$. 
A parametrized policy is defined for every parameter $\theta\in\R^d$ and state $s \in \mathcal{S}$ as a probability density function~$\pi_{\theta}(\cdot|s)$ over the action space~$\mathcal{A}$. 
% We highlight by anticipation that the Gaussian policy will be the canonical example satisfying our assumptions. 
At each time step~$t \in \mathbb{N}$ in a state~$s_t \in \mathcal{S}$, the RL agent chooses an action~$a_t \in \mathcal{A}$ according to the distribution~$\pi_{\theta}(\cdot|s_t)$, observes a reward~$r(s_t,a_t)$ and the environment transitions to a state~$s_{t+1}$ in a region~$\mathcal{S}^{'} \subset \mathcal{S}$ according to~$\mathcal{P}(\cdot|s_t, a_t)\,.$
% Each policy $\pi$ induces a discrete-time Markov reward process $\{(s_t, r(s_t,a_t))\}_{t \in \mathbb{N}}$ where $s_t \in \mathcal{S}$ represents the state of the system at time~$t$ and~$r(s_t,a_t)$ corresponds to the reward received when executing action $a_t \in \mathcal{A}$ in state~$s_t \in \mathcal{S}$. 
%We denote by~$\bb P_{\rho,\pi}$ the probability distribution of the Markov chain~$(s_t,a_t)_{t \in \mathbb{N}}$ generated by the MDP controlled by the policy~$\pi$ with initial state distribution~$\rho$. 

%We denote by~$\bb E_{\rho,\pi}$ (or often simply $\bb E$ instead) the expectation with respect to the distribution of the Markov chain~$(s_t,a_t)_{t \in \mathbb{N}}$.

%\noindent\textbf{Problem formulation.} 
The goal of the RL agent is to find a policy $\pi_\theta$ maximizing the expected return defined by:
\begin{equation}
    \label{eq:J}
 J(\theta) \eqdef \bb E_{\rho, \pi_{\theta}} \left[\sum_{t=0}^{+\infty} \gamma^t r(s_t,a_t)\right],
\end{equation}
where the expectation is taken with respect to the distribution of the Markov chain~$(s_t,a_t)_{t \in \mathbb{N}}$.\footnote{In the rest of the paper, we will often omit the subscript in this expectation where it is clear from the context.}
The agent only has access to trajectories of finite length~$H$ generated from the MDP, whereas the
state transition kernel~$\mathcal{P}$ and the reward function~$r(\cdot, \cdot)$ are unknown.
%under the initial distribution~$\rho$ and the policy~$\pi_{\theta}$. More precisely, provided a time horizon~$H$ and a policy~$\pi_{\theta}$ parameterized by the vector~$\theta \in \R^d$, the learning agent can simulate a
%trajectory~$\tau = (s_0, a_0, \cdots, s_{H-1}, a_{H-1})$ from the MDP whereas the state transition kernel~$\mathcal{P}$ and the reward function~$r(\cdot, \cdot)$ are unknown.
% Given the policy~$\pi_\theta$, the probability distribution of trajectory $\tau = (s_0, a_0, \cdots, s_{H-1}, a_{H-1})$ of length $H$ is given by:
The probability distribution induced by the initial distribution~$\rho$ and a policy~$\pi_{\theta}$ on the space of trajectories of length $H \geq 1$ has a density:% given by:
\begin{equation*}
%p(\tau|\pi_\theta) \eqdef \rho(s_0)\, \prod_{t=0}^{H-1} \mathcal{P}(s_{t+1}|s_t,a_t)\, \pi_\theta(a_t|s_t)\,. 
p(\tau|\pi_\theta) \eqdef \rho(s_0) \, \pi_\theta(a_0|s_0) \prod_{t=1}^{H-1} \mathcal{P}(s_{t}|s_{t-1},a_{t-1})\, \pi_\theta(a_{t}|s_{t})\,,
%p(\tau|\pi_\theta) \eqdef \rho(s_0) \, \pi_\theta(a_0|s_0) \prod_{t=0}^{H-2} \mathcal{P}(s_{t+1}|s_{t},a_{t})\, \pi_\theta(a_{t+1}|s_{t+1})\,.
\end{equation*}
where~$\tau = (s_0, a_0, \cdots, s_{H-1}, a_{H-1})$. 
We also define the truncated expected return given the horizon~$H$ by~$J_H(\theta) \eqdef \bb E_{\rho,\pi_{\theta}}\left[\sum_{t=0}^{H-1} \gamma^t r(s_t,a_t)\right]\,.$

%%%%Older finite horizon version
% Provided a time horizon~$H$ and a policy~$\pi_{\theta}$ parameterized by 
% the vector~$\theta \in \R^d$, the learning agent can simulate a
% trajectory~$\tau = (s_0, a_0, \cdots, s_{H-1}, a_{H-1})$ from the MDP whereas the
% state transition kernel~$\mathcal{P}$ and the reward function~$r$ are unknown.
% Given the policy~$\pi_\theta$, the probability of observing this trajectory~$\tau$ is given by:
% $$
% p(\tau|\pi_\theta) \eqdef \rho(s_0)\, \prod_{t=0}^{H-1} \mathcal{P}(s_{t+1}|s_t,a_t)\, \pi_\theta(a_t|s_t)\,.
% $$
% Along this trajectory, the agent receives rewards which accumulate into the
% following cumulative discounted reward~$R(\tau) \eqdef \sum_{t=0}^H \gamma^t r(s_t,a_t)$.
% Then, the goal of the RL agent is to find a policy $\pi_\theta$ (determined by the vector~$\theta$)
% maximizing the expected return defined by:
% $$
% J(\theta) \eqdef \Expu{\tau \sim p(\cdot|\pi_{\theta})}{R(\tau)}.
% $$
% In the rest of the paper, we will write $\Exp{\cdot}$ instead of $\Expu{\tau \sim p(\cdot|\pi_{\theta})}{\cdot}$ when this is clear from the context. 

%\medskip
%\noindent\textbf{First order stationarity and global optimality.} 
\subsection{First order stationarity and global optimality}

The policy optimization problem is differentiable and smooth (under some regularity conditions we precise later on), but non-concave in general. Therefore, several prior works (see Appendix for a review) focused on showing convergence to an $\varepsilon$-approximate FOSP of the expected return function~$J$, i.e. to a parameter~$\theta$ such that (s.t.) $\|\nabla J(\theta)\| \leq \varepsilon\,.$ In this paper, we focus on finding a parameter~$\theta$ s.t. 
$%\begin{equation*}
J^* - J(\theta) \leq \varepsilon + \mathcal{O}(\sqrt{\varepsilon_{\text{bias}}})\,,
$ %\end{equation*}  
where~$J^*$ is the optimal expected return (maximizing~$J(\pi)$) 
%= \sup_{\theta} J(\theta)$ 
and the~$\mathcal{O}(\sqrt{\varepsilon_{\text{bias}}})$ error term is due to the approximation power of the policy parameterization (see Assumption~\ref{hyp:tranf-compatib-fun-approx}).
%and comments therein).

%\noindent\textbf{Policy gradient and Hessian.}
\subsection{Policy gradient and Hessian}

Under mild regularity assumptions on the policy parameterization,
it can be shown %\cite{PGM_Sutton_1999,williams92}
that the gradient of the truncated expected return function~$J_H$ w.r.t. (with respect to) the policy parameter~$\theta$ and the Hessian at~$\theta$ (see \cite{shen-et-al19}) can be written:% follows:
\begin{eqnarray}
\nabla J_H(\theta) &=& \bb E_{\rho,\pi_{\theta}}[g(\tau, \theta)], \label{eq:g-tau-theta}\\
g(\tau, \theta) & =& \sum_{t=0}^{H-1} \left( \sum_{h=t}^{H-1} \gamma^h r(s_h,a_h) \right) \nabla \log \pi_\theta(a_t|s_t)\,,\nonumber
\end{eqnarray}
\begin{eqnarray}
\nabla^2 J_H (\theta) &=& \bb E_{\rho,\pi_{\theta}}[B(\tau, \theta)],\label{eq:B-tau-theta}\\
B(\tau, \theta) &\eqdef& \nabla \Phi(\tau, \theta)\,  \nabla \log p(\tau|\pi_{\theta})^T + \nabla^2 \Phi(\tau, \theta) \, \nonumber%\\
\end{eqnarray}
% \begin{align}
% \text{and}\quad \Phi(\tau, \theta) &\eqdef \sum_{t=0}^{H-1} \left( \sum_{h=t}^{H-1} \gamma^h r(s_h,a_h) \right) \log \pi_\theta(a_t|s_t)\nonumber\,,
% \end{align}
with 
$$\Phi(\tau, \theta) \eqdef \sum_{t=0}^{H-1} \left( \sum_{h=t}^{H-1} \gamma^h r(s_h,a_h) \right) \log \pi_\theta(a_t|s_t)$$ 
and~$\tau$ is an $H$-length simulated trajectory from the MDP with policy~$\pi_{\theta}$ and initial distribution~$\rho$.
%and all the derivatives are w.r.t. the policy parameter~$\theta\,.$
Note that all the derivatives are w.r.t.~$\theta$ and the full gradient of~$J$ cannot be computed due to the infinite horizon length. An unbiased estimator of the gradient~$\nabla J_H(\theta)$ above is then given by~$g(\tau, \theta)$ as defined in Eq.~\ref{eq:g-tau-theta}.

\section{Normalized Momentum-Based Policy Gradient Algorithms}
%\section{NORMALIZED MOMENTUM-BASED POLICY GRADIENT ALGORITHMS}
\label{sec:methods}

In this section, we present our two momentum-based policy gradient algorithms: \algname{N-PG-IGT} and \algname{N-HARPG}. These methods share two key algorithmic features: \textit{normalization} and \textit{momentum}. Moreover, while \algname{N-PG-IGT} uses a look-ahead mechanism to leverage second-order smoothness, \algname{N-HARPG} uses variance reduction with Hessian correction in order to improve sample complexity. 

% While the first one only utilizes \algname{REINFORCE}-like policy gradient estimates, the second incorporates curvature information via the policy Hessian without compromising computational complexity.

% In this section, we present our two momentum-based policy gradient algorithms for which we will provide a sample complexity analysis in the next section. The algorithms we consider combine several ingredients. First, the policy gradient estimate is built using momentum and may incorporate second-order information. Second, the algorithms may use a normalized policy gradient estimate for the main update of the algorithm instead of following the standard policy gradient direction. We start by introducing a first algorithm directly based on the \algname{NIGT} algorithm proposed by \cite{MomentumImprovesNSGD_Cutkosky_2020} for non-convex stochastic optimization. This algorithm has only access to first-order information. Second, we consider the \algname{SHARP} algorithm proposed by \cite{salehkaleybar-et-al22} which incorporates second-order information together with its variant without normalization.

% \subsection{A Normalized Momentum-Based Policy Gradient algorithm with Implicit Gradient Transport}
\subsection{Normalized PG with Implicit Gradient Transport}

%In this subsection, we propose 
We start by introducing a policy gradient algorithm for policy optimization called Normalized-PG-with Implicit Gradient Transport (\algname{N-PG-IGT} for short). 
%As previously mentioned, 
%this algorithm is directly adapted 
%This algorithm is adapted to our RL policy optimization problem from the recently proposed \algname{NIGT} algorithm solving stochastic optimization problems using i.i.d samples from a fixed distribution \citep{MomentumImprovesNSGD_Cutkosky_2020,IGT_Arnold_2019}. 
We propose the \algname{N-PG-IGT} algorithm for solving our RL policy optimization problem, inspired by the recently proposed \algname{NIGT} method which was considered in the context of stochastic optimization (with i.i.d samples from a fixed distribution) \citep{MomentumImprovesNSGD_Cutkosky_2020,IGT_Arnold_2019}. 

\noindent\textbf{Motivation.} Before describing the \algname{N-PG-IGT} algorithm, we provide some motivation. When having access to exact policy gradients, simple normalized PG enjoys a geometric global convergence rate \citep{mei-et-al21} thanks to the non-uniform smoothness of the policy optimization objective. It is then natural to investigate the properties of this algorithm in the more realistic setting where only stochastic policy gradients can be computed using samples from the MDP. In this stochastic setting, the understanding of this method is limited. Therefore, it is instructive to discuss known results from stochastic optimization regarding the normalized stochastic gradient method. As explained by \citet[Sec.~2]{MomentumImprovesNSGD_Cutkosky_2020}, the issue with the simple normalized stochastic gradient method is that the gradient estimate error before normalization may be larger than the true gradient. To fix this issue, several works in the literature propose to reduce the variance of the stochastic gradients either by using very large batch sizes (see for e.g. \cite{Beyond_Convexity_Hazan_2015}) or by using momentum without resorting to large batches as recently suggested in \citep{MomentumImprovesNSGD_Cutkosky_2020}. Back to our policy optimization problem, we investigated the sample complexity of the normalized PG with momentum for global optimality beyond the first-order stationarity convergence rates.  
%in prior work \citep{MomentumImprovesNSGD_Cutkosky_2020}. 
However, our results for this algorithm only match the performance of vanilla PG. We defer the discussion of these results to Appendix~\ref{sec:NSGDM_app}, where we formally introduce this Algorithm~\ref{alg:N-MPG} and prove convergence guarantees in Theorem~\ref{thm:N-MPG}. Given these elements of motivation, we can now present the~\algname{N-PG-IGT} algorithm which also utilizes momentum with normalization and will allow us to derive improved convergence rates.
%\begin{floatalgo}
%\vspace{-5mm}
%\begin{minipage}[H]{0.46\textwidth}
%\end{minipage}
%\hfill
%\begin{minipage}[H]{0.52\textwidth}
\begin{algorithm}[t]
        \caption{\algname{N-PG-IGT} \\(Normalized-PG-with Implicit Gradient Transport)}\label{alg:N-PG-IGT}
        \begin{algorithmic}[1]
            \STATE \textbf{Input}: $\theta_0$, $\theta_1$, $d_0$, $T$, $\cb{\eta_t}_{t\geq 0}$, $\cb{\gamma_t}_{t\geq 0}\,$
            %\State $\tau_0 \sim p(\cdot|\pi_{\theta_0})$
            %\State $d_0 = g(\tau_0, \theta_0)$
            %\State $\theta_1 = \theta_0 + \gamma_0 \frac{d_0}{\|d_0\|}$
            \FOR{$t=1, \ldots, T -  1$}
                \STATE \label{n-pg-igt:theta-tilde} $\tilde{\theta}_t = \theta_t + \fr{1- \eta_t}{\eta_t}  \rb{\theta_t - \theta_{t-1}}$ %\ab{Steps 6-7-8 need to be carefully checked with the proof.}
                \STATE $\tilde{\tau}_t \sim p(\cdot|\pi_{\tilde{\theta}_t})$ %$\tau_t \sim p(\cdot|\pi_{\theta_t})$ \ab{or
                \STATE \label{n-pg-igt:step-d-t} $d_t = (1 - \eta_t) d_{t-1} + \eta_t g(\tilde{\tau}_t, \tilde{\theta}_t)$ % \ab{or maybe $g(\tilde{\tau}_t, \tilde{\theta}_t)$ here.}
                \STATE \label{n-pg-igt:step-update} $\theta_{t+1} = \theta_t + \gamma_t \frac{d_t}{\|d_t\|}$ 
		    \ENDFOR
		 \RETURN $\theta_T$
        \end{algorithmic}
\end{algorithm}
%\end{minipage}
%\vspace{-2mm}
% \footnotesize{ \textbf{ \algname{N-PG-IGT} (Algorithm~\ref{alg:N-PG-IGT}).} This algorithm performs a normalized gradient ascent in the policy parameter space following the direction of a normalized moment-based policy gradient estimate. More precisely, at each time step~$t$, we compute an auxiliary variable~$\tilde{\theta}_t$ using the parameters~$\theta_{t-1}$ and~$\theta_t$.
% Then, we sample a trajectory~$\tilde{\tau}_t$ following the policy~$\pi_{\tilde{\theta}_t}$ %defined by the parameter~$\theta_t$
% and we compute the policy gradient estimate~$g(\tilde{\tau}_t, \tilde{\theta}_t)$ using Eq.~\eqref{eq:g-tau-theta}. The direction~$d_t$ is computed as a moving average of the policy gradient estimate~$g(\tilde{\tau}_t, \tilde{\theta}_t)$ with momentum parameter~$\eta_t$ (step~\ref{n-pg-igt:step-d-t}). Finally, the algorithm performs a policy parameter update following the normalized direction~$d_t/\|d_t\|$ with a step size~$\gamma_t$ (step~\ref{n-pg-igt:step-update}).}  
%\end{floatalgo}

\textbf{\algname{N-PG-IGT} (Algorithm~\ref{alg:N-PG-IGT}).} This algorithm performs a normalized gradient ascent in the policy parameter space following the direction of a normalized moment-based stochastic policy gradient. 
% More precisely, at each time step~$t$, we compute an auxiliary variable~$\tilde{\theta}_t$ using the parameters~$\theta_{t-1}$ and~$\theta_t$.
% Then, we sample a trajectory~$\tilde{\tau}_t$ following the policy~$\pi_{\tilde{\theta}_t}$ %defined by the parameter~$\theta_t$
% and we compute the policy gradient estimate~$g(\tilde{\tau}_t, \tilde{\theta}_t)$ using Eq.~\eqref{eq:g-tau-theta}. The direction~$d_t$ is computed as a moving average of the policy gradient estimate~$g(\tilde{\tau}_t, \tilde{\theta}_t)$ with momentum parameter~$\eta_t$ (step~\ref{n-pg-igt:step-d-t}). Finally, the algorithm performs a policy parameter update following the normalized direction~$d_t/\|d_t\|$ with a step size~$\gamma_t$ (step~\ref{n-pg-igt:step-update}).
Unlike in a simple normalized momentum PG algorithm, the direction~$d_t$ is computed using the stochastic policy gradient~$g(\tilde{\tau}_t, \tilde{\theta}_t)$ instead of~$g(\tau_t,\theta_t)$ with~$\tau_t \sim p(\cdot|\pi_{\theta_t})$.
Here, the auxiliary sequence~$\tilde{\theta}_t$ is crucial for faster convergence. Observe that its update rule (Step~\ref{n-pg-igt:theta-tilde}) can be rewritten~$\theta_t = \eta_t \tilde{\theta}_t + (1-\eta_t) \theta_{t-1}$ which shows that using~$\tilde{\theta}_t$ can be seen as performing a \textit{lookahead} step extrapolating from previous iterates~$\theta_t$ and~$\theta_{t-1}$.  We refer the reader to \citep[Sec.~3]{MomentumImprovesNSGD_Cutkosky_2020} for more intuition regarding this scheme. 
Notice that the algorithm does not implement any importance sampling mechanism to account for the non-stationarity of our RL problem (which makes RL different from standard stochastic optimization). The algorithm samples a single trajectory at each iteration and does not require large batches.% of trajectories. 
% Notice that \algname{N-PG-IGT} has lower memory requirements compared to PG methods incorporating second-order information for which Hessian estimates storage is needed. 
% \ilyas{Is this true? It seems to me that both algorithms require exactly $3 \cdot d$ memory. \ab{Actually, I made this comment before realizing that there was this Hessian-vector implementation that we discussed ... Now removed.}} 

% \subsection{Hessian-Aided Normalized Momentum-Based Policy Gradient algorithms}
\subsection{Hessian-Aided Recursive Policy Gradient}
\label{subsec:harpg}

%In this subsection, 
We now present two variants of Hessian-Aided policy gradient algorithms, namely Hessian-Aided Recursive Policy Gradient (\algname{HARPG}) and its normalized version  %Normalized-Hessian-Aided Recursive Policy Gradient 
(\algname{N-HARPG}). 
% We remark here that~\algname{N-HARPG} coincides with the~\algname{SHARP} algorithm which was recently proposed in \citet{salehkaleybar-et-al22}
We remark here that~\algname{N-HARPG} was previously proposed in~\citet{salehkaleybar-et-al22} under the name~\algname{SHARP}
\footnote{We use a different name for this algorithm since we are also studying the algorithm without normalization (namely \algname{HARPG}) and we use different step-sizes and momentum paramemeters for the normalized algorithm in our analysis.}.
% It is also worth mentioning that different Hessian-Aided variance-reduction methods were also previously introduced in stochastic optimization (see for e.g., \citep{BetterSGDUsingSOM_Tran_2021}) and in RL as discussed in Section~\ref{sec:related-work} (see \citep{shen-et-al19,huang-et-al20}). 
% \ilyas{Track this history of these methods, which one was the first?}

\textbf{\algname{(N)-HARPG} (see Algorithm~\ref{alg:(N)-HARPG}).} The algorithm is based on an idea similar to the \algname{STORM} variance reduction method \citep{cutkosky-orabona19} and uses  second-order information \citep{BetterSGDUsingSOM_Tran_2021} %for variance reduction 
instead of the difference between consecutive stochastic gradients. Interestingly, this allows to bypass the use of IS and the resulting need for controlling IS weights via unverifiable assumptions. To compute the update direction~$d_t$ of the algorithm in step 7, a second-order correction~$(1-\eta_t)v_t$ is added to the momentum stochastic gradient~$(1-\eta_t)d_{t-1} + \eta_t g(\tau_t,\theta_t)\,.$ Crucially, the uniform sampling procedure in steps 3-6 guarantees that~$v_t$ is an unbiased estimator of~$\nabla J(\theta_t) - \nabla J(\theta_{t-1})$, a term which is reminiscent of the one in the original \algname{STORM}.% method.
The last step of the algorithm (step 8) uses the direction~$d_t$ with or without normalization to update the parameter~$\theta_t\,.$

\begin{algorithm}[t]
        \caption{\algname{(N)-HARPG} \\ ((Normalized)-Hessian-Aided Recursive Policy Gradient)}\label{alg:(N)-HARPG}
        \begin{algorithmic}[1]
            \STATE \textbf{Input}: $\theta_0$, $\theta_1$, $d_0$, $T$, $\cb{\eta_t}_{t\geq 0}$, $\cb{\gamma_t}_{t\geq 0}\,$
            %\State $\tau_0 \sim p(\cdot|\pi_{\theta_0})$
            %\State $d_0 = g(\tau_0, \theta_0)$
            %\State $\theta_1 = \theta_0 + \gamma_0 \frac{d_0}{\|d_0\|}$
            \FOR{$t=1, \ldots, T -  1$}
                \STATE $q_t \sim \mathcal{U}([0,1])$
                \STATE $\hat{\theta}_{t} = q_t \theta_{t} + (1-q_t) \theta_{t-1}$
                \STATE $\tau_t \sim p(\cdot|\pi_{\theta_t}); \, \hat{\tau}_t \sim p(\cdot|\pi_{\hat{\theta}_{t}})$
                \STATE \label{n-harpg:hvp} $v_t = B(\hat{\tau}_t, \hat{\theta}_{t})(\theta_t - \theta_{t-1})$
                \STATE $d_t = (1 - \eta_t) \rb{ d_{t-1} +  v_t } + \eta_t g(\tau_t, \theta_t) $
                %\State \algname{HARPG}: $\theta_{t+1} = \theta_t + \gamma_t d_t$  \label{alg:HARPG}
                %\State \algname{N-HARPG}: $\theta_{t+1} = \theta_t + \gamma_t \frac{d_t}{\|d_t\|}$ \label{alg:N-HARPG}
                %( \cite{salehkaleybar-et-al22})
                \STATE $\theta_{t+1} = \begin{cases}
                &\theta_t + \gamma_t d_t \qquad  \text{(\algname{HARPG}) } \\
                &\theta_t + \gamma_t \frac{d_t}{\|d_t\|} \quad  \text{(\algname{N-HARPG}) }
                \end{cases}$
		    \ENDFOR
		 \RETURN $\theta_T$
        \end{algorithmic}
\end{algorithm}

\noindent\textbf{Hessian-vector implementation (Step~\ref{n-harpg:hvp}).} We insist on the fact that the Hessian-vector product defining the correction~$v_t$ can be easily and efficiently implemented in~$\cO(Hd)$ computational complexity. %(\cite{pearlmutter94}). 
Observe for this that for any arbitrary vector~$u \in \R^d$, any trajectory~$\tau$ and any parameter~$\theta \in \R^d$, we can write the Hessian-vector product~$B(\tau,\theta)\, u$ using Eq.~\ref{eq:B-tau-theta} as follows:  
\begin{equation*}
B(\tau, \theta) \, u =  \langle \nabla \log p(\tau|\pi_{\theta}),  u \rangle \, g(\tau, \theta)  + \nabla\, \langle  g(\tau, \theta) , u  \rangle\,.
\end{equation*}
The second term can be easily computed via automatic differentiation of the scalar quantity~$\langle  g(\tau, \theta) , u  \rangle$ instead of using a finite difference approximation as in prior work \citep{shen-et-al19,huang-et-al20}. 
Computing and storing the full Hessian estimate~$B(\hat{\tau}_t,\hat{\theta}_t)$ is not needed. 
%The computational complexity is hence similar to the one of obtaining the gradient vector. 
Exploiting curvature information from the policy Hessian does not compromise the~$\mathcal{O}(Hd)$ per-iteration computation cost.

% Compare to literature here
% It is also worth mentioning that different Hessian-Aided variance-reduction methods were also previously introduced in stochastic optimization (see for e.g., \citep{BetterSGDUsingSOM_Tran_2021}) and in RL as discussed in Section~\ref{sec:related-work} (see \citep{shen-et-al19,huang-et-al20}). 

\section{Global Convergence Analysis}
%\section{GLOBAL CONVERGENCE ANALYSIS}
\label{sec:convergence_analysis}

In this section, we establish the sample complexity of our algorithms to reach global optimality, i.e., the number of %sampled trajectories 
samples 
needed to achieve an $\varepsilon$-approximate global optimum in expectation up to a bias due to policy parameterization. 
% Our guarantees will concern the last iterate from the algorithm rather than the best iterate obtained from running the algorithm for~$T$ iterations. This is in contrast with the more common goal of reaching an $\varepsilon$-approximate FOSP of~$J$ in expectation for the best iterate of the algorithm.
To provide such global optimality guarantees, we 
%follow a recent line of research \cite{ding-et-al22,Vanilla_PL_Yuan_21} leveraging 
leverage a structural property of the non-concave policy optimization problem under the form of a relaxed weak gradient dominance condition satisfied by the expected return~$J$. %function~$J$.
%\ilyas{Explain the difference between 1-PL and 2-PL conditions here or in the appendix and refer to it. }

\subsection{Assumptions}
\label{subsec:assumptions_main}

%We start by introducing the assumptions needed for our analysis.
Our first assumption is a standard assumption about the regularity of the policy parameterization.\footnote{See Appendix~\ref{sec:discussion-assumptions} for a more detailed discussion of all our assumptions.}

\begin{assumption}[Policy parameterization regularity]
  \label{hyp:parameterization-reg}
  For every~$s, a \in \mathcal{S} \times \mathcal{A},$ the function~$\theta \mapsto \pi_{\theta}(a|s)$ is positive and continuously differentiable. Moreover, 
    \begin{enumerate}[(i)]
    \item \label{hyp:param-reg-1st-order} there exists~$M_g>0$ such that for every~$\theta \in \R^d, s \in \mathcal{S}, a \in \mathcal{A}$,
$\|\nabla \log \pi_{\theta}(a|s) \| \leq M_g\,.$

    \item \label{hyp:param-reg-2nd-order} the function~$\theta \mapsto \pi_{\theta}(a|s)$ is twice continuously differentiable for every~$s, a \in \mathcal{S} \times \mathcal{A},$ and there exists~$M_h>0$ such that for every~$\theta \in \R^d$ and every $s \in \mathcal{S}, a \in \mathcal{A}$,
$\|\nabla^2 \log \pi_{\theta} (a|s)\| \leq M_h \,.$
    \end{enumerate}
%\ab{See if this assumption can be relaxed via expectation \cite{Vanilla_PL_Yuan_21}.}
\end{assumption}
%  \ilyas{Add a comment that it can be relaxed to expectation. }
Under this assumption, we collect some known results
from the literature (see for e.g., \cite{papini-et-al18,shen-et-al19,huang-et-al20,xu-et-al20iclr,Vanilla_PL_Yuan_21}) regarding properties of the objective function~$J$ and its stochastic policy 
gradient in the next proposition.

\begin{proposition}%[\cite{huang-et-al20,xu-et-al20iclr,shen-et-al19,papini-et-al18}]
  \label{prop:stoch-grad-and-J}
Under Assumption~\ref{hyp:parameterization-reg}-(\ref{hyp:param-reg-1st-order}), 
%the function~$J$ is $L_g$-smooth with~$L_g = \frac{r_{\max}(M_g^2 + M_h)}{(1-\gamma)^2}$. Moreover,  %the following holds:
\begin{enumerate}[(i)]
    \item \label{prop-i} Function~$J$ is $L_g$-smooth with~$L_g = \frac{r_{\max}(M_g^2 + M_h)}{(1-\gamma)^2}$.
    %bounded by~$r_{\max}/(1-\gamma)$ and
    %\item \label{prop-ii} 
%     \item $\theta \mapsto g(\tau|\theta)$ is $L_g$-Lipschitz with $L_g = M_h r_{\max}/(1-\gamma)^2$ and bounded
% by~$G = M_g r_{\max}/(1-\gamma)^2\,.$
    \item \label{prop-iii} $\Exp{\|g(\tau|\theta) - \nabla J_H(\theta)\|^2} \leq \sigma_g^2$ with~$\sigma_g^2 = \frac{r_{\max}^2 M_g^2}{(1-\gamma)^3}\,.$
    \item \label{prop-iv} $\Exp{\|B(\tau|\theta) - \nabla^2 J_H(\theta)\|^2 } \leq \sigma_h^2$ with~$\sigma_h^2 = \frac{r_{\max}^2 (H^2 M_g^4 + M_h^2)}{(1-\gamma)^4}$ if  Assumption~\ref{hyp:parameterization-reg}-(\ref{hyp:param-reg-2nd-order}) also holds.
\end{enumerate}
\end{proposition}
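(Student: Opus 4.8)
The plan is to derive all three items directly from the closed-form expressions for $g(\tau,\theta)$, $B(\tau,\theta)$, $\nabla J_H$ and $\nabla^2 J_H$ recorded in Eqs.~\eqref{eq:g-tau-theta} and~\eqref{eq:B-tau-theta}, combined with the uniform bounds $\norm{\nabla\log\pi_\theta(a|s)}\le M_g$, $\norm{\nabla^2\log\pi_\theta(a|s)}\le M_h$ and $|r|\le r_{\max}$. The single reusable tool is the \emph{score identity} $\Exp{\nabla\log\pi_\theta(a_t|s_t)\mid s_t}=\int \nabla\pi_\theta(a|s_t)\,da=0$, which makes the partial sums $u_h\eqdef\sum_{t=0}^{h}\nabla\log\pi_\theta(a_t|s_t)$ a vector martingale along the trajectory. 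Before anything else I would record the elementary pointwise estimates $|\sum_{h=t}^{H-1}\gamma^h r_h|\le r_{\max}\gamma^t/(1-\gamma)$, hence $\norm{g(\tau,\theta)}\le M_g r_{\max}/(1-\gamma)^2$ and $\norm{\nabla^2\Phi(\tau,\theta)}\le M_h r_{\max}/(1-\gamma)^2$, together with $\norm{\nabla\log p(\tau|\pi_\theta)}\le HM_g$.

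For the variance bound (iii) I would first use $\Exp{\norm{g-\nabla J_H}^2}\le\Exp{\norm{g}^2}$ since $\nabla J_H=\Exp{g}$. The key step is to rewrite $g=\sum_{h=0}^{H-1}\gamma^h r_h\,u_h$ by swapping the order of summation. The martingale property yields the crucial second-moment estimate $\Exp{\norm{u_h}^2}\le (h+1)M_g^2$ (the cross terms vanish), which is a factor $(h+1)$ sharper than the trivial $\norm{u_h}^2\le(h+1)^2M_g^2$. Bounding $|r_h|\le r_{\max}$ and using repeated applications of Cauchy--Schwarz then gives $\Exp{\norm{g}^2}\le r_{\max}^2 M_g^2\big(\sum_h\gamma^h\sqrt{h+1}\big)^2\le r_{\max}^2M_g^2\,(1-\gamma)^{-1}\sum_h\gamma^h(h+1)=r_{\max}^2M_g^2/(1-\gamma)^3=\sigma_g^2$. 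It is precisely this martingale cancellation that removes one power of $(1-\gamma)^{-1}$ relative to the naive estimate.

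For the smoothness claim (i) I would bound the operator norm of $\nabla^2 J_H=\Exp{\nabla^2\Phi}+\Exp{g\,\nabla\log p^\top}$. The first expectation is immediately controlled by $M_h r_{\max}/(1-\gamma)^2$. For the second, writing again $g=\sum_h\gamma^h r_h u_h$ and $\nabla\log p=u_{H-1}$, causality (the score identity applied to indices $k>h$) shows $\Exp{r_h\,u_h\,(u_{H-1}-u_h)^\top}=0$, so only $\Exp{r_h\,u_h u_h^\top}$ survives; bounding this by $r_{\max}\Exp{\norm{u_h}^2}\le r_{\max}(h+1)M_g^2$ and summing gives $\norm{\Exp{g\nabla\log p^\top}}\le M_g^2 r_{\max}/(1-\gamma)^2$. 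Adding the two pieces yields $\norm{\nabla^2 J_H}\le r_{\max}(M_g^2+M_h)/(1-\gamma)^2$ uniformly in $H$; letting $H\to\infty$ (the truncation error decays geometrically) transfers the Hessian bound to $J$, and $L_g$-smoothness follows from the mean-value inequality. The same martingale/causality mechanism that handled (iii) is what prevents the spurious factor $H$ coming from $\norm{\nabla\log p}\le HM_g$ from appearing here.

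Finally, item (iv) follows the same template but only at the level of crude pointwise bounds: using $\Exp{\norm{B-\nabla^2 J_H}^2}\le\Exp{\norm{B}^2}$, the triangle inequality on $B=g\,\nabla\log p^\top+\nabla^2\Phi$, and the estimates $\norm{g\,\nabla\log p^\top}\le\norm{g}\,\norm{\nabla\log p}\le HM_g^2 r_{\max}/(1-\gamma)^2$ and $\norm{\nabla^2\Phi}\le M_h r_{\max}/(1-\gamma)^2$ (the latter is where Assumption~\ref{hyp:parameterization-reg}-(\ref{hyp:param-reg-2nd-order}) enters), which produce the $H^2M_g^4$ and $M_h^2$ terms respectively. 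The main obstacle throughout is obtaining the \emph{tight} constants in (i) and (iii): a careless bound on $\norm{\nabla\log p}$ or $\norm{u_h}$ costs an extra factor $H$ or $(1-\gamma)^{-1}$, so the delicate point is to exploit the martingale structure of the scores to cancel the future cross terms, whereas matching the exact combination $H^2M_g^4+M_h^2$ in (iv) is then just careful bookkeeping of the cross term between the two summands of $B$.
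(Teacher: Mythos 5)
The paper never proves Proposition~\ref{prop:stoch-grad-and-J} itself: it imports items~(\ref{prop-i}) and~(\ref{prop-iii}) from \citet[Lemmas~4.2 and~4.4]{Vanilla_PL_Yuan_21} and item~(\ref{prop-iv}) from \citet[Lemma~4.1]{shen-et-al19}. Your self-contained derivation is therefore a genuinely different route, and for items~(\ref{prop-i}) and~(\ref{prop-iii}) it is correct and delivers exactly the stated constants. The two key moves --- rewriting $g(\tau,\theta)=\sum_{h=0}^{H-1}\gamma^h r_h u_h$ with $u_h=\sum_{t=0}^{h}\nabla\log\pi_\theta(a_t|s_t)$, and using the score identity so that cross terms vanish, giving $\Exp{\norm{u_h}^2}\le (h+1)M_g^2$ and $\Exp{r_h u_h(u_{H-1}-u_h)^\top}=0$ --- are precisely what produce the factors $(1-\gamma)^{-3}$ in~(\ref{prop-iii}) and $(1-\gamma)^{-2}$ in~(\ref{prop-i}); this faithfully reconstructs the mechanism inside the cited lemmas. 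The passage from $\nabla^2 J_H$ to $\nabla^2 J$ via the geometric decay of the truncation error (Lemma~\ref{le:trunc_grad_hess}) plus the mean-value inequality is also sound. (One cosmetic remark: your proof of~(\ref{prop-i}) uses the bound $\norm{\nabla^2\log\pi_\theta(a|s)}\le M_h$, i.e., Assumption~\ref{hyp:parameterization-reg}-(\ref{hyp:param-reg-2nd-order}); but the constant $L_g$ in the statement already involves $M_h$, so this mismatch is inherited from the paper, not introduced by you.)

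The one substantive issue is item~(\ref{prop-iv}). From $\Exp{\|B-\nabla^2 J_H\|^2}\le\Exp{\|B\|^2}$ and the triangle inequality $\|B\|\le\|g\|\,\|\nabla\log p\|+\|\nabla^2\Phi\|$ you get the numerator $(HM_g^2+M_h)^2=H^2M_g^4+2HM_g^2M_h+M_h^2$, which strictly exceeds the claimed $H^2M_g^4+M_h^2$. The cross term $2HM_g^2M_h$ is not a bookkeeping artifact: there is no orthogonality between the two summands of $B$, so absent a further idea your sketch proves the statement only up to a factor of $2$ (by AM--GM, $2HM_g^2M_h\le H^2M_g^4+M_h^2$, hence $(HM_g^2+M_h)^2\le 2(H^2M_g^4+M_h^2)$). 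This is immaterial for every downstream use of $\sigma_h$ in the paper, since it only enters Theorems~\ref{thm:HARPG} and~\ref{thm:N-HARPG} through $\cO(\cdot)$, but your closing claim that careful bookkeeping recovers the exact combination overstates what the argument gives. A second, smaller point: the first inequality $\Exp{\|X-\Exp{X}\|^2}\le\Exp{\|X\|^2}$ is valid for an inner-product (Frobenius) norm; for the spectral norm one needs, e.g., $\Exp{\|X-\Exp{X}\|^2}\le 2\Exp{\|X\|^2}+2\|\Exp{X}\|^2$, costing yet another constant factor. Neither point changes the order of $\sigma_h^2$ in $H$, $M_g$, $M_h$ and $(1-\gamma)^{-1}$.
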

%\ilyas{Changed the notation $L_J$ to $L_g$ for consistency. }

Items~(\ref{prop-i}) and~(\ref{prop-iii}) are stated in \citet[Lemma 4.2, 4.4]{Vanilla_PL_Yuan_21}, item~(\ref{prop-iv}) %can be found  
in \citet[Lemma 4.1]{shen-et-al19}.
  
The following assumption about the second-order regularity of the policy parameterization will be useful to establish some of our results.  This assumption is also used in several other works (see for e.g., \cite{zhang-et-al20glob-conv-pg,Masiha_SCRN_KL,yang-zheng-pan21}).
\begin{assumption}
\label{hyp:lipschitz-hessian}
For every~$(s, a) \in \mathcal{S} \times \mathcal{A},$ the function~$\theta \mapsto \pi_{\theta}(a|s)$ is positive, twice continuously differentiable and there exists~$l_2>0$ such that for every~$\theta, \theta' \in \R^d$ and every~$s \in \mathcal{S}, a \in \mathcal{A}$,
$$
\|\nabla^2 \log \pi_{\theta} (a|s) - \nabla^2 \log \pi_{\theta'} (a|s)\| \leq l_2\, \|\theta - \theta'\|\,.
$$
\end{assumption}

Reinforcing our regularity assumption~\ref{hyp:parameterization-reg} with Assumption~\ref{hyp:lipschitz-hessian}, the expected return~$J$ will further have a Lipschitz Hessian.
Notice that both assumptions can be satisfied by a Gaussian policy for instance.  

\begin{lemma}[\citet{zhang-et-al20glob-conv-pg}]
\label{lem:2nd-order-smoothness}
Let Assumptions~\ref{hyp:parameterization-reg} and~\ref{hyp:lipschitz-hessian} hold. Then for every~$\theta, \theta' \in \R^d$,
$
\|\nabla^2 J(\theta)- \nabla^2 J(\theta')\| \leq L_h\|\theta - \theta'\|\,, 
$ 
with $L_h = \cO\rb{(1-\g)^{-3}}$.
\end{lemma}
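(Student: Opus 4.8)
The plan is to reduce the statement to a uniform-in-horizon bound on the third-order variation of the truncated objective. Since Assumption~\ref{hyp:lipschitz-hessian} provides a Lipschitz bound on $\theta\mapsto\nabla^2\log\pi_\theta(a|s)$ (equivalently, an almost-everywhere $l_2$ bound on the third derivative of $\log\pi_\theta$), the maps $\theta\mapsto\nabla J_H(\theta)$ and $\theta\mapsto\nabla^2 J_H(\theta)$ inherit enough regularity that it suffices to bound the ``third derivative'' of $J_H$ uniformly in $\theta$ and, crucially, uniformly in $H$, and then pass to the limit $H\to\infty$. Concretely, I would differentiate the representation $\nabla^2 J_H(\theta)=\mathbb{E}_{\tau\sim p(\cdot|\pi_\theta)}[B(\tau,\theta)]$ once more in $\theta$ using the log-derivative identity $\nabla_\theta p(\tau|\pi_\theta)=p(\tau|\pi_\theta)\,\nabla_\theta\log p(\tau|\pi_\theta)$, which yields
\[
\nabla^3 J_H(\theta)=\mathbb{E}\big[\nabla_\theta B(\tau,\theta)\big]+\mathbb{E}\big[B(\tau,\theta)\otimes\nabla_\theta\log p(\tau|\pi_\theta)\big].
\]
A bound $\sup_\xi\|\nabla^3 J(\xi)\|\le L_h$ then gives the claim via the mean value inequality $\|\nabla^2 J(\theta)-\nabla^2 J(\theta')\|\le L_h\|\theta-\theta'\|$.

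Next I would expand every term in the \emph{causal} form, writing $B=\nabla\Phi\,(\nabla\log p)^{\top}+\nabla^2\Phi$ and $\nabla^k\Phi(\tau,\theta)=\sum_{t=0}^{H-1}R_t(\tau)\,\nabla^k\log\pi_\theta(a_t|s_t)$ with nested reward weights $R_t(\tau)=\sum_{h=t}^{H-1}\gamma^h r(s_h,a_h)$ obeying $|R_t|\le \gamma^t r_{\max}/(1-\gamma)$. Assumption~\ref{hyp:parameterization-reg} controls the per-step scores by $M_g,M_h$, and Assumption~\ref{hyp:lipschitz-hessian} supplies the $l_2$ bound needed for the $\nabla^3\Phi$ contribution inside $\nabla_\theta B$. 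After expansion, each expectation becomes a sum over tuples of time indices of products of at most three per-step scores, weighted by a single discounted reward $\gamma^h r_h$; the dominant piece is the density-change term $\mathbb{E}[B\otimes\nabla_\theta\log p]$, whose other contributions (those involving $\nabla^2\log\pi_\theta$, or the $\nabla^3\Phi$ term) only scale like $\mathcal{O}((1-\gamma)^{-2})$ and are subdominant.

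The hard part is that a naive triangle-inequality estimate is \emph{not} uniform in $H$: because $\|\nabla_\theta\log p(\tau|\pi_\theta)\|=\|\sum_t\nabla\log\pi_\theta(a_t|s_t)\|\le HM_g$, term-by-term bounding produces spurious factors of $H$ (up to $H^2$ in the second expectation). The key step is therefore to exploit the zero-mean property of the score, $\mathbb{E}_{a\sim\pi_\theta(\cdot|s)}[\nabla\log\pi_\theta(a|s)]=0$, together with the causal (martingale) structure of the trajectory measure: in each summand a per-step score factor indexed by a time strictly later than the paired reward index $h$ and all other score indices can be conditioned out and vanishes. This confines the surviving index tuples to those dominated by $h$, so the remaining sum has the form $\sum_{h\ge 0}(h+1)^{2}\gamma^h r_{\max}\cdot(\text{const in }M_g,M_h,l_2)$, which converges and is independent of $H$. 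Tracking the discounted polynomial sums $\sum_h(h+1)^k\gamma^h=\mathcal{O}((1-\gamma)^{-(k+1)})$ and using the finer causal accounting (which saves one power of $(1-\gamma)^{-1}$ relative to the crude count, exactly as in the known $L_g=\mathcal{O}((1-\gamma)^{-2})$ bound of Proposition~\ref{prop:stoch-grad-and-J}-(\ref{prop-i})) produces the stated $L_h=\mathcal{O}((1-\gamma)^{-3})$. Passing to the limit $H\to\infty$ is then justified because the bounds are uniform in $H$ and $\nabla^2 J_H\to\nabla^2 J$. The only genuinely delicate point is this horizon-uniform bookkeeping of the causal cancellations; the remaining estimates are routine applications of Assumptions~\ref{hyp:parameterization-reg}--\ref{hyp:lipschitz-hessian}.
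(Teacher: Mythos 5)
The paper does not actually prove this lemma: it is imported wholesale as equation (A.68) of \citet{zhang-et-al20glob-conv-pg}, restated with its explicit constant in Appendix~\ref{sec:notation_useful_lemma}, whose dominant term $r_{\max}M_g^3(1+\gamma)/(1-\gamma)^3$ is what gives the stated $\cO((1-\gamma)^{-3})$. Your overall architecture --- bound $\nabla^3 J_H$ uniformly in $\theta$ and $H$ via the trajectory expansion, conclude by the mean value inequality, and let $H\to\infty$ --- is the right one. But the step you yourself identify as the crux has a genuine gap. Your cancellation rule (a score factor whose time index is strictly later than the reward index $h$ \emph{and} all other score indices is killed by conditioning) does not imply your conclusion that the surviving tuples are ``dominated by $h$.'' In the expansion of $\mathbb{E}[B(\tau,\theta)\otimes\nabla\log p(\tau|\pi_\theta)]$ there are tuples in which the latest index $k>h$ carries a \emph{repeated} score factor --- one copy from the $\nabla\log p$ inside $B$ and one from the outer $\nabla\log p$. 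Conditioning on $s_k$ then leaves $\mathbb{E}_{a_k\sim\pi_\theta(\cdot|s_k)}[\nabla\log\pi_\theta(a_k|s_k)\nabla\log\pi_\theta(a_k|s_k)^{\top}]\neq 0$, so these terms survive your rule, are not dominated by $h$, and number order $H$ per reward index --- precisely the horizon blow-up you set out to avoid. They do cancel, but only against the $\nabla^2\log\pi_\theta(a_k|s_k)$ terms coming from $\mathbb{E}[\nabla_\theta B]$, via the Fisher identity $\mathbb{E}_{a\sim\pi_\theta(\cdot|s)}[\nabla\log\pi_\theta\nabla\log\pi_\theta^{\top}+\nabla^2\log\pi_\theta]=0$, an ingredient absent from your argument. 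The clean fix is to differentiate the causal representation $J_H(\theta)=\sum_{h}\gamma^h\,\mathbb{E}_{\tau_{0:h}\sim p_h}[r_h]$ three times, where $p_h$ is the density of the length-$(h+1)$ prefix only; then indices beyond $h$ never arise in the first place.

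Even granting confinement to $h$-dominated tuples, your count is off: the triple-score contribution has $(h+1)^3$ such tuples, so a term-by-term bound yields $\sum_h (h+1)^3\gamma^h=\cO((1-\gamma)^{-4})$, not the $\sum_h(h+1)^2\gamma^h$ you assert. The missing device --- which you allude to as ``finer causal accounting'' by analogy with the $L_g$ bound, but never state --- is the combination of positivity and martingale orthogonality: writing $X_h=\sum_{t\le h}\nabla\log\pi_\theta(a_t|s_t)$, one bounds $\|\mathbb{E}[r_h X_h^{\otimes 3}]\|\le r_{\max}\,\mathbb{E}\|X_h\|^3\le r_{\max}(h+1)M_g\,\mathbb{E}\|X_h\|^2$, and then $\mathbb{E}\|X_h\|^2=\sum_{t\le h}\mathbb{E}\|\nabla\log\pi_\theta(a_t|s_t)\|^2\le (h+1)M_g^2$ because the cross terms vanish (the score increments form a martingale difference sequence). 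This is what produces $(h+1)^2$ per reward index and hence $(1-\gamma)^{-3}$; without it, your argument as written only proves $L_h=\cO((1-\gamma)^{-4})$. The same care is needed for your side claim that the mixed $\nabla^2\log\pi_\theta$ terms scale as $\cO((1-\gamma)^{-2})$: a crude count gives $(1-\gamma)^{-3}$ for those as well, so they are not obviously subdominant under your own tools.
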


In this paper, we focus on the general class of Fisher-non-degenerate parameterized policies for our convergence analysis. To state our assumption, we define the Fisher information matrix~$F_{\rho}(\theta)$ induced by a policy~$\pi_{\theta}$ for some~$\theta \in \R^d$  and an initial distribution~$\rho$ as follows: 
\begin{equation*}
F_{\rho}(\theta) \eqdef \bb E_{s \sim d_{\rho}^{\pi_\theta},\, a \sim \pi_{\theta}(\cdot|s)} [\nabla \log \pi_{\theta}(a|s) \nabla \log \pi_{\theta}(a|s)^{\top}]\,,
\end{equation*}
where~$d_{\rho}^{\pi_\theta}(\cdot) \eqdef (1-\gamma) \sum_{t = 0}^{\infty} \gamma^t \bb P_{\rho,\pi_{\theta}}(s_t \in \cdot)$ is the discounted state visitation measure.
%and~$\bb P_{\rho,\pi_{\theta}}$ is the probability distribution of the Markov chain~$(s_t,a_t)$ generated by the MDP with policy~$\pi_{\theta}$ and initial state distribution~$\rho$.
\begin{assumption}[Fisher-non-degenerate policy]
\label{hyp:fisher-non-degenerate}
There exists~$\mu_F >0$ s.t. for all~$\theta \in \R^d,$  
% the Fisher information matrix induced by the policy~$\pi_{\theta}$ and the initial distribution~$\rho$ satisfies: 
$F_{\rho}(\theta) \succcurlyeq  \mu_F \cdot I_d\,.$ 
\end{assumption}

%As mentioned in \cite{ding-et-al22}, 
Assumption~\ref{hyp:fisher-non-degenerate} is satisfied by Gaussian policies~$\pi_{\theta}(\cdot|s) = \mathcal{N}(\mu_{\theta}(s),\Sigma)$ when the parameterized mean~$\mu_{\theta}(s)$ has a full-row rank Jacobian and the covariance matrix~$\Sigma$ is fixed. It also holds for a broader subclass of exponential family parameterized policies (see~\citet[Sec.~8]{ding-et-al22} for more details) and for certain neural policies (see also~\citet[Sec.~B.2]{liu-et-al20}).
This assumption is common in the literature \citep{zhang-et-al20glob-conv-pg,liu-et-al20,ding-et-al22,Vanilla_PL_Yuan_21,Masiha_SCRN_KL}. We highlight that the global convergence results we will derive in this paper may not hold for softmax parameterization which does not satisfy Assumption~\ref{hyp:fisher-non-degenerate} as soon as it gets closer to a deterministic policy. 

Following prior work \citep{agarwal-et-al21,liu-et-al20, ding-et-al22,Vanilla_PL_Yuan_21}, we introduce an assumption reflecting the expressivity of our policy parameterization class via the framework of compatible function approximation \citep{PGM_Sutton_1999,agarwal-et-al21}. 
\begin{assumption}
\label{hyp:tranf-compatib-fun-approx}
There exists~$\varepsilon_{\text{bias}} \geq 0$ s.t. for every~$\theta \in \R^d$, the transfer\footnote{in the sense that the compatible function approximation error is evaluated under the \textit{shifted} distribution induced by~$d_{\rho}^{\pi^*}$ and~$\pi^*$.}
%on the state-action space.} 
error satisfies: 
%compatible function approximation error satisfies:
\begin{equation*}
\bb E[ (A^{\pi_{\theta}}(s,a) - (1-\gamma)w^*(\theta)^{\top}\, \nabla \log \pi_{\theta}(a|s))^2] \leq \varepsilon_{\text{bias}}\,,
\end{equation*}
where~$A^{\pi_{\theta}}$ is the advantage function\footnote{We delay its definition to the appendix, it is only used here.}, %$\pi^*$ is an optimal policy (maximizing~$J(\pi)$)  and~
$w^*(\theta) \eqdef F_{\rho}(\theta)^{\dag} \nabla J(\theta)$ where~$F_{\rho}(\theta)^{\dag}$ is the pseudo-inverse of the matrix~$F_{\rho}(\theta)$ and expectation is taken over $ s \sim d_{\rho}^{\pi^*},\, a \sim \pi^*(\cdot|s)$ where~$\pi^*$ is an optimal policy (maximizing~$J(\theta)$). 
\end{assumption}

The common Assumption~\ref{hyp:tranf-compatib-fun-approx} states that the policy parameterization~$\pi_{\theta}$ allows to nearly approximate the advantage function~$A^{\pi_{\theta}}$ from the score function~$\nabla \log \pi_{\theta}$. 
Importantly, we precise that~$\varepsilon_{\text{bias}}$ is positive for a policy parameterization~$\pi_{\theta}$ that does not cover the set of all stochastic policies and~$\varepsilon_{\text{bias}}$ is small for a rich neural policy \citep{wang-et-al20}. 
We refer the reader to the aforementioned references for further details regarding Assumption~\ref{hyp:tranf-compatib-fun-approx}.

Combining Assumptions~\ref{hyp:fisher-non-degenerate} and~\ref{hyp:tranf-compatib-fun-approx}, 
and following the derivations of \citet{ding-et-al22}, we can obtain the following relaxed weak gradient dominance inequality. 
\begin{lemma}[Relaxed weak gradient domination, \citep{ding-et-al22}] 
\label{lem:relaxed-w-grad-dom}
Let Assumptions~\ref{hyp:parameterization-reg}-(\ref{hyp:param-reg-1st-order}), \ref{hyp:fisher-non-degenerate} and~\ref{hyp:tranf-compatib-fun-approx} hold. 
Then%, 
%for every~$\theta \in \R^d$,
\begin{eqnarray}
%\label{eq:relaxed-w-grad-dom}
\forall\, \theta \in \R^d, \quad 
\varepsilon' + \|\nabla J(\theta)\| \geq \sqrt{2\mu}\, (J^* - J(\theta))\,, \notag 
\end{eqnarray}
where~$\varepsilon^{\prime} = \frac{\mu_F \sqrt{\varepsilon_{\text{bias}}}}{M_g (1-\gamma)}\,$ and~$\mu = \frac{\mu_F^2}{2M_g^2}.$ 
%and~$J^* \eqdef \sup_{\theta} J(\theta)$. 
%and~$J^*$ is the optimal expected return (maximizing~$J(\pi)$).
%and~$J^* \eqdef \sup_{\theta} J(\theta)$ is the optimal expected return.
\end{lemma}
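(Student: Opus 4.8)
The plan is to combine the Kakade--Langford performance difference lemma with the compatible function approximation bound of Assumption~\ref{hyp:tranf-compatib-fun-approx}, and to control the resulting Fisher-weighted gradient through the non-degeneracy of Assumption~\ref{hyp:fisher-non-degenerate}. First, I would express the optimality gap as an expected advantage of the current policy under the \emph{optimal} state-action occupancy:
\[
J^* - J(\theta) = \frac{1}{1-\gamma}\,\Expu{s \sim d_{\rho}^{\pi^*},\, a \sim \pi^*}{A^{\pi_{\theta}}(s,a)}\,.
\]
Then I would add and subtract the compatible linear term $(1-\gamma)\,w^*(\theta)^{\top}\nabla \log \pi_{\theta}(a|s)$ inside this expectation, splitting the right-hand side into a principal term $\Expu{s \sim d_{\rho}^{\pi^*},\, a \sim \pi^*}{w^*(\theta)^{\top}\nabla \log \pi_{\theta}(a|s)}$ and an approximation-error term carrying the $\frac{1}{1-\gamma}$ prefactor.

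For the error term, I would apply Jensen's (equivalently Cauchy--Schwarz) inequality to move the absolute value inside the expectation and bound it by the root-mean-square transfer error, which is precisely $\sqrt{\varepsilon_{\text{bias}}}$ by Assumption~\ref{hyp:tranf-compatib-fun-approx}; after the prefactor this contributes exactly $\frac{\sqrt{\varepsilon_{\text{bias}}}}{1-\gamma}$. For the principal term, I would use the uniform score bound $\|\nabla \log \pi_{\theta}(a|s)\| \leq M_g$ from Assumption~\ref{hyp:parameterization-reg}-(\ref{hyp:param-reg-1st-order}) together with Cauchy--Schwarz to dominate it by $M_g\,\|w^*(\theta)\|$. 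The crucial step is to bound $\|w^*(\theta)\| = \|F_{\rho}(\theta)^{\dag}\nabla J(\theta)\|$: since Assumption~\ref{hyp:fisher-non-degenerate} gives $F_{\rho}(\theta) \succcurlyeq \mu_F I_d$, the matrix is invertible with spectral norm of its inverse at most $\mu_F^{-1}$, so that $\|w^*(\theta)\| \leq \mu_F^{-1}\,\|\nabla J(\theta)\|$.

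Collecting the two estimates yields $J^* - J(\theta) \leq \frac{M_g}{\mu_F}\,\|\nabla J(\theta)\| + \frac{\sqrt{\varepsilon_{\text{bias}}}}{1-\gamma}$, and multiplying through by $\frac{\mu_F}{M_g}$ and rearranging produces the claimed inequality, since $\sqrt{2\mu} = \mu_F/M_g$ and $\varepsilon' = \frac{\mu_F \sqrt{\varepsilon_{\text{bias}}}}{M_g(1-\gamma)}$ by their definitions. I expect the only subtlety to be the bookkeeping around the distribution mismatch: the expectation is taken under $\pi^*$ and $d_{\rho}^{\pi^*}$ rather than under $\pi_{\theta}$, so one must resist the temptation to invoke the Fisher matrix inside the expectation. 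This is in fact harmless, because both the score bound $M_g$ and the advantage-approximation error in Assumption~\ref{hyp:tranf-compatib-fun-approx} are uniform over state-action pairs, so no change of measure or importance weights enter, and the non-degeneracy of $F_{\rho}(\theta)$ is used only to bound $\|w^*(\theta)\|$ outside the expectation.
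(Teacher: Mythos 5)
Your proof is correct and follows essentially the same route as the derivation the paper relies on (it defers to \citet{ding-et-al22}): the Kakade--Langford performance difference lemma, the add-and-subtract of the compatible linear term bounded via Jensen and Assumption~\ref{hyp:tranf-compatib-fun-approx}, the score bound $M_g$ from Assumption~\ref{hyp:parameterization-reg}-(\ref{hyp:param-reg-1st-order}), and the bound $\|w^*(\theta)\| \leq \mu_F^{-1}\|\nabla J(\theta)\|$ from Assumption~\ref{hyp:fisher-non-degenerate}. The constants also check out, since $\sqrt{2\mu} = \mu_F/M_g$ and the error term rescales to exactly $\varepsilon'$.
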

This result is the key tool in convergence analysis of our algorithms. See Appendix~\ref{sec:appendix_proof_sketch} for more details and the proof sketch.

\subsection{Convergence analysis of \algname{N-PG-IGT}}

Our first main result characterizes the sample complexity of the \algname{N-PG-IGT} algorithm to produce an~$\varepsilon$-globally optimal policy up to a bias error term due to the approximation power of the policy parameterization.\footnote{We also analyze convergence of this method to a FOSP for a wider class of policy parameterizations (under Assumptions~\ref{hyp:parameterization-reg} and \ref{hyp:lipschitz-hessian}) in Theorem~\ref{thm:NIGT_FOSP} in Appendix~\ref{sec:nigt_app}.} 
% In the next result, we show a faster convergence rate for the \algname{N-PG-IGT} algorithm under an additional regularity assumption on the policy parameterization which guarantees second order smoothness of the expected return function~$J$.

\begin{theorem}[\textbf{Global convergence of \algname{N-PG-IGT}}]
    \label{thm:N-PG-IGT}
    Let Assumptions~\ref{hyp:parameterization-reg} to~\ref{hyp:tranf-compatib-fun-approx} hold.
    Set~$\gamma_t = \fr{6 M_g}{\mu_F (t+2)},\,\eta_t = \rb{ \fr{2}{t+2} }^{\nfr{4}{5}}$ and~$H = \rb{1-\g}^{-1}{\log(T + 1)}$. Then for every integer~$T \geq 1$, the output~$\theta_T$ of \algname{N-PG-IGT} satisfies %(Algorithm~\ref{alg:N-PG-IGT}) 
    \begin{eqnarray}
         J^* -  \Exp{J(\theta_T)} \leq   \cO\rb{ \fr{ \sigma_g  + L_h}{(T+1)^{\nfr{2}{5}}} } + \fr{\sqrt{\varepsilon_{\text{bias}}}}{1-\gamma} , \notag
    \end{eqnarray}
    where~$\sigma_g, L_h$ are defined in Proposition~\ref{prop:stoch-grad-and-J} and Lemma~\ref{lem:2nd-order-smoothness} respectively. 
    Hence, for every~$\varepsilon >0,$ the output~$\theta_T$ satisfies~$J^* - \Exp{J(\theta_T)} \leq \varepsilon + \sqrt{\varepsilon_{\text{bias}}}/(1-\gamma)$ for~$T = \cO(\varepsilon^{-2.5}),$ i.e., the total sample complexity is of the order~$\wcO(\varepsilon^{-2.5})\,.$ 
\end{theorem}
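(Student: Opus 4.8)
The plan is to reduce the global-optimality statement to a one-step contraction on the optimality gap $\delta_t \eqdef J^* - J(\theta_t)$, driven by a careful bound on how well the momentum direction $d_t$ tracks the true gradient $\nabla J(\theta_t)$. First I would record a descent estimate for the normalized update. Since $J$ is $L_g$-smooth (Proposition~\ref{prop:stoch-grad-and-J}-(\ref{prop-i})) and normalization forces $\norm{\theta_{t+1}-\theta_t}=\gamma_t$, smoothness together with the elementary bound $\langle \nabla J(\theta_t),\, d_t/\norm{d_t}\rangle \geq \norm{\nabla J(\theta_t)} - 2\norm{d_t-\nabla J(\theta_t)}$ gives
\begin{equation*}
J(\theta_{t+1}) \geq J(\theta_t) + \gamma_t\norm{\nabla J(\theta_t)} - 2\gamma_t\norm{e_t} - \tfrac{L_g}{2}\gamma_t^2,
\end{equation*}
with $e_t \eqdef d_t-\nabla J(\theta_t)$. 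Everything then hinges on bounding $\Exp{\norm{e_t}}$.

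The crux is the control of $e_t$. Inserting the update of $d_t$ and writing $\nabla J(\theta_t)=(1-\eta_t)\nabla J(\theta_t)+\eta_t\nabla J(\theta_t)$ produces the recursion $e_t = (1-\eta_t)e_{t-1} + r_t + \eta_t S_t$, where $S_t \eqdef g(\tilde{\tau}_t,\tilde{\theta}_t)-\nabla J_H(\tilde{\theta}_t)$ is conditionally mean-zero with $\Exp{\norm{S_t}^2}\leq\sigma_g^2$, and $r_t$ gathers the deterministic displacement $(1-\eta_t)(\nabla J(\theta_{t-1})-\nabla J(\theta_t)) + \eta_t(\nabla J(\tilde{\theta}_t)-\nabla J(\theta_t))$ plus the truncation bias $\eta_t(\nabla J_H(\tilde{\theta}_t)-\nabla J(\tilde{\theta}_t))$. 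The key computation is a second-order Taylor expansion of both gradient differences around $\theta_t$: the look-ahead rule gives exactly $\eta_t(\tilde{\theta}_t-\theta_t)=(1-\eta_t)(\theta_t-\theta_{t-1})$, so the first-order Hessian terms cancel, and Lemma~\ref{lem:2nd-order-smoothness} bounds the leftover remainder by $\cO(L_h\,\eta_t^{-1}\norm{\theta_t-\theta_{t-1}}^2)$. Crucially, normalization makes $\norm{\theta_t-\theta_{t-1}}=\gamma_{t-1}$ deterministic, whence $\norm{r_t}=\cO(L_h\gamma_{t-1}^2/\eta_t)+\eta_t\,\cO(\g^H)$.

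Next I would unroll the recursion and split $e_t=\mathcal{D}_t+\mathcal{N}_t$ into a deterministic piece accumulating $r_s$ and a martingale piece $\mathcal{N}_t=\sum_s (\prod_{j>s}(1-\eta_j))\,\eta_s S_s$. The telescoping identity $\sum_s\eta_s\prod_{j>s}(1-\eta_j)\leq 1$ bounds $\norm{\mathcal{D}_t}=\cO(L_h\,t^{-2/5})$, while orthogonality of martingale increments gives $\Exp{\norm{\mathcal{N}_t}^2}\leq\sigma_g^2\sum_s(\prod_{j>s}(1-\eta_j))^2\eta_s^2=\cO(\sigma_g^2\,t^{-4/5})$, so $\Exp{\norm{\mathcal{N}_t}}=\cO(\sigma_g\,t^{-2/5})$. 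The exponent in $\eta_t=(2/(t+2))^{4/5}$ is exactly the choice that balances the second-order bias of order $t^{-2+2\cdot(4/5)}$ against the noise of order $t^{-(4/5)/2}$, both equal to $t^{-2/5}$; the initial error decays super-polynomially because $\sum_s\eta_s=\infty$, and with $H=\cO((1-\g)^{-1}\log T)$ the truncation bias is $\cO(1/T)$. This yields $\Exp{\norm{e_t}}=\cO((\sigma_g+L_h)\,t^{-2/5})$.

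Finally I would close the loop. Substituting the last bound into the descent estimate and applying Lemma~\ref{lem:relaxed-w-grad-dom} (with $\sqrt{2\mu}=\mu_F/M_g$, so that $\gamma_t\sqrt{2\mu}=6/(t+2)$ for the stated $\gamma_t$) gives
\begin{equation*}
\Exp{\delta_{t+1}} \leq \rb{1-\tfrac{6}{t+2}}\Exp{\delta_t} + \gamma_t\eps' + 2\gamma_t\Exp{\norm{e_t}} + \tfrac{L_g}{2}\gamma_t^2.
\end{equation*}
Solving this standard linear recursion, the persistent $\gamma_t\eps'$ term contributes the irreducible bias $\eps'/\sqrt{2\mu}=\sqrt{\varepsilon_{\text{bias}}}/(1-\g)$, the term $\gamma_t\Exp{\norm{e_t}}=\cO((\sigma_g+L_h)t^{-7/5})$ produces the dominant $\cO((\sigma_g+L_h)T^{-2/5})$ rate (since the contraction exponent $6$ exceeds $2/5$), and $\gamma_t^2$ is of lower order; this is exactly the claimed bound, and $T=\cO(\eps^{-2.5})$ iterations of one trajectory each yields the $\wcO(\eps^{-2.5})$ complexity. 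I expect the second step to be the main obstacle: obtaining the exact first-order cancellation in $r_t$ and then juggling the two competing error sources so that the exponent $4/5$ equalizes them is where the whole rate improvement is won or lost.
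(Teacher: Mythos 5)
Your proposal follows essentially the same route as the paper's proof: an ascent-type lemma for the normalized step (the paper's Lemma~\ref{le:NSGD_descent_trunc}, with constants $\fr{1}{3}$ and $\fr{8}{3}$ in place of your $1$ and $2$, both obtained from the standard estimate $\langle \nabla J, d/\norm{d}\rangle \geq \norm{\nabla J} - 2\norm{d - \nabla J}$), the error recursion $e_t = (1-\eta_t)e_{t-1} + \eta_t S_t + r_t$ with the exact cancellation of the first-order Hessian terms coming from $\eta_t(\tilde{\theta}_t - \theta_t) = (1-\eta_t)(\theta_t - \theta_{t-1})$ (the paper's Lemma~\ref{le:NIGT_first_err_bound}), the bias/martingale split balanced at $t^{-\nfr{2}{5}}$ by the choice $\eta_t \asymp t^{-\nfr{4}{5}}$, and finally the weak-gradient-dominance recursion on $\delta_t$ solved as in Lemma~\ref{le:aux_rec0}. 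The only substantive difference is bookkeeping: you track $d_t - \nabla J(\theta_t)$ and place the truncation bias inside $r_t$, whereas the paper tracks $d_t - \nabla J_H(\theta_t)$ and Taylor-expands $J_H$, which forces it to carry the extra comparison terms $S_t - \bar{S}_t$ and $Z_t - \bar{Z}_t$ of size $\cO(D_g\g^H + D_h \g^H \g_{t-1}\eta_t^{-1})$; your variant is legitimate because Lemma~\ref{lem:2nd-order-smoothness} applies to $J$ directly, and it is arguably slightly cleaner.

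There is, however, one step whose stated justification would not deliver the claimed bound, and it sits exactly where you say the rate is ``won or lost.'' For the deterministic accumulation $\mathcal{D}_t = \sum_{s}\rb{\prod_{j>s}(1-\eta_j)} r_s$, the identity $\sum_s \eta_s \prod_{j>s}(1-\eta_j)\leq 1$ alone only gives $\norm{\mathcal{D}_t} \leq \max_s \norm{r_s}\eta_s^{-1} = \cO\rb{L_h \max_s \g_s^2\eta_s^{-2}} = \cO(L_h)$, because $\g_s^2\eta_s^{-2}\asymp s^{-\nfr{2}{5}}$ is largest at small $s$; a constant bound here destroys the rate. What is actually needed is that the kernel $\prod_{j>s}(1-\eta_j)$ damps early terms nearly exponentially (it behaves like $\exp\rb{-c\,(t^{\nfr{1}{5}} - s^{\nfr{1}{5}})}$ for $\eta_j\asymp j^{-\nfr{4}{5}}$), so that the weighted sum is of the order of its \emph{last} summand times $\eta_t^{-1}$, giving $\cO(L_h\g_t^2\eta_t^{-2}) = \cO(L_h t^{-\nfr{2}{5}})$. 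This is precisely the content of the paper's Lemma~\ref{le:sum_prod_bound1} (an integral-comparison estimate), which your write-up would need to invoke or reprove; the same estimate is also what justifies your claim $\sum_s \eta_s^2 \prod_{j>s}(1-\eta_j)^2 = \cO(t^{-\nfr{4}{5}})$ for the martingale part, since that sum too is dominated by its late terms only thanks to the exponential damping. With this lemma supplied, your argument closes and coincides with the paper's.
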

% \ab{We can remove the~$\mathcal{O}$ in front of the bias term~$\sqrt{\varepsilon_{\text{bias}}}/(1-\gamma)$ by choosing a smaller~$\gamma_0$, it is cleaner and safer (we can discuss this later together).}
% \ilyas{I agree that it may look confusing to have $1/(1-\g)$ term here and would be good to reduce this neighborhood by setting smaller value of $\g_0$. However, I tried before to develop theory for smaller values of $\g_0$ and found that if we set $\g_0 < \fr{1}{2\sqrt{2\mu}}$, then the rate becomes worse and actually the rate depends on $\g_0$. I think this is also related to the fact that we have a lower bound for this term and it is fundamentally impossible to go beyond $\fr{\varepsilon^{\prime}}{\sqrt{2\mu}} =  \fr{M_g \sqrt{\varepsilon_{\text{bias}}}}{\mu_F (1-\gamma)} $. 
% \ab{Then we need to discuss this because this neighborhood is important (i) to have a fair comparison to prior work and (ii) if you allow yourself to have a big $O$ for the bias term, then actually the rate may be problematic to define properly (we can discuss this in-person maybe).}}
%\begin{remark}
Our analysis of \algname{N-PG-IGT} crucially relies on  second-order smoothness of~$J(\theta)$, which stems from Lemma~\ref{lem:2nd-order-smoothness}. %However, we notice that \algname{N-PG-IGT} does not use any second-order information unlike \algname{SCRN} method by~\citet{Masiha_SCRN_KL}. 
%\end{remark}
%\begin{remark}
\algname{N-PG-IGT} improves over the best known sample complexity of (stochastic) Vanilla-PG \citep{Vanilla_PL_Yuan_21} and \algname{STORM-PG-F} \citep{ding-et-al22}  by a factor of~$\mathcal{O}(\varepsilon^{-0.5})$ for Fisher-non-degenerate
%parameterized 
policies. The recently proposed \algname{SCRN} algorithm \citep{Masiha_SCRN_KL} also achieves a~$\tilde{\mathcal{O}}(\varepsilon^{-2.5})$ sample complexity under the similar set of assumptions. However, their algorithm needs a subroutine to approximately solve a cubic regularized subproblem. Moreover, \algname{SCRN} requires large batch sizes and makes use of second-order information. In constrast, \algname{N-PG-IGT} is single-loop, only requires to sample a single trajectory per iteration and does not need any second-order information to be implemented.
%\end{remark}

% The following corollary provides the sample complexity of the algorithm, i.e., the number of observed state-action pairs to achieve a global optimality precision~$\varepsilon > 0$ up to the error floor~$\varepsilon^{\prime}$ related to the function approximation power of the policy parameterization. 
% \begin{corollary}
% In the setting of Theorem~\ref{thm:N-PG-IGT}, for every~$\varepsilon >0,$ the output~$\theta_T$ of \algname{N-PG-IGT} satisfies~$J^* - \Exp{J(\theta_T)} \leq \cO\rb{\varepsilon + \varepsilon^{\prime}}$ for~$T = \wcO(\varepsilon^{-2.5}),$ i.e., the sample complexity is of the order~$\wcO(\varepsilon^{-2.5})\,.$
% \end{corollary}

\subsection{Convergence analysis of \algname{(N)-HARPG}}
%and \algname{N-HARPG} (aka \algname{SHARP})}

In this section, we further improve the sample complexity for global convergence to~$\tilde{\mathcal{O}}(\varepsilon^{-2})$. We show this result for both \algname{HARPG} and its normalized version~\algname{N-HARPG}.
% \ilyas{We should either specify the rates in terms of optimization constants $\sigma_g$, $L$, $\mu$ or in terms of RL problem $\gamma$, $\varepsilon_{bias}$. Now it is a bit confusing because $\sigma_g$, $L_J$ also depend on $\fr{1}{1-\g}$. \ab{I think it is better to explicitly show $\varepsilon_{bias}$ because we see its meaning compared to~$\varepsilon'$ which is opaque, it is fine if other constants depend on $(1-\gamma)$, it should be clear from the rest of the paper.} }
\begin{theorem}[\textbf{Global convergence of \algname{HARPG}}]
    \label{thm:HARPG}
    Let Assumptions~\ref{hyp:parameterization-reg}, \ref{hyp:fisher-non-degenerate} and~\ref{hyp:tranf-compatib-fun-approx} hold.
    Set~$\gamma_t = \g_0 \eta_t^{\nfr{1}{2}} $, $\eta_t = \fr{2  }{t+2} $ with $\g_0 = \min\cb{ \fr{1}{ 8 \sqrt{6} ( L_g + \sigma_g + D_h \g^H ) } , \fr{\sqrt{2}M_g}{\sqrt{3}\sigma_g \mu_F}  } $,
    $H = \rb{1-\g}^{-1}{\log(T + 1)}$. Then for every integer~$T \geq 1$, the output~$\theta_T$ of \algname{HARPG} (see Algorithm~\ref{alg:(N)-HARPG}) satisfies
    \begin{multline*}
    %\begin{eqnarray}
        J^* - \Exp{ J(\theta_T) } \leq \cO%\rb{ 
        \left(
        \fr{  \sigma_g  + L_g + \sigma_h }{(T+1)^{\nfr{1}{2}}} \right) + \fr{2 \sqrt{\varepsilon_{\text{bias}}}}{1-\gamma} ,
        %J^* - J(\theta_0) }{(T+1)^2} +    \fr{ \sigma_g  + L_g }{ T + 1 }
    %\end{eqnarray}
    \end{multline*}
    where~$\sigma_g, \sigma_h, L_g$ are defined in Proposition~\ref{prop:stoch-grad-and-J} and $D_h$~is defined in Lemma~\ref{le:trunc_grad_hess} (see Appendix~\ref{sec:notation_useful_lemma}).
\end{theorem}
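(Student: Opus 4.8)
The plan is to run a STORM-type variance-reduction argument tailored to the second-order correction $v_t$, coupled to the weak gradient domination of Lemma~\ref{lem:relaxed-w-grad-dom} through a single Lyapunov potential. Write $\Delta_t \eqdef J^* - J(\theta_t)$, $\delta_t \eqdef \Exp{\Delta_t}$, $e_t \eqdef d_t - \nabla J(\theta_t)$, and let $\mathcal F_{t-1}$ denote the filtration generated by the iterates up to time $t$. The uniform draw $q_t \sim \mathcal U([0,1])$ together with the fundamental theorem of calculus applied to $\nabla J_H$ gives $\Exp{v_t \mid \mathcal F_{t-1}} = \nabla J_H(\theta_t) - \nabla J_H(\theta_{t-1})$, i.e.\ the correction is unbiased for the gradient increment up to truncation. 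Substituting the update of $d_t$ into $e_t$ then yields
\begin{equation*}
e_t = (1-\eta_t)\, e_{t-1} + (1-\eta_t)\, S_t + \eta_t\, G_t,
\end{equation*}
where $S_t \eqdef v_t - (\nabla J(\theta_t) - \nabla J(\theta_{t-1}))$ and $G_t \eqdef g(\tau_t,\theta_t) - \nabla J(\theta_t)$ have conditional means equal to pure truncation biases of order $\cO(\g^H)$.

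Taking conditional expectations and using that $\tau_t$ and $\hat{\tau}_t$ are drawn independently given $\mathcal F_{t-1}$ (so the martingale parts of $S_t$ and $G_t$ are uncorrelated), Proposition~\ref{prop:stoch-grad-and-J} bounds the conditional variance of $G_t$ by $\sigma_g^2$ and that of $v_t$ by $\cO(\sigma_h^2+L_g^2)\norm{\theta_t-\theta_{t-1}}^2$. The decisive feature of the \emph{non-normalized} \algname{HARPG} is that $\norm{\theta_t-\theta_{t-1}}^2 = \gamma_{t-1}^2\norm{d_{t-1}}^2$ is not a priori bounded; I would write $\norm{d_{t-1}}^2 \le 2\norm{e_{t-1}}^2 + 2\norm{\nabla J(\theta_{t-1})}^2$ but deliberately keep the gradient-norm term instead of converting it to $\Delta_{t-1}$. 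With $\gamma_{t-1}^2 = \g_0^2\eta_{t-1}$ this produces a recursion for $\Exp{\norm{e_t}^2}$ whose only external driving terms are $\cO(\g_0^2\eta_{t-1})\Exp{\norm{\nabla J(\theta_{t-1})}^2}$, $\eta_t^2\sigma_g^2$, and $\cO(\g^{2H})$. In parallel, $L_g$-smoothness and Young's inequality give the ascent estimate
\begin{equation*}
\delta_{t+1} \le \delta_t - \tfrac{\gamma_t}{2}\rb{1 - 2L_g\gamma_t}\Exp{\norm{\nabla J(\theta_t)}^2} + \rb{\tfrac{\gamma_t}{2} + L_g\gamma_t^2}\Exp{\norm{e_t}^2}.
\end{equation*}

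The crux is to close these two coupled recursions jointly via the potential $\Phi_t \eqdef \delta_t + \beta_t\Exp{\norm{e_t}^2}$ with weights $\beta_t \asymp \gamma_t/\eta_t \asymp \sqrt{t}$. Combining the displays, the coefficient of $\Exp{\norm{e_t}^2}$ in $\Phi_{t+1}-\Phi_t$ is made nonpositive by the choice of $\beta_t$, while the coefficient of $\Exp{\norm{\nabla J(\theta_t)}^2}$ equals $-\tfrac{\gamma_t}{2}(1-2L_g\gamma_t) + \cO(\beta_{t+1}\g_0^2\gamma_t^2)$; since $\beta_{t+1}\gamma_t^2 \asymp \g_0^2\gamma_t$, taking $\g_0$ small — exactly the role of the first term in the definition of $\g_0$ — makes this at most $-\tfrac{\gamma_t}{8}$. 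The key mechanism, which resolves the apparent loss in the error recursion, is that the $\norm{\nabla J}^2$ driving term of $\Exp{\norm{e_t}^2}$ is not treated as independent noise but is \emph{absorbed} by the negative progress term of the descent (both are of order $\gamma_t\norm{\nabla J}^2$). One is thus left with $\Phi_{t+1}-\Phi_t \le -\tfrac{\gamma_t}{8}\Exp{\norm{\nabla J(\theta_t)}^2} + \cO(\sigma_g^2\,t^{-3/2}) + \beta_{t+1}\cO(\g^{2H})$.

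Finally I would apply gradient domination in squared form, $\Exp{\norm{\nabla J(\theta_t)}^2} \ge \mu\delta_t^2 - \varepsilon'^2$, which follows from $\varepsilon' + \norm{\nabla J(\theta_t)} \ge \sqrt{2\mu}\,\Delta_t$ (Lemma~\ref{lem:relaxed-w-grad-dom}) via $(\sqrt\mu\,\Delta_t - \sqrt2\,\varepsilon')^2 \ge 0$ and Jensen's inequality $\Exp{\Delta_t^2}\ge\delta_t^2$. In the regime $\delta_t \ge \tfrac{2\sqrt{\varepsilon_{\text{bias}}}}{1-\gamma} = \sqrt2\,\varepsilon'/\sqrt\mu$ (the source of the factor $2$ in the bias floor) the $\varepsilon'^2$ term is dominated by half the quadratic decrease, leaving $\Phi_{t+1} \le \Phi_t - c\gamma_t\delta_t^2 + (\text{summable noise})$; using the comparability $\beta_t\Exp{\norm{e_t}^2}\lesssim\delta_t$ maintained by the induction (whence $\Phi_t \asymp \delta_t$, so $\delta_t^2 \gtrsim \Phi_t^2$) and $\gamma_t \asymp t^{-1/2}$, an induction with the ansatz $\Phi_t \le A(t+1)^{-1/2} + \tfrac{2\sqrt{\varepsilon_{\text{bias}}}}{1-\gamma}$ closes, relying on $(t+1)^{-1/2} \ge t^{-1/2}(1-\tfrac{1}{2t})$ and $A$ large relative to $\sigma_g,\sigma_h,L_g,\mu$. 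With $H=(1-\gamma)^{-1}\log(T+1)$ one has $\g^H \le (T+1)^{-1}$, so the truncation terms are negligible and $\sigma_h = \wcO(1)$ costs only logarithms, giving the claim. I expect the main obstacle to be precisely this joint closure: in the non-normalized algorithm the second-order noise scales with $\norm{\theta_t-\theta_{t-1}}^2 = \gamma_{t-1}^2\norm{d_{t-1}}^2$, so a naive separate treatment of $\Exp{\norm{e_t}^2}$ (converting $\norm{\nabla J}^2$ into $\Delta$) over-counts the noise and fails to reach the $t^{-1/2}$ rate; choosing $\beta_t \asymp \gamma_t/\eta_t$ with $\g_0$ small so that the gradient-norm driving is swallowed by the descent progress, while simultaneously maintaining $\Phi_t\asymp\delta_t$ and solving the nonlinear last-iterate recursion, is the technical heart of the proof.
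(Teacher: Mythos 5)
Your overall architecture coincides with the paper's: the same decomposition of $d_t-\nabla J_H(\theta_t)$ into a fresh-sample martingale part and a Hessian-correction part with conditional mean controlled by truncation (the paper's $\cV_t,\cW_t$ in Lemma~\ref{le:g_upd_rec_Hess_Aid_trunc}), the same potential $\delta_t+\beta_t\Exp{\norm{e_t}^2}$ with weights $\beta_t\asymp \gamma_t/\eta_t=\gamma_0\eta_t^{-\nfr{1}{2}}$ (the paper's $\Lambda_t=\delta_t+\lambda_{t-1}V_t$ with $\lambda_t=4\gamma_0\eta_t^{-\nfr{1}{2}}$, see~\eqref{eq:notations-harpg-3}), the same squared form of the weak gradient domination, the same role of the first term in $\gamma_0$ for absorbing the movement term $\norm{\theta_{t+1}-\theta_t}^2=\gamma_t^2\norm{d_t}^2$ (you split $\norm{d_t}^2\le 2\norm{e_t}^2+2\norm{\nabla J(\theta_t)}^2$ and absorb each piece into the corresponding negative term; the paper keeps $-\tfrac{1}{4\gamma_t}R_t$ in Lemma~\ref{le:SGD_descent} and cancels it wholesale against $+12\lambda_t(\cdot)R_t$ in Lemma~\ref{le:key_recursion_Hess_Aid} --- equivalent bookkeeping), and the same case split at $\delta_t\le \sqrt{2}\varepsilon'/\sqrt{\mu}$ producing the $2\sqrt{\varepsilon_{\text{bias}}}/(1-\gamma)$ floor.

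The genuine gap is in the final closure of the nonlinear recursion. You convert the drift $-\Omega(\gamma_t)\Exp{\norm{\nabla J(\theta_t)}^2}\le-\Omega(\gamma_t)\,\mu\delta_t^2+(\varepsilon')^2\gamma_t$ into a contraction for $\Phi_t$ by invoking the induction invariant $\beta_t\Exp{\norm{e_t}^2}\lesssim\delta_t$ (so that $\Phi_t\asymp\delta_t$ and $\delta_t^2\gtrsim\Phi_t^2$). This invariant is unjustified and false in general: $\Exp{\norm{e_t}^2}$ obeys its own recursion whose noise floor is of order $\sigma_g^2\eta_t$ independently of where the iterate is, so $\beta_t\Exp{\norm{e_t}^2}\asymp\gamma_0\sigma_g^2\eta_t^{\nfr{1}{2}}$, while $\delta_t$ can be arbitrarily small (near-optimal initialization, or any time the trajectory passes near the optimum). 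At such times $\Phi_t$ is dominated by the variance term, $\delta_t^2\ll\Phi_t^2$, your induction step yields no usable drift, and the ansatz $\Phi_t\le A(t+1)^{-\nfr{1}{2}}+\text{floor}$ cannot be propagated, since it requires a per-step decrease of order $t^{-\nfr{3}{2}}$ that pure noise accumulation does not supply. Two repairs are possible. (a) A two-regime argument: choosing the constant in $\beta_t$ slightly larger makes the coefficient of $\Exp{\norm{e_t}^2}$ in $\Phi_{t+1}-\Phi_t$ not merely nonpositive but $\le-\Omega(\eta_t)\beta_t\Exp{\norm{e_t}^2}$, so whichever of $\delta_t$ and $\beta_t\Exp{\norm{e_t}^2}$ dominates $\Phi_t$ one gets drift $-\Omega\rb{\min\cb{\eta_t\Phi_t,\ \gamma_t\mu\Phi_t^2}}$, which suffices. (b) The paper's cleaner device (Lemma~\ref{le:key_recursion_Hess_Aid}): retain the leftover $-\tfrac{5}{6}\gamma_tV_t$, substitute $\delta_t=\Lambda_t-\lambda_{t-1}V_t$ into $-\tfrac{\gamma_t\mu}{4}\delta_t^2$, and maximize the resulting concave quadratic in $V_t$ via $-ax^2+cx\le c^2/(4a)$; the $\Lambda_t^2$ terms then cancel identically, leaving the \emph{linear} contraction $-\tfrac{5}{6}\tfrac{\gamma_t}{\lambda_{t-1}}\Lambda_t=-\Omega(\eta_t)\Lambda_t$, after which Lemma~\ref{le:aux_rec0} delivers the $T^{-\nfr{1}{2}}$ rate with no induction invariant at all. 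Everything else in your proposal (unbiasedness of $v_t$, the variance bounds, the truncation estimates with $\gamma^H\le(T+1)^{-1}$, and the source of the factor $2$ in the bias floor) matches the paper.
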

% \ilyas{I decided to keep only the slowest term in the Theorems because actually we can make the rate of other (faster) terms as fast as we want only sacrificing an absolute constant factor. \ab{OK} }
\begin{theorem}[\textbf{Global convergence of \algname{N-HARPG}}]
    \label{thm:N-HARPG}
    Let Assumptions~\ref{hyp:parameterization-reg}, \ref{hyp:fisher-non-degenerate} and~\ref{hyp:tranf-compatib-fun-approx} hold.
    Set~$\gamma_t = \fr{6 M_g}{\mu_F(t+2)}$ and~$\eta_t = \fr{2}{t+2} $, $H = \rb{1-\g}^{-1}{\log(T + 1)}$. Then for every integer~$T \geq 1$, the output~$\theta_T$ of \algname{N-HARPG} (see Algorithm~\ref{alg:(N)-HARPG}) satisfies
    %\begin{eqnarray}
    \begin{multline*}
        J^* - \Exp{ J(\theta_T) } \leq \cO%\rb{ 
        \left( \fr{  \sigma_g  + L_g + \sigma_h }{(T+1)^{\nfr{1}{2}}} \right) + \fr{\sqrt{\varepsilon_{\text{bias}}}}{1-\gamma} ,
    \end{multline*}
    %\fr{ J^* - J(\theta_0) }{(T+1)^2} +    \fr{ \sigma_g  + L_g }{ T + 1 } 
    %\end{eqnarray}
    %\footnotesize{
   %\begin{multline*}
   %     J^* - \Exp{ J(\theta_T) } \\ \leq  \cO\rb{ \fr{ J^* - J(\theta_0) }{(T+1)^2} + \fr{ D_h }{  (T+1)^{\nfr{3}{2}} } +   \fr{ \sigma_g + L_g + D_g}{ T+1 } +  \fr{  \sigma_g  + L_g + \sigma_h }{ (T+1)^{\nfr{1}{2}}}  + \fr{ \sqrt{ \varepsilon_{\text{bias}} } }{1 - \g } } \notag.
    %\end{multline*}
    %}
    %where~$\delta_0 \eqdef J^* - J(\theta_0)$ and
    where~$\sigma_g, \sigma_h, L_g$ are defined in Proposition~\ref{prop:stoch-grad-and-J}.
\end{theorem}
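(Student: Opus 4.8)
\textbf{Proof proposal for Theorem~\ref{thm:N-HARPG}.}

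The plan is to combine a variance-reduction analysis of the momentum-corrected direction~$d_t$ with the normalized-ascent descent lemma and the relaxed weak gradient domination inequality of Lemma~\ref{lem:relaxed-w-grad-dom}. First I would define the estimation error~$\epsilon_t \eqdef d_t - \nabla J(\theta_t)$ and establish a recursive bound on~$\Exp{\|\epsilon_t\|^2}$. The key algebraic identity is that, because~$v_t = B(\hat\tau_t,\hat\theta_t)(\theta_t-\theta_{t-1})$ is an unbiased estimator of~$\nabla J(\theta_t)-\nabla J(\theta_{t-1})$ under the uniform sampling of~$q_t$, the recursion~$d_t = (1-\eta_t)(d_{t-1}+v_t) + \eta_t g(\tau_t,\theta_t)$ yields a STORM-type error recursion of the form~$\epsilon_t = (1-\eta_t)\epsilon_{t-1} + (1-\eta_t)(v_t - (\nabla J(\theta_t)-\nabla J(\theta_{t-1}))) + \eta_t(g(\tau_t,\theta_t)-\nabla J(\theta_t))$ plus truncation-bias terms from using the finite horizon~$H$. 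Conditioning on the past and using that the two stochastic fluctuations are mean-zero, the cross terms vanish and I would obtain~$\Exp{\|\epsilon_t\|^2} \leq (1-\eta_t)^2\Exp{\|\epsilon_{t-1}\|^2} + \cO(\eta_t^2\sigma_g^2) + \cO(\|\theta_t-\theta_{t-1}\|^2\sigma_h^2)$. Since the normalized update gives~$\|\theta_t-\theta_{t-1}\| = \gamma_{t-1}$, and with the prescribed~$\gamma_t = \Theta(1/t)$ and~$\eta_t = 2/(t+2)$, unrolling this recursion produces~$\Exp{\|\epsilon_t\|} \leq \cO((\sigma_g+\sigma_h)/\sqrt{t+1})$ up to the exponentially small~$\g^H$ truncation bias controlled by the choice~$H = (1-\g)^{-1}\log(T+1)$.

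Next I would invoke second-order-free descent: since~$J$ is~$L_g$-smooth by Proposition~\ref{prop:stoch-grad-and-J}-(\ref{prop-i}), the normalized step~$\theta_{t+1}=\theta_t+\gamma_t d_t/\|d_t\|$ satisfies the standard normalized-gradient inequality
\begin{equation*}
J(\theta_{t+1}) \geq J(\theta_t) + \gamma_t\|\nabla J(\theta_t)\| - 2\gamma_t\|\epsilon_t\| - \tfrac{L_g}{2}\gamma_t^2,
\end{equation*}
where the~$2\gamma_t\|\epsilon_t\|$ term arises from bounding~$\langle \nabla J(\theta_t), d_t/\|d_t\|\rangle \geq \|\nabla J(\theta_t)\| - 2\|\epsilon_t\|$. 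Rearranging and telescoping this over~$t$, weighted appropriately, converts the error bound into a bound on a weighted average of~$\|\nabla J(\theta_t)\|$.

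The crucial final step is to pass from gradient norms to the global optimality gap. Here I would apply Lemma~\ref{lem:relaxed-w-grad-dom} in the form~$\|\nabla J(\theta_t)\| \geq \sqrt{2\mu}(J^*-J(\theta_t)) - \varepsilon'$, and set up a Lyapunov/potential argument on~$\delta_t \eqdef J^* - \Exp{J(\theta_t)}$. Substituting the descent inequality and the domination bound gives a recursion of the form~$\delta_{t+1} \leq (1 - \gamma_t\sqrt{2\mu})\delta_t + \cO(\gamma_t\Exp{\|\epsilon_t\|}) + \cO(\gamma_t^2 L_g) + \gamma_t\varepsilon'$. With~$\gamma_t = 6M_g/(\mu_F(t+2))$ tuned so that~$\gamma_t\sqrt{2\mu} = \Theta(1/t)$ with the right leading constant (this is exactly where the Fisher constant~$\mu = \mu_F^2/(2M_g^2)$ enters), the linear contraction balances against the~$\cO(1/\sqrt{t+1})$ error and a careful induction yields~$\delta_T \leq \cO((\sigma_g+L_g+\sigma_h)/\sqrt{T+1}) + \varepsilon'/\sqrt{2\mu}$, with the last term matching~$\sqrt{\varepsilon_{\text{bias}}}/(1-\gamma)$ after substituting the definitions of~$\varepsilon'$ and~$\mu$.

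I expect the main obstacle to be the careful bookkeeping in the potential-function induction: the step-size~$\gamma_t$ and momentum~$\eta_t$ must be coupled precisely so that the contraction factor~$(1-\gamma_t\sqrt{2\mu})$ dominates the~$\cO(\gamma_t/\sqrt{t})$ noise injection at every~$t$, and the normalization introduces the non-standard~$\|\epsilon_t\|$ (rather than~$\|\epsilon_t\|^2$) term, so the variance recursion must deliver a first-moment bound that decays like~$1/\sqrt{t+1}$. Matching constants through the domination inequality to recover exactly the~$\sqrt{\varepsilon_{\text{bias}}}/(1-\gamma)$ bias—rather than a larger multiple—will require tracking the~$\mu_F$ and~$M_g$ dependencies with care, and verifying that the truncation horizon~$H$ is large enough that the~$\g^H$ bias is absorbed into the~$\cO(1/\sqrt{T+1})$ rate.
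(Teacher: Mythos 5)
Your proposal is correct and follows essentially the same route as the paper's proof: a STORM-type first/second-moment error recursion that exploits the unbiasedness of the Hessian-vector correction and the fact that normalization forces $\|\theta_{t+1}-\theta_t\|=\gamma_t$, combined with a normalized ascent lemma and the relaxed weak gradient domination inequality to obtain a contraction recursion on $\delta_t = J^*-\Exp{J(\theta_t)}$ that is then solved with the prescribed decaying $\gamma_t,\eta_t$. The only differences are cosmetic (your descent inequality with constants $1$ and $2$ versus the paper's case-analysis constants $1/3$ and $8/3$, and measuring the error against $\nabla J$ rather than $\nabla J_H$ with truncation handled separately), neither of which changes the argument.
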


\begin{corollary}
In the setting of Theorem~\ref{thm:HARPG} (Theorem~\ref{thm:N-HARPG}), for every~$\varepsilon >0,$ the output~$\theta_T$ of \algname{(N)-HARPG} satisfies~$J^* - \Exp{J(\theta_T)} \leq \varepsilon + 2\sqrt{\varepsilon_{\text{bias}}}/(1-\gamma)$ for~$T = \cO(\varepsilon^{-2}),$ i.e., the total sample complexity is of the order~$\wcO(\varepsilon^{-2})\,.$
\end{corollary}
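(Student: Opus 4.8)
The plan is to simply invert the convergence rates established in Theorems~\ref{thm:HARPG} and~\ref{thm:N-HARPG} to read off the number of iterations, and hence trajectories, needed to drive the optimization error below~$\varepsilon$. Both theorems already provide a bound of the form
\begin{equation*}
J^* - \Exp{J(\theta_T)} \leq \cO\rb{ \fr{ \sigma_g + L_g + \sigma_h }{(T+1)^{\nfr{1}{2}}} } + \fr{c\,\sqrt{\varepsilon_{\text{bias}}}}{1-\gamma}\,,
\end{equation*}
with~$c = 2$ for \algname{HARPG} and~$c = 1$ for \algname{N-HARPG}, so the bias term~$\cO(\sqrt{\varepsilon_{\text{bias}}})$ is already in the desired form and only the first (optimization) term must be made smaller than~$\varepsilon$.

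The main thing to verify is that the numerator~$\sigma_g + L_g + \sigma_h$ contributes at most poly-logarithmically in~$T$. Here~$\sigma_g$ and~$L_g$ are constants independent of~$T$ by Proposition~\ref{prop:stoch-grad-and-J} and Lemma~\ref{lem:2nd-order-smoothness}, while~$\sigma_h = \cO\rb{ (1-\gamma)^{-2}\sqrt{H^2 M_g^4 + M_h^2} }$ depends on the horizon~$H$. Substituting the prescribed choice~$H = (1-\gamma)^{-1}\log(T+1)$ gives~$\sigma_h = \cO\rb{ (1-\gamma)^{-3} \log(T+1) }$, so the whole numerator is~$\cO(\log(T+1))$ up to problem constants. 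I would also record that the same choice of~$H$ controls the truncation bias: since~$\log\gamma \leq -(1-\gamma)$, one gets~$\gamma^H \leq (T+1)^{-1}$, so the quantity~$D_h\gamma^H$ appearing in the step-size~$\g_0$ of Theorem~\ref{thm:HARPG} stays bounded and the associated bias decays like~$(T+1)^{-1}$, i.e. strictly faster than the leading~$(T+1)^{-\nfr{1}{2}}$ term and hence negligible.

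It then remains to solve the scalar inequality~$\cO\rb{ \log(T+1)/(T+1)^{\nfr{1}{2}} } \leq \varepsilon$ for~$T$. Ignoring the logarithm this requires~$(T+1)^{\nfr{1}{2}} \geq \cO(\varepsilon^{-1})$, i.e.~$T = \cO(\varepsilon^{-2})$; reinserting the~$\log(T+1)$ factor only inflates this by a poly-logarithmic amount, which is exactly what the~$\wcO$ notation absorbs, yielding~$T = \wcO(\varepsilon^{-2})$. Finally, each iteration of \algname{(N)-HARPG} samples a constant number of trajectories (namely~$\tau_t$ and~$\hat\tau_t$), each of length~$H = \cO((1-\gamma)^{-1}\log(T+1))$, so the total number of sampled trajectories is~$\cO(T) = \wcO(\varepsilon^{-2})$, which is the claimed sample complexity. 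The only mild subtlety, rather than a genuine obstacle, is the circular dependence of~$\sigma_h$ and~$\gamma^H$ on~$H = H(T)$: one must confirm that these $T$-dependent quantities enter only through logarithmic factors so that the resulting choice of~$T$ is self-consistent, which the bounds above guarantee.
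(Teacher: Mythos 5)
Your proposal is correct and follows essentially the same (implicit) argument the paper intends: the corollary is a direct inversion of the $(T+1)^{-\nfr{1}{2}}$ rates in Theorems~\ref{thm:HARPG} and~\ref{thm:N-HARPG}, with the $H$-dependent quantities ($\sigma_h$, $D_h\gamma^H$, and the trajectory length) contributing only logarithmic factors that the $\wcO$ notation absorbs. Your explicit verification that $\sigma_h = \cO(\log(T+1))$ and $\gamma^H \leq (T+1)^{-1}$ under the prescribed choice of $H$ is precisely the bookkeeping the paper leaves implicit (and matches its own computation $\log x \leq x - 1 \Rightarrow \gamma^{2H} \leq (T+4)^{-2}$ in the proof of Theorem~\ref{thm:STORM_H}).
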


%We provide some remarks regarding our results. 
%\begin{remark}
We remark that while \algname{(N)-HARPG} uses Hessian information, our analysis does not require the second-order smoothness of~$J(\theta)$ and, thus, Assumption~\ref{hyp:lipschitz-hessian}. This is achieved by the additional uniform sampling in steps 3-5 of the method. 
%Since \algname{(N)-HARPG} uses uniform sampling in steps 3-5 to compute the correction~$v_t$, the analysis does not require second-order smoothness of the expected return function~$J$ (which follows from Assumption~\ref{hyp:lipschitz-hessian}). \citet{BetterSGDUsingSOM_Tran_2021} require such smoothness assumption in stochastic optimization for the convergence to a FOSP.
%\end{remark}

\begin{figure*}[ht]
    %    \begin{minipage}{.5\textwidth}
    %    \centering
    %    \includegraphics[width=\textwidth]{walker_all_methos_sz_robust.pdf}
    %    \end{minipage}
        %\label{fig:sz_tuning}
    %    \begin{minipage}{.5\textwidth}
    %    \includegraphics[width=\textwidth]{}
        %\caption{Performance of $\algname{SGD}$, $\algname{PAGER}$ and $\algname{PAGE}$ on benchmark RL tasks.}
        %\label{fig:sz_tuning}
    %    \end{minipage}
    \centering 
    \includegraphics[width=0.97\textwidth]{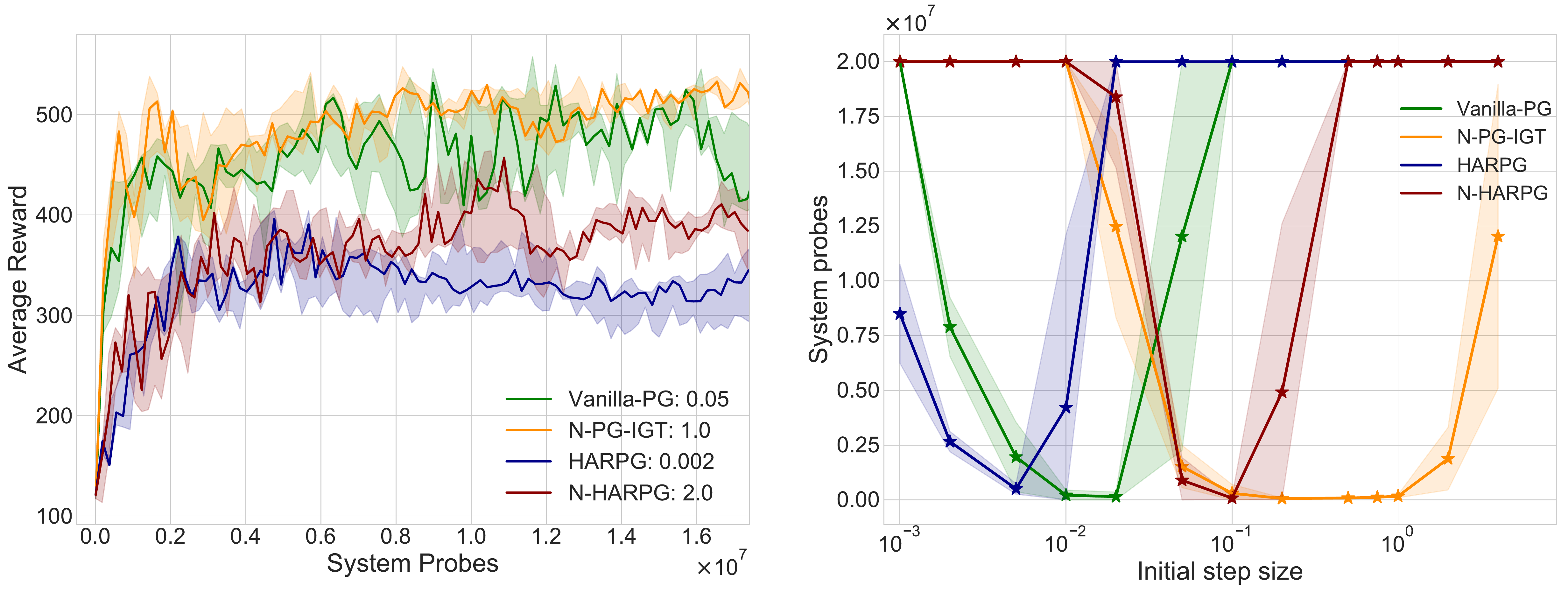}
        \caption{Left: performance of \algname{Vanilla-PG}, \algname{N-PG-IGT}, \algname{HARPG}, \algname{N-HARPG} on \texttt{Humanoid} environment. Right: robustness to initial step-sizes for proposed methods on \texttt{Reacher} environment. } %(to check exact env names)
        \label{fig:sz_robust}
\end{figure*}

\vspace{-.2cm}
\section{Experiments}
%\section{EXPERIMENTS}
\label{sec:experiments}

We empirically test the performance of the methods on benchmark RL tasks with continuous state-action spaces. We use the class of diagonal Gaussian policies where actions are sampled from
$$
a=\mu_\theta(s)+\sigma_\theta(s) \odot z, \quad z \sim \mathcal{N}(0, I), 
$$
%$$
%\pi_\theta(a | s)=\frac{1}{\sqrt{2 \pi} \sigma_{\theta}(s)} \exp \left(-\frac{\sqnorm{\mu_\theta(s)-a}}{2 \sigma_{\theta}^2(s)}\right)
%$$
for every~$\theta \in \R^d, s \in \mathcal{S} = \R^p, a \in \mathcal{A} = \R^q\,,$
where $\mathcal{N}(0, I)$ is the multivariate normal distribution, $\odot$ denotes the elementwise product of two vectors, $\mu_\theta(s)$ and $\sigma_{\theta}(s)$ are parameterized by fully-connected neural networks. We implement the algorithms based on \algname{Vanilla-PG} (\algname{REINFORCE}) implementation in the garage library \citep{garage} and test the methods on the commonly used MuJoCo environments. In this section, we present the results on \texttt{Humanoid} and \texttt{Reacher} environments and defer the results on other tasks to Appendix~\ref{sec:extra_experiments}. For all methods, at each iteration we sample $20$ trajectories per iteration, and each trajectory has a maximum length $H = 500$. To ensure the same per-iteration cost for all methods, in 
\algname{(N)-HARPG} we sample half of the trajectories to compute the stochastic gradient and another half to estimate the Hessian-vector product. For a fair comparison of the different methods, we start all our runs from the same initial policy $\pi_{\theta_0}$, where~$\theta_0$ is randomly initialized. %the standard PyTorch 
For all methods, we use time-varying step-sizes $\g_t$ and momentum parameters $\eta_t$, which are chosen according to theory for each method, see Appendix~\ref{sec:extra_experiments}. 

\subsection{Comparison with tuned initial step-sizes}
%\noindent\textbf{Comparison with tuned initial step-sizes.}
In the first experiment, we compare the methods with individually tuned initial step-sizes on \texttt{Humanoid} task. We run each algorithm with $13$ different values of initial step-size $\gamma_0$ in the range~$\sb{10^{-3}, 4}$ and select the one with the best \textit{average reward} (averaged over $5$ independent runs for each algorithm and initial step-size) after $2\cdot 10^{7}$ \textit{system probes} (i.e., state transitions). In Figure~\ref{fig:sz_robust} (left), we show the median as well as the~$1/4$ and $3/4$ quantiles (shaded area) of ~the \textit{average reward} depending on the number of \textit{system probes}. We observe that \algname{N-PG-IGT} outperforms other methods and quickly increases the reward above $450$. Compared to \algname{Vanilla-PG}, it stabilizes better around level $500$. Surprisingly, theoretically better \algname{(N)-HARPG} methods perform consistently worse in our experiments with tuned step-sizes. We believe this phenomenon could happen because the stochastic Hessian vector products sampled in \algname{(N)-HARPG} have much higher variance than stochastic gradients in practice, which prevents us from choosing larger step-sizes, see  Remark~\ref{remark:HARPG_performance}.

\subsection{Robustness to initial step-sizes}
%\noindent\textbf{Robustness to initial step-sizes.}
In this experiment, we test the robustness of the methods to the choice of initial step-size on \texttt{Reacher} environment. In Figure~\ref{fig:sz_robust} (right), we report the number of \textit{system probes} in order to achieve \textit{average reward} above~$-11$ points depending on the initial step-size~$\g_0$. We plot the median together with the~$1/4$ and~$3/4$ quantiles based on~$5$ independent runs. We observe that (variance-reduced) \algname{HARPG} can reach the desired accuracy faster than \algname{Vanilla-PG} for small initial step-sizes~$\gamma_0$. In contrast, \algname{N-PG-IGT} tolerates larger step-sizes than other methods and exhibits more robustness to initial step-sizes.
%quickly achieves the given accuracy for a wide range of initial step-sizes. 

\begin{remark}\label{remark:HARPG_performance}
It was previously reported that variance-reduced methods including \algname{N-HARPG} (named \algname{SHARP} in \citep{salehkaleybar-et-al22}) outperform \algname{Vanilla-PG} \citep{shen-et-al19,huang-et-al20} without tuned step-sizes. We also observe similar behavior for small untuned step-sizes, see Figure~\ref{fig:small_step_size} in Appendix~\ref{sec:extra_experiments}. 
%However, the above experiments show that such behavior might not be representative because the variance-reduced variants in previous works were mainly tested against \algname{Vanilla-PG} with a fixed small step-size.  %We strongly believe that further investigation is needed to understand the practical benefits of variance reduction for RL. 

%$(i)$ variance-reduced variants were tested against \algname{Vanilla-PG} with a fixed small step-size; $(ii)$ larger batches of trajectories were used for variance-reduced methods. We strongly believe that further investigation is needed to understand the practical benefits of variance reduction for RL. 
\end{remark}
%\vspace{-.2cm}
\section{Concluding Remarks}
%\indent\textbf{Limitations and future work.} 

In this work, we improved the existing~$\tilde{\mathcal{O}}(\varepsilon^{-3})$ global optimality sample complexity 
%to reach an~$\varepsilon$-globally optimal policy 
for Fisher-non-degenerate parametrized policies. 
% This general smooth policy class allows to tackle the case of continuous state action spaces which is of most importance in practice. 
We proposed two single-loop momentum-based PG methods respectively achieving~$\tilde{\mathcal{O}}(\varepsilon^{-2.5})$ and~$\tilde{\mathcal{O}}(\varepsilon^{-2})$ sample complexities while only using one or two trajectories per iteration. Notably, our analysis does not rely on the unrealistic assumption of bounded IS weights variance unlike most prior work. To achieve our best~$\tilde{\mathcal{O}}(\varepsilon^{-2})$ sample complexity, our analysis hinges on a careful combination of a structural weak gradient dominance condition with the design of variance-reduced policy gradients. Future directions of research include relaxing our policy regularity conditions \citep{zhang-et-al22pg-weakly-smooth} and studying the convergence of more efficient policy search methods for continuous control with non-compact spaces and heavy-tailed policy parameterization, enhancing exploration in the light of  recent results in this direction~\citep{bedi-et-al21heavy-tailed-policy-search}. 
% The algorithms we propose require the knowledge of a parameter of the problem to set the stepsizes: the constant in the relaxed weak gradient dominance inequality. This parameter may be unknown to the user in some cases and it would be interesting to propose parameter-free algorithms enjoying the same~$\tilde{\mathcal{O}}(\varepsilon^{-2})$ global optimality sample complexity. In particular, achieving adaptivity with respect to the weak-gradient dominance constant remains open even for the stochastic optimization 
% of gradient dominated functions. 
%of functions satisfying gradient domination. 
%Another interesting direction for future work is to relax the Fisher-non-degeneracy assumption on the policy parameterization and identify an even larger class of policies satisfying the relaxed weak gradient dominance inequality. One obstacle to doing so for the softmax parameterization is to derive a \textit{uniform} (relaxed) weak gradient dominance condition in this case. 
% Finally, improving the dependence of our bounds on the horizon~$1/(1-\gamma)$ and establishing a sample complexity lower bound for the policy optimization problem we consider %is also a challenging question that 
% merit further investigation.  

\section*{Acknowledgements}
This work was supported by ETH AI Center doctoral fellowship, ETH Foundations of Data Science (ETH-FDS), and ETH Research Grant funded through the ETH Zurich Foundation.

%\newpage 

\bibliographystyle{plainnat}
\bibliography{biblio.bib}

\begin{thebibliography}{73}
\providecommand{\natexlab}[1]{#1}
\providecommand{\url}[1]{\texttt{#1}}
\expandafter\ifx\csname urlstyle\endcsname\relax
  \providecommand{\doi}[1]{doi: #1}\else
  \providecommand{\doi}{doi: \begingroup \urlstyle{rm}\Url}\fi

\bibitem[Agarwal et~al.(2021)Agarwal, Kakade, Lee, and
  Mahajan]{agarwal-et-al21}
Alekh Agarwal, Sham~M. Kakade, Jason~D. Lee, and Gaurav Mahajan.
\newblock On the theory of policy gradient methods: Optimality, approximation,
  and distribution shift.
\newblock \emph{Journal of Machine Learning Research}, 22\penalty0
  (98):\penalty0 1--76, 2021.

\bibitem[Alfano and Rebeschini(2022)]{alfano-rebeschini22}
Carlo Alfano and Patrick Rebeschini.
\newblock Linear convergence for natural policy gradient with log-linear policy
  parametrization.
\newblock \emph{arXiv preprint arXiv:2209.15382}, 2022.

\bibitem[Arjevani et~al.(2020)Arjevani, Carmon, Duchi, Foster, Sekhari, and
  Sridharan]{arjevani-et-al20}
Yossi Arjevani, Yair Carmon, John~C Duchi, Dylan~J Foster, Ayush Sekhari, and
  Karthik Sridharan.
\newblock Second-order information in non-convex stochastic optimization: Power
  and limitations.
\newblock In \emph{Conference on Learning Theory}, pages 242--299. PMLR, 2020.

\bibitem[Arnold et~al.(2019)Arnold, Manzagol, Babanezhad~Harikandeh,
  Mitliagkas, and Le~Roux]{IGT_Arnold_2019}
S{\'e}bastien Arnold, Pierre-Antoine Manzagol, Reza Babanezhad~Harikandeh,
  Ioannis Mitliagkas, and Nicolas Le~Roux.
\newblock Reducing the variance in online optimization by transporting past
  gradients.
\newblock \emph{Advances in Neural Information Processing Systems}, 32, 2019.

\bibitem[Attouch and Bolte(2009)]{attouch-et-al09}
Hedy Attouch and J{\'e}r{\^o}me Bolte.
\newblock On the convergence of the proximal algorithm for nonsmooth functions
  involving analytic features.
\newblock \emph{Mathematical Programming}, 116\penalty0 (1):\penalty0 5--16,
  2009.

\bibitem[Bedi et~al.(2021)Bedi, Parayil, Zhang, Wang, and
  Koppel]{bedi-et-al21heavy-tailed-policy-search}
Amrit~Singh Bedi, Anjaly Parayil, Junyu Zhang, Mengdi Wang, and Alec Koppel.
\newblock On the sample complexity and metastability of heavy-tailed policy
  search in continuous control.
\newblock \emph{arXiv preprint arXiv:2106.08414}, 2021.

\bibitem[Bedi et~al.(2022)Bedi, Chakraborty, Parayil, Sadler, Tokekar, and
  Koppel]{bedi-et-al22contin-action-space}
Amrit~Singh Bedi, Souradip Chakraborty, Anjaly Parayil, Brian~M Sadler, Pratap
  Tokekar, and Alec Koppel.
\newblock On the hidden biases of policy mirror ascent in continuous action
  spaces.
\newblock In \emph{Proceedings of the 39th International Conference on Machine
  Learning}, volume 162 of \emph{Proceedings of Machine Learning Research},
  pages 1716--1731. PMLR, 17--23 Jul 2022.

\bibitem[Bhandari and Russo(2019)]{bhandari-russo19}
Jalaj Bhandari and Daniel Russo.
\newblock Global optimality guarantees for policy gradient methods.
\newblock \emph{arXiv preprint arXiv:1906.01786}, 2019.

\bibitem[Bhandari and Russo(2021)]{bhandari-russo21}
Jalaj Bhandari and Daniel Russo.
\newblock On the linear convergence of policy gradient methods for finite mdps.
\newblock In \emph{International Conference on Artificial Intelligence and
  Statistics}, pages 2386--2394. PMLR, 2021.

\bibitem[Chen and Maguluri(2022)]{chen-maguluri22}
Zaiwei Chen and Siva~Theja Maguluri.
\newblock Sample complexity of policy-based methods under off-policy sampling
  and linear function approximation.
\newblock In \emph{International Conference on Artificial Intelligence and
  Statistics}, pages 11195--11214. PMLR, 2022.

\bibitem[Chen et~al.(2022)Chen, Khodadadian, and
  Maguluri]{chen-khodadadian-et-al22}
Zaiwei Chen, Sajad Khodadadian, and Siva~Theja Maguluri.
\newblock Finite-sample analysis of off-policy natural actor--critic with
  linear function approximation.
\newblock \emph{IEEE Control Systems Letters}, 6:\penalty0 2611--2616, 2022.

\bibitem[Cutkosky and Mehta(2020)]{MomentumImprovesNSGD_Cutkosky_2020}
Ashok Cutkosky and Harsh Mehta.
\newblock Momentum improves normalized {SGD}.
\newblock In \emph{Proceedings of the 37th International Conference on Machine
  Learning}, volume 119 of \emph{Proceedings of Machine Learning Research},
  pages 2260--2268. PMLR, 2020.

\bibitem[Cutkosky and Orabona(2019)]{cutkosky-orabona19}
Ashok Cutkosky and Francesco Orabona.
\newblock Momentum-based variance reduction in non-convex sgd.
\newblock \emph{Advances in neural information processing systems}, 32, 2019.

\bibitem[DasGupta and DasGupta(2011)]{dasgupta11exponential-family}
Anirban DasGupta and Anirban DasGupta.
\newblock The exponential family and statistical applications.
\newblock \emph{Probability for Statistics and Machine Learning: Fundamentals
  and Advanced Topics}, pages 583--612, 2011.

\bibitem[Ding et~al.(2021)Ding, Zhang, and Lavaei]{ding-et-al21}
Yuhao Ding, Junzi Zhang, and Javad Lavaei.
\newblock Beyond exact gradients: Convergence of stochastic soft-max policy
  gradient methods with entropy regularization.
\newblock \emph{arXiv preprint arXiv:2110.10117}, 2021.

\bibitem[Ding et~al.(2022)Ding, Zhang, and Lavaei]{ding-et-al22}
Yuhao Ding, Junzi Zhang, and Javad Lavaei.
\newblock On the global optimum convergence of momentum-based policy gradient.
\newblock In \emph{International Conference on Artificial Intelligence and
  Statistics}, pages 1910--1934. PMLR, 2022.

\bibitem[Fang et~al.(2018)Fang, Li, Lin, and Zhang]{fang-et-al18}
Cong Fang, Chris~Junchi Li, Zhouchen Lin, and Tong Zhang.
\newblock Spider: Near-optimal non-convex optimization via stochastic
  path-integrated differential estimator.
\newblock In \emph{Advances in Neural Information Processing Systems},
  volume~31, 2018.

\bibitem[Fatkhullin and Polyak(2021)]{Optimizing_LQR_21}
Ilyas Fatkhullin and Boris Polyak.
\newblock Optimizing {{Static Linear Feedback}}: {{Gradient Method}}.
\newblock \emph{SIAM Journal on Control and Optimization}, 59\penalty0
  (5):\penalty0 3887--3911, 2021.

\bibitem[Fatkhullin et~al.(2022)Fatkhullin, Etesami, He, and
  Kiyavash]{KL_PAGER_Fatkhullin}
Ilyas Fatkhullin, Jalal Etesami, Niao He, and Negar Kiyavash.
\newblock Sharp analysis of stochastic optimization under global
  {K}urdyka-\l{}ojasiewicz inequality.
\newblock \emph{Advances in Neural Information Processing Systems}, 2022.

\bibitem[Fazel et~al.(2018)Fazel, Ge, Kakade, and Mesbahi]{fazel-et-al18}
Maryam Fazel, Rong Ge, Sham Kakade, and Mehran Mesbahi.
\newblock Global convergence of policy gradient methods for the linear
  quadratic regulator.
\newblock In \emph{International Conference on Machine Learning}, pages
  1467--1476. PMLR, 2018.

\bibitem[Fontaine et~al.(2021)Fontaine, De~Bortoli, and
  Durmus]{Fontaine_SGD_dynamics}
Xavier Fontaine, Valentin De~Bortoli, and Alain Durmus.
\newblock Convergence rates and approximation results for sgd and its
  continuous-time counterpart.
\newblock In \emph{Conference on Learning Theory}, pages 1965--2058. PMLR,
  2021.

\bibitem[Gadat et~al.(2018)Gadat, Panloup, and Saadane]{Gadat_SHB_18}
S{\'e}bastien Gadat, Fabien Panloup, and Sofiane Saadane.
\newblock Stochastic {{Heavy}} ball.
\newblock \emph{Electronic Journal of Statistics}, 12\penalty0 (1):\penalty0
  461--529, 2018.

\bibitem[garage contributors(2019)]{garage}
The garage contributors.
\newblock Garage: A toolkit for reproducible reinforcement learning research.
\newblock \url{https://github.com/rlworkgroup/garage}, 2019.

\bibitem[Gargiani et~al.(2022)Gargiani, Zanelli, Martinelli, Summers, and
  Lygeros]{gargiani-et-al22}
Matilde Gargiani, Andrea Zanelli, Andrea Martinelli, Tyler Summers, and John
  Lygeros.
\newblock {PAGE}-{PG}: A simple and loopless variance-reduced policy gradient
  method with probabilistic gradient estimation.
\newblock In \emph{Proceedings of the 39th International Conference on Machine
  Learning}, volume 162 of \emph{Proceedings of Machine Learning Research},
  pages 7223--7240. PMLR, 2022.

\bibitem[Ghadimi and Lan(2013)]{ghadimi-lan13}
Saeed Ghadimi and Guanghui Lan.
\newblock Stochastic first-and zeroth-order methods for nonconvex stochastic
  programming.
\newblock \emph{SIAM Journal on Optimization}, 23\penalty0 (4):\penalty0
  2341--2368, 2013.

\bibitem[Hazan et~al.(2015)Hazan, Levy, and
  {Shalev-Shwartz}]{Beyond_Convexity_Hazan_2015}
Elad Hazan, Kfir~Y. Levy, and Shai {Shalev-Shwartz}.
\newblock Beyond {{Convexity}}: {{Stochastic Quasi-Convex Optimization}}.
\newblock \emph{arXiv preprint arXiv:1507.02030}, 2015.

\bibitem[Hong et~al.(2020)Hong, Wai, Wang, and Yang]{hong-et-al20}
Mingyi Hong, Hoi-To Wai, Zhaoran Wang, and Zhuoran Yang.
\newblock A two-timescale framework for bilevel optimization: Complexity
  analysis and application to actor-critic.
\newblock \emph{To appear in SIAM Journal on Optimization 2022, arXiv preprint
  arXiv:2007.05170}, 2020.

\bibitem[Huang et~al.(2020)Huang, Gao, Pei, and Huang]{huang-et-al20}
Feihu Huang, Shangqian Gao, Jian Pei, and Heng Huang.
\newblock Momentum-based policy gradient methods.
\newblock In \emph{International conference on machine learning}, pages
  4422--4433. PMLR, 2020.

\bibitem[Khodadadian et~al.(2022{\natexlab{a}})Khodadadian, Doan, Romberg, and
  Maguluri]{khodadadian-et-al22ac}
Sajad Khodadadian, Thinh~T Doan, Justin Romberg, and Siva~Theja Maguluri.
\newblock Finite sample analysis of two-time-scale natural actor-critic
  algorithm.
\newblock \emph{IEEE Transactions on Automatic Control}, 2022{\natexlab{a}}.

\bibitem[Khodadadian et~al.(2022{\natexlab{b}})Khodadadian, Jhunjhunwala,
  Varma, and Maguluri]{khodadadian-et-al22}
Sajad Khodadadian, Prakirt~Raj Jhunjhunwala, Sushil~Mahavir Varma, and
  Siva~Theja Maguluri.
\newblock On linear and super-linear convergence of natural policy gradient
  algorithm.
\newblock \emph{Systems \& Control Letters}, 164:\penalty0 105214,
  2022{\natexlab{b}}.
\newblock ISSN 0167-6911.

\bibitem[Kullback(1997)]{kullback97information-theory-stats}
Solomon Kullback.
\newblock \emph{Information theory and statistics}.
\newblock Courier Corporation, 1997.

\bibitem[Kurdyka(1998)]{kurdyka98}
Krzysztof Kurdyka.
\newblock On gradients of functions definable in o-minimal structures.
\newblock In \emph{Annales de l'institut Fourier}, volume~48, pages 769--783,
  1998.

\bibitem[Lan(2022)]{lan22}
Guanghui Lan.
\newblock Policy mirror descent for reinforcement learning: Linear convergence,
  new sampling complexity, and generalized problem classes.
\newblock \emph{Mathematical programming}, pages 1--48, 2022.

\bibitem[Li et~al.(2021)Li, Bao, Zhang, and Richtarik]{li-et-al21page}
Zhize Li, Hongyan Bao, Xiangliang Zhang, and Peter Richtarik.
\newblock Page: A simple and optimal probabilistic gradient estimator for
  nonconvex optimization.
\newblock In \emph{Proceedings of the 38th International Conference on Machine
  Learning}, volume 139 of \emph{Proceedings of Machine Learning Research},
  pages 6286--6295. PMLR, 2021.

\bibitem[Liu et~al.(2020)Liu, Zhang, Basar, and Yin]{liu-et-al20}
Yanli Liu, Kaiqing Zhang, Tamer Basar, and Wotao Yin.
\newblock An improved analysis of (variance-reduced) policy gradient and
  natural policy gradient methods.
\newblock \emph{Advances in Neural Information Processing Systems},
  33:\penalty0 7624--7636, 2020.

\bibitem[Lojasiewicz(1963)]{lojasiewicz63}
Stanislaw Lojasiewicz.
\newblock Une propri{\'e}t{\'e} topologique des sous-ensembles analytiques
  r{\'e}els.
\newblock \emph{Les {\'e}quations aux d{\'e}riv{\'e}es partielles},
  117:\penalty0 87--89, 1963.

\bibitem[Masiha et~al.(2022)Masiha, Salehkaleybar, He, Kiyavash, and
  Thiran]{Masiha_SCRN_KL}
Saeed Masiha, Saber Salehkaleybar, Niao He, Negar Kiyavash, and Patrick Thiran.
\newblock Stochastic second-order methods provably beat sgd for
  gradient-dominated functions.
\newblock \emph{To appear in Advances in Neural Information Processing Systems,
  arXiv preprint arXiv:2205.12856v1}, 2022.

\bibitem[Mei et~al.(2020)Mei, Xiao, Szepesvari, and Schuurmans]{mei-et-al20}
Jincheng Mei, Chenjun Xiao, Csaba Szepesvari, and Dale Schuurmans.
\newblock On the global convergence rates of softmax policy gradient methods.
\newblock In \emph{Proceedings of the 37th International Conference on Machine
  Learning}, volume 119 of \emph{Proceedings of Machine Learning Research},
  pages 6820--6829. PMLR, 2020.

\bibitem[Mei et~al.(2021{\natexlab{a}})Mei, Dai, Xiao, Szepesvari, and
  Schuurmans]{mei-et-al21stoch}
Jincheng Mei, Bo~Dai, Chenjun Xiao, Csaba Szepesvari, and Dale Schuurmans.
\newblock Understanding the effect of stochasticity in policy optimization.
\newblock \emph{Advances in Neural Information Processing Systems},
  34:\penalty0 19339--19351, 2021{\natexlab{a}}.

\bibitem[Mei et~al.(2021{\natexlab{b}})Mei, Gao, Dai, Szepesvari, and
  Schuurmans]{mei-et-al21}
Jincheng Mei, Yue Gao, Bo~Dai, Csaba Szepesvari, and Dale Schuurmans.
\newblock Leveraging non-uniformity in first-order non-convex optimization.
\newblock In \emph{Proceedings of the 38th International Conference on Machine
  Learning}, volume 139 of \emph{Proceedings of Machine Learning Research},
  pages 7555--7564. PMLR, 2021{\natexlab{b}}.

\bibitem[Melo et~al.(2008)Melo, Meyn, and Ribeiro]{melo-et-al08}
Francisco~S Melo, Sean~P Meyn, and M~Isabel Ribeiro.
\newblock An analysis of reinforcement learning with function approximation.
\newblock In \emph{Proceedings of the 25th international conference on Machine
  learning}, pages 664--671, 2008.

\bibitem[Nguyen et~al.(2017)Nguyen, Liu, Scheinberg, and
  Tak{\'a}{\v{c}}]{nguyen-et-al17}
Lam~M Nguyen, Jie Liu, Katya Scheinberg, and Martin Tak{\'a}{\v{c}}.
\newblock Sarah: A novel method for machine learning problems using stochastic
  recursive gradient.
\newblock In \emph{International Conference on Machine Learning}, pages
  2613--2621. PMLR, 2017.

\bibitem[Papini et~al.(2018)Papini, Binaghi, Canonaco, Pirotta, and
  Restelli]{papini-et-al18}
Matteo Papini, Damiano Binaghi, Giuseppe Canonaco, Matteo Pirotta, and Marcello
  Restelli.
\newblock Stochastic variance-reduced policy gradient.
\newblock In \emph{International conference on machine learning}, pages
  4026--4035. PMLR, 2018.

\bibitem[Pham et~al.(2020)Pham, Nguyen, Phan, Nguyen, Dijk, and
  Tran-Dinh]{pham-et-al20}
Nhan Pham, Lam Nguyen, Dzung Phan, Phuong~Ha Nguyen, Marten Dijk, and Quoc
  Tran-Dinh.
\newblock A hybrid stochastic policy gradient algorithm for reinforcement
  learning.
\newblock In \emph{International Conference on Artificial Intelligence and
  Statistics}, pages 374--385. PMLR, 2020.

\bibitem[Polyak(1963)]{polyak63}
Boris~Teodorovich Polyak.
\newblock Gradient methods for minimizing functionals.
\newblock \emph{Zhurnal Vychislitel'noi Matematiki i Matematicheskoi Fiziki},
  3\penalty0 (4):\penalty0 643--653, 1963.

\bibitem[Puterman(2014)]{puterman14}
Martin~L Puterman.
\newblock \emph{Markov decision processes: discrete stochastic dynamic
  programming}.
\newblock John Wiley \& Sons, 2014.

\bibitem[Qiu et~al.(2021)Qiu, Yang, Ye, and Wang]{qiu-et-al21}
Shuang Qiu, Zhuoran Yang, Jieping Ye, and Zhaoran Wang.
\newblock On finite-time convergence of actor-critic algorithm.
\newblock \emph{IEEE Journal on Selected Areas in Information Theory},
  2\penalty0 (2):\penalty0 652--664, 2021.

\bibitem[Salehkaleybar et~al.(2022)Salehkaleybar, Khorasani, Kiyavash, He, and
  Thiran]{salehkaleybar-et-al22}
Saber Salehkaleybar, Sadegh Khorasani, Negar Kiyavash, Niao He, and Patrick
  Thiran.
\newblock Adaptive momentum-based policy gradient with second-order
  information.
\newblock \emph{arXiv preprint arXiv:2205.08253}, 2022.

\bibitem[Scaman et~al.(2022)Scaman, Malherbe, and Santos]{Scaman_SGD_KL_2022}
Kevin Scaman, Cedric Malherbe, and Ludovic~Dos Santos.
\newblock Convergence {{Rates}} of {{Non-Convex Stochastic Gradient Descent
  Under}} a {{Generic Lojasiewicz Condition}} and {{Local Smoothness}}.
\newblock In \emph{Proceedings of the 39th {{International Conference}} on
  {{Machine Learning}}}, pages 19310--19327. {PMLR}, June 2022.

\bibitem[Schulman et~al.(2015)Schulman, Levine, Abbeel, Jordan, and
  Moritz]{schulman-et-al15trpo}
John Schulman, Sergey Levine, Pieter Abbeel, Michael Jordan, and Philipp
  Moritz.
\newblock Trust region policy optimization.
\newblock In \emph{International conference on machine learning}, pages
  1889--1897. PMLR, 2015.

\bibitem[Schulman et~al.(2017)Schulman, Wolski, Dhariwal, Radford, and
  Klimov]{schulman-et-al17ppo}
John Schulman, Filip Wolski, Prafulla Dhariwal, Alec Radford, and Oleg Klimov.
\newblock Proximal policy optimization algorithms.
\newblock \emph{arXiv preprint arXiv:1707.06347}, 2017.

\bibitem[Shen et~al.(2019)Shen, Ribeiro, Hassani, Qian, and Mi]{shen-et-al19}
Zebang Shen, Alejandro Ribeiro, Hamed Hassani, Hui Qian, and Chao Mi.
\newblock Hessian aided policy gradient.
\newblock In \emph{International conference on machine learning}, pages
  5729--5738. PMLR, 2019.

\bibitem[Silver et~al.(2014)Silver, Lever, Heess, Degris, Wierstra, and
  Riedmiller]{silver-et-al14}
David Silver, Guy Lever, Nicolas Heess, Thomas Degris, Daan Wierstra, and
  Martin Riedmiller.
\newblock Deterministic policy gradient algorithms.
\newblock In \emph{International conference on machine learning}, pages
  387--395. PMLR, 2014.

\bibitem[Stich(2019)]{Unified_SGD_Stich}
Sebastian~U. Stich.
\newblock Unified optimal analysis of the (stochastic) gradient method.
\newblock \emph{arXiv preprint arXiv:1907.04232v2}, 2019.

\bibitem[Sutton et~al.(1999)Sutton, McAllester, Singh, and
  Mansour]{PGM_Sutton_1999}
Richard~S. Sutton, David McAllester, Satinder Singh, and Yishay Mansour.
\newblock Policy gradient methods for reinforcement learning with function
  approximation.
\newblock In \emph{Advances in Neural Information Processing Systems},
  volume~12. MIT Press, 1999.

\bibitem[Tran and Cutkosky(2021)]{BetterSGDUsingSOM_Tran_2021}
Hoang Tran and Ashok Cutkosky.
\newblock Better {{SGD}} using {{Second-order Momentum}}.
\newblock \emph{arXiv preprint arXiv:2103.03265}, 2021.

\bibitem[Tsitsiklis and Van~Roy(1997)]{tsitsiklis-vanroy97}
J.N. Tsitsiklis and B.~Van~Roy.
\newblock An analysis of temporal-difference learning with function
  approximation.
\newblock \emph{IEEE Transactions on Automatic Control}, 42\penalty0
  (5):\penalty0 674--690, 1997.
\newblock \doi{10.1109/9.580874}.

\bibitem[Wang et~al.(2020)Wang, Cai, Yang, and Wang]{wang-et-al20}
Lingxiao Wang, Qi~Cai, Zhuoran Yang, and Zhaoran Wang.
\newblock Neural policy gradient methods: Global optimality and rates of
  convergence.
\newblock In \emph{International Conference on Learning Representations}, 2020.

\bibitem[Williams(1992)]{REINFORCE_Williams_1992}
Ronald~J Williams.
\newblock Simple statistical gradient-following algorithms for connectionist
  reinforcement learning.
\newblock \emph{Machine Learning}, 8:\penalty0 229–256, 1992.

\bibitem[Wu et~al.(2020)Wu, Zhang, Xu, and Gu]{wu-et-al20}
Yue~Frank Wu, Weitong Zhang, Pan Xu, and Quanquan Gu.
\newblock A finite-time analysis of two time-scale actor-critic methods.
\newblock \emph{Advances in Neural Information Processing Systems},
  33:\penalty0 17617--17628, 2020.

\bibitem[Xiao(2022)]{xiao22}
Lin Xiao.
\newblock On the convergence rates of policy gradient methods.
\newblock \emph{Journal of Machine Learning Research}, 23\penalty0
  (282):\penalty0 1--36, 2022.

\bibitem[Xu et~al.(2020{\natexlab{a}})Xu, Gao, and Gu]{xu-et-al20iclr}
Pan Xu, Felicia Gao, and Quanquan Gu.
\newblock Sample efficient policy gradient methods with recursive variance
  reduction.
\newblock In \emph{International Conference on Learning Representations},
  2020{\natexlab{a}}.

\bibitem[Xu et~al.(2020{\natexlab{b}})Xu, Gao, and Gu]{xu-et-al20uai}
Pan Xu, Felicia Gao, and Quanquan Gu.
\newblock An improved convergence analysis of stochastic variance-reduced
  policy gradient.
\newblock In \emph{Uncertainty in Artificial Intelligence}, pages 541--551.
  PMLR, 2020{\natexlab{b}}.

\bibitem[Xu et~al.(2020{\natexlab{c}})Xu, Wang, and Liang]{xu-et-al20ac}
Tengyu Xu, Zhe Wang, and Yingbin Liang.
\newblock Improving sample complexity bounds for (natural) actor-critic
  algorithms.
\newblock In \emph{Advances in Neural Information Processing Systems},
  volume~33, pages 4358--4369, 2020{\natexlab{c}}.

\bibitem[Yang et~al.(2021)Yang, Zheng, and Pan]{yang-zheng-pan21}
Long Yang, Qian Zheng, and Gang Pan.
\newblock Sample complexity of policy gradient finding second-order stationary
  points.
\newblock \emph{Proceedings of the AAAI Conference on Artificial Intelligence},
  35\penalty0 (12):\penalty0 10630--10638, 2021.

\bibitem[Yuan et~al.(2020)Yuan, Lian, Liu, and Zhou]{yuan-et-al20}
Huizhuo Yuan, Xiangru Lian, Ji~Liu, and Yuren Zhou.
\newblock Stochastic {{Recursive Momentum}} for {{Policy Gradient Methods}},
  2020.

\bibitem[Yuan et~al.(2022{\natexlab{a}})Yuan, Du, Gower, Lazaric, and
  Xiao]{yuan-et-al22log-linear}
Rui Yuan, Simon~S Du, Robert~M Gower, Alessandro Lazaric, and Lin Xiao.
\newblock Linear convergence of natural policy gradient methods with log-linear
  policies.
\newblock \emph{arXiv preprint arXiv:2210.01400}, 2022{\natexlab{a}}.

\bibitem[Yuan et~al.(2022{\natexlab{b}})Yuan, Gower, and
  Lazaric]{Vanilla_PL_Yuan_21}
Rui Yuan, Robert~M. Gower, and Alessandro Lazaric.
\newblock A general sample complexity analysis of vanilla policy gradient.
\newblock In \emph{Proceedings of The 25th International Conference on
  Artificial Intelligence and Statistics}, volume 151 of \emph{Proceedings of
  Machine Learning Research}, pages 3332--3380. PMLR, 2022{\natexlab{b}}.

\bibitem[Zhang et~al.(2020{\natexlab{a}})Zhang, Koppel, Bedi, Szepesvari, and
  Wang]{zhang-et-al20variational}
Junyu Zhang, Alec Koppel, Amrit~Singh Bedi, Csaba Szepesvari, and Mengdi Wang.
\newblock Variational policy gradient method for reinforcement learning with
  general utilities.
\newblock \emph{Advances in Neural Information Processing Systems},
  33:\penalty0 4572--4583, 2020{\natexlab{a}}.

\bibitem[Zhang et~al.(2021{\natexlab{a}})Zhang, Ni, Szepesvari, Wang,
  et~al.]{zhang-et-al21}
Junyu Zhang, Chengzhuo Ni, Csaba Szepesvari, Mengdi Wang, et~al.
\newblock On the convergence and sample efficiency of variance-reduced policy
  gradient method.
\newblock \emph{Advances in Neural Information Processing Systems},
  34:\penalty0 2228--2240, 2021{\natexlab{a}}.

\bibitem[Zhang et~al.(2021{\natexlab{b}})Zhang, Kim, O’Donoghue, and
  Boyd]{Zhang_Kim_O’Donoghue_Boyd21}
Junzi Zhang, Jongho Kim, Brendan O’Donoghue, and Stephen Boyd.
\newblock Sample efficient reinforcement learning with reinforce.
\newblock \emph{Proceedings of the AAAI Conference on Artificial Intelligence},
  35\penalty0 (12):\penalty0 10887--10895, 2021{\natexlab{b}}.

\bibitem[Zhang et~al.(2020{\natexlab{b}})Zhang, Koppel, Zhu, and
  Basar]{zhang-et-al20glob-conv-pg}
Kaiqing Zhang, Alec Koppel, Hao Zhu, and Tamer Basar.
\newblock Global convergence of policy gradient methods to (almost) locally
  optimal policies.
\newblock \emph{SIAM Journal on Control and Optimization}, 58\penalty0
  (6):\penalty0 3586--3612, 2020{\natexlab{b}}.

\bibitem[Zhang et~al.(2022)Zhang, Erdogdu, and
  Garg]{zhang-et-al22pg-weakly-smooth}
Matthew~S Zhang, Murat~A Erdogdu, and Animesh Garg.
\newblock Convergence and optimality of policy gradient methods in weakly
  smooth settings.
\newblock In \emph{Proceedings of the AAAI Conference on Artificial
  Intelligence}, volume~36, pages 9066--9073, 2022.

\end{thebibliography}

%\newpage
\clearpage
\onecolumn

\tableofcontents

\newpage
\begin{center}
    \bfseries\Large Appendix
\end{center}

\appendix

\section{Additional Experiments and Implementation Details}\label{sec:extra_experiments}

Complementing the empirical results presented in Section~\ref{sec:experiments}, in this section, we present an additional set of experiments on MuJoCo environments and provide further implementation details. 

%Complementing the empirical results presented in Section~\ref{sec:experiments}, in this Section, we provide an additional set of experiments. We test our methods on three environments with continuous state-action spaces: \texttt{Walker2d}, \texttt{Hopper}, and \texttt{Halfcheetah}. We also provide provide further implementation details.

\noindent\textbf{Implementation details.} The parameters $\mu_\theta(s)$ and $\sigma_\theta(s)$ of Gaussian policy are parametrized with the feed forward neural network with two hidden layers of size 64 and $\tanh$ activation function. 
The summary of parameters for each environment is presented in Table~\ref{tab:hyperparams}.

\begin{table}[H]
	%\footnotesize
	\centering
	\begin{NiceTabular}{|c|c|c|c|c|c|c| }
		\hline
           \makecell{ Environments } & \makecell{ \texttt{Walker2d} }  &  \makecell{ \texttt{Hopper} } & \makecell{\texttt{Halfcheetah}} & \makecell{\texttt{Reacher}} & \makecell{\texttt{Humanoid}} & \makecell{\texttt{Cartpole}} \\
		\hline%\hline 
		\makecell{ Horizon }  & \makecell{500} &  \makecell{1000} & \makecell{500} & \makecell{500}  & \makecell{500} & \makecell{200} \\
	%\hline	
  %\cline{2-8}
        \makecell{ Number of timesteps }  & \makecell{$2 \times 10^7$ } & \makecell{$2 \times 10^7$ } &\makecell{$2\times 10^7$} &\makecell{$2\times 10^7$} &\makecell{$2\times 10^7$} &\makecell{$2\times 10^6$}   \\
        \makecell{ Batch size }  & \makecell{20 } & \makecell{10 } &\makecell{10} &\makecell{200} &\makecell{10} &\makecell{5}  \\
		%\hline
		\makecell{ best $\g_0$ for \algname{Vanilla-PG}}  & \makecell{0.02 } & \makecell{0.02 } &\makecell{0.02} &\makecell{0.02} &\makecell{0.05} &\makecell{0.1}  \\
		%\hline
		\makecell{ best $\g_0$ for \algname{N-PG-IGT} }  & \makecell{0.75 } & \makecell{0.5 } &\makecell{1.0} &\makecell{0.2} &\makecell{1.0}&\makecell{0.2}    \\
		%\hline
		\makecell{ best $\g_0$ for \algname{HARPG} }  & \makecell{0.002 } & \makecell{0.002 } &\makecell{0.001}  &\makecell{0.002}  &\makecell{0.002} &\makecell{0.02}   \\
		%\hline
		\makecell{ best $\g_0$ for \algname{NHARPG} }  & \makecell{0.1 } & \makecell{0.05 } &\makecell{0.2}  &\makecell{0.1}  &\makecell{2.0} &\makecell{0.1}   \\
		\hline
    \end{NiceTabular}
    \caption{Hyper-parameters and step-size choice. The initial step-size is chosen from the set of $13$ values by the best performance in the last iteration.  }
\label{tab:hyperparams}
\end{table}

\subsection{Experiment 1: Comparison with tuned initial step-sizes}
%\noindent\textbf{Comparison with tuned initial step-sizes.}  
We further test the algorithms with the best tuned step-sizes on \texttt{Walker2d} and \texttt{Reacher}. Similarly to the experiment in Section~\ref{sec:experiments}, we run each algorithm five times for every initial step-size $\g_0$ from the set 
\begin{equation}\label{eq:set_of_stepsizes}
\{ 0.001, 0.002, 0.005, 0.01, 0.02, 0.05, 0.1, 0.2, 0.5, 0.75, 1, 2, 4 \} .
\end{equation}
Then the reported step-size is chosen by the performance of the algorithm in the last iteration. In Table~\ref{tab:step_sizes}, we report the choice of time-varying step-sizes and momentum parameters. We refer the reader to Sections~\ref{sec:nigt_app}, \ref{sec:stormhess_app} and \ref{sec:nstormhess_app} for justification of the choice of parameters presented in the table.

%above $1000$ for \texttt{Hopper} and above $1500$ for \texttt{Halfcheetah}, see Figure~\ref{fig:sz_hopper_halfcheetah}

\begin{figure}[ht]

       \begin{minipage}{.475\textwidth}
        \centering
        \includegraphics[width=\textwidth]{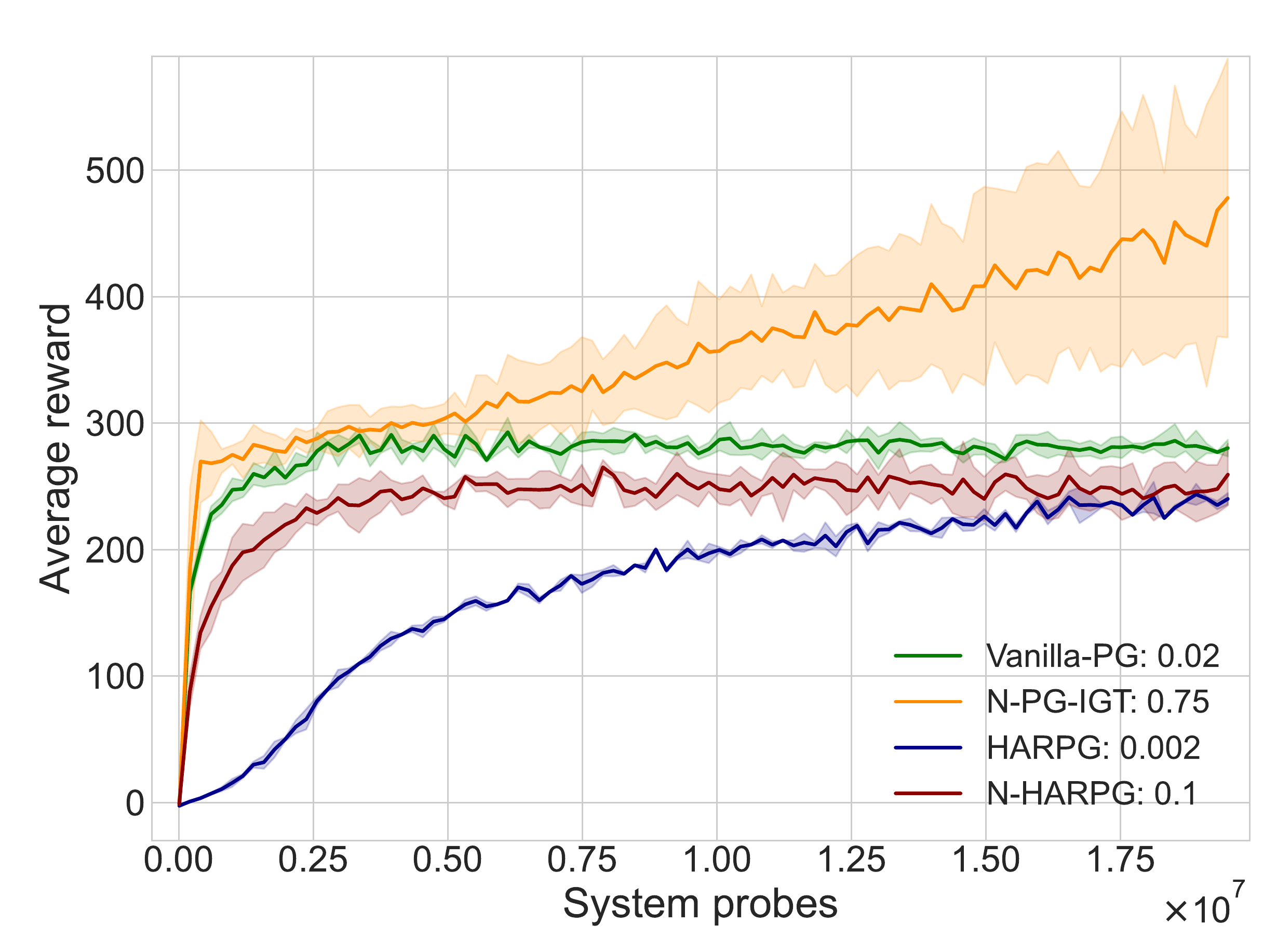}
        \end{minipage}
        \begin{minipage}{.475\textwidth}
        \vspace{8pt}
        \includegraphics[width=1.05\textwidth]{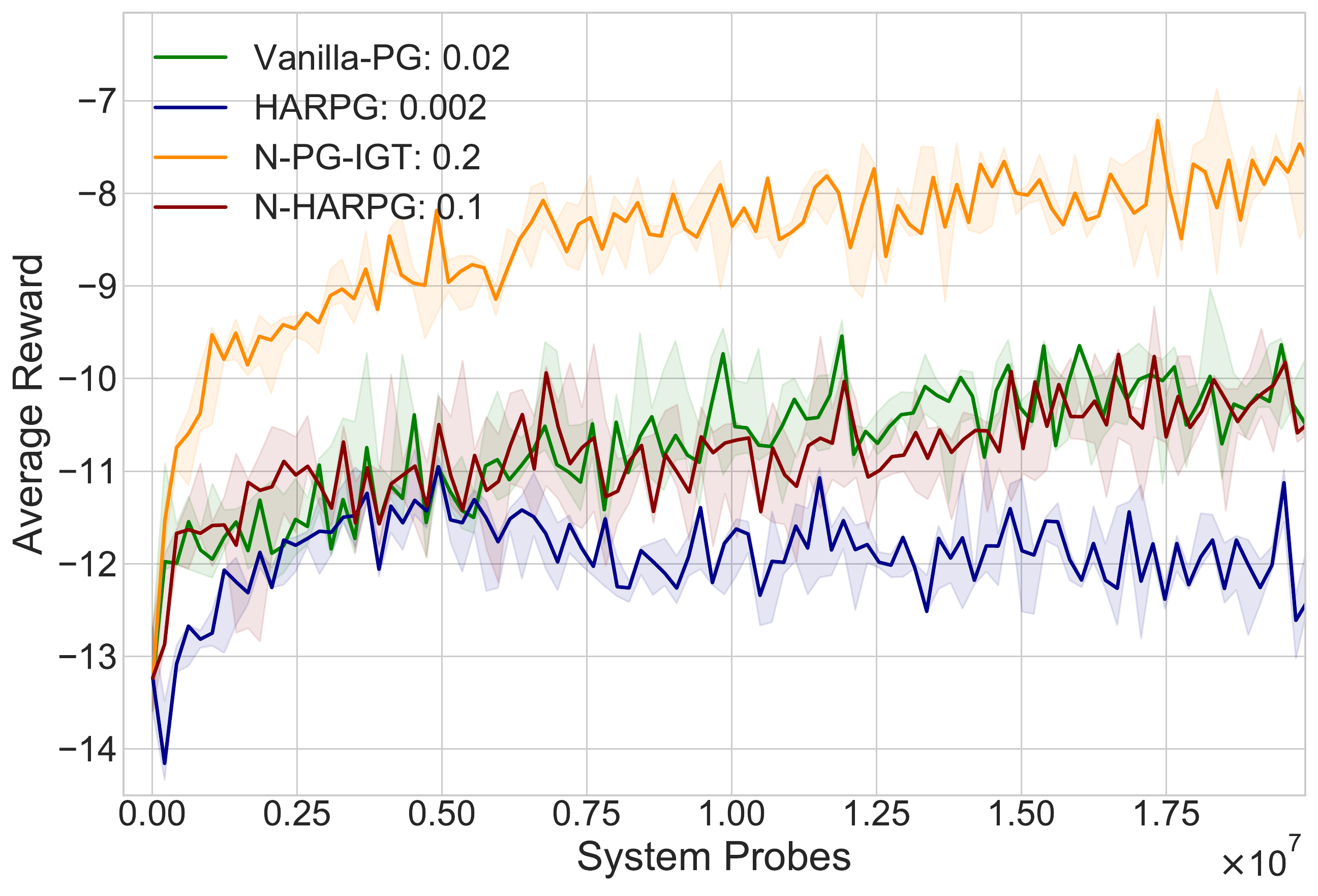}
       \end{minipage}
        \caption{ The performance of \algname{Vanilla-PG}, \algname{N-PG-IGT}, \algname{HARPG}, \algname{N-HARPG} on \texttt{Walker2d} (left) and \texttt{Reacher} (right) environment.}
        \label{fig:sz_walker_reacher}
\end{figure}

Similarly to \texttt{Humanoid} environment presented in Section~\ref{sec:experiments}, we observe that \algname{N-PG-IGT} outperforms other methods and quickly increases the reward above $450$ for \texttt{Walker2d} and above $-9$ for \texttt{Reacher}, see Figure~\ref{fig:sz_walker_reacher}. Unfortunately, theoretically better \algname{HARPG} and \algname{N-HARPG} methods do not show the improvement in practice over \algname{Vanilla-PG}, which might be due to large variance of Hessian-vector estimator.

%The similar order of improvement can be also seen for \texttt{Walker2d} and \texttt{Reacher} environments, see Figure~\ref{fig:sz_walker_reacher}.

\begin{table}[H]
	%\footnotesize
	\centering
	\begin{NiceTabular}{|c|cccc|}
		\hline
           & \makecell{ \algname{Vanilla-PG} }  &  \makecell{ \algname{N-PG-IGT} } & \makecell{\algname{HARPG}} & \bf \makecell{ \algname{N-HARPG} } \\
		\hline
		\makecell{ Step-size, $\g_t$ }  & \makecell{$\g_0 \rb{\fr{2}{t+2}}^{\nfr{2}{3}}$} &  \makecell{$ \fr{2\g_0 }{t+2} $} & \makecell{$\g_0 \rb{\fr{2}{t+2}}^{\nfr{1}{2}}$} & \makecell{$ \fr{2 \g_0 }{t+2} $}  \\
	\hline	
  %\cline{2-8}
        \makecell{ Momentum \\ parameter, $\eta_t$ }  & \makecell{N/A} & \makecell{$ \rb{\fr{2}{t+2}}^{\nfr{4}{5}}$} &\makecell{$ \fr{2}{t+2} $} & \makecell{$ \fr{2}{t+2} $}   \\
		\hline
    \end{NiceTabular}
    \caption{Summary of the sequences of step-sizes and momentum parameters for policy gradient methods. The initial step-size parameter $\gamma_0$ was tuned within the set \eqref{eq:set_of_stepsizes}. The best value of $\gamma_0$ for each algorithm and environment is reported in Table~\ref{tab:hyperparams}.}
    \label{tab:step_sizes}
\end{table}

%\algname{Vanilla-PG} eventually reaches comparable reward, but requires a larger number of samples than \algname{N-PG-IGT}. 
\subsection{Experiment 2: Robustness to initial step-sizes}
%\noindent\textbf{Robustness to initial step-sizes.}
In addition to the experiment on \texttt{Reacher} environment (see Figure~\ref{fig:sz_robust}), we report the sensitivity of the studied methods to initial step-sizes on another two tasks: \texttt{Walker2d} and \texttt{Humanoid}. For these tasks, we observe a similar trend, which shows that \algname{HARPG} can be faster than \algname{Vanilla-PG}, which happens in the small step-size regime, see Figure~\ref{fig:robustness}. On the other hand, the normalized methods tolerate larger step-sizes and \algname{N-PG-IGT}, in particular, demonstrates more robustness to step-size choice. Overall, we observe that, naturally, normalized and non-normalized methods have different range of optimal step-sizes, which makes it necessary to extensively tune the step-sizes for comparing between these two types of algorithms. Another observation is that for the tested environments, the variance reduced \algname{HARPG} method performs better than \algname{Vanilla-PG} only for small non-tuned step-sizes.  Therefore, in the next experiment, we compare these methods in the small step-size regime.  %This suggests that different methods have different domains of applicability, and the pattern holds among several environments. In case of real application when the possibility of tuning step sizes is limited this behaviour gives more insight on the suitable choice of the initial step-size. 

\begin{figure}[ht]
       \begin{minipage}{.475\textwidth}
       \hspace{5pt}
        %\centering
        \includegraphics[width=\textwidth]{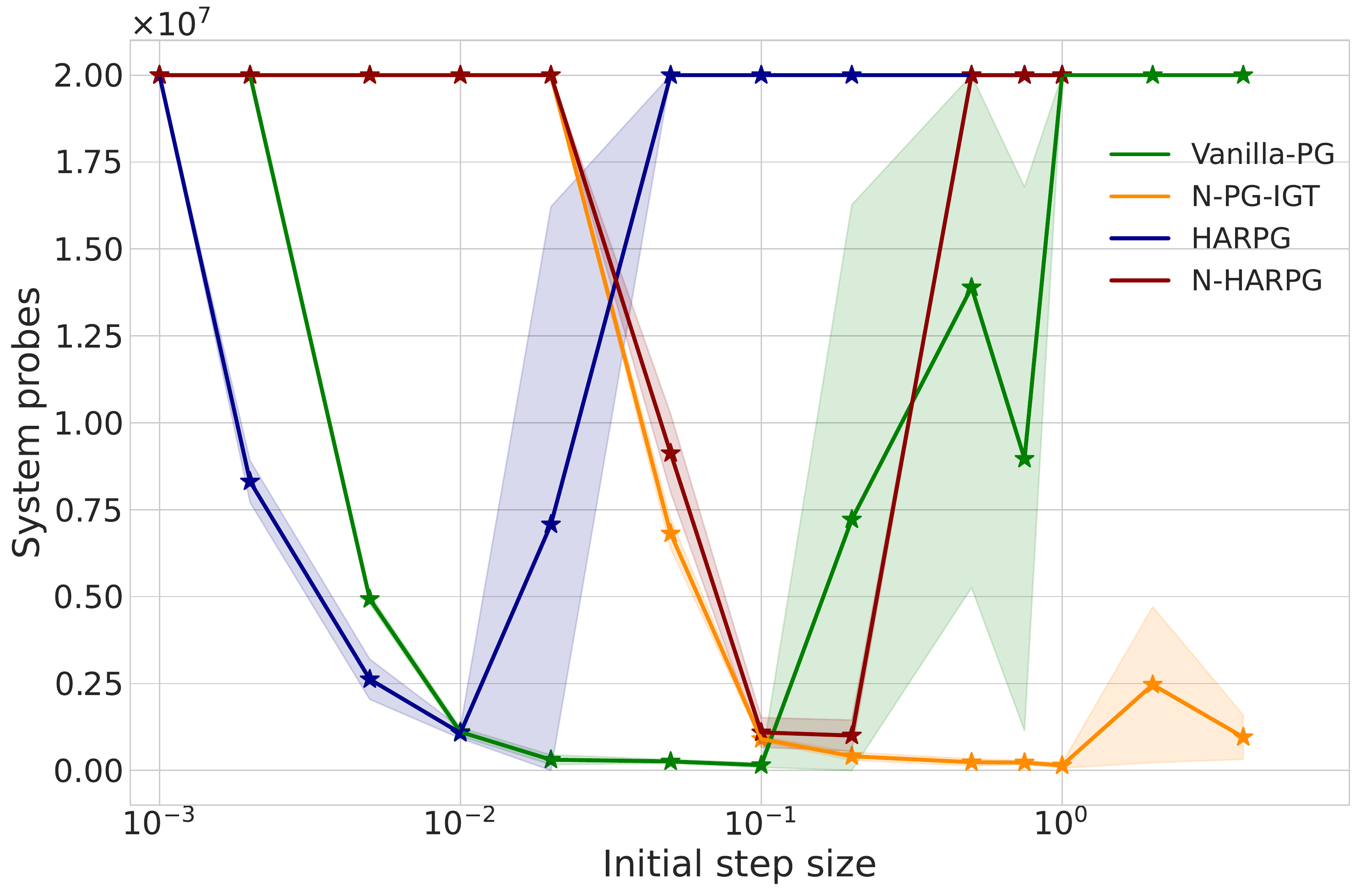}
        \end{minipage}
        \begin{minipage}{.475\textwidth}
        \hspace{10pt}
        \includegraphics[width=0.97\textwidth]{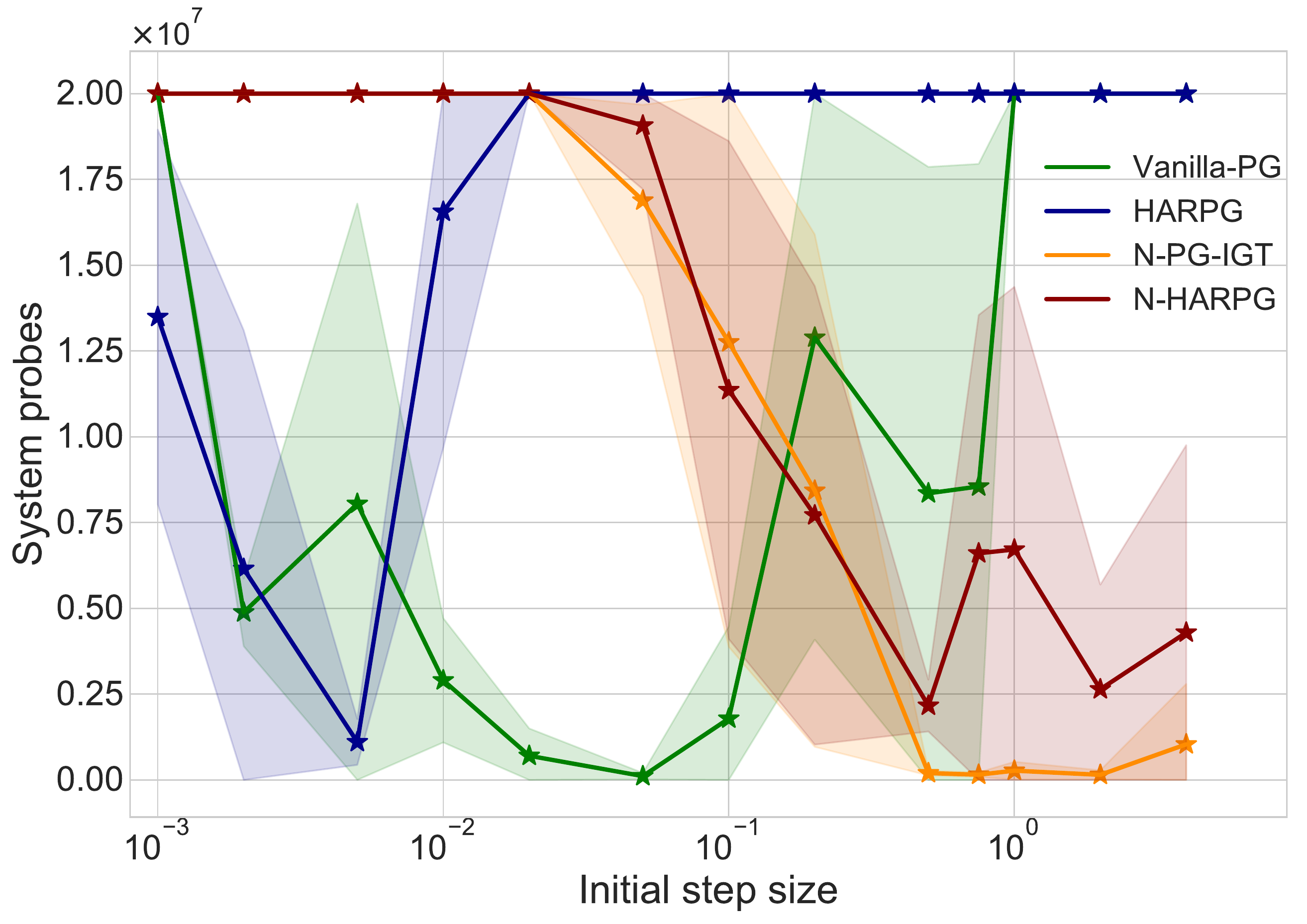}
       \end{minipage}
        \caption{Robustness to initial step-sizes on \texttt{Walker2d} (left) and \texttt{Humanoid} (right) environments.}
        \label{fig:robustness}
\end{figure}

\subsection{Experiment 3: Comparison of \algname{Vanilla-PG} and \algname{HARPG} with small untuned step-sizes}
%\noindent\textbf{Performance for small non-tuned step-sizes.}

In this experiment, we demonstrate that the variance reduced \algname{HARPG} method can outperform \algname{Vanilla-PG} for a fixed small initial step-size $\gamma_0$, see Figure~\ref{fig:small_step_size}.\footnote{We do not include normalized algorithms in this comparison since normalization introduces an additional scaling of the step-size.  } Our observations here are consistent with those reported in the previous work for other variance reduced policy gradient algorithms, e.g., \citep{shen-et-al19,huang-et-al20,salehkaleybar-et-al22}. However, when the step-sizes are tuned, \algname{Vanilla-PG} preforms consistently better than variance reduced algorithms, see also Experiment 4 (Figure~\ref{fig:sz_hopper_halfcheetah}) for additional comparison to other methods with tuned step-sizes.

\begin{figure}[h]
    \centering 
    \includegraphics[width=0.95\textwidth]{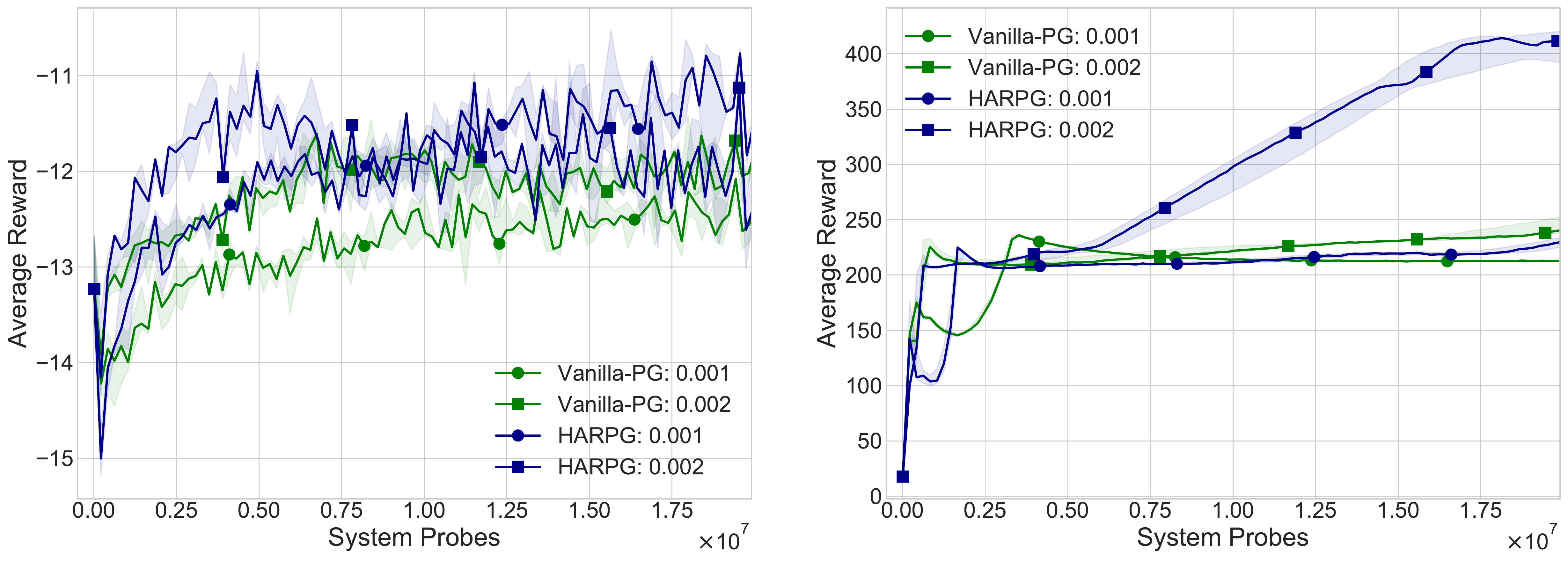}
        \caption{The performance of \algname{Vanilla-PG}, and \algname{HARPG} on \texttt{Reacher} (left) and \texttt{Hopper} (right) environments for non-tuned initial step-sizes.}
        \label{fig:small_step_size}
\end{figure}

\subsection{Experiment 4: Additional comparison to other methods with tuned step-sizes}
%\noindent\textbf{Performance with tuned initial step-sizes including importance sampling weights.}
In this experiment, we test our algorithms on \texttt{Hopper} and \texttt{Halfcheetah} tasks with tuned step-sizes. We compare the performance of presented methods with two additional algorithms. \algname{VR-PG} method, also known as \algname{STORM-PG} \citep{yuan-et-al20} implements a variance reduced policy gradient method using the importance sampling (IS) weights instead of the Hessian correction. We also consider a normalized version of this method for a fair comparison with \algname{N-HARPG}. \footnote{We note that theoretical analysis of these methods require an additional assumption on boundedness of IS weights. } We observe in Figure~\ref{fig:sz_hopper_halfcheetah}, that while \algname{VR-PG} and \algname{N-VR-PG} perform similarly or better than  \algname{HARPG} and \algname{N-HARPG}, our method \algname{N-PG-IGT} still converges faster. 

\begin{figure}[h]
     %   \begin{minipage}{.5\textwidth}
     %   \centering
     %   \includegraphics[width=\textwidth]{}
      %  \end{minipage}
        %\label{fig:sz_tuning}
       % \begin{minipage}{.5\textwidth}
    %    \includegraphics[width=\textwidth]{}
        %\caption{Performance of $\algname{SGD}$, $\algname{PAGER}$ and $\algname{PAGE}$ on benchmark RL tasks.}
     %   \label{fig:sz_tuning}
     %   \end{minipage}
    \centering 
    \includegraphics[width=0.95\textwidth]{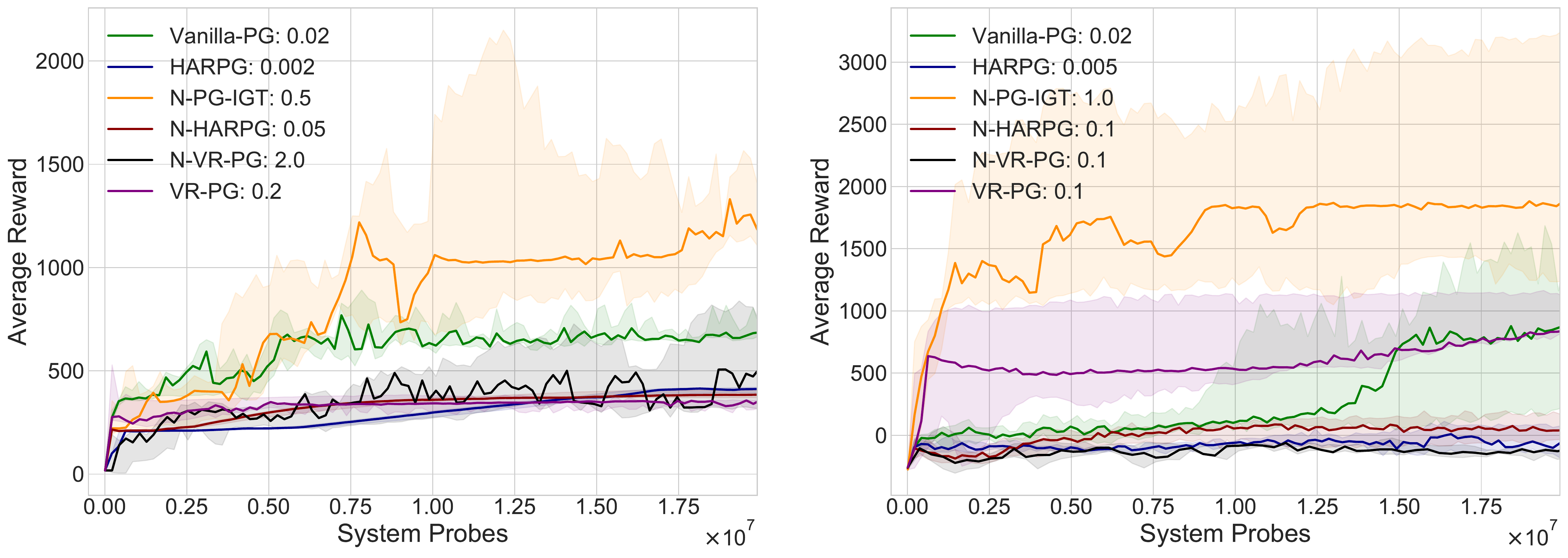}
        \caption{The performance of \algname{Vanilla-PG}, \algname{N-PG-IGT}, \algname{HARPG}, \algname{N-HARPG}, \algname{VR-PG} and \algname{N-VR-PG} on \texttt{Hopper} (left) and \texttt{Halfcheetah} (right) environments.}
        \label{fig:sz_hopper_halfcheetah}
\end{figure}

\subsection{Experiment 5: Performance under the soft-max parameterization for discrete state-action space}
Although our main theoretical results (except for FOSP of \algname{N-PG-IGT} in Theorem~\ref{thm:NIGT_FOSP}) only hold under the Fisher-non-degenerate (FND) parameterization Assumption~\ref{hyp:fisher-non-degenerate}, all methods presented in this work can be applied to other policy classes without any modifications. In this experiment, we test the proposed algorithms on the popular \texttt{Cartpole} environment with soft-max policy parameterization. We observe in Figure~\ref{fig:cartpole} that the behavior of methods is similar to other environments. 
\begin{figure}[h]
    \centering 
    \includegraphics[width=0.45\textwidth]{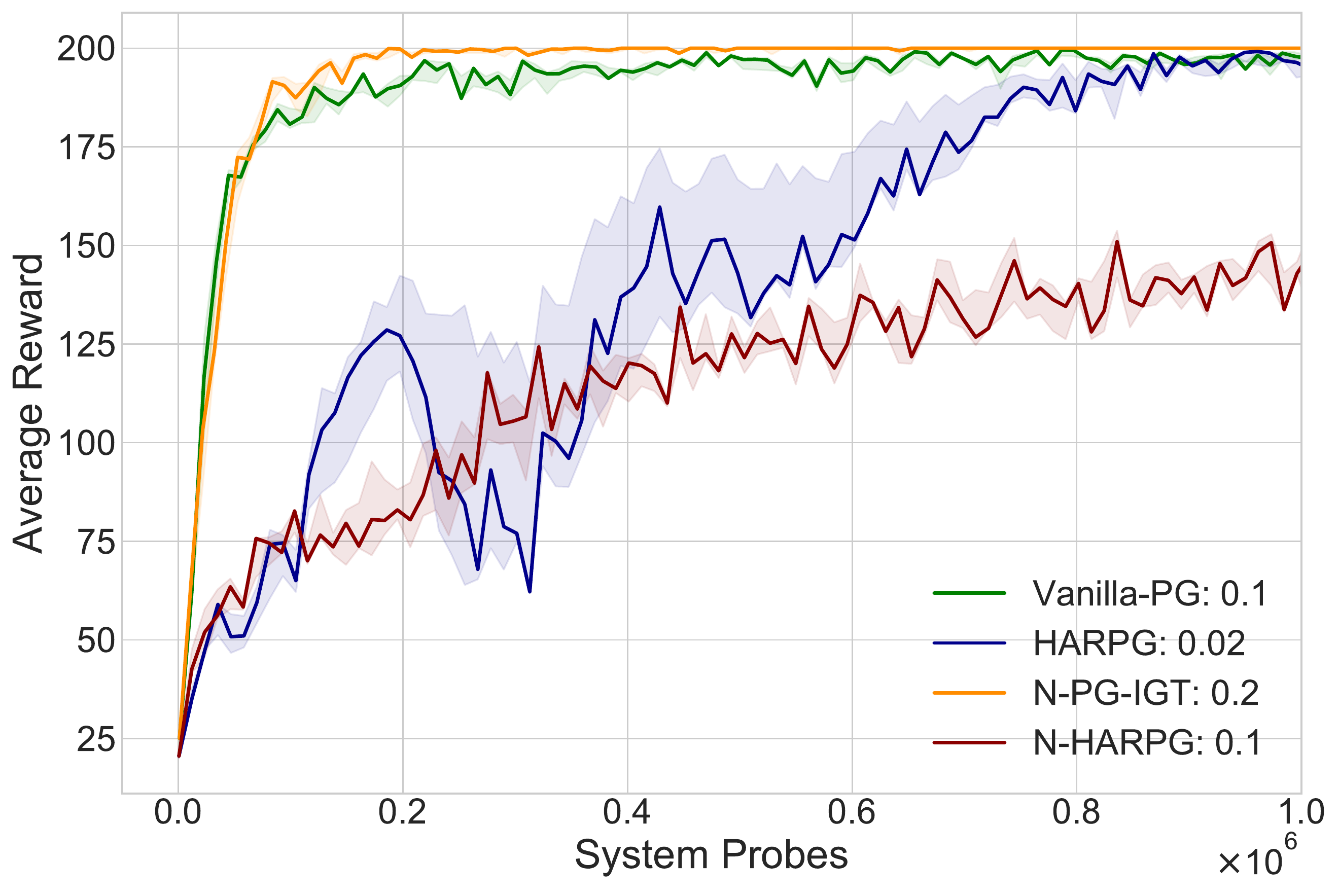}
        \caption{The performance of \algname{Vanilla-PG}, \algname{N-PG-IGT}, \algname{HARPG}, and \algname{N-HARPG} on \texttt{Cartpole} environment for tuned initial step-sizes.}
        \label{fig:cartpole}
\end{figure}

\iffalse
\begin{table}[H]
	%\label{tab:step_sizes}
	%\footnotesize
	\centering
	\begin{NiceTabular}{|c|ccccc|}
		\hline
           & \makecell{ \algname{Vanilla-PG} }  & \makecell{ \algname{N-MPG} }  &  \makecell{ \algname{N-PG-IGT} } & \makecell{\algname{HARPG}} & \bf \makecell{ \algname{N-HARPG} } \\
		\hline
		\makecell{ Step-size  $\g_t$ }  & \makecell{$\g_0 \rb{\fr{2}{t+2}}^{\nfr{2}{3}}$} & \makecell{$ \fr{2 \g_0 }{t+2}$}  &  \makecell{$ \fr{2\g_0 }{t+2} $} & \makecell{$\g_0 \rb{\fr{2}{t+2}}^{\nfr{1}{2}}$} & \makecell{$ \fr{2 \g_0 }{t+2} $}  \\
	\hline	
  %\cline{2-8}
        \makecell{ Momentum \\ parameter $\eta_t$ }  & \makecell{N/A} & \makecell{$ \rb{\fr{2}{t+2}}^{\nfr{2}{3}}$}  & \makecell{$ \rb{\fr{2}{t+2}}^{\nfr{4}{5}}$}  &\makecell{$ \fr{2}{t+2} $} & \makecell{$ \fr{2}{t+2} $}   \\
		\hline
    \end{NiceTabular}
    \caption{The summary of the sequences of step-sizes and momentum parameters for policy gradient methods. }
\end{table}
\fi

\newpage

 \section{Further Discussion of Assumptions from Section~\ref{subsec:assumptions_main}}
 \label{sec:discussion-assumptions}

In this section, we discuss our assumptions in the light of two simple examples while we also comment on the generality of our assumptions beyond those examples. However, we would like to highlight that our set of assumptions is general and standard in the recent studies of convergence of PG methods (see~\citet{xu-et-al20iclr,zhang-et-al20glob-conv-pg,liu-et-al20,ding-et-al22,gargiani-et-al22,Masiha_SCRN_KL,salehkaleybar-et-al22}, to name a few). We believe that relaxing some of the assumptions is an interesting question that we defer for future work. 

 %\paragraph{Gaussian policy.} 
 \subsection{Gaussian policy} 
  Let~$\varphi: \mathcal{S} \to \R^d$ be a feature map and set\footnote{The more general case~$\mathcal{A} = \R^q (q >1)$ can be treated similarly and only needs slight adjustments in the definition of the Gaussian policy such as considering a matrix feature map (instead of a vector one) of proper dimensions. We briefly comment on this case below when discussing Assumption~\ref{hyp:fisher-non-degenerate}.}~$\mathcal{A} = \R$.
 We consider the common Gaussian policy parameterized by a linear mean parameterization and a constant variance parameter $\sigma^2 > 0$ defined for every~$\theta \in \R^d, (s,a) \in \mathcal{S} \times \mathcal{A}$ by:
 \begin{equation}
 \label{eq:gaussian-policy}
 \pi_\theta(a | s)=\frac{1}{\sigma \sqrt{2 \pi}} \exp \left(-\frac{ ( a - \varphi(s)^{\top}\theta )^2 }{2 \sigma^2}\right)\,.
 \end{equation}
 More general forms of Gaussian policies use neural networks for the mean and the standard deviation parameterizations. Note that one can consider even more general parameterizations such as the exponential family or symmetric $\alpha$-stable policies which include the Gaussian policy as a particular case. We refer the interested reader to the nice exposition in \cite{bedi-et-al21heavy-tailed-policy-search} for a discussion around such heavy-tailed policy parameterizations. 
In the case of~\eqref{eq:gaussian-policy}, the score function can be written for every~$\theta \in \R^d, (s,a) \in \mathcal{S} \times \mathcal{A}$ as follows: 
 $$
 \nabla \log \pi_{{\theta}}(s, a) = \frac{a-\varphi(s)^{\top} {\theta}}{\sigma^2}\varphi(s)\,.
$$

 \noindent\textbf{Assumption~\ref{hyp:parameterization-reg}-\eqref{hyp:param-reg-1st-order} (Bounded score function):} A few comments are in order regarding this assumption:

 \begin{enumerate}

     \item In the specific case of the Gaussian policy, we notice that strictly speaking this assumption is not satisfied for all $a \in \R$ and~$\theta\in\R^d$ even if the feature map is bounded as it was previously noted in the literature~\citep{Vanilla_PL_Yuan_21,bedi-et-al22contin-action-space}. However, a number of recent works 
     % considering the Gaussian parameterization 
     % regard the boundedness of the score function for this policy as a mild additional assumption, 
     \citep[Section C]{papini-et-al18}, \citep[Section 7]{xu-et-al20uai}, \citep[Section 4]{liu-et-al20}, \citep[Corollary 4.8]{xu-et-al20iclr} assume boundedness of the score function. It is easy to see that such an assumption can be guaranteed if one additionally assumes the bound on sampled actions and on the mean parameterization $\varphi(s)^{\top}\theta$. In practice, this is usually enforced by clipping the actions selected by the policy~$\pi_{\theta}$ and projecting the update of the gradient method into the ball of fixed radius. 

     \item It was previously shown that this standard assumption can be relaxed to hold in expectation (see Assumption 4.1 \citep{Vanilla_PL_Yuan_21}) instead of for all state-action pairs. Such relaxed version allows to accomodate the Gaussian parameterization without additional assumptions, while it is sufficient to validate smoothness of $J$ and the bounded variance of stochastic gradients, see~\citep[Lemma 4.2 and 4.4]{Vanilla_PL_Yuan_21}. However, in this work, we do not focus on such technical relaxations and work with the standard assumption of bounded score function. 
     
     \item For a policy inducing an unbounded score function, an alternative is to partition the action space for instance according to when the score function is bounded and integrable using an exploration parameter. We refer the reader to Section~5.1 in \citet{bedi-et-al21heavy-tailed-policy-search}.
     Adopting this approach requires adjustments in the analysis. 
 \end{enumerate}

 \noindent\textbf{Assumption~\ref{hyp:parameterization-reg}-\eqref{hyp:param-reg-2nd-order}(Lipschitzness of the score function):} 
 We compute 
 $\nabla^2 \log \pi_{{\theta}}(s, a) = - \fr{1}{\sigma^2} \varphi(s)\varphi(s)^{\top}$ which is bounded if the feature map is bounded.

 \noindent\textbf{Assumption~\ref{hyp:lipschitz-hessian}(Second-order smoothness of the score function):} 
 This assumption trivially holds since $\nabla^2 \log \pi_{{\theta}}(s, a)$ is independent of $\theta$.
 Regarding this assumption, we further highlight that it is not required for \algname{(N)-HARPG} (see Theorems~\ref{thm:HARPG} and~\ref{thm:N-HARPG}). It is only required for \algname{N-PG-IGT}. While this assumption is key for the improvement for \algname{N-PG-IGT}, we point out that (a) it is a mild assumption satisfied  for the Gaussian policy as we just mentioned and (b) Table~\ref{table:related-works}  (column ‘No 2nd. smooth’) highlights it as an additional assumption compared to existing results for a fair comparison. In particular, we compare to~\cite{Masiha_SCRN_KL} which makes the same additional assumption.

 \noindent\textbf{Assumption~\ref{hyp:fisher-non-degenerate}(Fisher non-degeneracy):}
 The Fisher information matrix for a Gaussian policy can be written as $F_{\rho}(\theta) = \fr{1}{\sigma^2} \varphi(s)\varphi(s)^{\top}$, which is positive definite for a full row-rank feature map. 
 More generally\footnote{We report some comments from the discussion in Appendix~B.2 in \cite{liu-et-al20} for completeness and for the convenience of the reader.}, 
 \begin{enumerate}

    \item \textbf{Linear mean parameterization}: when the Gaussian policy is multivariate ($\mathcal{A} = \R^q$) and parametrized by~$\mu_{\theta}(s) = \Phi(s)^T \theta$ where~$\Phi: \mathcal{S} \to \R^{d \times q}$ ($d$ being the dimension of the parameter~$\theta$) and~$\Sigma$ is a fixed covariance matrix, the Fisher information matrix is independent of~$\theta$ as it is given by~$\Phi(s) \Sigma^{-1} \Phi(s)^T\,.$ Hence, Assumption~\ref{hyp:fisher-non-degenerate} is satisfied if~$\Phi(s)$ is full-row-rank. For the usual setting where the dimension of the parameter~$\theta$ is smaller than the dimension of the action space (i.e., $d < q$), choosing the rows of~$\Phi(s)$ to be linearly independent would satisfy this full-rank requirement. This is a common assumption in the RL literature when considering linear function approximation (see e.g., \citet{tsitsiklis-vanroy97,melo-et-al08}).  

    \item \textbf{Neural net mean parameterization}: when~$\mu_{\theta}(s)$ is nonlinear in~$\theta$ as in the celebrated case of neural networks, Assumption~\ref{hyp:fisher-non-degenerate} can be satisfied under adequate (uniform in~$\theta$) assumptions on the Jacobian of~$\mu_{\theta}(s)$ akin to the conditions for the linear mean parameterization. 

    \item Full-rank exponential family distributions \citep{dasgupta11exponential-family} with a parameterized mean~$\mu_{\theta}(s)$ do also satisfy Assumption~\ref{hyp:fisher-non-degenerate}. Let us mention that the Fisher information matrix is positive definite for many regular statistical models \citep{kullback97information-theory-stats}. 
 \end{enumerate}

 %\paragraph{Cauchy policy.} 
 \subsection{Cauchy policy} 
 Let~$\varphi: \mathcal{S} \to \R^d$ be a feature map and set~$\mathcal{A} = \R$. We suppose this feature map to be bounded (i.e., $\norm{\phi(s)} \leq D$ for every~$s \in \mathcal{S}$ for some positive constant~$D$) and full-row-rank.  The Cauchy distribution with linear mean parameterization $\varphi(s)^{\top}\theta $ and the scale parameter $\sigma > 0$ is given by:
 $$
 \pi_{{\theta}}(a | s)=\frac{1}{\pi \sigma \left(1+\left( \frac{a-\varphi(s)^{\top} {\theta}}{\sigma}\right)^2\right)} \,, 
 $$
 for every~$ \theta \in \R^d, s \in \mathcal{S}, a \in \mathcal{A}\,.$
 % where $\phi(s) \in \R^d$ is a feature map, which we assume to be bounded $\norm{\phi(s)} \leq D$ and have full row rank.
In this case the score function can be written as
 $$
% \nabla \log \pi_{{\theta}}(s, a)=\frac{2\left(\left(a-\varphi(s)^{\top} {\theta}\right) / \sigma\right)}{1+\left(\left(a-\varphi(s)^{\top} {\theta}\right) / \sigma\right)^2}\left(\frac{\varphi(s)}{\sigma}\right),
 \nabla \log \pi_{{\theta}}( a| s )=\frac{2 x_{\theta} }{1+x_{\theta}^2} \, \frac{\varphi(s)}{\sigma} ,
 $$
 where $x_{\theta} = \left(a-\varphi(s)^{\top} {\theta}\right)/\sigma$.

Following \citep[Lemma 4.7]{bedi-et-al22contin-action-space}, we verify our assumptions under this policy parameterization. 

\noindent\textbf{Assumption~\ref{hyp:parameterization-reg}-\eqref{hyp:param-reg-1st-order} (Bounded score function):} Since $ \frac{2 x }{1+x^2}  \leq 1 $ for all $x\in\R$, we have $\left\|\nabla \log \pi_{{\theta}}( a | s )\right\| \leq M_g$ with $M_g = \frac{D}{\sigma}$ for all ${\theta}\in\R^d$ and $s\in\mathcal{S}$, $a \in \mathcal{A}$. 

 \noindent\textbf{Assumption~\ref{hyp:parameterization-reg}-\eqref{hyp:param-reg-2nd-order}(Lipschitzness of the score function):} %Using the bound in the Appendix D in \citep{bedi-et-al22contin-action-space}, we have $\left\|\nabla^2 \log \pi_{{\theta}}(s, a)\right\| \leq M_h$ with $M_h = \frac{2 D^2}{\sigma^2}+\frac{7 D}{\sigma^2}+1$
 We compute 
 $
 \nabla^2 \log \pi_{{\theta}}(a | s) = \fr{2(1 - x_{\theta}^2)}{(1+x_{\theta}^2)^2} \left(\frac{\varphi(s) \varphi(s)^{\top}}{\sigma^2}\right).
 $
 Then since $\fr{2(1 - x^2)}{(1+x^2)^2} \leq \fr{3}{2}$ for all $x \in\R$, we can bound $\left\|\nabla^2 \log \pi_{{\theta}}( a | s)\right\| \leq M_h$ with $M_h = \frac{3 D^2}{ 2 \sigma^2} $.

 \noindent\textbf{Assumption~\ref{hyp:lipschitz-hessian}(Second-order smoothness of the score function):} We take an arbitrary non-zero vector $u\in \R^d$, and compute $\nabla ( \nabla^2 \log \pi_{{\theta}}(a | s) \, u ) = \fr{4 x_{\theta} (x_{\theta}^2 - 3 )}{(1 + x_{\theta}^2)^3} \fr{1}{\sigma^3} \varphi(s) \varphi(s)^{\top} \varphi(s)^{\top} u $ . 
%$$
%\|\nabla^2 \log \pi_{\theta} (a|s) - \nabla^2 \log \pi_{\theta'} (a|s)\| = \sup_{u \in \mathbb{R}^d} 
%$$
 Therefore,
 $\|\nabla^2 \log \pi_{\theta} (a|s) - \nabla^2 \log \pi_{\theta'} (a|s)\| \leq l_2\, \|\theta - \theta'\|\,$ with $l_2 = \fr{4 D^3}{\sigma^3}$.

 \noindent\textbf{Assumption~\ref{hyp:fisher-non-degenerate}(Fisher non-degeneracy):}
 Using reparameterization theorem for the Fisher information of Cauchy distribution, the Fisher information matrix for $\pi_{{\theta}}( a | s)$ can be written as $F_{\rho}(\theta) = \fr{1}{2\sigma^2} \varphi(s)\varphi(s)^{\top}$, which is positive definite for full row-rank feature map.

\noindent\textbf{About Assumption~\ref{hyp:tranf-compatib-fun-approx} (Bounded compatible function approximation transfer error):} As previously mentioned, this assumption has been used in several works~\citep{agarwal-et-al21,liu-et-al20, ding-et-al22,Vanilla_PL_Yuan_21}. 

\newpage
\section{Proof Sketch of the Main Results}\label{sec:appendix_proof_sketch}

In this section, we provide a brief proof sketch of our results to highlight the proof technique leading to improved sample complexity compared to prior work. The sample complexity improvement comes from a combined effect of (a) exploiting the relaxed weak gradient dominance and (b) deriving a better variance error control. The complete formal proofs are presented in the subsequent sections.

\noindent\textbf{Step 1. Ascent-like lemma.} %Define~$\delta_t \eqdef  J^* - J(\theta_t)$ where~$J^*$ is the optimal expected return and let 
Define~$\hat e_t \eqdef d_t - \nabla J_H(\theta_t)$. First, in Lemma~\ref{le:NSGD_descent_trunc} we prove the following inequality for any sequence~$\theta_t$ updated via~$\theta_{t+1} = \theta_t + \gamma_t d_t/\|d_t\|$ for any nonzero update direction~$d_t$ (to be given by~\algname{N-PG-IGT} or \algname{N-HARPG}): 
%\begin{multline*}
\begin{equation}\label{eq:ascent-proof-sketch}
      J(\theta_{t+1}) - J^* \geq  J(\theta_t) - J^* + \frac{\gamma_t}{3} \|\nabla J(\theta_t)\| - \frac{8 \gamma_t}{3} \|\hat e_t\| - \mathcal{O}(\gamma_t^2 + \gamma^H\gamma_t)\,.
\end{equation}
%\end{multline*}
A similar lemma holds for the variant~\algname{HARPG} without normalization, see Lemma~\ref{le:SGD_descent}.

%\noindent\textbf{Step 2. Incorporate the relaxed weak gradient domination inequality.} 
\noindent\textbf{Step 2. Relaxed weak gradient domination inequality.} 
Combining Assumptions~\ref{hyp:fisher-non-degenerate} and~\ref{hyp:tranf-compatib-fun-approx}, 
and following the derivations of \citet{ding-et-al22}, we obtain
the relaxed weak gradient dominance inequality, satisfied by the expected return function~$J(\theta)$ (Lemma~\ref{lem:relaxed-w-grad-dom}):
%\begin{lemma}[Relaxed weak gradient domination, \citep{ding-et-al22}] 
%\label{lem:relaxed-w-grad-dom}
%Let Assumptions~\ref{hyp:parameterization-reg}-(\ref{hyp:param-reg-1st-order}), \ref{hyp:fisher-non-degenerate} and~\ref{hyp:tranf-compatib-fun-approx} hold. 
%Then%, 
%for every~$\theta \in \R^d$,
\begin{eqnarray}
\label{eq:relaxed-w-grad-dom}
\forall\, \theta \in \R^d, \quad 
\varepsilon' + \|\nabla J(\theta)\| \geq \sqrt{2\mu}\, (J^* - J(\theta))\,,
\end{eqnarray}
where~$\varepsilon^{\prime} = \frac{\mu_F \sqrt{\varepsilon_{\text{bias}}}}{M_g (1-\gamma)}\,$ and~$\mu = \frac{\mu_F^2}{2M_g^2}.$ 
%and~$J^* \eqdef \sup_{\theta} J(\theta)$. 
%and~$J^*$ is the optimal expected return (maximizing~$J(\pi)$).
%and~$J^* \eqdef \sup_{\theta} J(\theta)$ is the optimal expected return.
%\end{lemma}
%\cite[Lemma 4.7]{ding-et-al22}
This structural property of the objective function is one of the key tools of our analysis. 
Note that we also have~$\varepsilon^{\prime} = \frac{\mu_F \sqrt{\varepsilon_{\text{bias}}}}{M_g (1-\gamma)}$ and~$\mu = \frac{\mu_F^2}{2M_g^2}\,.$ In particular, 
the error~$\varepsilon^{\prime}$ is intimately related to the expressivity of the policy parameterization quantified by~$\varepsilon_{\text{bias}}$ as discussed in Assumption~\ref{hyp:tranf-compatib-fun-approx}. Notice that compared to~\citet{ding-et-al22}, we do not derive a bound for the expected gradient norm and inject it in~(\ref{eq:relaxed-w-grad-dom}) to obtain our final global convergence rates. Instead, 
we carefully incorporate~(\ref{eq:relaxed-w-grad-dom}) into the ascent-like lemma derived for the (normalized) gradient method to obtain a recursion on~$\delta_t \eqdef  J^* - \Exp{ J(\theta_t) }$, which leads to improved convergence rates.  
%This leads to an improved convergence rate. 

\begin{remark}[About gradient dominance]
Notice that our relaxed weak gradient dominance condition features the norm of the gradient instead of the norm squared. Such a structural property for the standard RL setting we consider here has been unveiled and exploited in several recent works \citep{agarwal-et-al21,xiao22}. Note that this condition is different from the Polyak-\L{}ojasiewicz (P\L{}) like condition which features a gradient norm squared in~\eqref{eq:relaxed-w-grad-dom} (with~$\epsilon' = 0$). A similar P\L{} condition was shown for the state feedback version of LQR problem in continuous control \citep{fazel-et-al18,Optimizing_LQR_21} which differs from our current setting. 
\end{remark}

\noindent\textbf{Step 3. Variance reduction control.} This crucial step depends on the algorithm used and consists in carefully deriving a recursion on $\Exp{\|e_t\|}$ or~$\Exp{\|e_t\|^2}\,.$ For instance, in Lemma~\ref{le:NIGT_first_err_bound} for \algname{N-PG-IGT} method, we derive the following recursion in vector form
\begin{equation}\label{eq:hat-e-t-proof-sketch}
\hat{e}_t = (1 - \eta_t) \hat{e}_{t-1}  + \eta_t e_{t} + (1 - \eta_t) S_t +  \eta_t Z_t ,
\end{equation}
where $\Exp{ \norm{e_{t}} } \leq \sigma_g$, and the terms $S_t$ and $Z_t$ correspond to the second-order Taylor approximation of $J$ at different points. This allows us to utilize Assumption~\ref{hyp:lipschitz-hessian} together with the normalized update rule (which guarantees~$\norm{\theta_{t+1} - \theta_t} = \gamma_t $), and derive the control for $ \norm{S_{t}}  \leq \cO\rb{  \g_{t}^2  + \g^H } $, $\norm{Z_t} \leq \cO\rb{ \g_t^2 \eta_t^{-2} + \g^H }$. 

By choosing $\gamma_t = \cO\rb{ t^{-1} } ,\,\eta_t = \cO\rb{ t^{-{4}/{5} } }$, $H = (1-\gamma)^{-1}\log(T+1)$ we guarantee 
$$ \Exp{ \norm{\hat e_t } } \leq    \cO\rb{ \eta_t^{\nfr{1}{2}}   +  \g_t^2 \eta_t^{-2}  +  \g^H \eta_t^{-1} +   \g^{H}  \g_t \eta_t^{-1} } = \cO\rb{ t^{-2/5} }\,.
$$
We emphasize the importance of using time varying step-sizes and momentum parameters for our analysis as opposed to the constant step-size depending on the total number of iterations $T$. In particular, selecting time varying $\eta_t$ allows us to resolve \eqref{eq:hat-e-t-proof-sketch} for all $t \geq 1$ and plug it in \eqref{eq:ascent-proof-sketch}. 
%By choosing time a varying step-size $\g_t = \cO(t^{-1})$, we are able to derive $\cO(t^{-2/5})$ rate for $\delta_t$ sequence. In contrast, if one selects the step-size depending on the total number of iterations $T$, it is only possible to acheive the rate, which is arbitrarily close to ours, i.e., $\cO(t^{-2/5})$ for any $0<a<1$.

\noindent\textbf{Step 4. Combine the variance-reduced estimate and the ascent-like lemma.} 
Finally, we incorporate the results of Steps~2-3 into the ascent-like lemma derived in Step 1 to obtain a recursion on the sequence~$(\delta_t)$. For example, for the \algname{N-PG-IGT} method, we arrive at
$$
\delta_{t+1} \leq ( 1 - \Omega( \g_t ) ) \delta_t + \cO( \g_t t^{-2/5} + \g_t^2 + \varepsilon' \g_t ).
$$
It remains to resolve the resulting recursion on the sequence~$(\delta_t)$ using technical lemma from Section~\ref{sec:tech_lemmas} in order to obtain the final rate.  

\newpage

\section{Further Related Work}
%\section{FURTHER RELATED WORK}
\label{sec:related-work}

In this section, we provide a brief additional discussion  
including first-order stationarity of variance-reduced PG methods, global optimality of exact PG methods and some additional related works complementing the main discussion in Section~\ref{sec:introduction} of the paper.

% Given the non-concavity of the policy optimization objective  (see for e.g. \cite{agarwal-et-al21}), several works in the literature have focused on establishing the sample complexity of policy gradient methods to reach an~$\varepsilon$-approximate first-order stationary point. In particular, using similar analysis to stochastic gradient descent in nonconvex optimization exploiting the smoothness of the expected return function, it has been shown that vanilla \algname{REINFORCE} enjoys a $\tilde{\mathcal{O}}(\varepsilon^{-4})$ sample complexity (see for e.g. \cite{Vanilla_PL_Yuan_21}).

%\noindent\textbf{First-order stationarity for variance-reduced PG methods.}
\subsection{First-order stationarity for variance-reduced PG methods}
In the last few years, several variance-reduced PG methods were proposed in the literature to improve sample efficiency over Vanilla-PG which achieves a~$\tilde{\mathcal{O}}(\varepsilon^{-4})$ sample complexity.
%, i.e. the number of random trajectory simulations required to find an $\varepsilon$-approximate first order stationary point of the expected return function. 
\citet{papini-et-al18} proposed a stochastic variance reduced policy gradient (\algname{SVRPG}) method achieving a~$\tilde{\mathcal{O}}(\varepsilon^{-4})$ sample complexity. \citet{xu-et-al20uai} revisited this algorithm and established  a~$\tilde{\mathcal{O}}(\varepsilon^{-10/3})$ sample complexity. This result was further improved to~$\tilde{\mathcal{O}}(\varepsilon^{-3})$ by using second-order information in~\citet{shen-et-al19} or variants of variance reduction methods such as \algname{SARAH}~\citep{nguyen-et-al17}, \algname{SPIDER}~\citep{fang-et-al18} or \algname{PAGE}~\citep{li-et-al21page} in \citet{xu-et-al20iclr,pham-et-al20,gargiani-et-al22}. Yuan et al. and Huang et al.~\cite{yuan-et-al20,huang-et-al20} proposed momentum-based policy gradient methods based on \algname{STORM} \citep{cutkosky-orabona19}. In particular, \citet{huang-et-al20} removed the need of large batches of trajectories for variance reduction. All the aforementioned works in this section proposing variance-reduced PG methods (except \citet{shen-et-al19} which uses second-order information) use importance sampling to account for the non-stationarity of the sampling distribution. As a consequence of the use of the importance sampling mechanism to address the non-stationarity issue, most of previous works require an unverifiable assumption stating that the variance of the importance sampling weights is bounded. To address this issue, \citet{zhang-et-al21} provide a gradient truncation mechanism complementing importance sampling for the softmax parameterization whereas \citet{shen-et-al19} and \citet{salehkaleybar-et-al22} incorporate second-order information. Both approaches lead to the same~$\tilde{\mathcal{O}}(\varepsilon^{-3})$ FOSP sample complexity.

% \ilyas{\citep{huang-et-al20} also uses hessian-aided technique. \ab{Yes, but they also use importance sampling.} -- I see, thanks for pointing it out. }
%\noindent\textbf{Global optimality of exact PG methods.}
\subsection{Global optimality of exact PG methods}
More recently, a line of research focused on establishing global optimality convergence rates i.e. deriving the sample complexity to reach an $\varepsilon$-approximate global optimum of the expected return function \citep{agarwal-et-al21,fazel-et-al18,bhandari-russo19,mei-et-al20,zhang-et-al20glob-conv-pg,Zhang_Kim_O’Donoghue_Boyd21} instead of the mere $\varepsilon$-approximate first order stationary point. This line of research leverages additional structure of the policy optimization problem under the form of gradient dominance \citep{agarwal-et-al21,fazel-et-al18}, \L{}ojasiewicz-like inequalities \citep{mei-et-al20,mei-et-al21} or hidden convexity \citep{zhang-et-al21}. These structural properties guarantee that the non-concave policy optimization objective has no suboptimal stationary points. Most of these works assume the access to exact policy gradients to show fast convergence rates \citep{mei-et-al20,mei-et-al21,xiao22}. Other works also establish global optimality guarantees for the Natural PG method supposing access to exact gradients \citep{khodadadian-et-al22,bhandari-russo21}. More precisely, \citet{khodadadian-et-al22} and~\citet{bhandari-russo21} show a linear convergence rate for the Natural PG method with adaptive step sizes by drawing a connection between policy gradient and policy iteration. These results are further sharpened in \citet{lan22,xiao22} in which it is shown that a general class of policy mirror descent methods, including the natural policy gradient method and a projected Q-descent method, enjoy a linear rate of convergence with geometrically increasing step sizes in the tabular setting, improving over the previously known sublinear rate of simpler PG methods \citep{agarwal-et-al21,mei-et-al20}. \citet{zhang-et-al20variational} consider the more general problem which consists in maximizing a general concave utility function of the state-action occupancy measure. Leveraging the hidden convexity structure of the problem, they propose a globally convergent algorithm derived from a novel variational policy gradient theorem for RL with general utilities. 

%\noindent\textbf{Further discussion about stochastic policy optimization.} 
\subsection{Further discussion about stochastic policy optimization} 
In the case of stochastic on-policy gradients, the acceleration brought by exploiting geometric information in the true gradient setting is not obviously exploitable as this was accurately reported in \citet{mei-et-al21stoch} for the tabular softmax parameterization. In particular, \citet{mei-et-al21stoch} exhibit a trade-off between exploiting geometry to accelerate convergence and overcoming the noise due to stochastic gradients (which is possibly infinite). 
We also mention that several works (see for e.g., \citet{chen-khodadadian-et-al22,chen-maguluri22,khodadadian-et-al22ac,hong-et-al20,qiu-et-al21,wu-et-al20,xu-et-al20ac}) proposed finite-time analysis for actor-critic methods to solve the policy optimization problem using (linear) function approximation for the Q-function or advantage function under ergodicity assumptions on the underlying Markov chain. Our work is focused on Monte-Carlo based PG methods (which are \algname{REINFORCE}-like) for which no critic is needed (no function approximation error relative to this critic process is incurred) and no ergodicity assumption is needed. We rather rely on the compatible function approximation framework. 
Besides this line of works, \citet{Zhang_Kim_O’Donoghue_Boyd21} provide global convergence rates for the \algname{REINFORCE} algorithm for episodic RL with a fixed mini-batch instead of arbitrarily large batch sizes. These results take the form of an anytime sublinear high probability regret bound as well as an almost sure global convergence of the average regret with an asymptotically sublinear rate. In particular, for their regret analysis, they consider the softmax parameterization with a vanishing log barrier regularization and the result translates to a~$\tilde{\mathcal{O}}(\varepsilon^{-6})$ global sample complexity as also reported in~\cite{Vanilla_PL_Yuan_21}.
Recently, several works considered heavy-tailed policy parameterizations offering better exploration in continuous control problems \citep{bedi-et-al21heavy-tailed-policy-search,bedi-et-al22contin-action-space}.

%\noindent\textbf{Comparison to prior work in stochastic optimization.}
\subsection{Comparison to prior work in stochastic optimization}
In the stochastic optimization setting, several recent works consider optimizing functions satisfying a weak gradient domination inequality (see Lemma~\ref{lem:relaxed-w-grad-dom}~\eqref{eq:relaxed-w-grad-dom} with $\varepsilon' = 0$). In particular, \citep{Fontaine_SGD_dynamics,KL_PAGER_Fatkhullin,Scaman_SGD_KL_2022} obtain a~$\cO(\varepsilon^{-3})$ sample complexity for finding a global optimum in expectation for Stochastic Gradient Descent\footnote{These works consider more general assumptions, such as $\alpha$-P{\L}, i.e., there exists $\alpha \in [1,2]$ such that for all $\theta \in \R^d$ it holds $\norm{\nabla J(\theta)}^{\alpha} \geq (2\mu)^{\nfr{\alpha}{2}} (J^* - J(\theta))$, and derive the sample complexities depending on the value of $\alpha$. It turns out that when $\alpha = 1$ as in our setting, the sample complexity is the worst (in $\varepsilon$) compared to other $\alpha > 1$.}, which is the same complexity as for (stochastic) \algname{Vanilla-PG} algorithm in the RL setting~\citep{Vanilla_PL_Yuan_21}. \citet{KL_PAGER_Fatkhullin} further improves this complexity to $\cO(\varepsilon^{-2})$ under additional average smoothness assumption by proposing a variance reduced algorithm named \algname{PAGER}. Their algorithm has a double loop structure and adopts a probabilistic switch between two different batch-size sequences (increasing over iterations), which makes it more difficult to implement and analyze. Another work by \citet{Masiha_SCRN_KL} considers stochastic second-order methods under similar assumptions. Their Stochastic Cubic Regularized Newton's algorithm (\algname{SCRN}) achieves a~$\tilde{\mathcal{O}}(\varepsilon^{-2.5})$ sample complexity under a similar set of assumptions. However, \algname{SCRN} uses a subroutine to approximately solve a cubic regularized subproblem (which makes it computationally inefficient), it uses samples of stochastic Hessian and requires large batches for both stochastic gradient and Hessian at every iteration. To the best of our knowledge \citep{KL_PAGER_Fatkhullin} and \citep{Masiha_SCRN_KL} are the only works, which provide the sample complexity beyond $\cO(\varepsilon^{-3})$ for stochastic optimization for functions satisfying weak gradient dominance condition. However, these optimization works consider complicated algorithms involving parameter restart, cubic regularization subproblem and/or large mini-batch sizes at each iteration. In contrast, our proposed algorithms improve over $\cO(\varepsilon^{-3})$ sample complexity with simple algorithms (\algname{N-PG-IGT}, \algname{HARPG}, \algname{N-HARPG}), which are computationally efficient and batch-free (only require to sample one trajectory per iteration). In addition, our analysis is much simpler than the analysis of algorithms in \citep{KL_PAGER_Fatkhullin} and \citep{Masiha_SCRN_KL}. Our analysis is also of independent interest in the study of stochastic optimization under the weak gradient domination assumption. 

\newpage

\section{Notation and Useful Lemma}\label{sec:notation_useful_lemma}

\noindent\textbf{Notation.} 
For any integer~$p$, the euclidean space~$\R^p$ is equipped with its usual inner product~$\ps{\cdot, \cdot}$ and its corresponding 2-norm~$\|\cdot\|\,.$ The transpose of a vector~$x \in \R^p$ for any integer~$p$ is denoted by~$x^{\top}\,$ and we denote by~$I_p$ the identity matrix of dimension~$p$. For two matrices~$A, B \in \R^{p \times q}$ for two integers~$p, q \geq 1$, the notation~$A \succcurlyeq B$ indicates that the matrix~$A- B$ is positive semi-definite. 
For two sequences of nonnegative reals $\rb{a_n}, \rb{b_n}$, we use the notation~$a_n = \cO(b_n)$ if there exists~$c>0$ such that $a_n \leq c \,b_n$ for every integer~$n\,.$ Hence, unless otherwise specified, we use the $\mathcal{O}(\cdot)$ to hide only absolute constants which do not depend on any problem parameter and we further use the notation~$\tilde{\mathcal{O}}(\cdot)$ to hide only absolute constants and logarithmic factors.  

\noindent\textbf{Advantage function.} 
For completeness, we define the advantage function that appeared in the transferred compatible function approximation error in the statement of Assumption~\ref{hyp:tranf-compatib-fun-approx} in the main part of this paper. For this purpose, we first define for every policy~$\pi$ the state-action value function~$Q^{\pi}: \mathcal{S} \times \mathcal{A} \to \R$ for every~$s \in \mathcal{S}, a \in \mathcal{A}$ as: 
\begin{equation*}
Q^{\pi}(s,a) \eqdef \mathbb E_{\pi}\left[\sum_{t=0}^{\infty} \gamma^t r(s_t,a_t) | s_0 = a, a_0 = a \right]\,.
\end{equation*}
Under the same policy~$\pi$, the state-value function~$V^{\pi}: \mathcal{S} \to \R$ and the advantage function~$A^{\pi}:  \mathcal{S} \times \mathcal{A} \to \R$ are defined for every~$s \in \mathcal{S}, a \in \mathcal{A}$ as follows:
\begin{align*}
V^{\pi}(s) &\eqdef \mathbb E_{a \sim \pi(\cdot|s)}[Q^{\pi}(s,a)]\,, \\ 
A^{\pi}(s, a) &\eqdef Q^{\pi}(s, a) - V^{\pi}(s)\,.
\end{align*}

We now state two well-known useful lemma for our analysis. 
The first lemma shows that the Hessian of the expected return function is Lipschitz under a second-order regularity condition on the policy parameterization. 
\begin{lemma}[(A.68) in \citep{zhang-et-al20glob-conv-pg}]
Let Assumptions~\ref{hyp:parameterization-reg} and~\ref{hyp:lipschitz-hessian} hold, then we have
$$
\left\|\nabla^2 J(\theta)-\nabla^2 J\left(\theta^{\prime}\right)\right\| \leq L_h \left\|\theta-\theta^{\prime}\right\|_2,
$$
where
$
L_h=\frac{r_{\max } M_g M_h}{(1-\gamma)^2}+\frac{r_{\max } M_g^3(1+\gamma)}{(1-\gamma)^3}+\frac{r_{\max } M_g}{1-\gamma} \max \left\{M_h, \frac{\gamma M_g^2}{1-\gamma}, \frac{l_2}{M_g}, \frac{M_h \gamma}{1-\gamma}, \frac{M_g(1+\gamma)+M_h \gamma(1-\gamma)}{1-\gamma^2}\right\}\,.
$
\end{lemma}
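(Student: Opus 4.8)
The plan is to bound the operator norm of the third derivative tensor $\nabla^3 J(\theta)$ uniformly in $\theta$ and then integrate: since each entry of $\nabla^2 J$ is a locally Lipschitz function of $\theta$ (being an expectation of products of the Lipschitz maps $\theta\mapsto\nabla^k\log\pi_\theta$), it is differentiable almost everywhere, and bounding $\|\nabla^3 J(\theta)\|$ by $L_h$ wherever it exists yields the $L_h$-Lipschitz estimate via the fundamental theorem of calculus along the segment $[\theta',\theta]$. Throughout I would work with the truncated objective and its Hessian $\nabla^2 J_H(\theta) = \bb E_{\tau\sim p(\cdot|\pi_\theta)}[B(\tau,\theta)]$ from Eq.~\ref{eq:B-tau-theta}, and pass to the limit $H\to\infty$ at the end: the truncation gap and its first three $\theta$-derivatives decay geometrically (like $\gamma^H$) uniformly in $\theta$, so any $H$-free bound for $J_H$ transfers to $J$.

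First I would differentiate $\bb E_{p(\cdot|\pi_\theta)}[B(\tau,\theta)]$ once more using the log-derivative identity $\nabla p(\tau|\pi_\theta) = p(\tau|\pi_\theta)\,\nabla\log p(\tau|\pi_\theta)$, producing $\nabla^3 J_H(\theta) = \bb E[\nabla_\theta B(\tau,\theta)] + \bb E[B(\tau,\theta)\otimes\nabla\log p(\tau|\pi_\theta)]$, where $\nabla_\theta B$ further expands (via $B = \nabla\Phi\,\nabla\log p^{\top} + \nabla^2\Phi$) into terms carrying $\nabla\Phi$, $\nabla^2\Phi$, $\nabla^3\Phi$ and $\nabla^2\log p$. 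Each of these is a discounted time-sum of derivatives of $\log\pi_\theta$: the factor $\nabla\Phi$ contributes one score weighted by a reward-to-go $\sum_{h\ge t}\gamma^h r(s_h,a_h)$ bounded by $\gamma^t r_{\max}/(1-\gamma)$, while $\nabla^2\Phi,\nabla^3\Phi$ contribute $\nabla^2\log\pi_\theta,\nabla^3\log\pi_\theta$ with the same weights. Bounding a single reward-weighted time-sum by $\sum_t\gamma^t r_{\max}/(1-\gamma) = r_{\max}/(1-\gamma)^2$ and using $\|\nabla\log\pi_\theta\|\le M_g$, $\|\nabla^2\log\pi_\theta\|\le M_h$ (Assumption~\ref{hyp:parameterization-reg}) and $\|\nabla^3\log\pi_\theta\|\le l_2$ a.e.\ (Assumption~\ref{hyp:lipschitz-hessian}, since a Lipschitz map has a.e.\ gradient bounded by its constant), every surviving term becomes a product of at most three such derivative bounds times at most three geometric sums. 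This is exactly the structure of the three groups in $L_h$: the $M_gM_h$ term with $(1-\gamma)^{-2}$, the $M_g^3$ term with $(1-\gamma)^{-3}$, and the $\max\{\cdots\}$ group collecting the $l_2$, $M_h$ and mixed contributions under a common $r_{\max}M_g/(1-\gamma)$ prefactor.

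The main obstacle — and the reason a naive pointwise bound on $\|B(\tau,\theta)-B(\tau,\theta')\|$ fails — is the term containing $\nabla\log p(\tau|\pi_\theta) = \sum_{t=0}^{H-1}\nabla\log\pi_\theta(a_t|s_t)$, whose norm grows linearly in the horizon $H$ (and likewise $\nabla^2\log p$). The per-sample Hessian genuinely has norm and variance scaling with $H$ (cf.\ $\sigma_h^2$ in Proposition~\ref{prop:stoch-grad-and-J}), so an $H$-free bound can only come from the expectation. The key step is therefore to exploit the zero-mean property of the score, $\bb E_{a\sim\pi_\theta(\cdot|s)}[\nabla\log\pi_\theta(a|s)] = 0$, together with the Markov/causal structure: conditioning on the trajectory prefix, any factor $\nabla\log\pi_\theta(a_s|s_s)$ whose time index $s$ is strictly larger than every other index appearing in the same term has vanishing conditional expectation and drops out. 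After removing these acausal contributions, only causally ordered, discount-damped sums survive, the horizon dependence disappears, and the $(1-\gamma)^{-k}$ factors ($k\le 3$) above remain.

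Finally I would collect the bounds on all surviving terms, group them by which derivatives of $\log\pi_\theta$ they contain, and verify the total is dominated by the stated $L_h$; the loose $\max\{\cdots\}$ factor simply absorbs the several mixed terms. Passing $H\to\infty$ with the uniform geometric truncation estimate then yields the claim for $J$. The only routine-but-delicate bookkeeping is tracking the combinatorics of which score factors survive the zero-mean cancellation, and I expect this enumeration, rather than any single estimate, to be where almost all the work lies.
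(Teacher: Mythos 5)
The paper itself does not prove this lemma: it is imported verbatim, constant included, from \citet{zhang-et-al20glob-conv-pg} (their Eq.~(A.68)), so your proposal has to stand on its own rather than be compared line-by-line with an in-paper argument. Your overall architecture --- work with the truncated objective, differentiate $\nabla^2 J_H(\theta)=\mathbb{E}[B(\tau,\theta)]$ once more via the log-derivative trick, obtain a horizon-free bound on $\nabla^3 J_H$, integrate, and let $H\to\infty$ using the geometric truncation estimates --- is reasonable, and you correctly identify the crux: per-trajectory quantities like $\nabla\log p(\tau|\pi_\theta)$ grow linearly in $H$, so any $H$-free bound must come from cancellation inside the expectation.

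The genuine gap is that the cancellation mechanism you invoke is insufficient. The first-order identity $\mathbb{E}[\nabla\log\pi_\theta(a_u|s_u)\mid\mathcal{F}_u]=0$ only kills terms in which the far-future score appears \emph{linearly}. In your expansion $\nabla^3 J_H=\mathbb{E}[\nabla_\theta B]+\mathbb{E}[B\otimes\nabla\log p]$, two families of terms pass this test and each is of size $\Theta(H)$: (i) the diagonal terms of $\mathbb{E}\left[\nabla\Phi\,\nabla\log p^{\top}\otimes\nabla\log p\right]$ in which both score factors sit at the \emph{same} far-future time $u$, since $\mathbb{E}[\nabla\log\pi_u\nabla\log\pi_u^{\top}\mid\mathcal{F}_u]$ is the (nonzero) Fisher information at $s_u$; and (ii) the far-future terms of $\mathbb{E}\left[\nabla\Phi\otimes\nabla^2\log p\right]$ coming from $\nabla_\theta B$, since $\mathbb{E}[\nabla^2\log\pi_u\mid\mathcal{F}_u]\neq 0$. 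In both families the discount factor $\gamma^h$ is attached to the reward index $h<u$, not to $u$, so summing over the $\sim H$ admissible values of $u$ gives contributions growing linearly in $H$. These two families cancel only \emph{against each other}, via Bartlett's second identity $\mathbb{E}\left[\nabla^2\log\pi_u+\nabla\log\pi_u\nabla\log\pi_u^{\top}\mid\mathcal{F}_u\right]=0$, and one must additionally check that the tensor slots of the two contributions actually coincide. Carrying out the enumeration exactly as you describe it would leave these $H$-growing terms unpaired and the horizon-free bound would fail. Two secondary caveats: the precise constant $L_h$ in the statement is an artifact of Zhang et al.'s particular decomposition (they work with infinite-horizon visitation-measure expressions and bound Hessian differences term by term, which is where the $\max\{\cdots\}$ structure comes from), so your route would at best reproduce the same $\mathcal{O}\left((1-\gamma)^{-3}\right)$ order rather than a constant provably dominated by the stated one; and the step from an a.e.\ bound on $\nabla^3 J_H$ to a Lipschitz estimate needs the standard ``locally Lipschitz with a.e.\ gradient bound $\Rightarrow$ globally Lipschitz'' lemma, since Rademacher gives differentiability a.e.\ with respect to Lebesgue measure on $\R^d$, not along a prescribed segment.
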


% \begin{lemma}\label{le:trunc_grad_hess}[Lemma~2 in \citep{Masiha_SCRN_KL}]
%     Let Assumption~\ref{hyp:parameterization-reg} hold, then for all $\theta \in \R^d$ and every $H \geq 1$, we have
%     $$
%     \norm{ \nabla J_{H}(\theta) - \nabla J(\theta) } \leq D_g \g^{H},$$
%     $$ 
%     \norm{ \nabla^2 J_{H}(\theta) - \nabla^2 J(\theta) } \leq D_h \g^{H} 
%     $$
%     with $D_g = \cO\rb{ (1-\g)^{-\nfr{3}{2}} }$, $D_h = \cO\rb{ (1-\g)^{-2} } $.
% \end{lemma}
The second lemma controls the truncation error due to truncating simulated trajectories to the horizon~$H$ in our infinite horizon setting. Notably, this error vanishes geometrically fast with the horizon~$H$\,.
\begin{lemma}[Lemma~2 in \citep{Masiha_SCRN_KL}]
\label{le:trunc_grad_hess}
    Let Assumption~\ref{hyp:parameterization-reg} be satisfied, then for all $\theta \in \R^d$ and every $ H \geq 1$, we have
    $$
    \norm{ \nabla J_{H}(\theta) - \nabla J(\theta) } \leq D_g \g^{H},  \qquad \norm{ \nabla^2 J_{H}(\theta) - \nabla^2 J(\theta) } \leq D_h \g^{H} ,
    $$
    where $D_g \eqdef \frac{M_g r_{\max }}{1-\gamma} \sqrt{\frac{1}{1-\gamma}+H}$, $ D_h \eqdef \frac{r_{\max }\left(M_h + M_g^2\right)}{1-\gamma}\left(H + \frac{1}{1-\gamma}\right)$.
\end{lemma}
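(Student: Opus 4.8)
The plan is to express both the finite- and infinite-horizon gradients (and likewise the Hessians) as expectations of reward-to-go–weighted score sums over a single coupled trajectory, subtract, and observe that only terms carrying a time index $\geq H$ survive; these are then controlled by the uniform bounds on the score function and its derivative, the boundedness of the reward, and the geometric decay of $\gamma^h$.

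Concretely, couple a length-$H$ trajectory to the first $H$ steps of an infinite rollout $\tau_\infty = (s_0,a_0,s_1,a_1,\dots)$ generated under $(\rho,\pi_\theta)$; since those first $H$ steps have the law of a length-$H$ trajectory, I can write, with $r_h := r(s_h,a_h)$ and $\ell_t := \nabla\log\pi_\theta(a_t|s_t)$,
$$
\nabla J_H(\theta) = \bb E\Big[\sum_{t=0}^{H-1}\Big(\sum_{h=t}^{H-1}\gamma^h r_h\Big)\ell_t\Big], \qquad \nabla J(\theta) = \bb E\Big[\sum_{t=0}^{\infty}\Big(\sum_{h=t}^{\infty}\gamma^h r_h\Big)\ell_t\Big],
$$
using the gradient formula in Eq.~\eqref{eq:g-tau-theta} and its infinite-horizon analogue. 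Subtracting, the difference splits into a term where $t<H$ but the inner reward-to-go sum runs over $h\geq H$, and a term where the outer index $t\geq H$:
$$
\nabla J(\theta)-\nabla J_H(\theta) = \bb E\Big[\sum_{t=0}^{H-1}\Big(\sum_{h=H}^{\infty}\gamma^h r_h\Big)\ell_t\Big] + \bb E\Big[\sum_{t=H}^{\infty}\Big(\sum_{h=t}^{\infty}\gamma^h r_h\Big)\ell_t\Big].
$$
Applying $\|\ell_t\|\leq M_g$ (Assumption~\ref{hyp:parameterization-reg}-(\ref{hyp:param-reg-1st-order})), $|r_h|\leq r_{\max}$, and $\sum_{h\geq k}\gamma^h = \gamma^k/(1-\gamma)$ to each piece and summing the geometric series yields a bound of order $\frac{M_g r_{\max}}{1-\gamma}\big(H+\tfrac{1}{1-\gamma}\big)\gamma^H$; a slightly sharper accounting of the polynomial prefactor gives exactly $D_g\,\gamma^H$.

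For the Hessian I would repeat this for $B(\tau,\theta)$ in Eq.~\eqref{eq:B-tau-theta}, treating its two summands separately. Writing $\nabla^2\Phi(\tau,\theta) = \sum_{t}(\sum_{h\geq t}\gamma^h r_h)\nabla^2\log\pi_\theta(a_t|s_t)$ and $\nabla\Phi\,\nabla\log p^\top = \big(\sum_t(\sum_{h\geq t}\gamma^h r_h)\ell_t\big)\big(\sum_{t'}\ell_{t'}\big)^\top$, the difference $\nabla^2 J-\nabla^2 J_H$ again isolates exactly the terms with a time index $\geq H$. The $\nabla^2\Phi$ contribution is controlled as in the gradient case but with $\|\nabla^2\log\pi_\theta\|\leq M_h$ (Assumption~\ref{hyp:parameterization-reg}-(\ref{hyp:param-reg-2nd-order})), producing an $M_h$-type term, while the outer-product contribution, a double sum over $(t,t')$, produces an $M_g^2$-type term after bounding both score factors; the geometric sums combine into the stated $D_h = \frac{r_{\max}(M_h+M_g^2)}{1-\gamma}\big(H+\frac{1}{1-\gamma}\big)$ times $\gamma^H$.

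The main obstacle is the Hessian's outer-product term $\nabla\Phi\,\nabla\log p^\top$: unlike the gradient it is a double sum over two independent time indices, so one must carefully enumerate which cross terms actually survive in $\nabla^2 J-\nabla^2 J_H$ (those with at least one index $\geq H$) and verify that the score-sum factor $\big\|\sum_{t'}\ell_{t'}\big\|$, which itself can grow like $H M_g$, is always multiplied by a geometrically small reward tail $\gamma^H/(1-\gamma)$, so the product still decays like $\gamma^H$. Matching the exact constants $D_g, D_h$ — in particular the $\sqrt{\tfrac{1}{1-\gamma}+H}$ prefactor in $D_g$ rather than the cruder linear-in-$H$ factor coming from a naive triangle inequality — is a matter of this more careful bookkeeping rather than any new idea.
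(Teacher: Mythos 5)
Your decomposition skeleton (couple the length-$H$ trajectory to the first $H$ steps of an infinite rollout, subtract, and isolate the terms carrying a time index $\geq H$) is the right starting point; note the paper itself gives no proof of this lemma but imports it from \citet{Masiha_SCRN_KL}, whose argument starts the same way. However, there is a genuine gap in your proposal: the stated constants $D_g$ and $D_h$ are \emph{not} reachable by the triangle-inequality bounds you describe, and the discrepancy is not ``careful bookkeeping.'' The missing ingredient is that the score has zero conditional mean, $\E_{a_t \sim \pi_\theta(\cdot|s_t)}[\nabla \log \pi_\theta(a_t|s_t)] = 0$, so that (with your notation $\ell_t = \nabla\log\pi_\theta(a_t|s_t)$, $r_h = r(s_h,a_h)$) the partial sums $\sum_{t\le h}\ell_t$ form a martingale: cross terms vanish in expectation and $\E\|\sum_{t\le h}\ell_t\|^2 \le (h+1)M_g^2$, linear rather than quadratic in $h$. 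Concretely, after swapping the order of summation the gradient gap equals $\E\big[\sum_{h\ge H}\gamma^h r_h \sum_{t\le h}\ell_t\big]$, and then
\begin{equation*}
\Big\|\E\Big[\sum_{h\ge H}\gamma^h r_h \sum_{t\le h}\ell_t\Big]\Big\|
\le r_{\max} M_g \sum_{h\ge H}\gamma^h \sqrt{h+1}
\le r_{\max} M_g \Big(\sum_{h\ge H}\gamma^h\Big)^{\nfr{1}{2}} \Big(\sum_{h\ge H}\gamma^h (h+1)\Big)^{\nfr{1}{2}}
= \frac{r_{\max} M_g}{1-\gamma}\sqrt{H+\tfrac{1}{1-\gamma}}\;\gamma^H,
\end{equation*}
which is exactly $D_g\gamma^H$. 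Your triangle-inequality route instead yields the factor $H+\frac{1}{1-\gamma}$ in place of its square root, i.e., a strictly weaker statement than the lemma claims, and no amount of re-accounting of the same bounds closes that gap.

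The same ingredient is indispensable for the Hessian, in two places. First, for the infinite horizon the raw outer-product representation $\big(\sum_t(\cdots)\ell_t\big)\big(\sum_{t'}\ell_{t'}\big)^\top$ is not even absolutely summable, since $\sum_{t'}\ell_{t'}$ diverges; the expectation is well defined only because every cross term $\E[r_h\,\ell_t\ell_{t'}^\top]$ with $t'>h$ vanishes (condition on the history up to time $t'$ and use the zero-mean score), which collapses the term to $\sum_h \gamma^h r_h\big(\sum_{t\le h}\ell_t\big)\big(\sum_{t'\le h}\ell_{t'}\big)^\top$. Your proposal gestures at ``enumerating which cross terms survive'' but never identifies this mechanism. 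Second, bounding $\big\|\big(\sum_{t\le h}\ell_t\big)\big(\sum_{t'\le h}\ell_{t'}\big)^\top\big\| = \big\|\sum_{t\le h}\ell_t\big\|^2$ deterministically by $(h+1)^2 M_g^2$, as your route does, produces a $D_h$ scaling like $\big(H+\frac{1}{1-\gamma}\big)^2$, quadratic in $H$, whereas the martingale orthogonality gives $\E\big\|\sum_{t\le h}\ell_t\big\|^2 \le (h+1)M_g^2$ and hence the claimed linear factor $H+\frac{1}{1-\gamma}$. For the paper's downstream use, where $H = \Theta\big(\log T/(1-\gamma)\big)$, losing polynomial-in-$H$ factors would only cost polylogarithmic terms in $T$; but as a proof of the lemma as stated, your argument does not close without the zero-mean-score orthogonality step.
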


%\section{Notations}\label{sec:notations}

%Define $R_t \eqdef \Exp{\sqnorm{\theta_{t+1} - \theta_{t}}}$, $V_t \eqdef \Exp{\sqnorm{ d_t - \nabla J(\theta_t)}}$, $\delta_t \eqdef \Exp{ J^{\star} - J(\theta_t) } $, $\Lambda_t \eqdef \delta_t + \lambda_{t-1} V_t$ for some $\lambda_t > 0$ to be determined later. 
\newpage
%\section{Theory for \algname{N-PG-IGT}}
\section{Proof of Theorem~\ref{thm:N-PG-IGT} (\algname{N-PG-IGT})}
\label{sec:nigt_app}
% \begin{algorithm}[H]
%         \caption{\algname{N-PG-IGT} \\(Normalized-PG-with Implicit Gradient Transport)}
%         \begin{algorithmic}[1]
%             \State \textbf{Input}: $\theta_0$, $\theta_1$, $d_0$, $T$, $\cb{\eta_t}_{t\geq 0}$, $\cb{\gamma_t}_{t\geq 0}\,.$
%             %\State $\tau_0 \sim p(\cdot|\pi_{\theta_0})$
%             %\State $d_0 = g(\tau_0, \theta_0)$
%             %\State $\theta_1 = \theta_0 + \gamma_0 \frac{d_0}{\|d_0\|}$
%             \For{$t=1, \ldots, T -  1$}
%                 \State $\tilde{\theta}_t = \theta_t + \fr{1- \eta_t}{\eta_t}  \rb{\theta_t - \theta_{t-1}}$ 
%                 \State $\tilde{\tau}_t \sim p(\cdot|\pi_{\tilde{\theta}_t})$ 
%                 \State  $d_t = (1 - \eta_t) d_{t-1} + \eta_t g(\tilde{\tau}_t, \tilde{\theta}_t)$ 
%                 \State  $\theta_{t+1} = \theta_t + \gamma_t \frac{d_t}{\|d_t\|}$ 
% 		    \EndFor
% 		\State \Return $\theta_T$
%         \end{algorithmic}
% \end{algorithm}

\subsection{Global convergence}

We first state a more detailed version of Theorem~\ref{thm:N-PG-IGT}.

\begin{theorem}\label{thm:NIGT}

   Let Assumptions~\ref{hyp:parameterization-reg}, \ref{hyp:lipschitz-hessian}, \ref{hyp:fisher-non-degenerate} and \ref{hyp:tranf-compatib-fun-approx} hold. 
    Set~$\gamma_t = \fr{6  }{\sqrt{2\mu}( t+2 ) } = \fr{6 M_g}{\mu_F (t+2)} $, $\eta_t = \rb{ \fr{2}{t+2} }^{\nfr{4}{5}}$, $H = \fr{9}{5} \rb{1-\g}^{-1}{\log(T + 1)}$ . Then for every integer~$T \geq 1$, the output~$\theta_T$ of \algname{N-PG-IGT} (Algorithm~\ref{alg:N-PG-IGT}) satisfies 
    \begin{eqnarray}
        J^* -  \Exp{J(\theta_T)} \leq \cO\rb{ \fr{ J^* -  J(\theta_0) }{(T+1)^2} + \fr{ D_h }{\mu (T+1)^2 } + \fr{ L_g + \sqrt{\mu} D_g }{\mu (T+1) } +  \fr{ \sigma_g \mu + L_h  }{\mu^{\nfr{3}{2}} (T+1)^{\nfr{2}{5}}} + \fr{ \varepsilon^{\prime}}{\sqrt{\mu}} }.
    \end{eqnarray}
    where~$\sigma_g, L_g, L_h$ are defined in Proposition~\ref{prop:stoch-grad-and-J} and Lemma~\ref{lem:2nd-order-smoothness} respectively, $D_g$ is defined in Lemma~\ref{le:trunc_grad_hess}, $\varepsilon^{\prime}$ and $\mu$ are defined in Lemma~\ref{lem:relaxed-w-grad-dom}. Here $\cO(\cdot)$ only hides absolute numerical constants.
\end{theorem}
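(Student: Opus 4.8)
The plan is to carry out the four-step program sketched in Appendix~\ref{sec:appendix_proof_sketch}, tracking every problem constant through a single scalar recursion on $\delta_t \eqdef J^* - \Exp{J(\theta_t)}$. Two outer ingredients are already available: the ascent-like inequality for the normalized update (Lemma~\ref{le:NSGD_descent_trunc}), which combines the $L_g$-smoothness of $J$ from Proposition~\ref{prop:stoch-grad-and-J}, the elementary lower bound on $\langle \nabla J(\theta_t), d_t/\norm{d_t}\rangle$ valid for any nonzero direction, and the gradient-truncation gap $\norm{\nabla J_H(\theta_t) - \nabla J(\theta_t)} \leq D_g\gamma^H$ from Lemma~\ref{le:trunc_grad_hess} to give
\begin{equation*}
J(\theta_{t+1}) - J^* \geq J(\theta_t) - J^* + \tfrac{\gamma_t}{3}\norm{\nabla J(\theta_t)} - \tfrac{8\gamma_t}{3}\norm{\hat e_t} - \cO\!\big(L_g\gamma_t^2 + D_g\gamma^H\gamma_t\big),
\end{equation*}
where $\hat e_t \eqdef d_t - \nabla J_H(\theta_t)$; and the relaxed weak gradient domination of Lemma~\ref{lem:relaxed-w-grad-dom}, which after Jensen gives $\Exp{\norm{\nabla J(\theta_t)}} \geq \sqrt{2\mu}\,\delta_t - \varepsilon'$. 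Taking expectations in the ascent inequality and substituting the domination bound turns everything into a contraction for $\delta_t$, provided $\Exp{\norm{\hat e_t}}$ is controlled.

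The heart of the proof is that control, i.e.\ Step~3 (Lemma~\ref{le:NIGT_first_err_bound}). Setting $e_t \eqdef g(\tilde\tau_t,\tilde\theta_t) - \nabla J_H(\tilde\theta_t)$ and unfolding the momentum rule $d_t = (1-\eta_t)d_{t-1} + \eta_t g(\tilde\tau_t,\tilde\theta_t)$, I would write
\begin{equation*}
\hat e_t = (1-\eta_t)\hat e_{t-1} + \eta_t e_t + \big[(1-\eta_t)\nabla J_H(\theta_{t-1}) + \eta_t\nabla J_H(\tilde\theta_t) - \nabla J_H(\theta_t)\big].
\end{equation*}
The bracketed term is exactly where implicit gradient transport pays off: the lookahead identity $\theta_t = \eta_t\tilde\theta_t + (1-\eta_t)\theta_{t-1}$ forces the weighted displacement $(1-\eta_t)(\theta_{t-1}-\theta_t) + \eta_t(\tilde\theta_t-\theta_t)$ to vanish, so a second-order Taylor expansion of $\nabla J_H$ about $\theta_t$ cancels all first-order contributions and leaves only Hessian-remainder terms $S_t$ and $Z_t$. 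Using the Lipschitz-Hessian property of $J$ (Lemma~\ref{lem:2nd-order-smoothness}), the Hessian-truncation bound $\norm{\nabla^2 J_H - \nabla^2 J} \leq D_h\gamma^H$, and the identities $\norm{\theta_{t-1}-\theta_t} = \gamma_{t-1}$ and $\norm{\tilde\theta_t-\theta_t} = \tfrac{1-\eta_t}{\eta_t}\gamma_{t-1}$ enforced by normalization, one bounds $\norm{S_t} \leq \cO(L_h\gamma_t^2 + D_h\gamma^H)$ and $\norm{Z_t} \leq \cO(L_h\gamma_t^2\eta_t^{-2} + D_h\gamma^H)$. The stochastic term $\eta_t e_t$ must be treated separately from these deterministic biases: propagating $\Exp{\norm{\hat e_t}^2}$ (where cross terms vanish since $e_t$ is conditionally mean-zero with $\Exp{\norm{e_t}} \leq \sigma_g$) yields the variance contribution $\cO(\sigma_g\eta_t^{1/2})$, while unfolding the $S_t$-bias effectively divides its per-step size by $\eta_t$.

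The choices $\gamma_t = \Theta(\mu^{-1/2}t^{-1})$ and $\eta_t = \Theta(t^{-4/5})$ are precisely what equate the variance term $\eta_t^{1/2}$ with the dominant bias term $\gamma_t^2\eta_t^{-2}$ (both of order $t^{-2/5}$), while $H = \tfrac{9}{5}(1-\gamma)^{-1}\log(T+1)$ ensures $\gamma^H \leq (T+1)^{-9/5}$ so that every term carrying the truncation factor is lower order. This gives $\Exp{\norm{\hat e_t}} = \cO\big((\sigma_g + L_h/\mu)\,t^{-2/5}\big)$ up to negligible truncation tails. Substituting this into the expected ascent inequality and using $\tfrac{\sqrt{2\mu}\,\gamma_t}{3} = \tfrac{2}{t+2}$ produces the scalar recursion
\begin{equation*}
\delta_{t+1} \leq \Big(1 - \tfrac{2}{t+2}\Big)\delta_t + \cO\!\Big(\tfrac{\varepsilon'}{\sqrt\mu\,t} + \tfrac{\sigma_g + L_h/\mu}{\sqrt\mu}\,t^{-7/5} + \tfrac{L_g}{\mu\,t^2}\Big) + b_t^{\mathrm{trunc}},
\end{equation*}
where $b_t^{\mathrm{trunc}}$ collects the inputs carrying $D_g,D_h,\gamma^H$. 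I would resolve this with the technical lemma of Section~\ref{sec:tech_lemmas}: since $\prod_{s=t+1}^{T-1}(1-\tfrac{2}{s+2}) = \tfrac{(t+1)(t+2)}{T(T+1)} = \Theta(t^2/T^2)$, each inhomogeneous input contributes of order $T^{-2}\sum_t t^2 b_t$. The constant-$\varepsilon'$ input sums to the irreducible $\cO(\varepsilon'/\sqrt\mu)$, the $t^{-7/5}$ input to $\cO\big((\sigma_g\mu + L_h)\mu^{-3/2}(T+1)^{-2/5}\big)$, the $t^{-2}$ input to $\cO\big(L_g\mu^{-1}(T+1)^{-1}\big)$, the homogeneous part to $\cO\big((J^*-J(\theta_0))(T+1)^{-2}\big)$, and the fast-decaying $b_t^{\mathrm{trunc}}$ into the remaining $(T+1)^{-1}$ and $(T+1)^{-2}$ slots, reproducing the claimed bound.

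I expect the main obstacle to be Step~3, and within it two delicate points. First, the first-order cancellation in the bracketed term is exact only because the lookahead weights coincide with the momentum weights; any constant mismatch would leave a first-order $\Theta(\gamma_t)$ residual that destroys the $t^{-2/5}$ rate, so the Taylor bookkeeping must be done carefully against the precise update rules. Second, one must resist bounding $\norm{\eta_t e_t}$ termwise (which would reintroduce a non-reducible $\Theta(\eta_t\sigma_g)$ error) and instead exploit martingale orthogonality in the squared recursion, taking the square root only at the end; combined with the need to carry the $\mu$-dependence consistently through $\gamma_t \propto \mu^{-1/2}$ so the final constants group as stated, this is where essentially all the work lies, the remaining steps being a direct assembly.
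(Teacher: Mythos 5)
Your proposal follows the paper's proof essentially step for step: the same ascent-like lemma (Lemma~\ref{le:NSGD_descent_trunc}) combined with the relaxed weak gradient domination of Lemma~\ref{lem:relaxed-w-grad-dom}, the same error recursion $\hat e_t = (1-\eta_t)\hat e_{t-1} + \eta_t e_t + (1-\eta_t)S_t + \eta_t Z_t$ with the exact first-order cancellation coming from the lookahead weights matching the momentum weights, the same martingale-orthogonality-plus-Jensen treatment of the noise term, and the same resolution of the resulting scalar recursion via Lemma~\ref{le:aux_rec0}, with all parameter balances ($\eta_t^{1/2}\sim\gamma_t^2\eta_t^{-2}\sim t^{-2/5}$, $\gamma^H\le(T+1)^{-9/5}$) and the final $\mu$-dependent groupings tracked correctly. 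The only imprecision is cosmetic: your stated bounds for $\|S_t\|$ and $\|Z_t\|$ omit the $2D_g\gamma^H$ gradient-truncation contributions (the paper bounds these via $\|S_t-\bar S_t\|$ and $\|Z_t-\bar Z_t\|$), but since you collect all $D_g,D_h,\gamma^H$ inputs into $b_t^{\mathrm{trunc}}$ and place them in the correct $(T+1)^{-1}$ and $(T+1)^{-2}$ slots of the final bound, nothing breaks.
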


We start by establishing an ascent-like lemma on the function~$J$ (we are maximizing~$J$). The following result holds for a general normalized update rule for any sequence of update directions~$(d_t)$. 

\begin{lemma}\label{le:NSGD_descent_trunc}
% Let Assumptions~\ref{hyp:parameterization-reg}, \ref{hyp:fisher-non-degenerate} and \ref{hyp:tranf-compatib-fun-approx} be satisfied, 
Let Assumption~\ref{hyp:parameterization-reg} hold. Let~$(d_t)$ be any sequence of vectors in~$\R^d$ and consider the sequence~$(\theta_t)$ defined by:
%be updated via
$$
\theta_{t+1} = \theta_t + \g_t \fr{d_t}{\norm{d_t}}\,,
$$
where~$\theta_0 \in \R^d$ (and~$\theta_{t+1} = \theta_t$ if~$d_t = 0$)\,.
Then we have for every integer~$t \geq 1$, for any positive step-size~$\g_t$, 
\begin{equation}
\label{lem:NSGD-descent-trunc-ineq}
 -J(\theta_{t+1}) \leq  - J(\theta_{t}) - \fr{ \g_t }{3 } \norm{ \nabla J(\theta_{t})} + \fr{ 8 \g_t }{3 } \norm{d_t - \nabla J_H(\theta_t)} + \fr{L_g \g_t^2 }{2} + \frac{4 \g_t}{3} \norm{ \nabla J_{H}(\theta_{t}) - \nabla J(\theta_{t})}\,.
%  -J(\theta_{t+1}) \leq  - J(\theta_{t}) - \fr{ \g_t }{3 } \norm{ \nabla J(\theta_{t})} + \fr{ 8 \g_t }{3 } \norm{d_t - \nabla J_H(\theta_t)} + \fr{L_g \g_t^2 }{2} + \fr{12 \g_t }{3 } \norm{ \nabla J_{H}(\theta_{t}) - \nabla J(\theta_{t})}\,.
\end{equation}
\end{lemma}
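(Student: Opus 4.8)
The plan is to combine the $L_g$-smoothness of $J$ (Proposition~\ref{prop:stoch-grad-and-J}-(\ref{prop-i})) with a purely deterministic lower bound on the inner product between the true gradient and the normalized update direction. First I would invoke smoothness in its quadratic-bound form, which for the maximization setting reads
\[
J(\theta_{t+1}) \geq J(\theta_t) + \ve{\nabla J(\theta_t)}{\theta_{t+1} - \theta_t} - \frac{L_g}{2}\norm{\theta_{t+1} - \theta_t}^2 .
\]
Since the update is $\theta_{t+1} - \theta_t = \g_t\, d_t/\norm{d_t}$ whenever $d_t \neq 0$, the quadratic term equals exactly $\frac{L_g \g_t^2}{2}$ because $\norm{\theta_{t+1}-\theta_t} = \g_t$, so after negating it only remains to lower-bound $\ve{\nabla J(\theta_t)}{d_t/\norm{d_t}}$.

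The crux is the elementary geometric lemma: for any $u, v \in \R^d$ with $v \neq 0$,
\[
\ve{u}{\frac{v}{\norm{v}}} \geq \frac{1}{3}\norm{u} - \frac{8}{3}\norm{v - u}.
\]
I would prove this by a case split on the size of $\norm{v-u}$ relative to $\norm{u}$. If $\norm{u} \leq 2\norm{v-u}$, the right-hand side is at most $-\norm{u}$, which is dominated by the Cauchy--Schwarz bound $\ve{u}{v/\norm{v}} \geq -\norm{u}$. Otherwise $\norm{v} \geq \norm{u} - \norm{v-u} > 0$, and expanding $\ve{u}{v} = \norm{u}^2 + \ve{u}{v-u} \geq \norm{u}^2 - \norm{u}\norm{v-u}$ together with $\norm{v} \leq \norm{u} + \norm{v-u}$ reduces the claim to the scalar inequality $\frac{1-r}{1+r} \geq \frac{1}{3} - \frac{8}{3}r$ for $r = \norm{v-u}/\norm{u} \in [0,1/2)$, which follows after clearing denominators from $2 + 4r + 8r^2 \geq 0$.

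To obtain the precise stated coefficients I would apply this lemma with $v = d_t$ and $u = \nabla J_H(\theta_t)$, the \emph{truncated} gradient rather than $\nabla J(\theta_t)$, giving $\ve{\nabla J_H(\theta_t)}{d_t/\norm{d_t}} \geq \frac{1}{3}\norm{\nabla J_H(\theta_t)} - \frac{8}{3}\norm{d_t - \nabla J_H(\theta_t)}$. Then I would pass back to $\nabla J$ twice by the triangle inequality: once via $\ve{\nabla J(\theta_t)}{d_t/\norm{d_t}} \geq \ve{\nabla J_H(\theta_t)}{d_t/\norm{d_t}} - \norm{\nabla J(\theta_t) - \nabla J_H(\theta_t)}$ (legitimate since $d_t/\norm{d_t}$ is a unit vector), and once via $\norm{\nabla J_H(\theta_t)} \geq \norm{\nabla J(\theta_t)} - \norm{\nabla J(\theta_t) - \nabla J_H(\theta_t)}$. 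The two truncation contributions combine into the coefficient $\frac{1}{3} + 1 = \frac{4}{3}$, exactly matching the statement; substituting into the smoothness inequality and negating yields the claim for $d_t \neq 0$.

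Finally I would dispatch the degenerate case $d_t = 0$, where by convention $\theta_{t+1} = \theta_t$ and the desired inequality reduces to nonnegativity of the correction terms on the right; this holds because $\norm{d_t - \nabla J_H(\theta_t)} = \norm{\nabla J_H(\theta_t)}$ and $\frac{1}{3}\norm{\nabla J(\theta_t)} \leq \frac{1}{3}\norm{\nabla J_H(\theta_t)} + \frac{1}{3}\norm{\nabla J(\theta_t) - \nabla J_H(\theta_t)}$, both of which are absorbed by the larger coefficients $\frac{8}{3}$ and $\frac{4}{3}$. I expect the main obstacle to be pinning down the exact constants $\frac{1}{3}, \frac{8}{3}, \frac{4}{3}$: they come entirely from the case analysis in the geometric lemma and from the deliberate choice to reference $\nabla J_H$ rather than $\nabla J$ before splitting off the truncation error, so the argument is conceptually short but the bookkeeping of coefficients must be done carefully.
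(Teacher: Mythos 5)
Your proof is correct and follows essentially the same route as the paper's: smoothness of $J$, a case split on whether $\norm{d_t - \nabla J_H(\theta_t)}$ exceeds $\tfrac{1}{2}\norm{\nabla J_H(\theta_t)}$ to get the $\tfrac{1}{3}$ and $\tfrac{8}{3}$ constants, and two triangle-inequality passes to convert truncated to full gradients, yielding the $\tfrac{4}{3}$ coefficient. Packaging the case analysis as a standalone geometric lemma (and explicitly treating $d_t = 0$, which the paper leaves implicit) is only a cosmetic difference.
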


\begin{proof}%[Proof Lemma~\ref{le:NSGD_descent_trunc}]
By smoothness of the expected return function~$J$ (Proposition~\ref{prop:stoch-grad-and-J}) and using the update rule for $\theta_t$, we get
\begin{eqnarray}
    -J(\theta_{t+1}) &\leq& - J(\theta_{t}) - \langle \nabla J(\theta_{t}), \theta_{t+1} - \theta_t \rangle + \fr{L_g}{2} \sqnorm{ \theta_{t+1} - \theta_t } \notag \\
    &=& - J(\theta_{t}) - \g_t \fr{\langle  \nabla J(\theta_{t}) , d_t \rangle}{\norm{d_t}} + \fr{L_g \g_t^2}{2} \notag \\
    &\leq& - J(\theta_{t}) - \g_t \fr{\langle  \nabla J_{H}(\theta_{t}) , d_t \rangle}{\norm{d_t}} + \fr{L_g \g_t^2}{2} + \g_t \norm{ \nabla J_{H}(\theta_{t}) - \nabla J(\theta_{t})}. \notag
\end{eqnarray}
Now let us bound the second term in the above inequality. Define $\hat{e}_t = d_t -  \nabla J_H(\theta_{t})$. We consider two cases. First, if $\norm{\hat{e}_t} \leq \fr{1}{2} \norm{  \nabla J_H(\theta_{t})  }$, then 
\begin{eqnarray}
    - \fr{\langle  \nabla J_{H}(\theta_{t}), d_t \rangle}{\norm{d_t}} &=& \fr{ -\sqnorm{ \nabla J_{H}(\theta_{t})} - \langle  \nabla J_{H}(\theta_{t}), \hat{e}_t \rangle}{\norm{d_t}} \notag \\
    &\leq& \fr{ -\sqnorm{ \nabla J_{H}(\theta_{t})} + \norm{  \nabla J_{H}(\theta_{t})} \norm{ \hat{e}_t }}{\norm{d_t}} \notag \\
    &\leq& \fr{ -\sqnorm{ \nabla J_{H}(\theta_{t})} + \fr{1}{2}\sqnorm{  \nabla J_{H}(\theta_{t}) } }{\norm{d_t}} \notag \\
    &\leq& -\fr{ \sqnorm{ \nabla J_{H}(\theta_{t})} }{2 \rb{ \norm{ \nabla J_{H}(\theta_{t}) } + \norm{\hat{e}_t} }  } \notag \\
    &\leq& -\fr{ 1 }{3 } \norm{ \nabla J_{H}(\theta_{t}) }  \notag .
\end{eqnarray}
Otherwise, if $\norm{\hat{e}_t} \geq \fr{1}{2} \norm{  \nabla J_H(\theta_{t}) }$, we have
\begin{eqnarray}
    - \fr{\langle  \nabla J_{H}(\theta_{t}), d_t \rangle}{\norm{d_t}} &\leq & \norm{  \nabla J_{H}(\theta_{t}) } \notag \\
    & = & -\fr{ 1 }{3 } \norm{ \nabla J_{H}(\theta_{t})} + \fr{ 4 }{3 } \norm{ \nabla J_{H}(\theta_{t})} \notag \\
    & \leq & -\fr{ 1 }{3 } \norm{ \nabla J_{H}(\theta_{t})} + \fr{ 8 }{3 } \norm{\hat{e}_t} \notag ,
\end{eqnarray}
Combining the two cases concludes the proof of the lemma:
\begin{eqnarray}
    -J(\theta_{t+1}) &\leq&  - J(\theta_{t}) - \fr{ \g_t }{3 } \norm{ \nabla J_{H}(\theta_{t})} + \fr{ 8 \g_t }{3 } \norm{\hat{e}_t} + \fr{L_g \g_t^2 }{2} + \g_t \norm{ \nabla J_{H}(\theta_{t}) - \nabla J(\theta_{t})} \notag \\
    &\leq&  - J(\theta_{t}) - \fr{ \g_t }{3 } \norm{ \nabla J(\theta_{t})} + \fr{ 8 \g_t }{3 } \norm{\hat{e}_t} + \fr{L_g \g_t^2 }{2} + \fr{4 \g_t }{3 } \norm{ \nabla J_{H}(\theta_{t}) - \nabla J(\theta_{t})}\label{eq:NSGD_descent_FOSP}\,.
    % &\leq&  - J(\theta_{t}) - \fr{ \g_t }{3 } \norm{ \nabla J(\theta_{t})} + \fr{ 8 \g_t }{3 } \norm{d_t - \nabla J(\theta_t)} + \fr{L_g \g_t^2 }{2} + 4 \g_t \norm{ \nabla J_{H}(\theta_{t}) - \nabla J(\theta_{t})}\label{eq:NSGD_descent_FOSP}\,.
    %\\ 
    % &\leq&  - J(\theta_{t}) - \fr{\sqrt{2 \mu} \g_t}{3}  \rb{ J^{\star} - J(\theta_{t})  } + \fr{ 8 \g_t }{3 } \norm{\hat{e}_t} + \fr{L_g\g_t^2}{2} + \fr{\varepsilon^{\prime} \g_t}{3} + \fr{4 \g_t D_g}{3 } \g^{H} , \notag
\end{eqnarray}
\end{proof}
% We introduce useful shorthand notations for the rest of the paper: 
We introduce a convenient shorthand notation for the rest of the paper: 
\begin{align}
\delta_t &\eqdef J^{\star} - \Exp{ J(\theta_t)}\,,\label{eq:delta_t}%\\
%V_t &\eqdef \Exp{\sqnorm{ d_t - \nabla J_H(\theta_t)}}\label{eq:V_t}\,,
\end{align}
where~$J^{\star}$ is the optimal expected return and $(d_t), (\theta_t)$ are the sequences computed by the algorithm (\algname{N-PG-IGT} later in this section, and \algname{(N)-HARPG} or others in the following sections). 

In the next lemma, we incorporate the relaxed weak gradient dominance inequality into the previous ascent-like lemma to bound the gradient norm. The resulting inequality is a recursive bound in the expected return gap which is important to derive our convergence rate. 
\begin{lemma}
\label{lem:descent-with-grad-dom}
Let Assumptions~\ref{hyp:parameterization-reg}, \ref{hyp:fisher-non-degenerate} and \ref{hyp:tranf-compatib-fun-approx} be satisfied. In the setting of Lemma~\ref{le:NSGD_descent_trunc}, we have for every integer~$t \geq 1$, for any positive step-size~$\g_t$, 
% \begin{eqnarray}\label{eq:NSGD_descent1}
%     -J(\theta_{t+1}) \leq  - J(\theta_{t}) - \fr{\sqrt{2 \mu} \g_t}{3}  \rb{ J^{\star} - J(\theta_{t})  } + \fr{8 \g_t }{3} \norm{ d_t - \nabla J_{H}(\theta_{t}) } + \fr{L_g \g_t^2}{2} + \fr{\varepsilon^{\prime}\g_t }{3} + \fr{4 \g_t D_g}{3 } \g^{{H}}.
% \end{eqnarray}
%Further using the notation from Section~\ref{sec:notations}
\begin{eqnarray}\label{eq:NSGD_descent2}
    \delta_{t+1} - \delta_t \leq  - \fr{\sqrt{2 \mu} \g_t}{3}  \delta_t + \fr{8 \g_t }{3} \mathbb E[\norm{d_t - \nabla J_H(\theta_t)}] + \fr{L_g \g_t^2}{2} + \fr{\varepsilon^{\prime}\g_t}{3} + \frac{4}{3} \g_t D_g \g^{{H}}\,,
    % \delta_{t+1} - \delta_t \leq  - \fr{\sqrt{2 \mu} \g_t}{3}  \delta_t + \fr{8 \g_t }{3} V_t^{\nfr{1}{2}} + \fr{L_g \g_t^2}{2} + \fr{\varepsilon^{\prime}\g_t}{3} + \frac{4}{3} \g_t D_g \g^{{H}}\,,
\end{eqnarray}
where we recall that $(\delta_t)$ is defined in~\eqref{eq:delta_t}.
% where we recall that $(\delta_t), (V_t)$ are defined in~\eqref{eq:delta_t}-\eqref{eq:V_t}.
\end{lemma}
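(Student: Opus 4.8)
The plan is to chain together three ingredients already at our disposal: the ascent-like inequality of Lemma~\ref{le:NSGD_descent_trunc}, the relaxed weak gradient domination of Lemma~\ref{lem:relaxed-w-grad-dom}, and the gradient truncation bound of Lemma~\ref{le:trunc_grad_hess}. No genuinely new estimate is required; the entire content is substituting these bounds into one another and then taking expectations in the correct order.

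First I would start from the conclusion of Lemma~\ref{le:NSGD_descent_trunc}, namely
$$-J(\theta_{t+1}) \leq -J(\theta_t) - \fr{\g_t}{3}\norm{\nabla J(\theta_t)} + \fr{8\g_t}{3}\norm{d_t - \nabla J_H(\theta_t)} + \fr{L_g \g_t^2}{2} + \fr{4\g_t}{3}\norm{\nabla J_H(\theta_t) - \nabla J(\theta_t)},$$
which holds pointwise, i.e.\ for every realization of the random iterate $\theta_t$. Adding the constant $J^\star$ to both sides turns the first two terms into $J^\star - J(\theta_{t+1})$ and $J^\star - J(\theta_t)$, which anticipate the definition $\delta_t = J^\star - \Exp{J(\theta_t)}$ once the expectation is taken at the end.

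Next I would eliminate the gradient-norm term using Lemma~\ref{lem:relaxed-w-grad-dom}. Rearranging $\varepsilon' + \norm{\nabla J(\theta_t)} \geq \sqrt{2\mu}\,(J^\star - J(\theta_t))$ yields the pointwise bound
$$-\fr{\g_t}{3}\norm{\nabla J(\theta_t)} \leq \fr{\varepsilon'\g_t}{3} - \fr{\sqrt{2\mu}\,\g_t}{3}\rb{J^\star - J(\theta_t)},$$
where the sign of the inequality is preserved because $\g_t > 0$. Simultaneously I would control the last term by the truncation estimate $\norm{\nabla J_H(\theta_t) - \nabla J(\theta_t)} \leq D_g \g^{H}$ from Lemma~\ref{le:trunc_grad_hess}, which is deterministic and uniform in the argument.

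Finally I would take the total expectation over the randomness of the algorithm. By linearity of expectation, together with $\Exp{J^\star - J(\theta_t)} = \delta_t$ and $\Exp{J^\star - J(\theta_{t+1})} = \delta_{t+1}$, this produces
$$\delta_{t+1} \leq \delta_t - \fr{\sqrt{2\mu}\,\g_t}{3}\delta_t + \fr{8\g_t}{3}\Exp{\norm{d_t - \nabla J_H(\theta_t)}} + \fr{L_g \g_t^2}{2} + \fr{\varepsilon'\g_t}{3} + \fr{4}{3}\g_t D_g \g^{H},$$
and subtracting $\delta_t$ from both sides is exactly~\eqref{eq:NSGD_descent2}. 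The only point demanding care — which I would flag as a mild subtlety rather than a real obstacle — is the ordering of operations: the gradient domination inequality is a deterministic statement valid for each fixed $\theta$, so it must be applied pointwise at the random iterate $\theta_t$ \emph{before} taking expectations, after which the term $\Exp{J^\star - J(\theta_t)}$ collapses cleanly to $\delta_t$. Everything else is routine substitution.
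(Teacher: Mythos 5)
Your proposal is correct and follows exactly the paper's own argument: apply the relaxed weak gradient domination (Lemma~\ref{lem:relaxed-w-grad-dom}) pointwise to the gradient-norm term in Lemma~\ref{le:NSGD_descent_trunc}, bound the truncation term via Lemma~\ref{le:trunc_grad_hess}, then add $J^{\star}$ and take expectation. The subtlety you flag about applying the deterministic inequality at the random iterate before taking expectations is handled the same way (implicitly) in the paper.
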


\begin{proof}
Under Assumptions~\ref{hyp:fisher-non-degenerate} and~\ref{hyp:tranf-compatib-fun-approx}, we can use Lemma~\ref{lem:relaxed-w-grad-dom} and combine it with Lemma~\ref{le:NSGD_descent_trunc} to bound the gradient norm and obtain 
\begin{eqnarray}
 -J(\theta_{t+1})
    &\leq&  - J(\theta_{t}) - \fr{\sqrt{2 \mu} \g_t}{3}  \rb{ J^{\star} - J(\theta_{t})  } + \fr{ 8 \g_t }{3 } \norm{d_t - \nabla J_H(\theta_t)} + \fr{L_g\g_t^2}{2} + \fr{\varepsilon^{\prime} \g_t}{3} + \frac{4}{3} \g_t D_g \g^{H} , \notag
%  -J(\theta_{t+1})
%     &\leq&  - J(\theta_{t}) - \fr{\sqrt{2 \mu} \g_t}{3}  \rb{ J^{\star} - J(\theta_{t})  } + \fr{ 8 \g_t }{3 } \norm{\hat{e}_t} + \fr{L_g\g_t^2}{2} + \fr{\varepsilon^{\prime} \g_t}{3} + \fr{4 \g_t D_g}{3 } \g^{H} , \notag
\end{eqnarray}
where we have used Lemma~\ref{le:trunc_grad_hess} to control the truncation error in the last term of~\eqref{lem:NSGD-descent-trunc-ineq}. Adding~$J^{\star}$ to both sides and taking expectation,
%and using Jensen's inequality for $x \mapsto x^2 $, 
we obtain \eqref{eq:NSGD_descent2}.
\end{proof}

Given Lemma~\ref{lem:descent-with-grad-dom}, we now control the remaining error term~$\mathbb E[\norm{d_t - \nabla J_H(\theta_t)}]\,.$ 

\begin{lemma}\label{le:NIGT_first_err_bound}
Let Assumptions~\ref{hyp:parameterization-reg} and \ref{hyp:lipschitz-hessian} be satisfied. Let~$(\tilde{\theta}_t), (d_t)$ and~$(\theta_t)$ be the sequences generated by the~\algname{N-PG-IGT} algorithm (see Algorithm~\ref{alg:N-PG-IGT}) 
% updated via
% \begin{eqnarray}\label{eq:NIGT_y_upd_RL}
% \tilde{\theta}_t = \theta_t + \fr{1- \eta_t}{\eta_t}  \rb{\theta_t - \theta_{t-1}},
% \end{eqnarray}
% \begin{eqnarray}\label{eq:NIGT_g_upd_RL}
% d_t = (1 - \eta_t) d_{t-1} + \eta_t g(\tilde{\tau}_t, \tilde{\theta}_t),
% \end{eqnarray}
% and $\theta_t$ be updated via 
% \begin{eqnarray}\label{eq:NIGT_x_upd_RL}
% \theta_{t+1} = \theta_t + \gamma_t \frac{d_t}{\|d_t\|}
% \end{eqnarray}
with $\eta_t = \rb{ \fr{2}{t+2} }^{\nfr{4}{5}}$ and~$\gamma_t = \fr{6  }{\sqrt{2\mu}( t+2 ) }  $. Then for any integer~$t \geq 0$ it holds 
\begin{eqnarray}
     \Exp{ \norm{d_t - \nabla J_H(\theta_t)} } &\leq&     2 \cdot \sqrt{ C\rb{\fr{8}{5}, \fr{4}{5}} } \sigma_g \eta_t^{\nfr{1}{2}}   + C\rb{\fr{6}{5}, \fr{4}{5}} L_h \g_t^2 \eta_t^{-2} \notag \\
     && \qquad + 2 D_g \g^H C\rb{ 0, \fr{4}{5} } \eta_t^{-1} +  D_h \g^{H} C\rb{ 1, \fr{4}{5} } \g_t \eta_t^{-1},
\end{eqnarray}
where $C(p,q)$ for $q \in [0, 1)$, $p > 0 $ is a numerical constant defined in Lemma~\ref{le:sum_prod_bound1}.
\end{lemma}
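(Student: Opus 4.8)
The plan is to derive a one-step vector recursion for the error $\hat{e}_t \eqdef d_t - \nabla J_H(\theta_t)$ and then unwind it. Writing $e_t \eqdef g(\tilde{\tau}_t,\tilde{\theta}_t) - \nabla J_H(\tilde{\theta}_t)$ for the fresh stochastic noise — conditionally zero-mean with $\Exp{\sqnorm{e_t}} \leq \sigma_g^2$ by Proposition~\ref{prop:stoch-grad-and-J} — and substituting the update $d_t = (1-\eta_t)d_{t-1} + \eta_t g(\tilde{\tau}_t,\tilde{\theta}_t)$ together with $d_{t-1} = \hat{e}_{t-1} + \nabla J_H(\theta_{t-1})$, I would obtain
\begin{equation*}
\hat{e}_t = (1-\eta_t)\hat{e}_{t-1} + \eta_t e_t + \mathcal{D}_t, \qquad \mathcal{D}_t \eqdef (1-\eta_t)\nabla J_H(\theta_{t-1}) + \eta_t \nabla J_H(\tilde{\theta}_t) - \nabla J_H(\theta_t).
\end{equation*}
The whole point of the implicit-gradient-transport step is that $\tilde{\theta}_t$ satisfies $\theta_t = \eta_t\tilde{\theta}_t + (1-\eta_t)\theta_{t-1}$, equivalently $(1-\eta_t)(\theta_{t-1}-\theta_t) + \eta_t(\tilde{\theta}_t-\theta_t) = 0$, so the deterministic residual $\mathcal{D}_t$ carries no first-order contribution.

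Next I would write $\mathcal{D}_t = (1-\eta_t)S_t + \eta_t Z_t$ by subtracting and adding the common quadratic model centred at $\theta_t$, where $S_t$ and $Z_t$ are the second-order residuals of $\nabla J_H$ at $\theta_{t-1}$ and $\tilde{\theta}_t$ respectively. The shared first-order term $\nabla^2 J(\theta_t)\bigl[(1-\eta_t)(\theta_{t-1}-\theta_t)+\eta_t(\tilde{\theta}_t-\theta_t)\bigr]$ vanishes by the identity above. Each remainder is then bounded by combining the gradient- and Hessian-level truncation estimates of Lemma~\ref{le:trunc_grad_hess} with the $L_h$-Lipschitz Hessian of $J$ (Lemma~\ref{lem:2nd-order-smoothness}), giving $\norm{S_t},\norm{Z_t} \leq \cO\rb{D_g\gamma^{H} + D_h\gamma^{H}\,\delta + L_h\,\delta^2}$ where $\delta$ is the relevant displacement. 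Here I crucially use that the normalized update produces the exact displacement $\norm{\theta_{t-1}-\theta_t} = \gamma_{t-1}$, hence $\norm{\tilde{\theta}_t-\theta_t} = \tfrac{1-\eta_t}{\eta_t}\gamma_{t-1}$, so that $\norm{Z_t}$ blows up as $\cO\rb{D_h\gamma^H\gamma_{t-1}\eta_t^{-1} + L_h\gamma_{t-1}^2\eta_t^{-2}}$ while the $\eta_t$ prefactor on $Z_t$ in $\mathcal{D}_t$ tempers this growth.

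Unwinding the linear recursion yields $\hat{e}_t = \bigl(\prod_{j=1}^{t}(1-\eta_j)\bigr)\hat{e}_0 + \sum_{s=1}^{t}\bigl(\prod_{j=s+1}^{t}(1-\eta_j)\bigr)(\eta_s e_s + \mathcal{D}_s)$, with the initial term decaying super-polynomially and hence negligible. For the stochastic part, the $e_s$ are conditionally zero-mean and uncorrelated, so $\Exp{\norm{\sum_s w_s e_s}} \leq \sigma_g\sqrt{\sum_s w_s^2}$ with $w_s = \eta_s\prod_{j>s}(1-\eta_j)$, and Lemma~\ref{le:sum_prod_bound1} gives $\sum_s w_s^2 = \cO(\eta_t)$ (constant $C(\tfrac{8}{5},\tfrac{4}{5})$), producing the $\sigma_g\eta_t^{1/2}$ term. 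For the deterministic part I would feed the bounds on $\norm{S_s}$ (weight $\approx 1$) and $\eta_s\norm{Z_s}$ into $\sum_s\prod_{j>s}(1-\eta_j)\norm{\mathcal{D}_s}$ and apply Lemma~\ref{le:sum_prod_bound1} with summand exponents $p=0$, $p=1$, and $p=\tfrac{6}{5}$, yielding exactly the $D_g\gamma^H\eta_t^{-1}$ (constant truncation, $C(0,\tfrac{4}{5})$), the $D_h\gamma^H\gamma_t\eta_t^{-1}$ (displacement-scaled truncation, $C(1,\tfrac{4}{5})$), and the $L_h\gamma_t^2\eta_t^{-2}$ (second-order remainder, $C(\tfrac{6}{5},\tfrac{4}{5})$) contributions.

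The main obstacle is twofold. First, the cancellation must be extracted exactly: all three gradients have to be expanded around one reference $\theta_t$, and the gradient-level and Hessian-level truncation errors must be bookkept separately, since collapsing them would destroy either the $D_g$ or the $D_h$ contribution. Second, and more delicate, is matching all four decay rates under time-varying $\eta_t,\gamma_t$: obtaining the sharp $\eta_t^{1/2}$ for the martingale term requires the weighted second-moment sum to collapse from $\cO(1)$ to $\cO(\eta_t)$, and each deterministic piece — especially the $\eta_t^{-2}$-singular $Z_t$ remainder, which survives only because of its $\eta_t$ prefactor — must enter the summation lemma with precisely the right exponent. This careful choice of $C(p,q)$ bookkeeping is where the technical summation Lemma~\ref{le:sum_prod_bound1} does the heavy lifting.
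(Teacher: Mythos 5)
Your proposal is correct and follows essentially the same route as the paper's proof: the same recursion $\hat{e}_t = (1-\eta_t)\hat{e}_{t-1} + \eta_t e_t + (1-\eta_t)S_t + \eta_t Z_t$ exploiting the lookahead identity $(1-\eta_t)(\theta_{t-1}-\theta_t)+\eta_t(\tilde{\theta}_t-\theta_t)=0$ to cancel the Hessian terms, the same two-level bounding of the residuals (Lipschitz Hessian of $J$ via Lemma~\ref{lem:2nd-order-smoothness} plus the $D_g,D_h$ truncation errors via Lemma~\ref{le:trunc_grad_hess}), the same uncorrelated-noise/Jensen argument for the martingale part, and the same $C(p,q)$ exponents $(\tfrac{8}{5},0,1,\tfrac{6}{5})$ from Lemma~\ref{le:sum_prod_bound1}.
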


\begin{proof}
    Define $\hat{e}_t \eqdef d_t - \nabla J_H(\theta_t)$, ${e}_t \eqdef  g(\tilde{\tau}_t, \tilde{\theta}_t) -\nabla J_H(\tilde{\theta}_t)$, 
    $$
    S_{t} \eqdef \nabla J_H({\theta}_{t-1}) - \nabla J_H({\theta}_t) + \nabla^2 J_H(\theta_{t}) \rb{\theta_{t-1} - \theta_{t}}  ,
    $$
    $$
    \bar{S_{t}} \eqdef \nabla J({\theta}_{t-1}) - \nabla J({\theta}_t) + \nabla^2 J(\theta_{t}) \rb{\theta_{t-1} - \theta_{t}}
    $$
    $$
    Z_t \eqdef \nabla J_H(\tilde{\theta}_{t}) - \nabla J_H({\theta}_t) + \nabla^2 J_H(\theta_{t}) \rb{\tilde{\theta}_t - \theta_{t}}.
    $$
    $$
    \bar{Z_t} \eqdef \nabla J(\tilde{\theta}_{t}) - \nabla J({\theta}_t) + \nabla^2 J(\theta_{t}) \rb{\tilde{\theta}_t - \theta_{t}}.
    $$
    Notice that by Lemma~\ref{lem:2nd-order-smoothness} (second-order smoothness), Lemma~\ref{le:trunc_grad_hess} and triangle inequality, we have
    \begin{eqnarray}\label{eq:St_bound}
        \norm{S_{t}} &\leq& L_h \sqnorm{\theta_t - \theta_{t-1}} + \norm{S_t - \bar{S_t} } = L_h \g_{t-1}^2  + \norm{S_t - \bar{S_t} } , \\
        \norm{S_t - \bar{S_t} } & \leq & 2 D_g \g^H + D_h \g^H \g_{t-1} 
    \end{eqnarray}
    \begin{eqnarray}\label{eq:Zt_bound}
    \norm{Z_t} &\leq& L_h \sqnorm{\tilde{\theta}_t - \theta_{t}} +  \norm{Z_t - \bar{Z_t} } = L_h \fr{(1-\eta_t)^2}{\eta_t^2} \sqnorm{\theta_{t} - \theta_{t-1}} +  \norm{Z_t - \bar{Z_t} },   \\
   \norm{Z_t - \bar{Z_t} } &\leq& 2 D_g \g^H + D_h \g^H \fr{\g_{t-1}}{\eta_t}  .
    \end{eqnarray}
    Then, we derive a recursion on the sequence~$(\hat{e}_t)$ by writing 
\begin{align}
    \hat{e}_{t} &= d_{t} - \nabla J_H(\theta_t) \notag \\
    &\overset{(i)}{=} (1 - \eta_t) d_{t-1} + \eta_t  g(\tilde{\tau}_t, \tilde{\theta}_t) - \nabla J_H(\theta_t) \notag \\
    &=  (1 - \eta_t) \rb{\hat{e}_{t-1} + \nabla J_H(\theta_{t-1}) } + \eta_t  g(\tilde{\tau}_t, \tilde{\theta}_t) - \nabla J_H(\theta_t) \notag \\
    &= (1 - \eta_t) \hat{e}_{t-1}  + \eta_t e_{t} + (1 - \eta_t) \rb{  \nabla J_H(\theta_{t-1})  - \nabla J_H(\theta_t)  } %\notag\\
    +  \eta_t \rb{  \nabla J_H(\tilde{\theta}_t)  - \nabla J_H(\theta_t)  } \notag \\
    &\overset{(ii)}{=} (1 - \eta_t) \hat{e}_{t-1}  + \eta_t e_{t} + (1 - \eta_t) \rb{  \nabla J_H({\theta}_{t-1}) - \nabla J_H({\theta}_t) + \nabla^2 J_H(\theta_{t}) \rb{\theta_{t-1} - \theta_{t}} } \notag\\
    & \qquad +  \eta_t \rb{  \nabla J_H(\tilde{\theta}_{t}) - \nabla J_H({\theta}_t) + \nabla^2 J_H(\theta_{t}) \rb{\tilde{\theta}_t - \theta_{t}} } \notag \\
    &\qquad -(1-\eta_t) \nabla^2 J_H(\theta_t) (\theta_{t-1} - \theta_{t}) - \eta_t \nabla^2 J_{H}(\theta_t) (\tilde{\theta}_t - \theta_t)\\
    &\overset{(iii)}{=} (1 - \eta_t) \hat{e}_{t-1}  + \eta_t e_{t} + (1 - \eta_t) S_t +  \eta_t Z_t \notag ,
\end{align}
where $(i)$ follows from the update rule of the sequence~$(d_t)$ and~$(iii)$ stems from recognizing $S_t$, $Z_t$ and observing that the last term in $(ii)$ is zero thanks to the update rule of the sequence~$(\tilde{\theta}_t)$ in~\algname{N-PG-IGT} (the Hessian correction terms in the last term disappear). Notice that this last simplification is the main reason explaining the update rule of the sequence~$\tilde{\theta}_t$. 

Consider an integer~$T \geq 1\,.$ Unrolling the recursion and using the notation~$\zeta_{t+1,T} \eqdef \prod_{\tau = t+1}^{T-1} (1 - \eta_{\tau+1})$ (with $\zeta_{T,T} = 1$ and~$\zeta_{0,T} = \prod_{t = 0}^{T-1} (1 - \eta_{t+1})$), we obtain
\begin{equation*}
    \hat{e}_{T} = \zeta_{0,T} \hat{e}_0 + \sum_{t = 0}^{T-1} \eta_{t+1} \zeta_{t+1,T} e_{t+1} + \sum_{t = 0}^{T-1} (1-\eta_{t+1}) \zeta_{t+1,T} S_{t+1} 
     + \sum_{t = 0}^{T-1} \eta_{t+1} \zeta_{t+1,T} Z_{t+1}\,.  %\notag \\
\end{equation*}
% \begin{multline}
%     \hat{e}_{T} = \prod_{t = 0}^{T-1} (1 - \eta_{t+1}) \hat{e}_0 + \sum_{t = 0}^{T-1} \eta_{t+1} e_{t+1} \prod_{\tau = t+1}^{T-1} (1 - \eta_{\tau+1}) + \sum_{t = 0}^{T-1} (1-\eta_{t+1}) S_{t+1} \prod_{\tau = t+1}^{T-1} (1 - \eta_{\tau+1})  \notag \\
%      + \sum_{t = 0}^{T-1} \eta_{t+1} Z_{t+1} \prod_{\tau = t+1}^{T-1} (1 - \eta_{\tau+1})\,.  %\notag \\
% \end{multline}

%Let $t_2 > t_1 \geq 0$ be two integers and 
Define for every integer~$t \geq 1$ the $\sigma$-field $\mathcal{F}_{t} \eqdef \sigma(\cb{\tilde{\theta}_0, \tilde{\tau}_0, \dots, \tilde{\tau}_{t-1} })$ where~$\tilde{\tau}_s \sim p(\cdot|\pi_{\tilde{\theta}_s})$ for every~$0 \leq s \leq t-1$. Notice that for any integers~$t_2 > t_1 \geq 1$  we have 
$$
\Exp{ \langle e_{t_1}, e_{t_2} \rangle } = \Exp{ \Exp{ \langle e_{t_1}, e_{t_2} \rangle | \mathcal{F}_{t_2} } } = \Exp{ \langle e_{t_1}, \Exp{  e_{t_2}  | \mathcal{F}_{t_2} }  \rangle  } = 0.
$$
Then using triangle inequality, taking expectation and applying Jensen's inequality, we get

\begin{eqnarray}
    \Exp{\norm{\hat{e}_{T}}} &\leq & \zeta_{0,T} \Exp{\norm{\hat{e}_0}}  +  \Exp{\norm{ \sum_{t = 0}^{T-1} \eta_{t+1} \zeta_{t+1,T} e_{t+1} }} \notag \\
    &&\qquad 
    +  \Exp{\norm{  \sum_{t = 0}^{T-1} (1-\eta_{t+1}) \zeta_{t+1,T} S_{t+1}  } } +  \Exp{\norm{  \sum_{t = 0}^{T-1} \eta_{t+1} \zeta_{t+1,T} Z_{t+1}  } } \notag \\
    &\leq & \zeta_{0,T} \sigma_g +  \rb{ \Exp{ \sqnorm{ \sum_{t = 0}^{T-1} \eta_{t+1} \zeta_{t+1,T} e_{t+1} }}}^{\nfr{1}{2}} \notag \\
    &&\qquad +  \sum_{t = 0}^{T-1} (1-\eta_{t+1}) \zeta_{t+1,T} \Exp{\norm{S_{t+1}}}  +  \sum_{t = 0}^{T-1} \eta_{t+1} \zeta_{t+1,T} \Exp{\norm{Z_{t+1}}}   \notag \\    
    &\overset{(i)}{\leq} & \zeta_{0,T} \sigma_g  +  \rb{ \sum_{t = 0}^{T-1} \eta_{t+1}^2 \zeta_{t+1,T}^2 \Exp{\sqnorm{ e_{t+1}}} }^{\nfr{1}{2}} \notag \\
    &&\qquad + L_h \sum_{t = 0}^{T-1} (1-\eta_{t+1}) \zeta_{t+1,T} \gamma_{t}^2 
    + L_h \sum_{t = 0}^{T-1} \fr{ \gamma_{t}^2 }{\eta_{t+1}} \zeta_{t+1,T} \notag \\    
    &&\qquad +  \sum_{t = 0}^{T-1} (1-\eta_{t+1}) \zeta_{t+1,T} \norm{S_{t+1} - \bar{S}_{t+1} } 
    +  \sum_{t = 0}^{T-1} \eta_{t+1} \zeta_{t+1,T} \norm{Z_{t+1} - \bar{Z}_{t+1} } \notag \\    
    &\leq & \zeta_{0,T} \sigma_g +  \rb{ \sum_{t = 0}^{T-1} \eta_{t+1}^2  \zeta_{t+1,T}^2 }^{\nfr{1}{2}} \sigma_g  %\notag \\
    %&& \qquad 
    + 2 L_h \sum_{t = 0}^{T-1}  \fr{\gamma_{t}^2}{\eta_{t+1}}  \zeta_{t+1,T}  \notag \\
    && \qquad + 2 D_g \g^H \sum_{t = 0}^{T-1} \zeta_{t+1,T}  + D_h \g^H \sum_{t = 0}^{T-1} \gamma_t \zeta_{t+1,T} 
    \label{eq:NIGT_first_erbound_FOSP}  ,   
\end{eqnarray}
where in $(i)$ we apply \eqref{eq:St_bound} and \eqref{eq:Zt_bound}, and use independence $\Exp{ \langle e_{t_1}, e_{t_2} \rangle } = 0$ for any integers~$t_1 \neq t_2$ (larger than 1). In the last inequality, we use $\Exp{\sqnorm{ e_{t+1}}} \leq \sigma_g^2$ for any~$t \in \{0, \cdots, T-1 \}$.

Further, using Lemma~\ref{le:prod_bound} and \ref{le:sum_prod_bound1} we have
\begin{eqnarray}
    \Exp{\norm{\hat{e}_{T}}}  &\leq & \eta_T \sigma_g +  \sqrt{ C\rb{\fr{8}{5}, \fr{4}{5}} } \eta_T^{\nfr{1}{2}} \sigma_g   +  C\rb{\fr{6}{5}, \fr{4}{5}} \g_T^2 \eta_T^{-2} L_h  \notag\\
    && \qquad + 2 D_g \g^H C\rb{ 0, \fr{4}{5} } \eta_T^{-1} +  D_h \g^{H} C\rb{ 1, \fr{4}{5} } \g_T \eta_T^{-1} \notag\\
    &\leq &  2 \cdot \sqrt{ C\rb{\fr{8}{5}, \fr{4}{5}} }  \eta_T^{\nfr{1}{2}} \sigma_g   + C\rb{\fr{6}{5}, \fr{4}{5}} \g_T^2 \eta_T^{-2} L_h  \notag\\
    && \qquad + 2 D_g \g^H C\rb{ 0, \fr{4}{5} } \eta_T^{-1} +  D_h \g^{H} C\rb{ 1, \fr{4}{5} } \g_T \eta_T^{-1}  , \notag 
\end{eqnarray}
where $C(p,q)$ for $q \in [0, 1)$, $p > 0 $ is defined in Lemma~\ref{le:sum_prod_bound1}.
\end{proof}

\noindent\textbf{End of Proof of Theorem~\ref{thm:NIGT} (and hence Theorem~\ref{thm:N-PG-IGT}).} We now combine the ascent-like lemma incorporating the relaxed weak gradient dominance with the estimate of the vanishing error obtained in the previous lemma. 
\begin{proof}
    Combining the results of Lemma~\ref{lem:descent-with-grad-dom} with Lemma~\ref{le:NIGT_first_err_bound} we get 
\begin{eqnarray}\label{eq:NSGD_descent2_applied}
    \delta_{t+1}  &\leq&  \rb{ 1 - \fr{\sqrt{2 \mu} \g_t}{3} } \delta_t + \fr{8 \g_t }{3} \Exp{ \norm{d_t - \nabla J_H(\theta_t)} } + \fr{L_g \g_t^2}{2} + \fr{\g_t \varepsilon^{\prime}}{3} + \fr{4 \g_t D_g}{3 } \g^{{H}} \notag \\
    &\leq&  \rb{ 1 - \fr{\sqrt{2 \mu} \g_t}{3} } \delta_t + \fr{16 }{3}  \sqrt{ C\rb{\fr{8}{5}, \fr{4}{5}} } \sigma_g  \g_t   \eta_t^{\nfr{1}{2}} + \fr{8 }{3} C\rb{\fr{6}{5}, \fr{4}{5}} L_h \g_t^3 \eta_t^{-2} + \fr{L_g \g_t^2}{2} + \fr{\g_t \varepsilon^{\prime}}{3}  + \fr{4 \g_t D_g}{3 } \g^{{H}} \notag \\
    && \qquad + \fr{16  }{3} D_g \g^H C\rb{ 0, \fr{4}{5} } \g_t \eta_t^{-1} +  \fr{8 }{3}  D_h \g^{H} C\rb{ 1, \fr{4}{5} } \g_t^2 \eta_t^{-1}.
\end{eqnarray}
Using Lemma~\ref{le:aux_rec0} with $\al_t = \g_t$, $a = \fr{3}{\sqrt{2\mu}}$, $t_0 = 0$, $\tau = 2$, $\beta_t = \fr{16 }{3}  \sqrt{ C\rb{\fr{8}{5}, \fr{4}{5}} } \sigma_g  \g_t   \eta_t^{\nfr{1}{2}} + \fr{8 }{3} C\rb{\fr{6}{5}, \fr{4}{5}} L_h \g_t^3 \eta_t^{-2} + \fr{L_g \g_t^2}{2} + \fr{\g_t \varepsilon^{\prime}}{3}  + \fr{4 \g_t D_g}{3 } \g^{{H}} + \fr{16  }{3} D_g \g^H C\rb{ 0, \fr{4}{5} } \g_t \eta_t^{-1} +  \fr{8 }{3}  D_h \g^{H} C\rb{ 1, \fr{4}{5} } \g_t^2 \eta_t^{-1} $
\begin{eqnarray}
    \delta_T 
    &\leq& \fr{ \delta_0 }{(T+1)^2} +  \fr{  \sum_{t=0}^{T-1} \beta_t (t + 2 )^2 }{(T+1)^2} \notag \\
    &\leq& \fr{ \delta_0 }{(T+1)^2} +  \fr{16 }{3}  \sqrt{ C\rb{\fr{8}{5}, \fr{4}{5}} } \fr{ \sigma_g \g_0}{(T+1)^{\nfr{2}{5}}} +   \fr{16 }{3} C\rb{\fr{6}{5}, \fr{4}{5}} \fr{ L_h \g_0^3}{(T+1)^{\nfr{2}{5}}} \notag \\
    && \qquad +  \fr{L_g \g_0^2}{2} \fr{ 1}{T+1}  + \fr{\g_0 \varepsilon^{\prime}}{3}  + \fr{4 \g_0 D_g}{3 } \g^{{H}} + \fr{16 \g_0 }{3} D_g \g^H C\rb{ 0, \fr{4}{5} } (T+1)^{\nfr{4}{5}} +  \fr{8 \g_0^2 }{3}  D_h \g^{H} C\rb{ 1, \fr{4}{5} } \fr{1}{(T+1)^{\nfr{1}{5}}} \notag .
\end{eqnarray}
\end{proof}

\subsection{Convergence to first order stationary point}

% \begin{lemma}\label{le:NSGD_descent_trunc_FOSP}
% Let Assumption~\ref{hyp:parameterization-reg} be satisfied, 
% and $\theta_t$ be updated via
% $$
% \theta_{t+1} = \theta_t + \g_t \fr{d_t}{\norm{d_t}}.
% $$
% Then for any $\g_t > 0$, $\theta_t \in \R^d$ we have
% \begin{eqnarray}\label{eq:NSGD_descent1}
%     -J(\theta_{t+1}) \leq  - J(\theta_{t}) - \fr{\g_t}{3}  \norm{ \nabla J(\theta_t) } + \fr{8 \g_t }{3} \norm{ d_t - \nabla J_{H}(\theta_{t}) } + \fr{L_g \g_t^2}{2} + \fr{\varepsilon^{\prime}\g_t }{3} + \fr{4 \g_t D_g}{3 } \g^{{H}}.
% \end{eqnarray}
% \end{lemma}
% \begin{proof}
%     The result follows from \eqref{eq:NSGD_descent_FOSP} in the proof of Lemma~\ref{le:NSGD_descent_trunc}.
% \end{proof}

\begin{theorem}\label{thm:NIGT_FOSP}

   Let Assumptions~\ref{hyp:parameterization-reg} and \ref{hyp:lipschitz-hessian} hold. 
    Set~$\gamma_t = \rb{ \fr{2  }{ t+2  } }^{\nfr{5}{7}} $, $\eta_t = \rb{ \fr{2}{t+2} }^{\nfr{4}{7}}$, $H = \rb{1-\g}^{-1}{\log(T + 1)}$. Let $\bar{\theta}_T$ be sampled from the iterates of \algname{N-PG-IGT} (Algorithm~\ref{alg:N-PG-IGT}) $\cb{\theta_0, \ldots, \theta_{T-1}}$ with probability distribution $\mathbb{P}(\bar{\theta}_T = \theta_t) = \fr{\g_t}{\sum_{t=0}^{T-1} \g_t }$. Then for every integer~$T \geq 1$,  $\bar{\theta}_T$ satisfies
    \begin{eqnarray}
        \Exp{\norm{\nabla J(\bar{ \theta}_T ) }} \leq {\cO}\rb{ \fr{ J^* -  J(\theta_0) }{(T+1)^{\nfr{2}{7}}} + \fr{L_g}{(T+1)^{\nfr{5}{7}}} + \fr{D_g}{(T+1)} + \fr{D_h}{(T+1)^{\nfr{13}{7}}} + \fr{ (L_h + \sigma_g) \log(T+1) }{ (T+1)^{\nfr{2}{7}} }  } . \notag 
    \end{eqnarray}
    where~$\sigma_g, L_g, L_h$ are defined in Proposition~\ref{prop:stoch-grad-and-J} and Lemma~\ref{lem:2nd-order-smoothness} respectively, $D_g$ is defined in Lemma~\ref{le:trunc_grad_hess}. Here $\cO(\cdot)$ only hides absolute numerical constants.
\end{theorem}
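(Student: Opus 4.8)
The plan is to prove this first-order stationarity guarantee by the standard nonconvex ``sum the ascent inequality'' route, \emph{without} invoking the relaxed weak gradient dominance (Lemma~\ref{lem:relaxed-w-grad-dom}): since only Assumptions~\ref{hyp:parameterization-reg} and~\ref{hyp:lipschitz-hessian} are available here, Lemma~\ref{lem:descent-with-grad-dom} is unavailable, and I instead work directly from the ascent-like Lemma~\ref{le:NSGD_descent_trunc}. First I would rearrange \eqref{lem:NSGD-descent-trunc-ineq} to isolate $\fr{\g_t}{3}\norm{\nabla J(\theta_t)}$, bound the truncation term $\norm{\nabla J_H(\theta_t)-\nabla J(\theta_t)}\le D_g\g^H$ via Lemma~\ref{le:trunc_grad_hess}, sum over $t=0,\dots,T-1$ so that the function differences telescope, take expectations, and use $\Exp{J(\theta_T)}\le J^*$. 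This yields
\begin{equation*}
\tfrac13\sum_{t=0}^{T-1}\g_t\,\Exp{\norm{\nabla J(\theta_t)}}\le \rb{J^*-J(\theta_0)}+\tfrac{8}{3}\sum_{t=0}^{T-1}\g_t\,\Exp{\norm{\hat e_t}}+\tfrac{L_g}{2}\sum_{t=0}^{T-1}\g_t^2+\tfrac{4}{3}D_g\g^H\sum_{t=0}^{T-1}\g_t .
\end{equation*}
After dividing by $\sum_{t=0}^{T-1}\g_t$ and recognizing, from the sampling rule $\mathbb P(\bar\theta_T=\theta_t)=\g_t/\sum_s\g_s$, that the left-hand side becomes $\tfrac13\Exp{\norm{\nabla J(\bar\theta_T)}}$, everything reduces to controlling the $\g_t$-weighted error sum.

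The crux is bounding $\Exp{\norm{\hat e_t}}$ for the FOSP step-sizes, where $\hat e_t\eqdef d_t-\nabla J_H(\theta_t)$. Here I would reuse the derivation in the proof of Lemma~\ref{le:NIGT_first_err_bound}: the recursion $\hat e_t=(1-\eta_t)\hat e_{t-1}+\eta_t e_t+(1-\eta_t)S_t+\eta_t Z_t$ and the unrolled estimate \eqref{eq:NIGT_first_erbound_FOSP} are \emph{step-size agnostic}, so they carry over verbatim. The only change is to re-apply the summation lemmas (Lemma~\ref{le:prod_bound} and~\ref{le:sum_prod_bound1}) with the new exponents $\eta_t=(2/(t+2))^{4/7}$, $\g_t=(2/(t+2))^{5/7}$, giving a bound of the form $\Exp{\norm{\hat e_t}}=\cO\big(\sigma_g\eta_t^{1/2}+L_h\g_t^2\eta_t^{-2}+D_g\g^H\eta_t^{-1}+D_h\g^H\g_t\eta_t^{-1}\big)$. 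The exponents $4/7$ and $5/7$ are precisely those balancing the stochastic term $\eta_t^{1/2}\sim t^{-2/7}$ against the second-order/lookahead term $\g_t^2\eta_t^{-2}\sim t^{-2/7}$ (balancing $t^{-b/2}$ with $t^{-2a+2b}$ forces $a=5b/4$), so that $\Exp{\norm{\hat e_t}}=\cO\big((\sigma_g+L_h)t^{-2/7}\big)$ up to truncation terms.

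Finally I would substitute this into the weighted sum and evaluate the three scalar sums: $\sum_{t=0}^{T-1}\g_t\sim (T+1)^{2/7}$ (divergent, since $5/7<1$), $\sum_{t=0}^{T-1}\g_t^2=\cO(1)$ (convergent, since $10/7>1$), and $\sum_{t=0}^{T-1}\g_t\,t^{-2/7}\sim\sum t^{-1}\sim\log(T+1)$. Dividing by $\sum_t\g_t$ then produces the leading terms $\fr{J^*-J(\theta_0)}{(T+1)^{2/7}}$ and $\fr{(L_h+\sigma_g)\log(T+1)}{(T+1)^{2/7}}$, with the $L_g$- and $D_g,D_h$-contributions subdominant; choosing $H=(1-\g)^{-1}\log(T+1)$ so that $\g^H\le (T+1)^{-1}$ makes the truncation terms (amplified by $\eta_t^{-1}\sim t^{4/7}$) decay fast enough to stay subdominant. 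The main obstacle I anticipate is this last error-control step: carefully re-running the summation lemmas under the new exponents to certify the $\cO(t^{-2/7})$ rate \emph{and} tracking the $\eta_t^{-1}$-amplified truncation terms, since it is their interplay with the logarithmic horizon $H$ that keeps the final bound at the advertised $\cO((T+1)^{-2/7})$ (i.e.\ an $\tilde{\cO}(\varepsilon^{-3.5})$ sample complexity).
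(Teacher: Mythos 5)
Your proposal follows essentially the same route as the paper's own proof: the paper likewise starts from the ascent-like Lemma~\ref{le:NSGD_descent_trunc} combined with the truncation bound of Lemma~\ref{le:trunc_grad_hess}, reuses the step-size-agnostic estimate \eqref{eq:NIGT_first_erbound_FOSP} from the proof of Lemma~\ref{le:NIGT_first_err_bound} with Lemmas~\ref{le:prod_bound} and~\ref{le:sum_prod_bound1} re-applied at the new exponents $(5/7,4/7)$, then sums over $t$, divides by $\sum_{t}\gamma_t$, and identifies the resulting weighted average with $\Exp{\norm{\nabla J(\bar{\theta}_T)}}$ via the sampling rule. Your exponent balancing ($\eta_t^{1/2}\sim\gamma_t^2\eta_t^{-2}\sim t^{-2/7}$), the $\log(T+1)$ arising from $\sum_t\gamma_t\eta_t^{1/2}\sim\sum_t t^{-1}$, and the handling of the $\gamma^H$-truncation terms with $H=(1-\gamma)^{-1}\log(T+1)$ match the paper's argument.
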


\begin{proof}
    It follows from Lemma~\ref{le:NSGD_descent_trunc} combined with Lemma~\ref{le:trunc_grad_hess} that
    \begin{eqnarray}\label{eq:NSGD_descent_bis}
    -J(\theta_{t+1}) \leq  - J(\theta_{t}) - \fr{\g_t}{3}  \norm{ \nabla J(\theta_t) } + \fr{8 \g_t }{3} \norm{ d_t - \nabla J_{H}(\theta_{t}) } + \fr{L_g \g_t^2}{2} + \fr{4 \g_t D_g}{3 } \g^{{H}}.
\end{eqnarray}
    We now control the error term~$\|\hat{e}_t\| = \|d_t - \nabla J_H(\theta_t)\|$ as follows. 
    By \eqref{eq:NIGT_first_erbound_FOSP} in the proof of Lemma~\ref{le:NIGT_first_err_bound}, we have
    \begin{eqnarray}
    \Exp{\norm{\hat{e}_{T}}} &\leq & 
     \zeta_{0,T} \sigma_g +  \rb{ \sum_{t = 0}^{T-1} \eta_{t+1}^2  \zeta_{t+1,T}^2 }^{\nfr{1}{2}} \sigma_g  %\notag \\
    %&& \qquad 
    + 2 L_h \sum_{t = 0}^{T-1}  \fr{\gamma_{t}^2}{\eta_{t+1}}  \zeta_{t+1,T}  + 2 D_g \g^H \sum_{t = 0}^{T-1} \zeta_{t+1,T}  + D_h \g^H \sum_{t = 0}^{T-1} \gamma_t \zeta_{t+1,T}  \notag \\
    & \overset{(i)}{\leq} & 2 \cdot \sqrt{ C\rb{\fr{8}{7}, \fr{4}{7}} }  \eta_T^{\nfr{1}{2}} \sigma_g   + C\rb{\fr{6}{7}, \fr{4}{7}} \g_T^2 \eta_T^{-2} L_h  + 2 D_g \g^H C\rb{ 0, \fr{4}{7} } \eta_T^{-1} +  D_h \g^{H} C\rb{ \fr{5}{7}, \fr{4}{7} } \g_T \eta_T^{-1}  \notag,
\end{eqnarray}
    where in $(i)$ we applied Lemma~\ref{le:prod_bound} and \ref{le:sum_prod_bound1}. The value of $C(p,q)$ for $q \in [0, 1)$, $p > 0 $ is defined in Lemma~\ref{le:sum_prod_bound1}, $\hat{e}_t = d_t - \nabla J_H (\theta_t)$

%  \rb{ \prod_{t = 0}^{T-1} (1 - \eta_{t+1}) } \sigma_g +  \rb{ \sum_{t = 0}^{T-1} \eta_{t+1}^2  \prod_{\tau = t+1}^{T-1} (1 - \eta_{\tau+1})^2 }^{\nfr{1}{2}} \sigma_g  + 2 \rb{ \sum_{t = 0}^{T-1}  \fr{\gamma_{t}^2}{\eta_{t+1}}  \prod_{\tau = t+1}^{T-1} (1 - \eta_{\tau+1}) } L_h

    Combining the above inequality with~(\ref{eq:NSGD_descent_bis}), we obtain
    \begin{eqnarray}%\label{eq:NIGT_descent_applied}
    \Exp{ J(\theta_{t}) -J(\theta_{t+1}) } &\leq &  - \fr{\g_t}{3}  \Exp{ \norm{ \nabla J(\theta_t) } } + {8 \g_t } \sqrt{ C\rb{\fr{8}{7}, \fr{4}{7}} }  \eta_T^{\nfr{1}{2}} \sigma_g + {8 \g_t } C\rb{\fr{6}{7}, \fr{4}{7}} \g_T^2 \eta_T^{-2} L_h + \fr{L_g \g_t^2}{2} + \fr{\varepsilon^{\prime}\g_t }{3} \notag \\
    && \qquad + \fr{4 \g_t D_g}{3 } \g^{{H}}  + \fr{16}{3} D_g \g^H C\rb{ 0, \fr{4}{7} } \g_t \eta_t^{-1} +  \fr{8 }{3} D_h \g^{H} C\rb{ \fr{5}{7}, \fr{4}{7} } \g_t^2 \eta_T^{-1} .
\end{eqnarray}
    Summing up the above inequality for $t = 0, \ldots, T-1$ and rearranging the terms, we arrive at
    \begin{eqnarray}
        \Exp{\norm{\nabla J(\bar{ \theta}_T ) }} & = & \fr{\sum_{t=0}^{T-1} \g_t \Exp{ \norm{ \nabla J(\theta_t) } }}{ \sum_{t=0}^{T-1} \g_t  } \notag \\
        &\leq&  {\cO}\rb{ \fr{ J^* -  J(\theta_0) }{(T+1)^{\nfr{2}{7}}} + \fr{L_g}{(T+1)^{\nfr{5}{7}}} + \fr{D_g}{(T+1)} + \fr{D_h}{(T+1)^{\nfr{13}{7}}} + \fr{ (L_h + \sigma_g) \log(T+1) }{ (T+1)^{\nfr{2}{7}} }  } .
    \end{eqnarray}
\end{proof}

%%%%%%%%%%%%%%%%%%%%%%%%%%%%%%%%%%%%%%%%%%%%%%%%%%%%%%%%%%%%
%%%%%%%%%%%%%%%%%%%%%%%%%%%%%%%%%%%%%%%%%%%%%%%%%%%%%%%%%%%%
\newpage
\section{Proof of Theorem~\ref{thm:HARPG} (\algname{HARPG})}
\label{sec:stormhess_app}
%\section{Theory for \algname{HARPG}}
%%%%%%%%%%%%%%%%%%%%%%%%%%%%%%%%%%%%%%%%%%%%%%%%%%%%%%%%%%%%
%%%%%%%%%%%%%%%%%%%%%%%%%%%%%%%%%%%%%%%%%%%%%%%%%%%%%%%%%%%%
In this section, for the proof of Theorem~\ref{thm:HARPG} (for~\algname{HARPG}), we introduce the following additional useful shorthand notations:
\begin{align}
\label{eq:notations-harpg-1}
    \delta_t & \eqdef J^{\star} - \Exp{ J(\theta_t) }\,,\\
    V_t &\eqdef \Exp{\sqnorm{ d_t - \nabla J(\theta_t)}}\,,\label{eq:notations-harpg-2}\\
    \Lambda_t &\eqdef \delta_t + \lambda_{t-1} V_t\,,\label{eq:notations-harpg-3}\\
    R_t &\eqdef \Exp{\sqnorm{\theta_{t+1} - \theta_{t}}}\label{eq:notations-harpg-4}\,,
\end{align}
where~$(\theta_t), (d_t)$ are the sequences generated by~\algname{HARPG} and $(\lambda_t)$ is a sequence of positive scalars to be determined later. 

First, we state a detailed version of Theorem~\ref{thm:HARPG} before providing a complete proof. 

\begin{theorem}\label{thm:STORM_H}
Let Assumptions~\ref{hyp:parameterization-reg}, \ref{hyp:fisher-non-degenerate} and~\ref{hyp:tranf-compatib-fun-approx} hold.
    Set~$\gamma_t = \g_0 \eta_t^{\nfr{1}{2}} $, $\eta_t = \fr{5 }{t+5} $ with $\g_0 = \min\cb{ \fr{\eta_0^{\nfr{1}{2}}}{ 8 \sqrt{3} ( L_g + \sigma_g + D_h \g^{2 H} ) } , \fr{1}{ \sigma_g \sqrt{3\mu}   }  } $ and~$H = 2 \rb{1-\g}^{-1}{\log(T + 4)}$. Let~$\lambda_t = 4 \gamma_0 \eta_t^{-\nfr{1}{2}}$ . Then for every integer~$T \geq 1$, the output~$\theta_T$ of \algname{HARPG} (see Algorithm~\ref{alg:(N)-HARPG}) satisfies
    \begin{eqnarray}
         J^{\star} - \Exp{J(\theta_T) } &\leq& \cO \rb{\fr{\Lambda_0}{(T+4)^2} + \fr{ \sigma_g \sqrt{\mu}+ L_g + \sigma_h  }{(T+4)^{\nfr{1}{2}} }  + \frac{\lambda_0 D_g^2}{(T+4)^{\nfr{5}{2}}}} + \frac{\sqrt{2} \varepsilon^\prime}{\sqrt{\mu}} \notag,
    \end{eqnarray}
    where~$\sigma_g, \sigma_h, L_g$ are defined in Proposition~\ref{prop:stoch-grad-and-J} and Lemma~\ref{lem:2nd-order-smoothness} respectively, $D_g$ and $D_h$ are defined in Lemma~\ref{le:trunc_grad_hess}, $\varepsilon^{\prime}$ and $\mu$ are defined in Lemma~\ref{lem:relaxed-w-grad-dom}.
%\begin{eqnarray}
%\Lambda_{T} &\leq& \fr{\Lambda_0}{(T+1)^2} + 16 \sqrt{6} \cdot \fr{\sigma \sqrt{\mu} + L + \sigma_h }{\mu (T+1)^{\nfr{1}{2}} }  + \fr{2 \rb{\varepsilon^{\prime}}^2}{ \sigma \sqrt{3 \mu}} (T+1)^{\nfr{1}{2}} . 

%\delta_{T} &\leq& \cO\rb{  \fr{\Lambda_0}{(T+4)^2} + \fr{ \mu \lambda_0 D_g^2 + D_h }{ \mu (T+4)} + \fr{  \sigma_g \sqrt{\mu} + L_g + \sigma_h }{\mu (T+4)^{\nfr{1}{2}}}  + \fr{ \varepsilon^{\prime} }{\sqrt{2\mu} } } 
%\end{eqnarray}
\end{theorem}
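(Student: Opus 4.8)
The plan is to follow the four-step template of Appendix~\ref{sec:appendix_proof_sketch}, specialized to the \emph{un-normalized} update $\theta_{t+1}=\theta_t+\gamma_t d_t$. First I would invoke the ascent-like lemma for this update (the analogue of Lemma~\ref{le:NSGD_descent_trunc}, namely Lemma~\ref{le:SGD_descent}): by $L_g$-smoothness of $J$ (Proposition~\ref{prop:stoch-grad-and-J}), together with $\langle\nabla J(\theta_t),d_t\rangle\ge\tfrac12\sqnorm{\nabla J(\theta_t)}-\tfrac12\sqnorm{d_t-\nabla J(\theta_t)}$ and $\sqnorm{d_t}\le 2\sqnorm{\nabla J(\theta_t)}+2\sqnorm{d_t-\nabla J(\theta_t)}$, one obtains after taking expectations and adding $J^{\star}$ a recursion of the shape $\delta_{t+1}\le\delta_t-\tfrac{\gamma_t}{2}\Exp{\sqnorm{\nabla J(\theta_t)}}+\tfrac{\gamma_t}{2}V_t+\tfrac{L_g}{2}R_t+(\text{truncation})$, where the truncation is controlled by $D_g\g^H$ through Lemma~\ref{le:trunc_grad_hess}, and $R_t\le 2\gamma_t^2\rb{V_t+\Exp{\sqnorm{\nabla J(\theta_t)}}}$. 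Note that, unlike the normalized methods, this ascent inequality naturally carries the \emph{squared} gradient norm.

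The crux is the \algname{STORM}-type variance recursion for $V_t=\Exp{\sqnorm{d_t-\nabla J(\theta_t)}}$. Setting $\hat e_t\eqdef d_t-\nabla J_H(\theta_t)$ and rewriting the \algname{HARPG} update as $\hat e_t=(1-\eta_t)\bigl(\hat e_{t-1}+(v_t-\Exp{v_t\mid\cdot})\bigr)+\eta_t\bigl(g(\tau_t,\theta_t)-\nabla J_H(\theta_t)\bigr)$, I would use that the uniform draw $q_t\sim\mathcal U([0,1])$ makes $v_t=B(\hat\tau_t,\hat\theta_t)(\theta_t-\theta_{t-1})$ an \emph{unbiased} estimator of $\nabla J_H(\theta_t)-\nabla J_H(\theta_{t-1})$ (the fundamental theorem of calculus applied to $\nabla^2 J_H$). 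The two martingale increments are then uncorrelated with $\hat e_{t-1}$ and, because $\hat\tau_t$ and $\tau_t$ are independent samples, with each other; taking squared norms yields $V_t\le(1-\eta_t)^2 V_{t-1}+(1-\eta_t)^2\Exp{\sqnorm{v_t-\Exp{v_t\mid\cdot}}}+\eta_t^2\sigma_g^2+(\text{truncation})$. The Hessian-noise term is bounded by $\Exp{\sqnorm{v_t}}\le 2(\sigma_h^2+L_g^2)R_{t-1}$, combining $\Exp{\|B-\nabla^2 J_H\|^2}\le\sigma_h^2$ (Proposition~\ref{prop:stoch-grad-and-J}) with $\|\nabla^2 J_H\|\le L_g$. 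This is exactly what lets us avoid importance sampling and Assumption~\ref{hyp:lipschitz-hessian}.

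Next I would form the Lyapunov sequence $\Lambda_t=\delta_t+\lambda_{t-1}V_t$ with $\lambda_t=4\gamma_0\eta_t^{-\nfr{1}{2}}$ and add the two recursions. The coefficient of $\Exp{\sqnorm{\nabla J(\theta_t)}}$ is $-\tfrac{\gamma_t}{2}$ plus a term of order $\gamma_t^2\bigl(L_g+\lambda_t(\sigma_h^2+L_g^2)\bigr)$; since $\lambda_t\gamma_t=4\gamma_0^2$ is constant, the first cap on $\gamma_0$ (the factor $\tfrac{\eta_0^{1/2}}{8\sqrt3(L_g+\sigma_g+D_h\g^{2H})}$) makes this at most $-\tfrac{\gamma_t}{4}$. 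Simultaneously, the $V_t$-coefficient is $\le\lambda_{t-1}$ because $\lambda_{t-1}-\lambda_t(1-\eta_{t+1})^2\ge\Omega(\gamma_t)$ for $\eta_t=\tfrac{5}{t+5}$, which absorbs the $+\tfrac{\gamma_t}{2}V_t$ and the cross terms. This gives the clean one-step bound $\Lambda_{t+1}\le\Lambda_t-\tfrac{\gamma_t}{4}\Exp{\sqnorm{\nabla J(\theta_t)}}+\lambda_t\eta_{t+1}^2\sigma_g^2+(\text{truncation})$; along the way I would also carry the auxiliary invariant $\lambda_{t-1}V_t\le\delta_t$ (so $\delta_t\ge\tfrac12\Lambda_t$), which the strong variance reduction sustains inductively.

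Finally I would insert the weak gradient dominance of Lemma~\ref{lem:relaxed-w-grad-dom}. Because the ascent produces a square while the dominance concerns the norm, I would square it: on the event $\delta_t\ge\tfrac{\sqrt2\,\varepsilon^{\prime}}{\sqrt\mu}$ one has $\sqrt{2\mu}\,\delta_t-\varepsilon^{\prime}\ge\tfrac12\sqrt{2\mu}\,\delta_t$, whence by Jensen $\Exp{\sqnorm{\nabla J(\theta_t)}}\ge(\Exp{\norm{\nabla J(\theta_t)}})^2\ge\tfrac{\mu}{2}\delta_t^2$, turning the step into the quadratic inequality $\Lambda_{t+1}\le\Lambda_t-\tfrac{\mu\gamma_t}{8}\delta_t^2+\lambda_t\eta_{t+1}^2\sigma_g^2+(\text{truncation})$. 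In the regime where $\delta_t$ still exceeds the target rate $\Theta\bigl(1/(\mu\gamma_0\sqrt t)\bigr)$, the quadratic gain $\tfrac{\mu\gamma_t}{8}\delta_t^2$ exceeds $\tfrac{4}{t+4}\delta_t$, and using $\delta_t\ge\tfrac12\Lambda_t$ this reduces to the \emph{linear} form $\Lambda_{t+1}\le\bigl(1-\tfrac{2}{t+4}\bigr)\Lambda_t+E_t$ with $E_t=\cO\bigl(\lambda_t\eta_{t+1}^2\sigma_g^2+(\text{truncation})\bigr)$; the technical recursion lemma of Section~\ref{sec:tech_lemmas} (the analogue of Lemma~\ref{le:aux_rec0} with $\tau=2$) then resolves this into $\tfrac{\Lambda_0}{(T+4)^2}+\tfrac{\sigma_g\sqrt\mu+L_g+\sigma_h}{(T+4)^{1/2}}$, while iterates already below the target rate trivially obey the bound and iterates with $\delta_t\le\tfrac{\sqrt2\,\varepsilon^{\prime}}{\sqrt\mu}$ realize the stated bias floor. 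The second cap $\gamma_0\le\tfrac{1}{\sigma_g\sqrt{3\mu}}$ calibrates the noise $E_t$ against this linear rate, and the choice $H=2(1-\g)^{-1}\log(T+4)$ forces $\g^H=\cO((T+4)^{-2})$, collapsing all truncation contributions into $\tfrac{\lambda_0 D_g^2}{(T+4)^{5/2}}$. I expect the main obstacle to be precisely the Step~3–4 bookkeeping: the variance recursion feeds $V_t$ back into $\Exp{\sqnorm{\nabla J}}$ through $R_{t-1}$, so $\lambda_t$ and $\gamma_0$ must be tuned so the $V_t$-channel telescopes while a strictly negative gradient term survives, and the square-versus-first-power mismatch in the dominance must be absorbed by the two-regime reduction to a linear recursion rather than being solved as a genuinely quadratic one.
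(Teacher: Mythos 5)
Your overall architecture is the same as the paper's: the unnormalized ascent lemma with the squared form of weak gradient dominance (Lemma~\ref{le:SGD_descent}), the \algname{STORM}-type variance recursion exploiting unbiasedness of the Hessian-vector correction (Lemma~\ref{le:g_upd_rec_Hess_Aid_trunc}), the Lyapunov sequence $\Lambda_t=\delta_t+\lambda_{t-1}V_t$ with the same weights $\lambda_t=4\gamma_0\eta_t^{-\nfr{1}{2}}$, and the recursion Lemma~\ref{le:aux_rec0}. Your bookkeeping variations in Steps 1--2 are harmless: substituting $R_t\le 2\gamma_t^2\rb{V_t+\Exp{\sqnorm{\nabla J(\theta_t)}}}$ instead of carrying the negative term $-\tfrac{1}{4\gamma_t}R_t$ (which is how the paper cancels the $R_t$-terms of the variance recursion), and using conditional independence of $\tau_t$ and $(\hat\tau_t,q_t)$ in place of the paper's Young inequality, both lead to the same step-size conditions up to constants.

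The genuine gap is in Steps 3--4, exactly at the point you flag as the main obstacle. Your reduction of $\Lambda_{t+1}\le\Lambda_t-\tfrac{\mu\gamma_t}{8}\delta_t^2+E_t$ to a linear recursion rests on the auxiliary invariant $\lambda_{t-1}V_t\le\delta_t$ (equivalently $\delta_t\ge\tfrac12\Lambda_t$), which you assert is ``sustained inductively by the strong variance reduction''. This invariant is false in general: at $t=0$, or whenever an iterate happens to be near-optimal, one has $\delta_t\approx 0$ while $V_t$ can be of order $\sigma_g^2$, and nothing in the dynamics re-establishes the inequality afterwards (indeed the theorem's bound must carry $\Lambda_0$, which can be dominated by $V_0$ even when $\delta_0=0$). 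Moreover, the fallback ``iterates already below the target rate trivially obey the bound'' is not trivial: the theorem bounds $\delta_T$ at the \emph{final} time, so an iterate that dips below the target at some $t<T$ must be shown to stay small afterwards, and that is precisely where the $\delta_t$-versus-$\Lambda_t$ mismatch bites, since a small $\delta_t$ does not make $\Lambda_t$ small.

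The paper's Lemma~\ref{le:key_recursion_Hess_Aid} needs neither the invariant nor a regime split beyond the single dichotomy $\delta_t\lessgtr\sqrt{2}\varepsilon'/\sqrt{\mu}$ coming from Lemma~\ref{lem:relaxed-w-grad-dom}. It substitutes $\delta_t=\Lambda_t-\lambda_{t-1}V_t$ \emph{inside} the quadratic term, expands, and eliminates $V_t$ with the elementary bound $-ax^2+cx\le \tfrac{c^2}{4a}$ applied in the variable $x=V_t$ with $a=\tfrac{\gamma_t\mu\lambda_{t-1}^2}{4}$ and $c=\tfrac{\gamma_t}{2}\rb{\mu\lambda_{t-1}\Lambda_t-\tfrac{5}{3}}$. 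The $\Lambda_t^2$ contributions then cancel exactly, leaving the genuinely linear contraction
\begin{equation*}
\Lambda_{t+1}\;\le\;\rb{1-\fr{\eta_t}{5}}\Lambda_t+\rb{\fr{1}{\mu\gamma_0}+2\lambda_0\sigma_g^2}\eta_t^{\nfr{3}{2}}+24\lambda_0\eta_t^{-\nfr{1}{2}}D_g^2\g^{2H},
\end{equation*}
to which Lemma~\ref{le:aux_rec0} applies directly. If you replace your invariant-plus-two-regime step by this completing-the-square substitution, the rest of your outline goes through and recovers the stated rate.
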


Our first step in the proof is an ascent-like lemma which is different from the previously used similar results of Lemma~\ref{le:NSGD_descent_trunc} and Lemma~\ref{lem:descent-with-grad-dom} since the \algname{HARPG} algorithm does not use a normalized direction to update the policy parameter. 

\begin{lemma}\label{le:SGD_descent}
Let Assumptions~\ref{hyp:parameterization-reg}, \ref{hyp:fisher-non-degenerate} and \ref{hyp:tranf-compatib-fun-approx} be satisfied. 
Let~$(d_t)$ be any sequence of vectors in~$\R^d$ and consider the sequence~$(\theta_t)$ defined by:
%be updated via
$$
\theta_{t+1} = \theta_t + \g_t d_t\,,
$$
where~$\theta_0 \in \R^d$\,.
Then for every integer~$t \geq 1$, for any positive step-size~$\g_t$ s.t. $\g_t \leq \fr{1}{ 2 L_g}$, we have
\begin{eqnarray}\label{eq:SGD_descent1}
%-J(\theta_{t+1}) \leq  - J(\theta_{t}) - \fr{\sqrt{2 \mu} \g_t}{3}  \rb{ J^{\star} - J(\theta_{t})  } + \fr{8 \g_t }{3} \norm{ d_t - \nabla J_{H}(\theta_{t}) } + \fr{L_g \g_t^2}{2} + \fr{\varepsilon^{\prime}\g_t }{3} + \fr{4 \g_t D_g}{3 } \g^{{H}}.
    -J(\theta_{t+1}) &\leq & - J(\theta_{t}) - \fr{\mu \g_t}{2} \rb{ J^{\star} - J(\theta_{t})  }^2 + \g_t \sqnorm{d_t - \nabla J_{H}(\theta_{t})} - \fr{1}{4\g_t} \sqnorm{\theta_{t+1} - \theta_{t}} + \fr{\rb{\varepsilon^{\prime}}^2 \g_t }{2} + {\g_t} D_g^2 \g^{2H} . \notag \\
\end{eqnarray}
Furthermore, we also have
\begin{eqnarray}\label{eq:SGD_descent2}
    \delta_{t+1} - \delta_{t} \leq - \fr{\g_t \mu}{2} \delta_t^2 + {\gamma_t} V_t - \fr{1}{4 \gamma_t} R_t + \fr{\rb{\varepsilon^{\prime}}^2 \g_t}{2} + {\g_t} D_g^2 \g^{2H}\,,
\end{eqnarray}
%\begin{eqnarray}
%    \delta_{t+1} - \delta_{t} \leq - \fr{\g_t \mu}{4} \delta_t^2 + \fr{3\gamma_t}{4} V_t - \fr{1}{8 \gamma_t} R_t
%\end{eqnarray}
where we use the same notations as in~\eqref{eq:notations-harpg-1} to~\eqref{eq:notations-harpg-4}.
\end{lemma}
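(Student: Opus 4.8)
The plan is to start from the $L_g$-smoothness of $J$ (Proposition~\ref{prop:stoch-grad-and-J}-(\ref{prop-i})), applied to the unnormalized update $\theta_{t+1} = \theta_t + \g_t d_t$, which gives
\begin{equation*}
-J(\theta_{t+1}) \leq -J(\theta_t) - \g_t \langle \nabla J(\theta_t), d_t\rangle + \fr{L_g \g_t^2}{2}\sqnorm{d_t}\,.
\end{equation*}
First I would rewrite the inner product via the polarization identity $-\langle \nabla J(\theta_t), d_t\rangle = \fr12 \sqnorm{\nabla J(\theta_t) - d_t} - \fr12\sqnorm{\nabla J(\theta_t)} - \fr12\sqnorm{d_t}$, rather than by a plain Young's inequality, precisely because the surviving $-\fr{\g_t}{2}\sqnorm{d_t}$ term is what produces the descent contribution $-\fr{1}{4\g_t}\sqnorm{\theta_{t+1}-\theta_t}$. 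Collecting the two $\sqnorm{d_t}$ contributions yields $-\fr{\g_t}{2}(1 - L_g\g_t)\sqnorm{d_t}$, and here I would invoke the step-size restriction $\g_t \leq \fr{1}{2L_g}$ so that $1 - L_g\g_t \geq \fr12$; since $\sqnorm{\theta_{t+1}-\theta_t} = \g_t^2\sqnorm{d_t}$, this term is bounded by $-\fr{1}{4\g_t}\sqnorm{\theta_{t+1}-\theta_t}$.

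Second, I would convert the surviving $-\fr{\g_t}{2}\sqnorm{\nabla J(\theta_t)}$ into the quadratic gap term using the relaxed weak gradient dominance (Lemma~\ref{lem:relaxed-w-grad-dom}). The main obstacle is that this inequality is stated in non-squared form $\varepsilon' + \norm{\nabla J(\theta)} \geq \sqrt{2\mu}\,(J^* - J(\theta))$, so I would square it and use $(a+b)^2 \leq 2a^2 + 2b^2$ to obtain $\mu (J^* - J(\theta_t))^2 \leq \sqnorm{\nabla J(\theta_t)} + (\varepsilon')^2$, hence $-\fr{\g_t}{2}\sqnorm{\nabla J(\theta_t)} \leq -\fr{\mu\g_t}{2}(J^* - J(\theta_t))^2 + \fr{(\varepsilon')^2\g_t}{2}$. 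At this stage I obtain the intermediate bound
\begin{equation*}
-J(\theta_{t+1}) \leq -J(\theta_t) - \fr{\mu\g_t}{2}(J^* - J(\theta_t))^2 + \fr{\g_t}{2}\sqnorm{\nabla J(\theta_t) - d_t} - \fr{1}{4\g_t}\sqnorm{\theta_{t+1}-\theta_t} + \fr{(\varepsilon')^2\g_t}{2}\,.
\end{equation*}

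From this intermediate bound the two claimed inequalities follow by two different routings of the variance term. For~\eqref{eq:SGD_descent1}, I would pass from the full gradient to the truncated one using $\sqnorm{\nabla J(\theta_t) - d_t} \leq 2\sqnorm{\nabla J_H(\theta_t) - d_t} + 2\sqnorm{\nabla J(\theta_t) - \nabla J_H(\theta_t)}$ and control the second summand by $D_g^2\g^{2H}$ via Lemma~\ref{le:trunc_grad_hess}; the factor $2$ turns the coefficient $\fr{\g_t}{2}$ into $\g_t$ and creates the $\g_t D_g^2\g^{2H}$ term, giving~\eqref{eq:SGD_descent1} verbatim. For~\eqref{eq:SGD_descent2}, I would instead take expectations in the intermediate bound directly, recognizing $\Exp{\sqnorm{\nabla J(\theta_t) - d_t}} = V_t$ and $\Exp{\sqnorm{\theta_{t+1}-\theta_t}} = R_t$, and then apply Jensen's inequality $\Exp{(J^* - J(\theta_t))^2} \geq (J^* - \Exp{J(\theta_t)})^2 = \delta_t^2$ to replace the squared gap. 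This already yields the claim with the stronger constant $\fr{\g_t}{2}V_t$ and no truncation term, so~\eqref{eq:SGD_descent2} follows \emph{a fortiori} since $\fr{\g_t}{2}V_t \leq \g_t V_t$ and $\g_t D_g^2\g^{2H}\geq 0$. The only genuinely delicate point is the squaring of the non-squared gradient-dominance inequality combined with the Jensen step, which is what converts the linear $\norm{\nabla J(\theta_t)}$ structure into the quadratic $\delta_t^2$ recursion needed in the subsequent convergence analysis.
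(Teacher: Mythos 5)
Your proof is correct and follows essentially the same route as the paper's: smoothness combined with the polarization identity, the step-size condition $\g_t \le \tfrac{1}{2L_g}$ to extract the $-\tfrac{1}{4\g_t}\sqnorm{\theta_{t+1}-\theta_t}$ term, the squared form of the relaxed weak gradient dominance from Lemma~\ref{lem:relaxed-w-grad-dom}, and the truncation bound of Lemma~\ref{le:trunc_grad_hess} for~\eqref{eq:SGD_descent1}. The only (immaterial) deviation is that for~\eqref{eq:SGD_descent2} you take expectations and apply Jensen's inequality to the intermediate bound before the truncation splitting, whereas the paper takes expectations of~\eqref{eq:SGD_descent1} directly; your version yields a slightly stronger constant and implies the stated inequality a fortiori.
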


\begin{proof}
Using the smoothness of~$J$ (Proposition~\ref{prop:stoch-grad-and-J}) and the update rule of the sequence~$(\theta_t)$, we get
\begin{eqnarray}
    -J(\theta_{t+1}) &\leq& - J(\theta_{t}) - \langle \nabla J(\theta_{t}), \theta_{t+1} - \theta_t \rangle + \fr{L_g}{2} \sqnorm{ \theta_{t+1} - \theta_t } \notag \\
       &= & - J(\theta_{t}) -  \g_t \left( \fr{1}{2} \sqnorm{\nabla J(\theta_{t}) } +  \fr{1}{2} \sqnorm{d_t} -  \fr{1}{2} \sqnorm{d_t - \nabla J(\theta_{t})} \right) + \fr{L_g}{2} \sqnorm{ \theta_{t+1} - \theta_t} \notag\\
     &= & - J(\theta_{t}) -  \fr{\g_t}{2} \sqnorm{ \nabla J(\theta_{t}) } + \fr{\g_t}{2}\sqnorm{d_t - \nabla J(\theta_{t}) } - \rb{ \fr{1}{2\g_t} - \fr{L_g}{2} } \sqnorm{ \theta_{t+1} - \theta_t } \notag\\
     &\leq & - J(\theta_{t}) -  \fr{\g_t}{2} \sqnorm{ \nabla J(\theta_{t}) } + {\g_t}\sqnorm{d_t - \nabla J_H(\theta_{t}) } - \rb{ \fr{1}{2\g_t} - \fr{L_g}{2} } \sqnorm{ \theta_{t+1} - \theta_t } + {\g_t}\sqnorm{\nabla J(\theta_{t}) - \nabla J_H(\theta_{t}) } \notag \\
     &\leq & - J(\theta_{t}) -  \fr{\g_t}{2} \sqnorm{ \nabla J(\theta_{t}) } + {\g_t}\sqnorm{d_t - \nabla J_H(\theta_{t}) } - \rb{ \fr{1}{2\g_t} - \fr{L_g}{2} } \sqnorm{ \theta_{t+1} - \theta_t } + {\g_t} D_g^2 \g^{2H} , \notag
\end{eqnarray}
where the last inequality holds by Lemma~\ref{le:trunc_grad_hess}. By Lemma~\ref{lem:relaxed-w-grad-dom}  we have for any $\theta \in \R^d$
$$
\rb{\varepsilon^{\prime}}^2 + \sqnorm{\nabla J(\theta)} \geq \fr{\rb{\varepsilon^{\prime} + \|\nabla J(\theta)\|}^2}{2} \geq \mu (J(\theta) - J^{\star})^2. 
$$
Using the above inequality and the assumption on the step-size, we arrive at \eqref{eq:SGD_descent1}.
Adding $J^{\star}$ to both sides, taking expectation and using Jensen's inequality for $x \mapsto x^2 $, we obtain \eqref{eq:SGD_descent2}.
\end{proof}

Given \eqref{eq:SGD_descent2} in Lemma~\ref{le:SGD_descent}, we now provide a recursive bound for the sequence~$(V_t)$ as defined in~\eqref{eq:notations-harpg-2}.  

\begin{lemma}\label{le:g_upd_rec_Hess_Aid_trunc}
Let Assumption~\ref{hyp:parameterization-reg} be satisfied and suppose~$\eta_t \le 1$. Let the sequence~$(d_t)$ be updated for every integer~$t \geq 1$ via
%$$d_{t+1} = (1 - \eta_t) \rb{ d_t + \nabla^2 f_{\hat{\xi}_{t+1}}(\hat{x}_{t+1}) (\theta_{t+1} - \theta_t) } + \eta_t \nabla f_{\xi_{t+1}}(x_{t+1}).$$
$$
d_t = (1 - \eta_t) \rb{ d_{t-1} +  B(\hat{\tau}_t, \hat{\theta}_{t})(\theta_t - \theta_{t-1}) } + \eta_t g(\tau_t, \theta_t)\,,
$$
as in Algorithm~\ref{alg:(N)-HARPG}.
Then, for every integer~$t$, the sequence~$(V_t)$ satisfies the following recursion 
\begin{eqnarray}
    V_{t+1} - V_{t} \leq - \eta_{t+1} V_{t} + 2\sigma_g^2 \eta_{t+1}^2  + 12 (2 L_g^2 + \sigma_h^2 + D_h^2 \g^{2 H})  \Exp{\sqnorm{ \theta_{t+1} - \theta_{t} } } + 24 D_g^2 \g^{2 {H}} .  \notag 
\end{eqnarray}
\end{lemma}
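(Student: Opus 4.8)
The plan is to establish the claimed \algname{STORM}-type recursion by unfolding the one-step error and separating the conditionally mean-zero stochastic fluctuations from the deterministic truncation biases. Writing $\epsilon_t \eqdef d_t - \nabla J(\theta_t)$ so that $V_t = \Exp{\sqnorm{\epsilon_t}}$, and using $d_{t+1} = (1-\eta_{t+1})\rb{d_t + v_{t+1}} + \eta_{t+1}g(\tau_{t+1},\theta_{t+1})$ together with $d_t = \epsilon_t + \nabla J(\theta_t)$, I would first derive
\begin{equation*}
\epsilon_{t+1} = (1-\eta_{t+1})\epsilon_t + (1-\eta_{t+1})\rb{ v_{t+1} - \rb{\nabla J(\theta_{t+1}) - \nabla J(\theta_t)} } + \eta_{t+1}\rb{ g(\tau_{t+1},\theta_{t+1}) - \nabla J(\theta_{t+1}) }.
\end{equation*}
(Equivalently, and more cleanly, one may run the same computation with the truncated gradient $\nabla J_H$ in place of $\nabla J$, which removes the biases from the recursion, and pay the swap $\norm{\nabla J_H - \nabla J}\le D_g\g^H$ separately.)

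The crucial probabilistic input is conditional unbiasedness. Let $\mathcal{F}_t$ collect all randomness determining $\theta_0,\dots,\theta_{t+1}$ and $d_0,\dots,d_t$. Since $\tau_{t+1}\sim p(\cdot\,|\,\pi_{\theta_{t+1}})$ we have $\Exp{g(\tau_{t+1},\theta_{t+1})\,|\,\mathcal{F}_t} = \nabla J_H(\theta_{t+1})$; and because $\hat\theta_{t+1} = q_{t+1}\theta_{t+1} + (1-q_{t+1})\theta_t$ with $q_{t+1}\sim\mathcal{U}([0,1])$, the fundamental theorem of calculus yields $\Exp{v_{t+1}\,|\,\mathcal{F}_t} = \int_0^1 \nabla^2 J_H\rb{q\theta_{t+1}+(1-q)\theta_t}\,\mathrm{d}q\,\rb{\theta_{t+1}-\theta_t} = \nabla J_H(\theta_{t+1}) - \nabla J_H(\theta_t)$. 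I would then split each correction into a part that is mean-zero against $\nabla J_H$ and a truncation bias: the gradient bias is $\nabla J_H(\theta_{t+1}) - \nabla J(\theta_{t+1})$, of norm at most $D_g\g^H$, and the Hessian-vector bias is $\int_0^1 \rb{\nabla^2 J_H - \nabla^2 J}\rb{q\theta_{t+1}+(1-q)\theta_t}\,\mathrm{d}q\,\rb{\theta_{t+1}-\theta_t}$, of norm at most $D_h\g^H\norm{\theta_{t+1}-\theta_t}$; both bounds come from Lemma~\ref{le:trunc_grad_hess}.

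Next I would take $\Exp{\sqnorm{\epsilon_{t+1}}\,|\,\mathcal{F}_t}$. The two fresh fluctuations stem from the \emph{independent} trajectories $\hat\tau_{t+1}$ and $\tau_{t+1}$, so they are uncorrelated given $\mathcal{F}_t$ and orthogonal to the $\mathcal{F}_t$-measurable terms; their contributions therefore add as variances. The gradient fluctuation is bounded by $\sigma_g^2$ (Proposition~\ref{prop:stoch-grad-and-J}(\ref{prop-iii})). For the Hessian-vector fluctuation I would apply the law of total variance conditionally on $q_{t+1}$: the trajectory noise contributes at most $\sigma_h^2\sqnorm{\theta_{t+1}-\theta_t}$ (Proposition~\ref{prop:stoch-grad-and-J}(\ref{prop-iv})), while the $q_{t+1}$-variability contributes at most $\sqnorm{\nabla^2 J_H(\hat\theta_{t+1})}\cdot\sqnorm{\theta_{t+1}-\theta_t}\le (L_g + D_h\g^H)^2\sqnorm{\theta_{t+1}-\theta_t}$, using $\norm{\nabla^2 J_H}\le L_g + D_h\g^H$ (from $L_g$-smoothness of $J$ and Lemma~\ref{le:trunc_grad_hess}). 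I would bound the contraction term using $(1-\eta_{t+1})^2\le 1-\eta_{t+1}$, handle the cross-terms between $\epsilon_t$ and the bias terms by Young's inequality, and then take total expectation; since $(L_g+D_h\g^H)^2\le 2L_g^2 + 2D_h^2\g^{2H}$, the displacement-proportional terms collapse into the coefficient $2L_g^2+\sigma_h^2+D_h^2\g^{2H}$ (up to the stated absolute factor $12$) while the purely $\g^{2H}$ terms collapse into $24D_g^2\g^{2H}$, giving the claimed recursion with $R_t=\Exp{\sqnorm{\theta_{t+1}-\theta_t}}$.

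The main obstacle is the careful bookkeeping of the truncation biases and their interaction with the contraction factor. The gradient bias enters already scaled by $\eta_{t+1}$ and hence contributes only an $\cO(\g^{2H})$ additive constant; the delicate object is the Hessian-truncation bias, which enters with coefficient $(1-\eta_{t+1})\approx 1$. One keeps it paired with the displacement $\norm{\theta_{t+1}-\theta_t}$ via its bound $D_h\g^H\norm{\theta_{t+1}-\theta_t}$, so that, together with the genuine Hessian-vector variance (which also scales with the displacement), it feeds the $R_t$-coefficient rather than degrading the $-\eta_{t+1}V_t$ contraction. Since $H$ is eventually taken logarithmically large (as in Theorem~\ref{thm:HARPG}) so that $\g^{2H}$ is polynomially small, these truncation remainders are lower-order and are absorbed into the stated $24D_g^2\g^{2H}$ and $D_h^2\g^{2H}$ contributions; verifying this absorption cleanly is the only genuinely fiddly step, the remainder being the standard orthogonal decomposition of the \algname{STORM} estimator.
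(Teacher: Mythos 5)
Your core ingredients are correct, and your parenthetical ``cleaner'' variant---running the entire computation against the truncated gradient $\nabla J_H$---is precisely the paper's proof. There one sets $\cV_t \eqdef g(\tau_t,\theta_t) - \nabla J_H(\theta_t)$ and $\cW_t \eqdef \nabla J_H(\theta_{t-1}) - \nabla J_H(\theta_t) + B(\hat\tau_t,\hat\theta_t)(\theta_t-\theta_{t-1})$, verifies via the uniform sampling of $q_t$ and the fundamental theorem of calculus that both are (conditionally) mean-zero, so that in $d_t - \nabla J_H(\theta_t) = (1-\eta_t)\rb{d_{t-1}-\nabla J_H(\theta_{t-1})} + \eta_t\cV_t + (1-\eta_t)\cW_t$ the cross term with the past error vanishes identically, and then bounds $\Exp{\sqnorm{\cW_t}}$ by a six-way splitting in which the truncation errors $D_g\g^H$, $D_h\g^H$ of Lemma~\ref{le:trunc_grad_hess}, the smoothness constant $L_g$, and the Hessian-estimator variance $\sigma_h^2$ of Proposition~\ref{prop:stoch-grad-and-J} all appear \emph{inside} the variance bound of a mean-zero quantity; your law-of-total-variance bookkeeping over $q_{t+1}$ yields the same coefficients and is a fine substitute for that splitting.

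The genuine problem is with your \emph{primary} route, where you define $\epsilon_t = d_t - \nabla J(\theta_t)$, split off the truncation biases, and handle ``the cross-terms between $\epsilon_t$ and the bias terms by Young's inequality.'' The Hessian-truncation bias $b_t \eqdef (1-\eta_{t+1})\int_0^1 \rb{\nabla^2 J_H - \nabla^2 J}\rb{q\theta_{t+1}+(1-q)\theta_t}\,dq\,\rb{\theta_{t+1}-\theta_t}$ is $\mathcal{F}_t$-measurable, so its cross term $2\lin{(1-\eta_{t+1})\epsilon_t, b_t}$ does \emph{not} vanish, and Young's inequality cannot make it feed the $R_t$-coefficient with an absolute constant: to keep the contraction at $1-\eta_{t+1}$ you must take the Young weight $s \le \eta_{t+1}/(1-\eta_{t+1})$, since $(1-\eta_{t+1})^2(1+s)\le 1-\eta_{t+1}$ forces this, and then the bias picks up the factor $s^{-1}\approx \eta_{t+1}^{-1}$, producing a term of order $\eta_{t+1}^{-1}D_h^2\g^{2H}\Exp{\sqnorm{\theta_{t+1}-\theta_t}}$. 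Because $\eta_{t+1}^{-1}$ grows linearly in $t$ under the step-size schedules of Theorems~\ref{thm:HARPG} and~\ref{thm:N-HARPG}, this is not dominated by $12\,D_h^2\g^{2H}R_t$; the alternative weighting instead degrades the contraction by an additive $\cO(D_h\g^H)\,V_t$. Either way you prove a strictly weaker inequality than the one stated (it would still suffice downstream since $\g^{2H}$ is polynomially small in $T$, but it is not this lemma). The repair is exactly your parenthetical: define the error against $\nabla J_H$---which is also the quantity the paper uses downstream, e.g.\ in~\eqref{eq:SGD_descent1}---so that the whole one-step correction is conditionally mean-zero, no bias ever meets $\epsilon_t$, and every truncation term enters only through $\Exp{\sqnorm{\cW_t}}$.
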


\begin{proof}
Define $\cV_t \eqdef g(\tau_t, \theta_t)  - \nabla J_{H}(\theta_{t})$, $\cW_t \eqdef \nabla J_{H}(\theta_{t-1}) - \nabla J_{H}(\theta_{t}) + B(\hat{\tau}_t, \hat{\theta}_{t})(\theta_t - \theta_{t-1})$ .
Notice that $\Exp{\cV_t} = 0$ and~$\Exp{\sqnorm{\cV_t}} \leq \sigma_g^2$. Moreover, we also have~$\Exp{\cW_t} = 0$ since 
\begin{eqnarray}
\Exp{\cW_t} & = & \Exp{ \nabla J_{H}(\theta_{t-1}) - \nabla J_{H}(\theta_{t}) + B(\hat{\tau}_t, \hat{\theta}_{t})(\theta_t - \theta_{t-1}) } \notag \\
& = & \Exp{ \Exp{ \nabla J_{H}(\theta_{t-1}) - \nabla J_{H}(\theta_{t}) + B(\hat{\tau}_t, \hat{\theta}_{t})(\theta_t - \theta_{t-1}) \mid \theta_{t-1}, \theta_{t}, \hat{\theta}_{t} }  }\notag \\
& = & \Exp{ \nabla J_{H}(\theta_{t-1}) - \nabla J_{H}(\theta_{t}) + \nabla^2 J_{H}(\hat{\theta}_{t}) ( \theta_{t} - \theta_{t-1} )  } \notag \\
& = & \Exp{ \nabla J_{H}(\theta_{t-1}) - \nabla J_{H}(\theta_{t}) } + \Exp{ \int_0^1 \nabla^2 J_{H}( q \theta_{t} + (1 - q) \theta_{t-1} ) (\theta_{t} - \theta_{t-1} ) dq } = 0. \notag
\end{eqnarray}

Furthermore, we can control the variance of the sequence~$(\cW_t)$ as follows: 
\begin{eqnarray}
\Exp{\sqnorm{ \cW_t } } & = & \Exp{ \sqnorm{  \nabla J_{H}(\theta_{t-1}) - \nabla J_{H}(\theta_{t}) + B(\hat{\tau}_t, \hat{\theta}_{t})(\theta_t - \theta_{t-1}) } } \notag \\
& \leq & 6\, \Exp{ \sqnorm{ \nabla J_H(\theta_{t-1}) - \nabla J(\theta_{t-1}) } } 
+  6 \, \Exp{ \sqnorm{ \nabla J(\theta_{t-1}) - \nabla J(\theta_{t}) } } \notag \\
&& + 6\, \Exp{ \sqnorm{ \nabla J(\theta_{t}) - \nabla J_H(\theta_{t}) } } 
+ 6\, \Exp{ \sqnorm{ \rb{B(\hat{\tau}_t, \hat{\theta}_{t}) -  \nabla^2 J_H(\hat{\theta}_{t}) } (\theta_t - \theta_{t-1})  } }  \notag \\
&& + 6\, \Exp{ \sqnorm{ \rb{ \nabla^2 J_{H} (\hat{\theta}_{t}) - \nabla^2 J (\hat{\theta}_{t}) } (\theta_t - \theta_{t-1}) } }
+  6\, \Exp{\sqnorm{ \nabla^2 J (\hat{\theta}_{t}) (\theta_t - \theta_{t-1}) } }  \notag \\
& \leq & 2 (6 L_g^2 + 3 \sigma_h^2) \Exp{\sqnorm{ \theta_t - \theta_{t-1} } } + 12 D_g^2 \g^{2 {H}} + 6 D_h^2 \g^{2 H} \Exp{\sqnorm{\theta_t - \theta_{t-1}}},
\end{eqnarray}
% \Exp{ \sqnorm{ \nabla J(\theta_{t-1}) - \nabla J(\theta_{t}) } } + 6 \Exp{ \sqnorm{ \rb{ \nabla^2 J(\hat{\theta}_{t}) - B(\hat{\tau}_t, \hat{\theta}_{t} } (\theta_t - \theta_{t-1})  } }  \notag \\
% && \qquad + 6 \Exp{\sqnorm{ \nabla^2 J (\hat{\theta}_{t}) (\theta_t - \theta_{t-1}) } }  + 6 \Exp{ \sqnorm{ \nabla J_H(\theta_{t+1}) - \nabla J(\theta_{t+1}) } }  \notag \\
% && \qquad + 6 \Exp{ \sqnorm{ \nabla J_H(\theta_{t}) - \nabla J(\theta_{t}) } } + 6 \Exp{ \sqnorm{ \rb{ \nabla^2 J_{H} (\hat{\theta}_{t}) - \nabla^2 J (\hat{\theta}_{t}) } (\theta_t - \theta_{t-1}) } }\notag \\
% & \leq & 2 (6 L_g^2 + 3 \sigma_h^2) \Exp{\sqnorm{ \theta_t - \theta_{t-1} } } + 12 D_g^2 \g^{2 {H}} + 6 D_h^2 \g^{2 H} \Exp{\sqnorm{\theta_t - \theta_{t-1}}} ,
% \end{eqnarray}
where the last inequality stems from using the smoothness of the function~$J$, the boundedness of the hessian estimate variance (see Proposition~\ref{prop:stoch-grad-and-J} for both results) and Lemma~\ref{le:trunc_grad_hess} (to control truncation errors). 

We now rewrite our quantity of interest~$d_t - \nabla J_H(\theta_t)$ using the update rule of the sequence~$(d_t)$ together with the definitions of the  sequences~$(\cV_t)$ and~$(\cW_t)$ that we introduced for the purpose of this proof. We have  
\begin{eqnarray}
    d_{t} - \nabla J_H(\theta_{t})  &= & (1 - \eta_t) \rb{ d_{t-1} +  B(\hat{\tau}_t, \hat{\theta}_{t})(\theta_t - \theta_{t-1}) } + \eta_t g(\tau_t, \theta_t)  - \nabla J_H(\theta_{t}) \notag \\
    &= & (1 - \eta_t)   d_{t-1}   + \eta_t g(\tau_t, \theta_t)
    + (1 - \eta_t)  B(\hat{\tau}_t, \hat{\theta}_{t})(\theta_t - \theta_{t-1}) - \nabla J_H(\theta_{t})  \notag \\
    &= & (1 - \eta_t)  \rb{ d_{t-1} -  \nabla J_H(\theta_{t-1}) }   + \eta_t g(\tau_t, \theta_t) \notag \\
    && \qquad +\, (1 - \eta_t)  B(\hat{\tau}_t, \hat{\theta}_{t})(\theta_t - \theta_{t-1}) -  \nabla J_H(\theta_{t}) + (1 - \eta_t) \nabla J_H(\theta_{t-1}) \notag \\
    &= & (1 - \eta_t)  \rb{ d_{t-1} -  \nabla J_H(\theta_{t-1}) }   + \eta_t \rb{ g(\tau_t, \theta_t) -  \nabla J_H(\theta_{t}) } \notag \\
    && \qquad +\, (1 - \eta_t)  \rb{  \nabla J_H(\theta_{t-1}) -   \nabla J_H(\theta_{t}) +  B(\hat{\tau}_t, \hat{\theta}_{t})(\theta_t - \theta_{t-1}) }   \notag \\
    &= & (1 - \eta_t) \rb{ d_{t-1} -  \nabla J_H(\theta_{t-1}) } + \eta_t \cV_{t} + (1 - \eta_t) \cW_t    \notag.
\end{eqnarray}

Using the above decomposition, we can derive the desired recursive upper bound on the sequence~$(V_t)$: 
\begin{eqnarray}
    V_{t} &=& \Exp{\sqnorm{d_{t} - \nabla J_H(\theta_{t})  } } \notag \\
    & = & \Exp{\sqnorm{ (1 - \eta_t) \rb{ d_{t-1} -  \nabla J_H(\theta_{t-1}) } + \eta_t \cV_{t} + (1 - \eta_t) \cW_t }  } \notag \\
    & = &  (1 - \eta_t)^2 \Exp{ \sqnorm{ d_{t-1} -  \nabla J_H(\theta_{t-1}) }  } + \Exp{ \sqnorm{ \eta_t \cV_{t} + (1 - \eta_t) \cW_t }  } \notag \\
    & \leq &  (1 - \eta_t)V_{t-1} + 2 \eta_t^2 \Exp{ \sqnorm{  \cV_{t} }  } + 2 \Exp{ \sqnorm{ \cW_t } } \notag \\
    & \leq &  (1 - \eta_t) V_{t-1} + 2\sigma_g^2 \eta_t^2  + 12 (2 L_g^2 + \sigma_h^2 + D_h^2 \g^{2 H})\,   \Exp{\sqnorm{ \theta_t - \theta_{t-1} } } + 24 D_g^2 \g^{2 H} , \notag 
\end{eqnarray}
where the third identity follows from observing that the conditional expectation of the random variable~$\eta_t \cV_t + (1-\eta_t) \cW_t$ w.r.t.~$\theta_{t-1}$ is equal to zero and using the tower property of the conditional expectation, the first inequality uses the assumption~$\eta_t \le 1$ and the last inequality utilizes the variance bounds established at the beginning of the proof of this lemma.
\end{proof}

Our next step is to combine the results of Lemma~\ref{le:SGD_descent} and Lemma~\ref{le:g_upd_rec_Hess_Aid_trunc}. For this purpose we use the sequence~$(\Lambda_t)$ defined in~\eqref{eq:notations-harpg-3} as a Lyapunov function. Notice that it combines the quantities~$\delta_t$ and~$V_t$ with a suitable weighting sequence~$(\lambda_t)$ (to be defined in the next result). For convenience, we recall here the definition of the sequence~$(\Lambda_t)$: 
\begin{equation*}
    \Lambda_{t} \eqdef \delta_t + \lambda_{t-1} V_t\,.
\end{equation*}

\begin{lemma}\label{le:key_recursion_Hess_Aid}
Let Assumptions~\ref{hyp:parameterization-reg}, \ref{hyp:fisher-non-degenerate} and \ref{hyp:tranf-compatib-fun-approx} be satisfied. Set~$\eta_t =  \fr{5 }{t+5}$ and~$\gamma_t \leq  \g_0 \eta_t^{\nfr{1}{2}} $. Set $\lambda_t = \lambda_0 \eta_t^{-\nfr{1}{2}}$ with $\lambda_0 = 4 \g_0 $ and~$\g_0 \leq \fr{1}{ 8 \sqrt{3} ( L_g + \sigma_h + D_h \g^{ 2 H} ) } $. Then for any integer~$t \geq 1$ either $\delta_t  \leq \fr{\sqrt{2} \varepsilon^{\prime}}{\sqrt{\mu}}$ or 
\begin{eqnarray}
    \Lambda_{t+1} \leq  \rb{ 1 - \fr{  \eta_t }{ 5 }  } \Lambda_t +  \rb{ \fr{1}{\mu \g_0}    + 2 \lambda_0 \sigma_g^2 } \eta_t^{\nfr{3}{2}} + 24 \lambda_0 \eta_t^{-\nfr{1}{2}} D_g^2 \g^{2 {H}}  . \notag
\end{eqnarray}
\end{lemma}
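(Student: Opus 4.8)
The plan is to run a Lyapunov argument on $\Lambda_t = \delta_t + \lambda_{t-1}V_t$ by combining the two recursions already at hand: the ascent-like bound \eqref{eq:SGD_descent2} of Lemma~\ref{le:SGD_descent}, which controls $\delta_{t+1}$ through a $\delta_t^2$ decrease together with $V_t$ and $R_t = \Exp{\sqnorm{\theta_{t+1}-\theta_t}}$ terms, and the variance recursion of Lemma~\ref{le:g_upd_rec_Hess_Aid_trunc}, which controls $V_{t+1}$ through $V_t$ and $R_t$. Writing $\Lambda_{t+1} = \delta_{t+1} + \lambda_t V_{t+1}$ and adding $\lambda_t$ times the variance recursion to \eqref{eq:SGD_descent2}, I get a bound whose right-hand side splits into a $\delta_t$ part, a $V_t$ part, an $R_t$ part, and additive noise of order $\eta_t^{3/2}$ and $\gamma^{2H}$.

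Three reductions then produce the stated contraction. First, the $R_t$ coefficient is $12\lambda_t(2L_g^2 + \sigma_h^2 + D_h^2\gamma^{2H}) - \tfrac{1}{4\gamma_t}$; substituting $\lambda_t = 4\gamma_0\eta_t^{-1/2}$ and $\gamma_t = \gamma_0\eta_t^{1/2}$ reduces its sign to the condition $192\,\gamma_0^2(2L_g^2+\sigma_h^2+D_h^2\gamma^{2H}) \leq 1$, which holds under the stated bound on $\gamma_0$, so the entire $R_t$ contribution is non-positive and can be dropped. Second, the $V_t$ coefficient becomes $\gamma_t + \lambda_t(1-\eta_{t+1})$, and with $\eta_t = \tfrac{5}{t+5}$ a direct computation shows $\gamma_t + \lambda_t(1-\eta_{t+1}) \leq (1-\tfrac{\eta_t}{5})\lambda_{t-1}$, i.e. the $V_t$ part already contracts at the target rate into the $\lambda_{t-1}V_t$ entry of $\Lambda_t$.

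The crucial step is linearizing the quadratic decrease $-\tfrac{\gamma_t\mu}{2}\delta_t^2$. Here the dichotomy is used to absorb the bias: in the branch $\delta_t > \tfrac{\sqrt{2}\varepsilon^{\prime}}{\sqrt{\mu}}$ one has $\tfrac{(\varepsilon^{\prime})^2\gamma_t}{2} \leq \tfrac{\gamma_t\mu}{4}\delta_t^2$, so the $\varepsilon^{\prime}$ term in \eqref{eq:SGD_descent2} is swallowed and leaves $-\tfrac{\gamma_t\mu}{4}\delta_t^2$. Rather than lower-bounding $\delta_t$ (which fails because the step-sizes are small), I complete the square: for every $\delta_t \geq 0$, $-\tfrac{\gamma_t\mu}{4}\delta_t^2 \leq -\tfrac{\eta_t}{5}\delta_t + \tfrac{\eta_t^2}{25\gamma_t\mu}$, and since $\gamma_t = \gamma_0\eta_t^{1/2}$ the residual equals $\tfrac{\eta_t^{3/2}}{25\gamma_0\mu}$, which is precisely the $\tfrac{1}{\mu\gamma_0}\eta_t^{3/2}$ noise term. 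This yields the $(1-\tfrac{\eta_t}{5})\delta_t$ decrease for the $\delta_t$ part, and together with the $V_t$ contraction gives $(1-\tfrac{\eta_t}{5})\Lambda_t$. Collecting the remaining additive terms $\tfrac{\eta_t^{3/2}}{25\gamma_0\mu}$, $2\lambda_t\sigma_g^2\eta_{t+1}^2 \leq 2\lambda_0\sigma_g^2\eta_t^{3/2}$, and the $D_g^2\gamma^{2H}$ contributions (all bounded by $24\lambda_0\eta_t^{-1/2}D_g^2\gamma^{2H}$) gives the claimed recursion.

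The main obstacle is exactly this completing-the-square linearization of the quadratic gradient-domination decrease: unlike the normalized methods, where Lemma~\ref{lem:descent-with-grad-dom} already delivers a linear-in-$\delta_t$ decrease, here \eqref{eq:SGD_descent2} only provides a $\delta_t^2$ term, and the naive attempt to convert it into $\tfrac{\eta_t}{5}\delta_t$ by bounding $\delta_t$ from below fails since the induced threshold $\tfrac{4\eta_t}{5\gamma_t\mu}$ need not be controlled by $\tfrac{\sqrt{2}\varepsilon^{\prime}}{\sqrt{\mu}}$. The square-completion bypasses this at the cost of the $\eta_t^{3/2}$ residual, and the delicate part is checking that this residual, together with the $V_t$-coefficient inequality, holds with the right constants under the explicit schedule $\eta_t = \tfrac{5}{t+5}$, $\gamma_t = \gamma_0\eta_t^{1/2}$, $\lambda_t = 4\gamma_0\eta_t^{-1/2}$; the $R_t$-coefficient sign check and the $\gamma^{2H}$ bookkeeping are then routine.
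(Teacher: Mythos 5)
Your proof is correct and shares the paper's overall architecture---the same Lyapunov function $\Lambda_t = \delta_t + \lambda_{t-1}V_t$, the same two ingredients (Lemma~\ref{le:SGD_descent} and Lemma~\ref{le:g_upd_rec_Hess_Aid_trunc}), the same sign check that kills the $R_t$ terms, and the same use of the dichotomy to absorb $\frac{(\varepsilon^{\prime})^2\gamma_t}{2}$ into $\frac{\gamma_t\mu}{4}\delta_t^2$---but you handle the crucial linearization step by a genuinely different argument. The paper substitutes $\delta_t = \Lambda_t - \lambda_{t-1}V_t$ into $-\frac{\gamma_t\mu}{4}\delta_t^2$, expands, and maximizes the resulting concave quadratic in $V_t$ via $-ax^2+cx \leq \frac{c^2}{4a}$; the $\Lambda_t^2$ terms cancel and what remains is the linear decrease $-\frac{5}{6}\frac{\gamma_t}{\lambda_{t-1}}\Lambda_t$ plus the residual $\frac{25}{36}\frac{\gamma_t}{\mu\lambda_{t-1}^2}$. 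You instead decouple the two coordinates of $\Lambda_t$: you complete the square directly in $\delta_t$, giving $-\frac{\gamma_t\mu}{4}\delta_t^2 \leq -\frac{\eta_t}{5}\delta_t + \frac{\eta_t^2}{25\gamma_t\mu}$, and separately verify the schedule inequality $\gamma_t+\lambda_t(1-\eta_{t+1}) \leq (1-\frac{\eta_t}{5})\lambda_{t-1}$, so that the $\delta$-part and the $V$-part each contract at rate $1-\frac{\eta_t}{5}$ into the corresponding entry of $\Lambda_t$. Both are Young-type linearizations and yield the same constants up to slack, but yours is more elementary and transparent (no cross term, no cancellation of a $\Lambda_t^2$ term), at the cost of one extra inequality in $t$ to check; that inequality does hold for $\eta_t=\frac{5}{t+5}$, $\lambda_t=4\gamma_0\eta_t^{-1/2}$, $\gamma_t=\gamma_0\eta_t^{1/2}$, and is in fact weaker than the paper's corresponding requirement that the $V_t$ coefficient of $\Lambda_{t+1}-\Lambda_t$ be at most $-\frac{5}{6}\gamma_t$.

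Two bookkeeping caveats, both at the same level of slack as the paper's own proof. First, your collection of truncation terms actually produces the coefficient $\left(24+\frac{1}{4}\right)\lambda_0\eta_t^{-1/2}D_g^2\gamma^{2H}$ rather than $24\lambda_0\eta_t^{-1/2}D_g^2\gamma^{2H}$, since the descent inequality contributes an extra $\gamma_t D_g^2\gamma^{2H}\leq\frac{\lambda_0}{4}\eta_t^{-1/2}D_g^2\gamma^{2H}$; the paper silently drops this term altogether, so this is a shared constant-level imprecision, not a flaw of your route. Second, your residual $\frac{\eta_t^2}{25\gamma_t\mu}$ is bounded by $\frac{\eta_t^{3/2}}{25\gamma_0\mu}$ only when $\gamma_t=\gamma_0\eta_t^{1/2}$ holds with equality (it grows as $\gamma_t$ shrinks), whereas the lemma is stated with $\gamma_t\leq\gamma_0\eta_t^{1/2}$; the paper's proof has the mirror-image defect (its contraction $-\frac{5}{6}\frac{\gamma_t}{\lambda_{t-1}}\Lambda_t$ weakens as $\gamma_t$ shrinks), and since Theorem~\ref{thm:STORM_H} invokes the lemma with $\gamma_t=\gamma_0\eta_t^{1/2}$ exactly, neither issue matters in context.
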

\begin{proof}
By Lemma~\ref{le:SGD_descent}, we have
\begin{eqnarray}\label{eq:STORMH_descent}
    \delta_{t+1} - \delta_{t} \leq - \fr{\g_t \mu}{2} \delta_t^2 + {\gamma_t} V_t - \fr{1}{4 \gamma_t} R_t + \fr{\rb{\varepsilon^{\prime}}^2 \g_t}{2} + {\g_t} D_g^2 \g^{2H}  .
\end{eqnarray}
Lemma~\ref{le:g_upd_rec_Hess_Aid_trunc} provides the following inequality: 
\begin{eqnarray}\label{eq:STORMH_Vt}
    V_{t+1} - V_{t} \leq - \eta_{t+1} V_{t} + 2\sigma_g^2 \eta_{t+1}^2  + 12 (2 L_g^2 + \sigma_h^2 + D_h^2 \g^{2 H})  R_t + 24 D_g^2 \g^{2 {H}} .   
\end{eqnarray}
Summing up \eqref{eq:STORMH_descent} with a $\lambda_t = \lambda_0 \eta_t^{-\nfr{1}{2}}$ multiple of \eqref{eq:STORMH_Vt}, we obtain
%$\lambda_0 \leq \rb{48 (2 L_g^2 + \sigma_h^2 + D_h^2 \g^{2 H}) \g_0}^{-1} $
\begin{eqnarray}
    \Lambda_{t+1} - \Lambda_{t} & = & \delta_{t+1} - \delta_t + \lambda_{t} (V_{t+1} - V_t) + (\lambda_{t} - \lambda_{t-1}) V_t  \notag\\
     &\leq& - \fr{\g_t \mu}{2} \delta_t^2 + \rb{ \fr{\gamma_t}{2} +  (\lambda_t - \lambda_{t-1}) - \lambda_t \eta_{t+1}}  V_t + 2 \lambda_t \sigma_g^2 \eta_{t+1}^2 \notag \\
     && \qquad - \fr{1}{4 \gamma_t} R_t + 12\lambda_t (2 L_g^2 + \sigma_h^2 + D_h^2 \g^{2 H})  R_t + \fr{\g_t \rb{\varepsilon^{\prime}}^2}{2} + 24 \lambda_t D_g^2 \g^{2 {H}}   \notag \\
    &\overset{(i)}{\leq}& - \fr{\g_t \mu}{2}  \delta_t^2 + \rb{ \fr{\gamma_t}{2} +  (\lambda_t - \lambda_{t-1})  -\lambda_t \eta_{t+1}}  V_t + 2\lambda_t \sigma_g^2 \eta_{t+1}^2  + \fr{\g_t \rb{\varepsilon^{\prime}}^2}{2} \notag  + 24 \lambda_t D_g^2 \g^{2 {H}}  \\
    &\overset{(ii)}{\leq}& - \fr{\g_t \mu}{2}  \delta_t^2  -  \fr{5}{6} \gamma_t  V_t + 2\lambda_t \sigma_g^2 \eta_{t+1}^2  + \fr{\g_t \rb{\varepsilon^{\prime}}^2}{2} + 24 \lambda_t D_g^2 \g^{2 {H}} \notag \\
    &\overset{(iii)}{\leq}& - \fr{\g_t \mu}{4}  \delta_t^2  - \fr{5}{6} \gamma_t  V_t + 2\lambda_t \sigma_g^2 \eta_{t+1}^2 + 24 \lambda_t D_g^2 \g^{2 {H}} \notag \\
\end{eqnarray}
where $(i)$ follows by  $\lambda_0 = 4\gamma_0 \leq \rb{48 (2 L_g^2 + \sigma_h^2 + D_h^2 \g^{2 H}) \g_0}^{-1} $. The choice $\g_0 = \fr{\lambda_0 }{4}$ and the bound $\lambda_t - \lambda_{t-1} \leq \fr{\lambda_0}{2}\eta_{t}^{\nfr{1}{2}}$ implies $$\fr{\gamma_t}{2} +  (\lambda_t - \lambda_{t-1}) - \lambda_t \eta_{t+1} \leq \fr{\gamma_t}{2} + \fr{\lambda_0}{2}\eta_{t}^{\nfr{1}{2}}  -  \fr{5}{6} \lambda_0 \leq -\fr{5}{6} \g_t,$$
thus $(ii)$ holds. The last step $(iii)$ follows by the assumption that $\delta_t  \geq \fr{\sqrt{2} \varepsilon^{\prime}}{\sqrt{\mu}}$.

Noticing that  $\delta_t = \Lambda_t - \lambda_{t-1} V_t$, we have 

\begin{eqnarray}
     \Lambda_{t+1} - \Lambda_{t} &{\leq}& - \fr{\g_t \mu}{4}  \rb{ \Lambda_t - \lambda_{t-1} V_t }^2  - \fr{5}{6} \gamma_t V_t + 2\lambda_t \sigma_g^2 \eta_{t+1}^2  + 24 \lambda_t D_g^2 \g^{2 {H}} \notag \\
    &\leq&  - \fr{\g_t \mu}{4}  \Lambda_{t}^2 - \fr{\g_t \mu \lambda_{t-1}^2}{4}  V_{t}^2 + \fr{\g_t
    }{2}\rb{\mu \lambda_{t-1} \Lambda_t - \fr{5}{3} }  V_t + 2\lambda_t \sigma_g^2 \eta_{t+1}^2 + 24 \lambda_t D_g^2 \g^{2 {H}} \notag \\
    &\overset{(i)}{\leq}&  - \fr{\g_t \mu}{4}  \Lambda_{t}^2 + \fr{\gamma_t^2 }{4} \rb{\mu \lambda_{t-1} \Lambda_t - \fr{5}{3} }^2  \fr{1}{\mu \g_t \lambda_{t-1}^2} + 2\lambda_t \sigma_g^2 \eta_{t+1}^2  + 24 \lambda_t D_g^2 \g^{2 {H}} \notag \\
    &=& - \fr{\g_t \mu}{4}  \Lambda_{t}^2 + \rb{\mu \lambda_{t-1} \Lambda_t - \fr{5}{3} }^2   \fr{\g_t}{4 \mu \lambda_{t-1}^2} + 2\lambda_0 \sigma_g^2 \eta_{t+1}^{\nfr{3}{2}}  + 24 \lambda_t D_g^2 \g^{2 {H}}   \notag \\
     &{=}&   - \fr{5}{6} \fr{  \g_t }{ \lambda_{t-1}}  \Lambda_t + \fr{25}{36} \fr{\gamma_t}{ \mu \lambda_{t-1}^2}   + 2\lambda_0 \sigma_g^2 \eta_{t+1}^{\nfr{3}{2}}  + 24 \lambda_0 \eta_t^{-\nfr{1}{2}} D_g^2 \g^{2 {H}}  \notag \\
     &{=}&   - \fr{ 5  }{24 } \eta_t \Lambda_t + \fr{25 }{ 576 } \fr{1}{\mu \g_0}  \eta_t^{\nfr{3}{2}}  + 2 \lambda_0 \sigma_g^2 \eta_{t+1}^{\nfr{3}{2}}  + 24 \lambda_0 \eta_t^{-\nfr{1}{2}} D_g^2 \g^{2 {H}}  \notag \\
     &{\leq}&   - \fr{  \eta_t }{5 }  \Lambda_t +  \rb{ \fr{1}{\mu \g_0}    + 2 \lambda_0 \sigma_g^2 } \eta_t^{\nfr{3}{2}} + 24 \lambda_0 \eta_t^{-\nfr{1}{2}} D_g^2 \g^{2 {H}} ,  \notag 
\end{eqnarray}
   where in $(i)$ we use $- a x^2 + c x \leq \fr{c^2}{4 a}$ with $x = V_t$, $a =  \fr{\g_t \mu \lambda_{t-1}^2}{4}$, $c = \fr{1}{2}\g_t \rb{   \mu \lambda_{t-1} \Lambda_t - \fr{1}{6} } $.
%Notice that the condition $\lambda_0 \leq \rb{48 (2 L_g^2 + \sigma_h^2 + D_h^2 \g^{2 H}) \g_0}^{-1}  $ is satisfied together with $\gamma_0 \leq \fr{\lambda_0}{4}$ if we choose $\g_0 \leq \fr{1}{ 4\sqrt{6} ( L_g^2 + \sigma_h^2 + D_h^2 \g^{2 H})  } \leq \fr{1}{\sqrt{48} \sqrt{2L^2 + \sigma_h^2}} $. \ak{double check and rewrite}

\end{proof}

\noindent\textbf{End of the proof of Theorem~\ref{thm:STORM_H} (and hence Theorem~\ref{thm:HARPG}).}
    By Lemma~\ref{le:key_recursion_Hess_Aid}, we have that either $\delta_t \le \frac{\sqrt{2} \epsilon^\prime}{\sqrt{\mu}}$ or
    \begin{eqnarray}
    \Lambda_{t+1} \leq  \rb{ 1 - \fr{  \eta_t }{5 }  } \Lambda_t +   \rb{ \fr{1}{\mu \g_0}    + 8 \g_0 \sigma_g^2 } \eta_t^{\nfr{3}{2}} + 24 \lambda_0 \eta_t^{-\nfr{1}{2}} D_g^2 \g^{2 {H}}   . \notag
\end{eqnarray}
    Using Lemma~\ref{le:aux_rec0} with $t_0 = 0$, $\tau = 5$, $r_t = \Lambda_t$, $\alpha_t = \eta_t$, $a = \fr{2}{5}$, $\beta_t = c_1 \eta_t^{\nfr{3}{2}} + c_2\eta_t^{-\nfr{1}{2}} $, $c_1 = \rb{ \fr{1}{\mu \g_0}    + 8 \g_0 \sigma_g^2 }$, $c_2 = 24 \lambda_0  D_g^2 \g^{2 {H}} $ we have for every integer~$T \geq 1$
\begin{eqnarray}
\Lambda_{T} &\leq& \fr{\Lambda_0}{(T+4)^2} + \fr{\sum_{t=0}^{T-1} (c_1 \eta_t^{\nfr{3}{2}} + c_2\eta_t^{-\nfr{1}{2}}  ) (t+5)^2}{(T+4)^2} \notag \\
&\leq& \cO \rb{ \fr{\Lambda_0}{(T+4)^2} + \fr{4\sum_{t=0}^{T-1} c_1 (t+5)^{\nfr{1}{2}} + c_2 (t+5)^{\nfr{5}{2}}  }{(T+4)^2} } \notag \\
&\leq& \cO\rb{ \fr{\Lambda_0}{(T+4)^2} + \fr{4 c_1}{(T+4)^{\nfr{1}{2}} } + 4 c_2 (T+4)^{\nfr{3}{2}} } . 
\end{eqnarray}

Selecting $\g_0 = \min\cb{ \fr{1}{ 8 \sqrt{3} ( L_g + \sigma_g + D_h \g^{2 H} ) } , \fr{1}{ 2 \sigma_g \sqrt{2\mu}   }  } $, we have $c_1 \leq \fr{ 8\sqrt{3} (L_g + \sigma_h + D_h \g^{2 H} ) }{\mu} + \fr{2\sqrt{2} \sigma_g }{\sqrt{\mu}} $. %, which concludes the proof.  

%The value $g_0 = (\sqrt{2} \sigma_g \mu)^{-1} $ is minimizing $c_0$ and gives us that $c_0 = 2\sqrt{2}  $

Since $\log x \le x-1$ for $x > 0$ we have
\begin{eqnarray*}
    \g^{2 H } = \g^{4 \log (T + 4) (1 - \g)^{-1} } \le \g^{-2 \frac{\log (T + 4)}{ \log \g} } = \fr{1}{(T+4)^2}
\end{eqnarray*}
and thus $c_2 \le 24 \lambda_0 D_g^2 \fr{1}{(T+4)^2}$.

%\leq 4\sqrt{6}\cdot \fr{L_g + \sigma_h + \sigma_g \sqrt{\mu} + D_h}{\mu} 

Combining all the results together we obtain
$$
\delta_T \le \max \Big\{\Lambda_T,\frac{\sqrt{2} \epsilon^\prime}{\sqrt{\mu}} \Big \} \le \cO \rb{\fr{\Lambda_0}{(T+4)^2} + \fr{ \sigma_g \sqrt{\mu}+ L_g + \sigma_h  }{(T+4)^{\nfr{1}{2}} }  + \frac{\lambda_0 D_g^2}{(T+4)^{\nfr{5}{2}}}} + \frac{\sqrt{2} \varepsilon^\prime}{\sqrt{\mu}}.
$$

\newpage
\section{Proof of Theorem~\ref{thm:N-HARPG} (\algname{N-HARPG})}
\label{sec:nstormhess_app}
We start by restating Theorem~\ref{thm:N-HARPG} in more details. 

\begin{theorem}\label{thm:NSTORM_Hess_trunc}
    Let Assumptions~\ref{hyp:parameterization-reg}, \ref{hyp:fisher-non-degenerate} and~\ref{hyp:tranf-compatib-fun-approx} hold.
    Set~$\gamma_t = \fr{3}{\sqrt{2\mu} (t+2) } = \fr{6 M_g}{\mu_F(t+2)}$ and~$\eta_t = \fr{2}{t+2} $, $H = \fr{3}{2} \rb{1-\g}^{-1}{\log(T + 1)}$. Then for every integer~$T \geq 1$, the output~$\theta_T$ of \algname{N-HARPG} (see Algorithm~\ref{alg:(N)-HARPG}) satisfies
    \begin{eqnarray}
        J^* - \Exp{ J(\theta_T) } &\leq& \cO\rb{ \fr{ J^* - J(\theta_0) } {(T+1)^2} + \fr{ D_h }{ \mu (T+1)^{\nfr{3}{2}} }+ \fr{ D_g }{ T+1 } +   \fr{ \sigma_g \sqrt{ \mu} + L_g }{\mu( T+1) } +  \fr{  \sigma_g \sqrt{\mu} + L_g + \sigma_h }{\mu (T+1)^{\nfr{1}{2}}}  }  + \fr{ \varepsilon^{\prime} }{\sqrt{2\mu} } \notag,
    \end{eqnarray}
    where~$\sigma_g, \sigma_h, L_g$ are defined in Proposition~\ref{prop:stoch-grad-and-J} and Lemma~\ref{lem:2nd-order-smoothness} respectively, $D_g$ and $D_h$ are defined in Lemma~\ref{le:trunc_grad_hess}, $\varepsilon^{\prime}$ and $\mu$ are defined in Lemma~\ref{lem:relaxed-w-grad-dom}.
\end{theorem}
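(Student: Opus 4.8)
The plan is to combine three ingredients that are already in place: the normalized ascent-like lemma reinforced with the relaxed weak gradient dominance (Lemma~\ref{lem:descent-with-grad-dom}), the variance recursion for the Hessian-aided momentum estimator (Lemma~\ref{le:g_upd_rec_Hess_Aid_trunc}), and the auxiliary recursion-solving lemma (Lemma~\ref{le:aux_rec0}). The structural observation specific to \algname{N-HARPG} is that the normalized update $\theta_{t+1}=\theta_t+\gamma_t d_t/\|d_t\|$ forces $\|\theta_{t+1}-\theta_t\|=\gamma_t$ exactly whenever $d_t\neq 0$, so the step-size quantity $R_t\eqdef\Exp{\|\theta_{t+1}-\theta_t\|^2}=\gamma_t^2$ is \emph{deterministic}. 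This decouples the variance recursion from the function-value recursion and makes the argument simpler than for \algname{HARPG}: no Lyapunov function combining $\delta_t$ and $V_t$ is needed. Note also that Assumption~\ref{hyp:lipschitz-hessian} is never invoked, since the uniform sampling in the algorithm already makes $v_t$ an unbiased estimator of $\nabla J_H(\theta_t)-\nabla J_H(\theta_{t-1})$, which is exactly what forces the error term $\mathcal{W}_t$ in Lemma~\ref{le:g_upd_rec_Hess_Aid_trunc} to be mean-zero.

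First I would invoke Lemma~\ref{lem:descent-with-grad-dom}, which is stated for an arbitrary direction sequence $(d_t)$ driven by a normalized step and already folds in Lemma~\ref{lem:relaxed-w-grad-dom}. Writing $\delta_t=J^\star-\Exp{J(\theta_t)}$, this yields
\begin{equation*}
\delta_{t+1}\leq\Bigl(1-\tfrac{\sqrt{2\mu}\,\gamma_t}{3}\Bigr)\delta_t+\tfrac{8\gamma_t}{3}\,\Exp{\|d_t-\nabla J_H(\theta_t)\|}+\tfrac{L_g\gamma_t^2}{2}+\tfrac{\varepsilon'\gamma_t}{3}+\tfrac{4}{3}\gamma_t D_g\gamma^H.
\end{equation*}
With $\gamma_t=\tfrac{6M_g}{\mu_F(t+2)}=\tfrac{6}{\sqrt{2\mu}(t+2)}$ (recall $\sqrt{2\mu}=\mu_F/M_g$), the contraction coefficient becomes $\tfrac{\sqrt{2\mu}\gamma_t}{3}=\tfrac{2}{t+2}$. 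To handle the first-moment error I would pass to the second moment by Jensen, $\Exp{\|d_t-\nabla J_H(\theta_t)\|}\leq\sqrt{V_t}$ with $V_t\eqdef\Exp{\|d_t-\nabla J_H(\theta_t)\|^2}$, and then solve the variance recursion of Lemma~\ref{le:g_upd_rec_Hess_Aid_trunc}. Substituting $R_t=\gamma_t^2=\mathcal{O}(\eta_t^2/\mu)$ there gives a closed recursion $V_{t+1}\leq(1-\eta_{t+1})V_t+\mathcal{O}\bigl((\sigma_g^2+(L_g^2+\sigma_h^2)/\mu)\eta_t^2\bigr)+24D_g^2\gamma^{2H}$, and applying Lemma~\ref{le:aux_rec0} with $\eta_t=\tfrac{2}{t+2}$ yields $V_t=\mathcal{O}\bigl((\sigma_g^2+(L_g^2+\sigma_h^2)/\mu)\eta_t\bigr)$ up to $\gamma^{2H}$-terms, hence $\sqrt{V_t}=\mathcal{O}\bigl((\sigma_g+(L_g+\sigma_h)/\sqrt{\mu})\,\eta_t^{1/2}\bigr)$.

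It then remains to insert this estimate into the function-value recursion and resolve it. Since $\gamma_t\sqrt{V_t}=\mathcal{O}(t^{-3/2})$, $\gamma_t^2=\mathcal{O}(t^{-2})$ and $\varepsilon'\gamma_t=\mathcal{O}(\varepsilon't^{-1})$, I arrive at
\begin{equation*}
\delta_{t+1}\leq\Bigl(1-\tfrac{2}{t+2}\Bigr)\delta_t+\mathcal{O}\bigl(\gamma_t\sqrt{V_t}+\gamma_t^2+\varepsilon'\gamma_t+\gamma_t D_g\gamma^H\bigr),
\end{equation*}
which I would close with Lemma~\ref{le:aux_rec0} using $\tau=2$, giving $\delta_T\leq\tfrac{\delta_0}{(T+1)^2}+\tfrac{\sum_{t=0}^{T-1}\beta_t(t+2)^2}{(T+1)^2}$. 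Summing the weighted source terms produces the $(T+1)^{-1/2}$ variance rate (from $\gamma_t\sqrt{V_t}$), the $(T+1)^{-1}$ smoothness rate (from $\gamma_t^2$), and a constant bias $\mathcal{O}(\varepsilon'/\sqrt{\mu})$ (from $\varepsilon'\gamma_t$), which matches the claimed bound after recalling $\varepsilon'=\tfrac{\mu_F\sqrt{\varepsilon_{\text{bias}}}}{M_g(1-\gamma)}$, so that $\varepsilon'/\sqrt{2\mu}=\sqrt{\varepsilon_{\text{bias}}}/(1-\gamma)$. The truncation contributions are absorbed via $\gamma^H\leq e^{-H(1-\gamma)}=(T+1)^{-3/2}$ for the prescribed $H=\tfrac{3}{2}(1-\gamma)^{-1}\log(T+1)$, so every $\gamma^H$- and $\gamma^{2H}$-term decays at rate $(T+1)^{-3/2}$ or faster. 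The main obstacle I anticipate is the careful bookkeeping of the two nested recursions under time-varying $\gamma_t,\eta_t$: verifying that the weighted sums $\sum_t\beta_t(t+2)^2$ yield exactly the advertised powers of $T$, and tracking the dependence of every constant on $\mu$, $\sigma_g$, $\sigma_h$ and $L_g$ so the final bound takes the stated form.
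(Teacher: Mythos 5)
Your proposal is correct and follows essentially the same route as the paper's own proof: Lemma~\ref{lem:descent-with-grad-dom} plus Jensen to get $V_t^{\nicefrac{1}{2}}$, the variance recursion of Lemma~\ref{le:g_upd_rec_Hess_Aid_trunc} closed by the deterministic $R_t=\gamma_t^2$ from normalization, two applications of Lemma~\ref{le:aux_rec0}, and the choice $H=\tfrac{3}{2}(1-\gamma)^{-1}\log(T+1)$ to kill the truncation terms. Your structural remarks (no Lyapunov coupling needed unlike \algname{HARPG}, and no use of Assumption~\ref{hyp:lipschitz-hessian}) also match the paper exactly.
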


Similarly to the proof of Theorem~\ref{thm:N-PG-IGT} for \algname{N-PG-IGT}, our starting point is the ascent-like result provided by Lemma~\ref{lem:descent-with-grad-dom} which holds independently of the update direction~$(d_t)$. 
In view of the bound~(\ref{eq:NSGD_descent2}), we need to control the error term
$$
V_t = \mathbb E[\|d_t - \nabla J_H(\theta_t)\|^2]\,.
$$ 
%in the following lemma. 
Recall that we have already provided a bound for this quantity in Lemma~\ref{le:g_upd_rec_Hess_Aid_trunc}. 
We are now ready to provide the proof of Theorem~\ref{thm:NSTORM_Hess_trunc}. 

\noindent\textbf{End of the proof of Theorem~\ref{thm:NSTORM_Hess_trunc} (and hence Theorem~\ref{thm:N-HARPG}).}
    Using Lemma~\ref{lem:descent-with-grad-dom} (note that this lemma does not use the specific update rule of the sequence~$(d_t)$ and can be used here) and applying Jensen's inequality for $x \mapsto x^2$, we have
    \begin{eqnarray}
    \label{eq:delta_t-ineq}
    \delta_{t+1} &\leq& \rb{1 - \fr{\sqrt{2 \mu} \g_t}{3} } \delta_t + \fr{8 \g_t }{3} V_t^{\nfr{1}{2}} + \fr{L_g \g_t^2}{2} + \fr{\varepsilon^{\prime}\g_t}{3}  + \fr{4 \g_t D_g}{3 } \g^{{H}} .
    \end{eqnarray}

    Then, using Lemma~\ref{le:g_upd_rec_Hess_Aid_trunc} and observing that~$\|\theta_{t+1} - \theta_t\| = \gamma_t$ as a consequence of the normalized update rule of Algorithm~\ref{alg:(N)-HARPG} (\algname{N-HARPG}), we have 
    \begin{eqnarray}
    V_{t+1} - V_t \leq - \eta_{t+1} V_t + 2\sigma_g^2 \eta_{t+1}^2  + 12 (2 L^2 + \sigma_H^2 + D_h^2 \g^{2 H})   \g_t^2 + 24 D_g^2 \g^{2 H}. 
    \end{eqnarray}
    Applying Lemma~\ref{le:aux_rec0} with $r_t = V_t$, $\al_t = \eta_{t+1}$, $a=1$, $\beta_t = c \cdot\eta_t^2 + \bar{c}$,  $c = 2 \sigma_g^2 + 12 (2 L_g^2 + \sigma_h^2 + D_h^2 \g^{2 H}) \g_0^2$, $\bar{c} = 24 D_g^2 \g^{2 H}$, $\tau = 2, t_ 0 = 0$, we get
    \begin{equation}
    \label{eq:V_t-estimate}
    V_t \leq \fr{V_0}{(t+1)^2} + \fr{ \sum_{\tau = 0}^{t-1} (c \eta_{\tau}^2 + \bar{c}) (\tau+2)^2}{(t+1)^2}
    \leq \sigma_g^2 \eta_t^2 + 2 c \eta_t + 2 \bar{c} \eta_t^{-1}\,.
    \end{equation}
    % \begin{eqnarray}
    % V_t &\leq& \fr{V_0}{(t+1)^2} + \fr{ \sum_{\tau = 0}^{t-1} (c \eta_{\tau}^2 + \bar{c}) (\tau+2)^2}{(t+1)^2} \notag \\
    % &\leq&  \sigma_g^2 \eta_t^2 + 2 c \eta_t + 2 \bar{c} \eta_t^{-1} \notag .
    % \end{eqnarray}
    
    Combining inequalities~\eqref{eq:delta_t-ineq} and~\eqref{eq:V_t-estimate}, we obtain
    \begin{eqnarray}
    \delta_{t+1} &\leq& \rb{1 -  \eta_t  } \delta_t + \fr{8 \g_0 \eta_t }{3} V_t^{\nfr{1}{2}} + \fr{L_g \g_0^2 \eta_t^2}{2} + \fr{\g_0  \varepsilon^{\prime} \eta_t }{3} + \fr{4 \g_t D_g}{3 } \g^{{H}}  \notag \\
    &\leq& \rb{1 -  \eta_t  } \delta_t + \fr{8 \g_0 \eta_t }{3} \rb{ \sigma_g \eta_t  +  \sqrt{2 c} \eta_t^{\nfr{1}{2}} + \sqrt{2 \bar{c}} \eta_t^{-\nfr{1}{2}} }   + \fr{L_g \g_0^2 \eta_t^2}{2} + \fr{\g_0 \varepsilon^{\prime} \eta_t}{3}  + \fr{4 \g_t D_g}{3 } \g^{{H}} \notag \\
    &\leq& \rb{1 -  \eta_t  } \delta_t + \rb{ \fr{8 \g_0 \sigma_g }{3}  +  \fr{L_g \g_0^2 }{2} } \eta_t^2 + \fr{8 \sqrt{2c} \g_0  }{3} \eta_t^{\nfr{3}{2}} + \fr{\g_0 \varepsilon^{\prime}\eta_t }{3} + \fr{16 \sqrt{2 \bar{c}} \g_0  }{3} \eta_t^{\nfr{1}{2}}\notag .
    \end{eqnarray}    
    
    Again, applying Lemma~\ref{le:aux_rec0} with $r_t = \delta_t$, $\al_{t} = \eta_t $, $a = 1$, $\tau = 2$, $t_0 = 0$, $\beta_t = c_1  \eta_{t}^2 +  c_2 \eta_{t}^{\nfr{3}{2}} + c_3 \eta_t + c_4 \eta_t^{\nfr{1}{2}}$,  $c_1 = \rb{ \fr{8 \g_0 \sigma_g }{3}  +  \fr{L_g \g_0^2 }{2} }$, $c_2 = \fr{8 \sqrt{2c} \g_0  }{3} $, $c_3 = \fr{\g_0  \varepsilon^{\prime} }{3}$, $c_4 = \fr{16 \sqrt{2 \bar{c}} \g_0  }{3}$ we get for every integer~$T \geq 1$
    \begin{eqnarray}
        \delta_T &\leq& \fr{ \delta_{0} }{(T+1)^2} + \fr{  \sum_{t=0}^{T-1}\beta_t (t + 2 )^2 }{(T+1)^2} \notag \\
         &=& \fr{ \delta_{0} }{(T+1)^2} + \fr{  c_1 \sum_{t=0}^{T-1}\eta_t^2 (t + 2 )^2 }{(T+1)^2} + \fr{  c_2 \sum_{t=0}^{T-1}\eta_t^{\nfr{3}{2}} (t + 2 )^2 }{(T+1)^2} + \fr{  c_3 \sum_{t=0}^{T-1}\eta_t (t + 2 )^2 }{(T+1)^2}  + \fr{  c_4 \sum_{t=0}^{T-1}\eta_t^{\nfr{1}{2}} (t + 2 )^2 }{(T+1)^2}  \notag \\
         &\leq& \fr{ \delta_{0} }{(T+1)^2} + \fr{  4 c_1}{T+1} + \fr{  4 c_2 }{(T+1)^{\nfr{1}{2}}} + c_3 + 4 c_4 (T+1)^{\nfr{1}{2}}   \notag \\
         &\leq& \fr{ \delta_{0} }{(T+1)^2} + 4 \rb{ \fr{8 \sigma_g }{\sqrt{2 \mu}}  +  \fr{ 9 L_g  }{4 \mu} } \fr{ 1}{T+1} + 64 \rb{\fr{\sigma_g \sqrt{\mu} + 6 L_g + 3\sigma_h  + 3 D_h \g^{ H} }{\mu} } \fr{  1 }{(T+1)^{\nfr{1}{2}}} + \fr{\g_0  \varepsilon^{\prime} }{3} + 4 c_4 (T+1)^{\nfr{1}{2}} \notag \\
         &\leq& \fr{ \delta_{0} }{(T+1)^2} + 32 \cdot  \fr{ \sigma_g \sqrt{ \mu} + L_g }{\mu( T+1) } + 192 \cdot  \fr{  \sigma_g \sqrt{\mu} + L_g + \sigma_h + D_h \g^{ H}}{\mu (T+1)^{\nfr{1}{2}}} + \fr{ \varepsilon^{\prime} }{\sqrt{2\mu} } + 150 \cdot  D_g \g^{H}  (T+1)^{\nfr{1}{2}} . \notag 
    \end{eqnarray}
    The choice $H = \fr{3}{2} \rb{1-\g}^{-1}{\log(T + 1)}$ concludes the proof. 
% \ab{It seems to me that this bound does not give exactly the result of the theorem, the last term would give a term of the order $D_g (1+T)^{-1/2}$ if I am not mistaken.}
% \ilyas{Agree, there is some inconsistency here. The correct choice is $H = \fr{3}{2} \rb{1-\g}^{-1}{\log(T + 1)}$. Fixed here, will need to fix in the main part later. \ab{Ok} }
%%%%%%%%%%%%%%%%%%%%%%%%%%%%%%%%%%%%%%%%%%%%%%%
%%%%%%%%%%%%%%%%%%%%%%%%%%%%%%%%%%%%%%%%%%%%%%%

\newpage
\section{Convergence Analysis of Normalized-Momentum Policy Gradient Method (\algname{N-MPG})}
\label{sec:NSGDM_app}
In this section, we present a global convergence result for a simple Normalized Momentum Policy Gradient algorithm which we name~\algname{N-MPG} (see Algorithm~\ref{alg:N-MPG}). Unlike~\algname{N-PG-IGT} which enjoys a~$\tilde{\mathcal{O}}(\varepsilon^{-2.5})$ global sample complexity (to reach a globally optimal policy up to the bias due to policy parameterization), \algname{N-MPG} does not use a look-ahead step and our analysis only guarantees a~$\tilde{\mathcal{O}}(\varepsilon^{-3})$ global sample complexity for this algorithm.  

% \textbf{\algname{N-MPG} (Algorithm~\ref{alg:N-MPG}).} This algorithm performs a normalized gradient ascent in the policy parameter space following the direction of a normalized moment-based stochastic policy gradient. More precisely, at each time step~$t$, we sample a trajectory~$\tau_t$ following the current policy~$\pi_{\theta_t}$ defined by the parameter~$\theta_t$ and we compute the unbiased stochastic policy gradient~$g(\tau_t, \theta_t)$ defined in \eqref{eq:g-tau-theta}. Then, the direction~$d_t$ is computed as a moving average of the stochastic policy gradient~$g(\tau_t, \theta_t)$ with momentum parameter~$\eta_t$ (step~\ref{n-mpg:step-d-t}). Finally, the algorithm performs a policy parameter update following the normalized direction~$d_t/\|d_t\|$ with a step size~$\gamma_t$ (step~\ref{n-mpg:step-update}).

\begin{algorithm}[H]
        \caption{\algname{N-MPG} (Normalized-Momentum Policy Gradient) }\label{alg:N-MPG}
        \begin{algorithmic}[1]
            \STATE \textbf{Input}: $\theta_0$, $T$, $\cb{\eta_t}_{t\geq 0}$, $\cb{\gamma_t}_{t\geq 0}\,$
            \STATE $\tau_0 \sim p(\cdot|\pi_{\theta_0})$
            \STATE $d_0 = g(\tau_0, \theta_0)$
            \STATE $\theta_1 = \theta_0 + \gamma_0 \frac{d_0}{\|d_0\|}$
            \FOR{$t=1, \ldots, T -  1$}
                \STATE $\tau_t \sim p(\cdot|\pi_{\theta_t})$ 
                \STATE \label{n-mpg:step-d-t} $d_t = (1 - \eta_t) d_{t-1} + \eta_t g(\tau_t, \theta_t)$
                \STATE \label{n-mpg:step-update} $\theta_{t+1} = \theta_t + \gamma_t \frac{d_t}{\|d_t\|}$ 
		    \ENDFOR
		\RETURN $\theta_T$
        \end{algorithmic}
\end{algorithm}

\iffalse
\begin{theorem}[\textbf{Global convergence of \algname{N-MPG}}]
    \label{thm:N-MPG}
    Let Assumptions~\ref{hyp:parameterization-reg} and~\ref{hyp:relaxed-w-grad-dom} hold. 
    Set~$\gamma_t = \fr{6  }{\sqrt{2\mu}(t+2)} = \fr{6 M_g}{\mu_F (t+2)} $ and  $\eta_t = \rb{ \fr{2}{t+2} }^{\nfr{2}{3}}$, $H = \fr{1}{1-\g}{\log(T + 1)}$. Then for every integer~$T \geq 1$, the output~$\theta_T$ of \algname{N-MPG} (Algorithm~\ref{alg:N-MPG}) satisfies
    \begin{eqnarray}
        J^* - \Exp{J(\theta_T)} \leq \cO \rb{ \fr{ J^* - J(\theta_0) }{(T+1)^2} + \fr{ D_g }{ (T+1) } +  \fr{  \sigma_g  +  L_g}{ (T+1)^{\nfr{1}{3}}} } + \fr{\sqrt{\varepsilon_{\text{bias}}}}{1-\gamma} , \notag
    \end{eqnarray}
    %where~$\delta_0 \eqdef J^* - J(\theta_0)$ and
    where~$\sigma_g, L_g$ are defined in Proposition~\ref{prop:stoch-grad-and-J}.
\end{theorem}
\fi

\begin{theorem}\label{thm:N-MPG}
   Let Assumptions~\ref{hyp:parameterization-reg}, \ref{hyp:fisher-non-degenerate} and \ref{hyp:tranf-compatib-fun-approx} hold. 
    Set~$\gamma_t = \fr{6  }{\sqrt{2\mu}( t+2 ) } $, $\eta_t = \rb{ \fr{2}{t+2} }^{\nfr{2}{3}}$, $H = \fr{5}{3} \rb{1-\g}^{-1} {\log(T + 1)}$ . Then for every integer~$T \geq 1$, the output~$\theta_T$ of \algname{N-MPG} (Algorithm~\ref{alg:N-MPG}) satisfies 
    \begin{eqnarray}
        J^* -  \Exp{J(\theta_T)} \leq \cO\rb{ \fr{ J^* -  J(\theta_0) }{(T+1)^2} + \fr{ D_g }{ \sqrt{\mu}(T+1) } +  \fr{ \sigma_g \sqrt{\mu} + L_g  }{\mu (T+1)^{\nfr{1}{3}}} } + \fr{ \varepsilon^{\prime}}{\sqrt{2 \mu}} .
    \end{eqnarray}
    where~$\sigma_g, L_g$ are defined in Proposition~\ref{prop:stoch-grad-and-J}, $D_g$ is defined in Lemma~\ref{le:trunc_grad_hess}, $\varepsilon^{\prime}$ and $\mu$ are defined in Lemma~\ref{lem:relaxed-w-grad-dom}. %Here $\cO(\cdot)$ only hides absolute numerical constants.
\end{theorem}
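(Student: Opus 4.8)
The plan is to reuse the four-step template developed for \algname{N-PG-IGT}, noting that Steps~1 and~2 transfer without any change. Indeed, Lemma~\ref{le:NSGD_descent_trunc} and Lemma~\ref{lem:descent-with-grad-dom} are stated for an \emph{arbitrary} sequence of directions $(d_t)$ driving a normalized update $\theta_{t+1} = \theta_t + \g_t d_t/\norm{d_t}$, so with $\delta_t \eqdef J^\star - \Exp{J(\theta_t)}$ they immediately give
\begin{equation*}
\delta_{t+1} \le \rb{1 - \fr{\sqrt{2\mu}\,\g_t}{3}}\delta_t + \fr{8\g_t}{3}\Exp{\norm{d_t - \nabla J_H(\theta_t)}} + \fr{L_g\g_t^2}{2} + \fr{\varepsilon'\g_t}{3} + \fr{4}{3}\g_t D_g\g^H\,.
\end{equation*}
The only algorithm-specific work is the variance-reduction control of the error $\hat e_t \eqdef d_t - \nabla J_H(\theta_t)$, and it is precisely here that \algname{N-MPG} loses to \algname{N-PG-IGT}.

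For Step~3 I would unroll the \algname{N-MPG} update directly. Since the stochastic gradient is evaluated at $\theta_t$ (no look-ahead), setting $e_t \eqdef g(\tau_t,\theta_t) - \nabla J_H(\theta_t)$ and substituting $d_t = (1-\eta_t)d_{t-1} + \eta_t g(\tau_t,\theta_t)$ yields the clean recursion
\begin{equation*}
\hat e_t = (1-\eta_t)\hat e_{t-1} + \eta_t e_t + (1-\eta_t)S_t, \qquad S_t \eqdef \nabla J_H(\theta_{t-1}) - \nabla J_H(\theta_t)\,.
\end{equation*}
There is no analogue of the look-ahead term $Z_t$ and, crucially, no Hessian correction is available to promote $S_t$ to a second-order Taylor remainder: $S_t$ remains a plain first-order gradient difference. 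Using $L_g$-smoothness of $J$ (Proposition~\ref{prop:stoch-grad-and-J}), the truncation estimate of Lemma~\ref{le:trunc_grad_hess}, and the normalized-step identity $\norm{\theta_t - \theta_{t-1}} = \g_{t-1}$, I obtain $\norm{S_t} \le L_g\g_{t-1} + 2 D_g\g^H = \cO(\g_{t-1})$, in contrast to the $\cO(\g_{t-1}^2)$ bound enjoyed by \algname{N-PG-IGT}. Unrolling with $\zeta_{t+1,T} \eqdef \prod_{\tau=t+1}^{T-1}(1-\eta_{\tau+1})$, taking the expected norm, using $\Exp{\ps{e_{t_1},e_{t_2}}} = 0$ for $t_1\neq t_2$ together with Jensen's inequality, and applying the summation bounds of Lemma~\ref{le:prod_bound} and Lemma~\ref{le:sum_prod_bound1}, I arrive at
\begin{equation*}
\Exp{\norm{\hat e_T}} \le \cO\rb{\sigma_g \eta_T^{\nfr{1}{2}} + L_g\,\g_T\eta_T^{-1} + D_g\g^H\eta_T^{-1}}\,.
\end{equation*}

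With $\eta_t = (2/(t+2))^{\nfr{2}{3}}$ and $\g_t = \Theta(1/(t+2))$ one checks $\eta_T^{\nfr{1}{2}} = \cO(T^{-\nfr{1}{3}})$ and $\g_T\eta_T^{-1} = \cO(T^{-1}T^{\nfr{2}{3}}) = \cO(T^{-\nfr{1}{3}})$, so the error decays only like $T^{-\nfr{1}{3}}$; this rate is exactly the bottleneck that produces the $\wcO(\varepsilon^{-3})$ (instead of $\varepsilon^{-2.5}$) complexity. For Step~4 I would insert this bound into the displayed $\delta_t$-recursion and close it with the technical Lemma~\ref{le:aux_rec0}, taking $\alpha_t = \g_t$, $a = 3/\sqrt{2\mu}$, $\tau = 2$, $t_0 = 0$, and letting $\beta_t$ collect the $\g_t\eta_t^{\nfr{1}{2}}$, $\g_t^2$, $\varepsilon'\g_t$ and $\g_t\g^H$ terms; the resulting $\delta_T \le \fr{\delta_0}{(T+1)^2} + \fr{\sum_t \beta_t(t+2)^2}{(T+1)^2}$ gives the stated rate once the slowest term is read off. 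The choice $H = \fr{5}{3}(1-\g)^{-1}\log(T+1)$ forces $\g^H \le (T+1)^{-\nfr{5}{3}}$ through $\log\g \le \g - 1$, rendering every $\g^H$-weighted contribution lower order.

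The hard part is Step~3, and the point to emphasize is a \emph{negative} one: because \algname{N-MPG} has no look-ahead, no second-order cancellation occurs, so the $\cO(\g_{t-1})$ bound on $\norm{S_t}$ is the best available under these assumptions and the $T^{-\nfr{1}{3}}$ error rate is unavoidable. Everything else is routine bookkeeping through the auxiliary lemmas already established for the look-ahead variant.
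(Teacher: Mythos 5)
Your proposal is correct and follows essentially the same route as the paper's own proof: the same ascent-plus-gradient-dominance recursion from Lemma~\ref{lem:descent-with-grad-dom}, the same unrolled error recursion $\hat e_t = (1-\eta_t)\hat e_{t-1} + \eta_t e_t + (1-\eta_t)S_t$ with the first-order bound $\norm{S_t} \leq L_g\g_{t-1} + 2D_g\g^H$ (this is exactly the paper's Lemma~\ref{le:NSGDM_first_err_bound}), and the same closing step via Lemma~\ref{le:aux_rec0} with identical parameter choices. Your diagnosis that the absence of a look-ahead step caps the error decay at $\cO(T^{-\nfr{1}{3}})$, yielding the $\wcO(\varepsilon^{-3})$ complexity, matches the paper's reasoning as well.
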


The following corollary provides the sample complexity of the algorithm, i.e., the number of observed state-action pairs to achieve a global optimality precision~$\varepsilon > 0$ up to the error floor~$\varepsilon^{\prime}/\sqrt{2 \mu}$ related to the function approximation power of the policy parameterization. 
\begin{corollary}
In the setting of Theorem~\ref{thm:N-MPG}, for every~$\varepsilon >0,$ the output~$\theta_T$ of \algname{N-MPG} satisfies~$J^* - \Exp{J(\theta_T)} \leq \varepsilon + \varepsilon^{\prime}/\sqrt{2 \mu}$ for~$T = \cO(\varepsilon^{-3}),$ i.e., the total sample complexity is of the order~$\tilde{\cO}(\varepsilon^{-3})\,.$
\end{corollary}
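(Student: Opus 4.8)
The plan is to read off the explicit convergence rate furnished by Theorem~\ref{thm:N-MPG} and simply invert the slowest-decaying term. Recall that the theorem gives, under the chosen schedules of $\gamma_t$, $\eta_t$ and truncation horizon $H$,
\begin{equation*}
J^* - \Exp{J(\theta_T)} \leq \cO\rb{ \fr{ J^* -  J(\theta_0) }{(T+1)^2} + \fr{ D_g }{ \sqrt{\mu}(T+1) } +  \fr{ \sigma_g \sqrt{\mu} + L_g  }{\mu (T+1)^{\nfr{1}{3}}} } + \fr{ \varepsilon^{\prime}}{\sqrt{2 \mu}}\,.
\end{equation*}
The error floor $\varepsilon^{\prime}/\sqrt{2\mu}$ is independent of $T$ and is carried through to the final statement unchanged, so it suffices to control the three vanishing terms. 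Among them the term decaying as $(T+1)^{-\nfr{1}{3}}$ is dominant, the remaining two decaying at rates $(T+1)^{-1}$ and $(T+1)^{-2}$.

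First I would argue that it suffices to drive the dominant term below $\varepsilon$. Concretely, choosing $T$ so that $(T+1)^{-\nfr{1}{3}} \leq c\,\varepsilon$ for an appropriate constant $c$ absorbing $\rb{\sigma_g\sqrt\mu + L_g}/\mu$ yields $T + 1 \geq \rb{c/\varepsilon}^{3}$, i.e. $T = \cO(\varepsilon^{-3})$; for any such $T$ the faster-decaying $(T+1)^{-1}$ and $(T+1)^{-2}$ contributions are also $\cO(\varepsilon)$, so their sum is $\cO(\varepsilon)$ and the optimization error is at most $\varepsilon$ up to rescaling of constants. A small point to check here is that $D_g$ itself depends on the horizon $H = \fr{5}{3}\rb{1-\g}^{-1}\log(T+1)$ (see Lemma~\ref{le:trunc_grad_hess}) and hence grows like $\sqrt{\log T}$; since it multiplies the $(T+1)^{-1}$ term, this contribution is $\cO(\sqrt{\log T}\,(T+1)^{-1})$, which is still $o\rb{(T+1)^{-\nfr{1}{3}}}$ and therefore remains subdominant.

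It remains to translate the iteration count $T = \cO(\varepsilon^{-3})$ into a sample (state-action pair) count. Each iteration of \algname{N-MPG} simulates a single trajectory of length $H = \fr{5}{3}\rb{1-\g}^{-1}\log(T+1)$, so the total number of observed transitions is $T\cdot H = \cO\rb{\varepsilon^{-3}\,\rb{1-\g}^{-1}\log(\varepsilon^{-1})}$. Absorbing the logarithmic factor induced by the truncation horizon into the $\wcO$ notation gives the claimed total sample complexity $\wcO(\varepsilon^{-3})$.

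I do not expect any genuine obstacle in this corollary, since it is pure bookkeeping built on top of Theorem~\ref{thm:N-MPG}. The only minor care needed is $(i)$ confirming that the polylogarithmic growth of $D_g$ (and of $H$) in $T$ does not promote a subdominant term to the leading order, and $(ii)$ being explicit that the tilde in $\wcO(\varepsilon^{-3})$ hides precisely the $\log(\varepsilon^{-1})$ factor coming from the horizon $H$ needed to render the trajectory-truncation error negligible, a dependence already baked into the statement of Theorem~\ref{thm:N-MPG}.
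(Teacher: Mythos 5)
Your proposal is correct and follows exactly the argument the paper intends (the paper states this corollary without a written proof, as immediate bookkeeping from Theorem~\ref{thm:N-MPG}): invert the dominant $(T+1)^{-\nfr{1}{3}}$ term to get $T = \cO(\varepsilon^{-3})$, then multiply by the trajectory length $H = \cO\rb{(1-\g)^{-1}\log(T+1)}$ to count observed state-action pairs, absorbing the logarithm into $\wcO$. Your extra check that $D_g = \cO(\sqrt{H}) = \cO(\sqrt{\log T})$ does not promote the subdominant $(T+1)^{-1}$ term is a nice touch and is consistent with the paper's conventions (its $\wcO$ and the footnote to Table~\ref{table:related-works} hide exactly such constants and polylogarithmic factors).
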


\begin{proofof}{Theorem~\ref{thm:N-MPG}}
    Combining Lemma~\ref{lem:descent-with-grad-dom} with Lemma~\ref{le:NSGDM_first_err_bound} we obtain
\begin{eqnarray}
    \delta_{t+1}  &\leq&  \rb{ 1 - \fr{\sqrt{2 \mu} \g_t}{3} } \delta_t + \fr{8 \g_t }{3} \Exp{ \norm{d_t - \nabla J(\theta_t)} } + \fr{L_g \g_t^2}{2} + \fr{\g_t \varepsilon^{\prime}}{3} + \fr{4 \g_t D_g}{3 } \g^{{H}} \notag \\
    &\leq&  \rb{ 1 - \fr{\sqrt{2 \mu} \g_t}{3} } \delta_t + \fr{16 }{3}  \sqrt{ C\rb{\fr{4}{3}, \fr{2}{3}} } \sigma_g  \g_t   \eta_t^{\nfr{1}{2}} + \fr{8 }{3} C\rb{1,  \fr{2}{3}} L_g \g_t^2 \eta_t^{-1} + \fr{L_g \g_t^2}{2} + \fr{\g_t \varepsilon^{\prime}}{3}  + \fr{4 \g_t D_g}{3 } \g^{{H}} \notag \\
    && \qquad + \fr{16}{3} D_g \g^H C\rb{0, \fr{2}{3}} \g_t \eta_t^{-1} \notag \\
    &\leq&  \rb{ 1 - \fr{\sqrt{2 \mu} \g_t}{3} } \delta_t + \fr{16 }{3}  \sqrt{ C\rb{\fr{4}{3}, \fr{2}{3}} } \sigma_g  \g_t   \eta_t^{\nfr{1}{2}} + 8 C\rb{1,  \fr{2}{3}} L_g \g_t^2 \eta_t^{-1} + \fr{\g_t \varepsilon^{\prime}}{3}  + \fr{4 \g_t D_g}{3 } \g^{{H}} \notag \\
    && \qquad + \fr{16}{3} D_g \g^H C\rb{0, \fr{2}{3}} \g_t \eta_t^{-1} \notag .
\end{eqnarray}
Using Lemma~\ref{le:aux_rec0} with $\al_t = \g_t$, $a = \fr{3}{\sqrt{2\mu}}$, $t_0 = 0$, $\tau = 2$, $\beta_t = \fr{16 }{3}  \sqrt{ C\rb{\fr{4}{3}, \fr{2}{3}} } \sigma_g  \g_t   \eta_t^{\nfr{1}{2}} + 8  C\rb{1, \fr{2}{3}} L_g \g_t^2 \eta_t^{-1} + \fr{\g_t \varepsilon^{\prime}}{3}  + \fr{4 \g_t D_g}{3 } \g^{{H}} + \fr{16}{3} D_g \g^H C\rb{0, \fr{2}{3}} \g_t \eta_t^{-1} $ leads to
\begin{eqnarray}
    \delta_T 
    &\leq& \fr{ \delta_0 }{(T+1)^2} +  \fr{  \sum_{t=0}^{T-1} \beta_t (t + 2 )^2 }{(T+1)^2} \notag \\
    &\leq& \fr{ \delta_0 }{(T+1)^2} +  \fr{16 }{3}  \sqrt{ C\rb{\fr{4}{3}, \fr{2}{3}} } \fr{ \sigma_g \g_0}{(T+1)^{\nfr{1}{3}}} +   16  C\rb{1, \fr{2}{3}} \fr{ L_g \g_0^2}{(T+1)^{\nfr{1}{3}}}   + \fr{\g_0 \varepsilon^{\prime}}{3}  + \fr{4 \g_0 D_g}{3 } \g^{{H}} \notag \\
    && \qquad + \fr{16   }{3 }  C\rb{0, \fr{2}{3}}  \g_0  \g^H  D_g (T+2)^{\nfr{2}{3}} \notag .
\end{eqnarray}
Setting~$H = \fr{5}{3} \rb{1-\g}^{-1}{\log(T + 1)}$ concludes the proof.
\end{proofof}

\begin{lemma}\label{le:NSGDM_first_err_bound}
Let Assumption~\ref{hyp:parameterization-reg} be satisfied. Let~$(d_t)$ and~$(\theta_t)$ be the sequences generated by the~\algname{N-MPG} algorithm (see Algorithm~\ref{alg:N-MPG})
with~$\eta_t = \rb{ \fr{2}{t+2} }^{\nfr{2}{3}}$ and~$\gamma_t = \fr{6  }{\sqrt{2\mu}( t+2 ) }$. Then for any integer~$t \geq 0$ it holds 
\begin{eqnarray}
     \Exp{ \norm{d_t - \nabla J_H(\theta_t)} } &\leq&   2 \cdot \sqrt{ C\rb{\fr{4}{3}, \fr{2}{3}} }  \eta_T^{\nfr{1}{2}} \sigma_g   +  C\rb{1, \fr{2}{3}}  \g_T \eta_T^{-1} L_g + 2 D_g \g^H C\rb{0, \fr{2}{3}} \eta_T^{-1} ,
\end{eqnarray}
where $C(p,q)$ for $q \in [0, 1)$, $p > 0 $ is defined in Lemma~\ref{le:sum_prod_bound1}.
\end{lemma}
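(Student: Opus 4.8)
The plan is to mirror the error-control argument of Lemma~\ref{le:NIGT_first_err_bound}, but in the simpler setting where no look-ahead step is present, so that only the first-order smoothness of $J$ is needed. First I would introduce the shorthand $\hat{e}_t \eqdef d_t - \nabla J_H(\theta_t)$ and the martingale-difference noise $e_t \eqdef g(\tau_t,\theta_t) - \nabla J_H(\theta_t)$, which by Proposition~\ref{prop:stoch-grad-and-J} satisfies $\Exp{e_t \mid \mathcal{F}_t} = 0$ and $\Exp{\sqnorm{e_t}} \leq \sigma_g^2$, where $\mathcal{F}_t \eqdef \sigma(\theta_0,\tau_0,\dots,\tau_{t-1})$. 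Substituting the momentum update $d_t = (1-\eta_t)d_{t-1} + \eta_t g(\tau_t,\theta_t)$ and using $d_{t-1} = \hat{e}_{t-1} + \nabla J_H(\theta_{t-1})$ gives the one-line recursion
\[
\hat{e}_t = (1-\eta_t)\hat{e}_{t-1} + \eta_t e_t + (1-\eta_t) S_t, \qquad S_t \eqdef \nabla J_H(\theta_{t-1}) - \nabla J_H(\theta_t).
\]
Note the crucial absence of the $Z_t$ term that appears in the look-ahead analysis of~\algname{N-PG-IGT}: since \algname{N-MPG} evaluates the stochastic gradient at $\theta_t$ rather than at an extrapolated point, only the consecutive-iterate drift $S_t$ survives.

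Second, I would bound the drift term $S_t$ using only first-order regularity. Comparing to its infinite-horizon counterpart $\bar{S}_t \eqdef \nabla J(\theta_{t-1}) - \nabla J(\theta_t)$ and invoking Lemma~\ref{le:trunc_grad_hess} yields $\norm{S_t - \bar{S}_t} \leq 2 D_g \g^H$, while the $L_g$-smoothness of $J$ (Proposition~\ref{prop:stoch-grad-and-J}) together with the normalized-update identity $\norm{\theta_t - \theta_{t-1}} = \gamma_{t-1}$ gives $\norm{\bar{S}_t} \leq L_g \gamma_{t-1}$, so that $\norm{S_t} \leq L_g \gamma_{t-1} + 2 D_g \g^H$. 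Here $S_t$ is only $\cO(\gamma_{t-1})$, i.e. \emph{first order} in the step-size, in contrast to the $\cO(\gamma_{t-1}^2)$ bound obtained for the look-ahead scheme via second-order smoothness; this single change is exactly what degrades the final rate and accounts for the $\gamma_T\eta_T^{-1}$ term (rather than $\gamma_T^2\eta_T^{-2}$) in the conclusion.

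Third, I would unroll the recursion. Writing $\zeta_{t+1,T} \eqdef \prod_{\tau=t+1}^{T-1}(1-\eta_{\tau+1})$ and noting $\hat{e}_0 = e_0$ (because $d_0 = g(\tau_0,\theta_0)$), I obtain
\[
\hat{e}_T = \zeta_{0,T}\hat{e}_0 + \sum_{t=0}^{T-1}\eta_{t+1}\zeta_{t+1,T}\, e_{t+1} + \sum_{t=0}^{T-1}(1-\eta_{t+1})\zeta_{t+1,T}\, S_{t+1}.
\]
Taking norms, applying the triangle inequality and Jensen's inequality, and using the orthogonality $\Exp{\ps{e_{t_1},e_{t_2}}}=0$ for $t_1\neq t_2$ (so that the stochastic sum collapses to $(\sum_t \eta_{t+1}^2\zeta_{t+1,T}^2)^{1/2}\sigma_g$) together with the deterministic bound on $\norm{S_t}$, I reach
\[
\Exp{\norm{\hat{e}_T}} \leq \zeta_{0,T}\sigma_g + \Big(\textstyle\sum_t \eta_{t+1}^2\zeta_{t+1,T}^2\Big)^{\nfr{1}{2}}\!\sigma_g + L_g\sum_t\zeta_{t+1,T}\gamma_t + 2D_g\g^H\sum_t\zeta_{t+1,T}.
\]

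Finally, I would convert each product-of-sum into closed form via the technical Lemmas~\ref{le:prod_bound} and~\ref{le:sum_prod_bound1}, specialized to $\eta_t=\rb{\fr{2}{t+2}}^{\nfr{2}{3}}$ (hence $q=\fr{2}{3}$) and $\gamma_t \propto (t+2)^{-1}$. This bounds $\zeta_{0,T} \lesssim \eta_T$, $\rb{\sum_t\eta_{t+1}^2\zeta_{t+1,T}^2}^{\nfr{1}{2}} \leq \sqrt{C\rb{\fr{4}{3},\fr{2}{3}}}\,\eta_T^{\nfr{1}{2}}$, $\sum_t\zeta_{t+1,T}\gamma_t \leq C\rb{1,\fr{2}{3}}\gamma_T\eta_T^{-1}$, and $\sum_t\zeta_{t+1,T} \leq C\rb{0,\fr{2}{3}}\eta_T^{-1}$; absorbing the subdominant $\zeta_{0,T}\sigma_g \lesssim \eta_T\sigma_g$ into the noise term $\eta_T^{\nfr{1}{2}}\sigma_g$ via the leading factor $2$ then yields the claimed estimate. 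I expect the main obstacle to be purely bookkeeping: matching the step-size and momentum exponents to the hypotheses of Lemma~\ref{le:sum_prod_bound1} so that the three constants $C(p,q)$ emerge with the correct first argument ($p=\fr{4}{3},1,0$ respectively). The probabilistic content, namely the conditional unbiasedness and orthogonality of the increments $e_t$ under the filtration $(\mathcal{F}_t)$, is entirely standard.
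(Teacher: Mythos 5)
Your proposal is correct and is essentially the paper's own proof: you use the same recursion $\hat e_t = (1-\eta_t)\hat e_{t-1} + \eta_t e_t + (1-\eta_t)S_t$ with $S_t = \nabla J_H(\theta_{t-1})-\nabla J_H(\theta_t)$, the same bound $\norm{S_t}\leq L_g\gamma_{t-1}+2D_g\gamma^H$ obtained from Lemma~\ref{le:trunc_grad_hess} together with $L_g$-smoothness and the normalized update $\norm{\theta_t-\theta_{t-1}}=\gamma_{t-1}$, the same unrolling with martingale orthogonality and Jensen's inequality, and the same invocation of Lemmas~\ref{le:prod_bound} and~\ref{le:sum_prod_bound1} yielding the constants $C\rb{\fr{4}{3},\fr{2}{3}}$, $C\rb{1,\fr{2}{3}}$, $C\rb{0,\fr{2}{3}}$ and the absorption of $\zeta_{0,T}\sigma_g\lesssim\eta_T\sigma_g$ into the leading factor $2$. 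The only differences are notational (explicit products versus the $\zeta_{t+1,T}$ shorthand), and your remark contrasting the first-order drift bound here with the second-order one in the \algname{N-PG-IGT} analysis accurately explains the weaker $\gamma_T\eta_T^{-1}$ term.
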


\begin{proof}
%\begin{proofof}{Lemma~\ref{le:NSGD_first_err_bound}}
Define $\hat{e}_t \eqdef d_t - \nabla J_H(\theta_t)$ and~${e}_t \eqdef  g(\tau_t, \theta_t) -\nabla J_H(\theta_t)$. Then the update rule of the sequence~$(d_t)$ implies
\begin{eqnarray}
    \hat{e}_{t} &=& d_{t} - \nabla J_H(\theta_t)  \notag \\
    &=& (1 - \eta_t) d_{t-1} + \eta_t g(\tau_t, \theta_t) - \nabla J_H(\theta_t) \notag \\
    &=& (1 - \eta_t) \rb{\hat{e}_{t-1} + \nabla J_H(\theta_{t-1}) } + \eta_t g(\tau_t, \theta_t) - \nabla J_H(\theta_t)  \notag \\
    &=& (1 - \eta_t) \hat{e}_{t-1}  + \eta_t e_{t} + (1 - \eta_t) \rb{  \nabla J_H(\theta_{t-1}) - \nabla J_H(\theta_{t}) }  \notag .
\end{eqnarray}
Unrolling the recursion and defining $S_t \eqdef \nabla J_H(\theta_{t-1}) - \nabla J_H(\theta_{t})$,
%and using again the notation~$\zeta_{t+1,T} \eqdef \prod_{\tau = t+1}^{T-1} (1 - \eta_{\tau+1})$ (with $\zeta_{T,T} = 1$ and~$\zeta_{0,T} = \prod_{t = 0}^{T-1} (1 - \eta_{t+1})$), 
we have 
%\ab{I realized that in Prop. 1 in the paper, we should probably add an H for the truncation to $\nabla J$ for the stochastic gradient and hessian (this was because we switched at some point to infinite horizon ...) and also here (and in other places) we use smoothness of $J_H$ instead of smoothness of~$J$ but we also need smoothness of $J$ at other places. This would impact the constants ($L_g$ and we would need to introduce another Lipschitz constant for $\nabla J_H$) and hence the dependence on $1-\gamma$.}
\begin{eqnarray}
    \hat{e}_{T} &= & \prod_{t = 0}^{T-1} (1 - \eta_{t+1}) \hat{e}_0 + \sum_{t = 0}^{T-1} \eta_{T-t} e_{T-t} \prod_{\tau = T - t}^{T-1} (1 - \eta_{\tau+1}) + \sum_{t = 0}^{T-1} (1-\eta_{T-t}) S_{T-t} \prod_{\tau = T - t}^{T-1} (1 - \eta_{\tau+1})\,.   \notag \\
\end{eqnarray}
Define for every integer~$t \geq 1$ the $\sigma$-field $\mathcal{F}_{t} \eqdef \sigma(\cb{\theta_0, \tau_0, \dots, \tau_{t-1} })$ where~$\tau_s \sim p(\cdot|\pi_{{\theta}_s})$ for every~$0 \leq s \leq t-1$. Notice that for any integers~$t_2 > t_1 \geq 1$  we have 
% Let $t_2 > t_1 \geq 0$ be two integers, define $\mathcal{F}_{t_2} = \cb{x_0, \xi_0, \dots, \xi_{t_2-1} }$. Notice that for any $t_2 > t_1$  we have 
$$
\Exp{ \langle e_{t_1}, e_{t_2} \rangle } = \Exp{ \Exp{ \langle e_{t_1}, e_{t_2} \rangle | \mathcal{F}_{t_2} } } = \Exp{ \langle e_{t_1}, \Exp{  e_{t_2}  | \mathcal{F}_{t_2} }  \rangle  } = 0.
$$
Then using triangle inequality, taking expectation and applying Jensen's inequality, we get
\begin{eqnarray}
    \Exp{\norm{\hat{e}_{T}}} &\leq & \Exp{\norm{\hat{e}_0}} \prod_{t = 0}^{T-1} (1 - \eta_{t+1})  +  \Exp{\norm{ \sum_{t = 0}^{T-1} \eta_{T-t} e_{T-t} \prod_{\tau = T - t}^{T-1} (1 - \eta_{\tau+1}) }} \notag \\
    &&\qquad +  \Exp{\norm{  \sum_{t = 0}^{T-1} (1-\eta_{T-t}) S_{T-t} \prod_{\tau = T - t}^{T-1} (1 - \eta_{\tau+1})  } } \notag \\
    &\leq & \sigma_g\prod_{t = 0}^{T-1} (1 - \eta_{t+1})  +  \rb{ \Exp{ \sqnorm{ \sum_{t = 0}^{T-1} \eta_{T-t} e_{T-t} \prod_{\tau = T - t}^{T-1} (1 - \eta_{\tau+1}) }}}^{\nfr{1}{2}} \notag \\
    &&\qquad +   \sum_{t = 0}^{T-1} (1-\eta_{T-t}) \Exp{\norm{S_{T-t}}} \prod_{\tau = T  - t}^{T-1} (1 - \eta_{\tau+1})   \notag \\    
    &\overset{(i)}{\leq} & \sigma_g \prod_{t = 0}^{T-1} (1 - \eta_{t+1})   +  \rb{ \sum_{t = 0}^{T-1} \eta_{T-t}^2 \Exp{\sqnorm{ e_{T-t}}} \prod_{\tau = T  - t}^{T-1} (1 - \eta_{\tau+1})^2 }^{\nfr{1}{2}} \notag \\
    &&\qquad +  \rb{ \sum_{t = 0}^{T-1} (1-\eta_{T-t}) \gamma_{T-t-1} \prod_{\tau = T  - t}^{T-1} (1 - \eta_{\tau+1})  } L_g + 2 D_g \g^H \rb{ \sum_{t = 0}^{T-1}   \prod_{\tau = T  - t}^{T-1} (1 - \eta_{\tau+1})  }  \notag \\    
    &\leq & \rb{ \prod_{t = 0}^{T-1} (1 - \eta_{t+1}) } \sigma_g +  \rb{ \sum_{t = 0}^{T-1} \eta_{T-t}^2  \prod_{\tau = T - t}^{T-1} (1 - \eta_{\tau+1})^2 }^{\nfr{1}{2}} \sigma_g  \notag \\
    && \qquad + \rb{ \sum_{t = 0}^{T-1} (1-\eta_{T-t}) \gamma_{T-t- 1}  \prod_{\tau = T  - t}^{T-1} (1 - \eta_{\tau+1}) } L_g + 2 D_g \g^H \rb{ \sum_{t = 0}^{T-1}   \prod_{\tau = T  - t}^{T-1} (1 - \eta_{\tau+1})  } \notag ,   
\end{eqnarray}
where in $(i)$ we utilize smoothness (Proposition~\ref{prop:stoch-grad-and-J}), the fact that $\norm{\theta_{t+1} - \theta_t} = \g_t$ and use independence $\Exp{ \langle e_{t_1}, e_{t_2} \rangle } = 0$ for any $t_1 \neq t_2$. In the last inequality, we use $\Exp{\sqnorm{ e_{T-t}}} \leq \sigma_g^2$.

Further, using Lemma~\ref{le:prod_bound} and \ref{le:sum_prod_bound1} we have
\begin{eqnarray}
    \Exp{\norm{\hat{e}_{T}}}  &\leq & \eta_T \sigma_g +  \sqrt{ C\rb{\fr{4}{3}, \fr{2}{3}} } \eta_T^{\nfr{1}{2}} \sigma_g   +  C\rb{1, \fr{2}{3}} \g_T \eta_T^{-1} L_g + 2 D_g \g^H C\rb{0, \fr{2}{3}} \eta_T^{-1} \notag\\
    &\leq &  2 \cdot \sqrt{ C\rb{\fr{4}{3}, \fr{2}{3}} }  \eta_T^{\nfr{1}{2}} \sigma_g   +  C\rb{1, \fr{2}{3}}  \g_T \eta_T^{-1} L_g  + 2 D_g \g^H C\rb{0, \fr{2}{3}} \eta_T^{-1} \notag ,
\end{eqnarray}
where $C(p,q)$ for $q \in [0, 1)$, $p > 0 $ is defined in Lemma~\ref{le:sum_prod_bound1}.

\end{proof}
%\end{proofof}

\newpage
\section{Technical Lemma}\label{sec:tech_lemmas}

\subsection{Lemma for solving recursions}

The following auxiliary lemma is similar to \citep[Lemma~7]{Unified_SGD_Stich} and is useful to solve our recursions.

\begin{lemma}\label{le:aux_rec0}
Let $a$ be a positive real, $\tau$ a positive integer and let $\cb{r_t}_{t\geq 0}$ be a non-negative sequence satisfying for every integer~$t \geq 0$
$$
r_{t+1} - r_t \leq - a \alpha_t r_t + \beta_t,
$$
where $\cb{\alpha_t}_{t\geq 0}, \cb{\beta_t}_{t\geq 0}$ are non-negative sequences and $ a \alpha_t \leq 1$ for all $t$. Then for $\al_t = \fr{2 }{ a (t+\tau) }$ we have for every integers~$t_0, T \geq 1$
$$
r_T \leq \fr{ (t_0 + \tau-1)^2 r_{t_0} }{(T+\tau-1)^2} + \fr{  \sum_{t=0}^{T-1}\beta_t (t + \tau )^2 }{(T+\tau-1)^2} .
$$
%= \rb{ \fr{ r_0}{ \al_{-1}^2 }  +  c   \sum_{t=0}^{T-1} \fr{\beta_t}{\al_t^2} } \al_{T-1}^2  .
\end{lemma}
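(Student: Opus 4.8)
The plan is to reduce the inequality to a standard weighted-telescoping argument (in the spirit of \citep[Lemma~7]{Unified_SGD_Stich}). First I would rewrite the hypothesis in the multiplicative form $r_{t+1} \leq (1 - a\alpha_t)\, r_t + \beta_t$, which is legitimate since the assumption $a\alpha_t \leq 1$ guarantees $1 - a\alpha_t \geq 0$, so that multiplying by this non-negative factor and using $r_t \geq 0$ preserves inequalities. Substituting the prescribed step-size $\alpha_t = \frac{2}{a(t+\tau)}$ gives $1 - a\alpha_t = \frac{t+\tau-2}{t+\tau}$, turning the recursion into $r_{t+1} \leq \frac{t+\tau-2}{t+\tau}\, r_t + \beta_t$.

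Next I would introduce the weight sequence $w_t \eqdef (t+\tau-1)^2$ and verify the single algebraic identity on which everything rests: $w_{t+1}(1 - a\alpha_t) = (t+\tau)^2 \cdot \frac{t+\tau-2}{t+\tau} = (t+\tau)(t+\tau-2) = (t+\tau-1)^2 - 1 = w_t - 1 \leq w_t$. Multiplying the recursion by $w_{t+1} = (t+\tau)^2 > 0$ and invoking this identity together with $r_t \geq 0$ yields $w_{t+1} r_{t+1} \leq w_t r_t + w_{t+1}\beta_t$, which is a clean telescoping inequality in the quantity $w_t r_t$.

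Finally I would sum this telescoping bound from $t = t_0$ to $t = T-1$, obtaining $w_T r_T \leq w_{t_0} r_{t_0} + \sum_{t=t_0}^{T-1}(t+\tau)^2\beta_t$, and divide through by $w_T = (T+\tau-1)^2 > 0$, recalling that $w_{t_0} = (t_0+\tau-1)^2$. Since the $\beta_t$ are non-negative, the partial sum starting at $t_0$ is bounded by the full sum, $\sum_{t=t_0}^{T-1}(t+\tau)^2\beta_t \leq \sum_{t=0}^{T-1}(t+\tau)^2\beta_t$, which produces exactly the claimed inequality. The argument is essentially mechanical; the only real obstacle is guessing the correct weight $w_t = (t+\tau-1)^2$ (rather than, say, $(t+\tau)^2$ or a product of two consecutive factors) and checking that it makes $w_{t+1}(1-a\alpha_t)$ collapse to $w_t - 1$, so that the telescoping is exact. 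Once that identity is in hand the remainder is bookkeeping on the indices and a single use of the non-negativity of $\beta_t$ to relax the lower summation limit from $t_0$ to $0$.
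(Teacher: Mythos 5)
Your proof is correct and is essentially identical to the paper's: both multiply the recursion $r_{t+1} \leq \frac{t+\tau-2}{t+\tau} r_t + \beta_t$ by $(t+\tau)^2$, use the identity $(t+\tau)(t+\tau-2) = (t+\tau-1)^2 - 1 \leq (t+\tau-1)^2$ to obtain a telescoping inequality in $(t+\tau-1)^2 r_t$, sum from $t_0$ to $T-1$, and divide by $(T+\tau-1)^2$, extending the sum's lower limit to $0$ by non-negativity of $\beta_t$. Your weight notation $w_t = (t+\tau-1)^2$ is merely a cosmetic repackaging of the same argument.
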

\begin{proof}
    Notice that $1 - a \al_t = \fr{t+\tau -2}{t+\tau}$. Then for all $t\geq 0$
    $$
    r_{t+1} \leq \fr{t+\tau -2}{t+\tau} r_t +   \beta_t.
    $$ 
    Multiplying both sides by $(t+ \tau)^2$, we get
    \begin{eqnarray}
    (t+\tau)^2 r_{t+1} &\leq& (t+\tau -2)(t+\tau) r_t +    \beta_t (t+\tau)^2 \notag \\
    &\leq& (t+\tau -1)^2 r_t +   \beta_t (t+\tau)^2 \notag .
    \end{eqnarray}
    By summing this inequality from $t= t_0$ to~$T-1$, we obtain
    $$
    (T+\tau-1)^2 r_T \leq (t_0 + \tau-1)^2 r_{t_0} +  \sum_{t=t_0}^{T-1}\beta_t (t + \tau )^2,
    $$
    $$
     r_T \leq \fr{ (t_0 + \tau-1)^2 r_{t_0} }{(T+\tau-1)^2} + \fr{  \sum_{t=0}^{T-1}\beta_t (t + \tau )^2 }{(T+\tau-1)^2} .
    $$
\end{proof}

\begin{lemma}\label{le:aux_rec3}
Let~$\cb{r_t}_{t\geq 0}$ be a non-negative sequence satisfying for every integer~$t \geq 0$
$$
r_{t+1} - r_t \leq - \alpha_t r_t + c \beta_t,
$$
where $\al_t = \rb{ \fr{2 }{ t+2 } }^{\nfr{2}{3}}$ and~$\cb{\beta_t}_{t\geq 0}$ is a sequence of non-negative reals. Then for every integer~$T$
$$
r_T \leq \fr{ r_{0} }{(T+1)^{\nfr{1}{3}}} + \fr{ c \sum_{t=0}^{T-1}\beta_t (t + 2 )^{\nfr{1}{3}} }{(T+1)^{\nfr{1}{3}}} .
$$
%= \rb{ \fr{ r_0}{ \al_{-1}^2 }  +  c   \sum_{t=0}^{T-1} \fr{\beta_t}{\al_t^2} } \al_{T-1}^2  .
\end{lemma}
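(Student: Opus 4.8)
The plan is to reproduce the weighted telescoping argument of Lemma~\ref{le:aux_rec0}, but with a weight sequence matched to the $\tfrac13$--powers appearing in the target. First I would rewrite the hypothesis $r_{t+1} - r_t \leq -\alpha_t r_t + c\beta_t$ in the equivalent multiplicative form $r_{t+1} \leq (1-\alpha_t) r_t + c\beta_t$. Since $\alpha_0 = (2/2)^{2/3} = 1$ and $\alpha_t = \rb{\fr{2}{t+2}}^{\nfr{2}{3}} \in (0,1)$ for $t \geq 1$, we have $1-\alpha_t \geq 0$ for every $t$, so that all the factors below remain nonnegative and the sign of the inequalities is preserved after multiplying by positive weights.

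The heart of the proof is to introduce the weight $w_t \eqdef (t+1)^{\nfr{1}{3}}$ and the key inequality
$$
(t+2)^{\nfr{1}{3}}\,(1-\alpha_t) \leq (t+1)^{\nfr{1}{3}}, \qquad t \geq 0.
$$
Granting this, I would multiply the recursion by $w_{t+1}=(t+2)^{\nfr{1}{3}}$ and use $r_t \geq 0$ together with the key inequality to obtain the telescoping relation
$$
(t+2)^{\nfr{1}{3}} r_{t+1} \leq (t+1)^{\nfr{1}{3}} r_t + c\,\beta_t\,(t+2)^{\nfr{1}{3}}.
$$
Summing from $t=0$ to $T-1$ collapses the $r$-terms and leaves $(T+1)^{\nfr{1}{3}} r_T \leq r_0 + c\sum_{t=0}^{T-1}\beta_t (t+2)^{\nfr{1}{3}}$, and dividing by $(T+1)^{\nfr{1}{3}}$ gives exactly the claimed bound.

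The main obstacle is the key inequality, which I would reduce to a one-variable estimate. Dividing by $(t+2)^{\nfr{1}{3}}$ and writing $b \eqdef \fr{1}{t+2} \in (0,\tfrac12]$, so that $\fr{t+1}{t+2} = 1-b$ and $\alpha_t = (2b)^{\nfr{2}{3}}$, the inequality is equivalent to $1 - (1-b)^{\nfr{1}{3}} \leq (2b)^{\nfr{2}{3}}$. I would bound the left-hand side by $b$: the function $g(b) \eqdef b - 1 + (1-b)^{\nfr{1}{3}}$ satisfies $g(0)=g(1)=0$, and its derivative $g'(b) = 1 - \tfrac13(1-b)^{-\nfr{2}{3}}$ changes sign exactly once on $(0,1)$ (from $+$ to $-$), so $g$ is nonnegative on $[0,1]$, giving $1 - (1-b)^{\nfr{1}{3}} \leq b$. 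Then I close the chain with the elementary bounds $b \leq b^{\nfr{2}{3}} \leq 2^{\nfr{2}{3}} b^{\nfr{2}{3}} = (2b)^{\nfr{2}{3}}$, valid since $0 \leq b \leq 1$. Combining the two estimates yields the key inequality; no step beyond elementary calculus is required, so I expect the only delicate point to be verifying the single sign change of $g'$.
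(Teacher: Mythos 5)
Your proof is correct and takes essentially the same route as the paper: multiply the recursion by the weight $(t+2)^{1/3}$, establish the key comparison $(t+2)^{1/3}(1-\alpha_t) \leq (t+1)^{1/3}$, and telescope, exactly as in the paper's argument. The only difference is in how that one elementary inequality is verified --- the paper uses $(t+2)^{2/3} - 2^{2/3} \leq t^{2/3}$ together with $t^2 \leq (t+1)(t+2)$, while you substitute $b = \tfrac{1}{t+2}$ and show $1-(1-b)^{1/3} \leq b \leq (2b)^{2/3}$ --- and both verifications are valid (your first step also follows immediately from concavity of $b \mapsto (1-b)^{1/3}$, which lies above its chord $1-b$ on $[0,1]$).
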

\begin{proof}
    Notice that $1 - \al_t = \fr{(t+2)^{\nfr{2}{3}} - 2^{\nfr{2}{3}}}{(t+2)^{\nfr{2}{3}}}$. Then multiplying both sides of the recursion by $(t+ \tau)^{\nfr{1}{3}}$, we get for all $t\geq 0$
    \begin{eqnarray}
    (t+2)^{\nfr{1}{3}} r_{t+1} &\leq& \rb{(t+2)^{\nfr{2}{3}} - 2^{\nfr{2}{3}}} (t+2)^{-\nfr{1}{3}} r_t +   c \beta_t (t+2)^{\nfr{1}{3}} \notag \\
    &\leq& t^{\nfr{2}{3}} (t+2)^{-\nfr{1}{3}} r_t +  c \beta_t (t+2)^{\nfr{1}{3}} \notag \\
    &\leq&  (t+1)^{\nfr{1}{3}} r_t +  c \beta_t (t+2)^{\nfr{1}{3}} \notag ,
    \end{eqnarray}
    where the second inequality holds since $ \rb{(t+2)^{\nfr{2}{3}} - 2^{\nfr{2}{3}}} (t+2)^{-\nfr{1}{3}} = t^{\nfr{2}{3}} (t+2)^{-\nfr{1}{3}} \rb{ \rb{1 + \fr{2}{t}}^{\nfr{2}{3}} - \fr{2^{\nfr{2}{3}}}{t^{\nfr{2}{3}}}  } \leq t^{\nfr{2}{3}} (t+2)^{-\nfr{1}{3}} \rb{ 1 + \fr{4}{3 t} - \fr{2^{\nfr{2}{3}}}{t^{\nfr{2}{3}}}  } \leq t^{\nfr{2}{3}} (t+2)^{-\nfr{1}{3}}$. 
    By telescoping and dividing by~$(T+1)^{\nfr{1}{3}}$, we obtain for every integer~$T$
    % $$
    % (T+1)^{\nfr{1}{3}} r_T \leq r_0 + c \sum_{t=0}^{T-1} \beta_t (t +2 )^{\nfr{1}{3}}\,,
    % $$
    $$
     r_T \leq \fr{ r_0 }{(T+1)^{\nfr{1}{3}}} + \fr{ c \sum_{t=0}^{T-1} \beta_t (t + 2 )^{\nfr{1}{3}} }{(T+1)^{\nfr{1}{3}}} .
    $$
\end{proof}

% \begin{lemma}
%     Let~$\cb{e_t}_{t\geq 0}$, $\cb{\eta_t}_{t\geq 0}$ and~$\cb{\beta_t}_{t\geq 0}$ be sequences of reals satisfying for all integers~$t \geq 0$ 
%     $$
%     e_{t+1} = (1 - \eta_t) e_t + \beta_t.
%     $$
%     Then for all integers~$T \geq 1$ we have
%     $$
%     e_{T} = \prod_{t = 0}^{T-1}(1 - \eta_t) e_0 + \sum_{t = 0}^{T-1} \beta_{T-1 - t} \prod_{\tau = T - t}^{T-1} (1 - \eta_{\tau}).
%     $$
% \end{lemma}

\subsection{Lemma for decreasing step-sizes estimates}

\begin{lemma}\label{le:prod_bound}
   Let $q \in [0,1]$ and let $\eta_t = \rb{ \fr{2}{t+2} }^{q}$ for every integer~$t$. Then for every integer~$t$ and any integer~$T \geq 1$ we have
    \begin{eqnarray}\label{eq:tech1}
    \eta_t (1 - \eta_{t+1}) \leq \eta_{t+1},
    \end{eqnarray}
    $$
    \prod_{t = 0}^{T-1} (1 - \eta_{t+1}) \leq \eta_T\,.
    $$
    % $$
    % \prod_{t = 0}^{T-1} (1 - \eta_t) \leq \eta_T\,.
    % $$
\end{lemma}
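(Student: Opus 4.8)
The plan is to establish the two claims in order, deriving the product bound from the one-step inequality by a short induction.

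First I would prove the pointwise inequality $\eta_t(1-\eta_{t+1}) \le \eta_{t+1}$. Since $\eta_{t+1} > 0$, this is equivalent to $\frac{\eta_t}{\eta_{t+1}} \le 1 + \eta_t$. Computing the ratio directly gives $\frac{\eta_t}{\eta_{t+1}} = \left(\frac{t+3}{t+2}\right)^q = \left(1 + \frac{1}{t+2}\right)^q$. Because $q \in [0,1]$, Bernoulli's inequality (equivalently, concavity of $x \mapsto (1+x)^q$, whose tangent at $0$ is $1+qx$) yields $\left(1 + \frac{1}{t+2}\right)^q \le 1 + \frac{q}{t+2}$. It then remains to check $\frac{q}{t+2} \le \eta_t = \left(\frac{2}{t+2}\right)^q$, i.e. $q(t+2)^{q-1} \le 2^q$. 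Since $q-1 \le 0$ and $t+2 \ge 2$, the map $s \mapsto s^{q-1}$ is decreasing, so $(t+2)^{q-1} \le 2^{q-1}$, whence $q(t+2)^{q-1} \le 2^{q-1} \le 2^q$, which closes this step.

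Next I would obtain the product bound $\prod_{t=0}^{T-1}(1-\eta_{t+1}) \le \eta_T$ by induction on $T \ge 1$. For the base case $T = 1$, note $\eta_0 = (2/2)^q = 1$, so the one-step inequality at $t = 0$ reads $1 \cdot (1 - \eta_1) \le \eta_1$, which is exactly $\prod_{t=0}^{0}(1-\eta_{t+1}) \le \eta_1$. For the inductive step, assuming $\prod_{t=0}^{T-1}(1-\eta_{t+1}) \le \eta_T$ and using $0 \le 1 - \eta_{T+1}$ (valid since $\eta_{T+1} \le 1$), I multiply by $(1-\eta_{T+1})$ and apply the one-step inequality at $t = T$ to get $\prod_{t=0}^{T}(1-\eta_{t+1}) \le \eta_T(1-\eta_{T+1}) \le \eta_{T+1}$.

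The main obstacle is the elementary estimate in the first step: after applying Bernoulli's inequality one must verify $q(t+2)^{q-1} \le 2^q$, which relies on combining the monotonicity of $s \mapsto s^{q-1}$ (decreasing since $q \le 1$) with the bound $q \le 1$. Everything else, including the induction for the product and the nonnegativity of $1-\eta_{t+1}$, is routine once the one-step inequality is in hand.
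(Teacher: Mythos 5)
Your proof is correct and structurally the same as the paper's: both establish the one-step inequality $\eta_t(1-\eta_{t+1}) \leq \eta_{t+1}$ (equivalently $1-\eta_{t+1} \leq \eta_{t+1}/\eta_t$) and then pass to the product, your induction with base case $\eta_0 = 1$ being exactly the paper's telescoping $\prod_{t=0}^{T-1} \eta_{t+1}/\eta_t = \eta_T/\eta_0 = \eta_T$. The only difference lies in the elementary verification of the one-step inequality: the paper simply uses $x^q \geq x$ for $x \in [0,1]$, $q \in [0,1]$ twice, via the chain $1 - \eta_{t+1} \leq 1 - \frac{1}{t+3} = \frac{t+2}{t+3} \leq \left(\frac{t+2}{t+3}\right)^q = \frac{\eta_{t+1}}{\eta_t}$, which is slightly more direct than your Bernoulli estimate followed by the check $q(t+2)^{q-1} \leq 2^q$, but both arguments are valid.
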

\begin{proof}
For every integer~$t$ we have
    \begin{eqnarray}
        1 - \eta_{t+1} &=& 1 - \rb{\fr{2}{t+3}}^{q} 
        \leq 1 - \fr{1}{t+3}  =
        \fr{t+2}{t+3}  \leq 
        \fr{\eta_{t+1}}{\eta_t} \notag .
    \end{eqnarray}
    Using the above result, we can write
    \begin{eqnarray*}
        \prod_{t = 0}^{T-1} (1 - \eta_{t+1}) &\leq&  \prod_{t = 0}^{T-1} \fr{\eta_{t+1}}{\eta_{t}} =  \fr{\eta_{T}}{\eta_{0}} = \eta_T.
    \end{eqnarray*}
\end{proof}

The following lemma is a generalization of \citep[Proposition~B.1]{Gadat_SHB_18} holding for every integer~$T \geq 1$ (instead of only for~$T$ sufficiently large). 
\begin{lemma}\label{le:sum_prod_bound1}
Let $q \in [0, 1)$, $p \geq 0$, $\gamma_0 > 0$ and let $\eta_t =  \rb{ \fr{2}{t+2} }^q$, $\g_t = \g_0 \rb{ \fr{2}{t+2} }^p$ for every integer~$t$. Then for any integers~$t$ and~$T \geq 1$, it holds 
    $$
     \sum_{t = 0}^{T-1} \g_{t} \prod_{\tau = t+1 }^{T-1} (1 - \eta_{\tau}) \leq C \g_T \eta_T^{-1},
    $$
     where $C = C(p,q) \eqdef 2^{p-q} (1-q)^{-1}  t_0 \exp\rb{  2^q (1-q)  t_0^{1-q} }  + 2^{ 2 p + 1 - q} (1-q)^{-2}$ and~$ t_0 \eqdef \max\cb{\rb{ \fr{p}{(1-q)2^q} }^{\fr{1}{1-q}}, 2 \rb{\fr{p-q}{(1-q)^2}}}^{\fr{1}{1-q}} $.
\end{lemma}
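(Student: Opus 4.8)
The plan is to reduce the claim to a one-dimensional integral estimate. Since $\eta_\tau = 2^q(\tau+2)^{-q}\in[0,1]$, the elementary bound $1-x\le e^{-x}$ gives $\prod_{\tau=t+1}^{T-1}(1-\eta_\tau)\le\exp(-\sum_{\tau=t+1}^{T-1}\eta_\tau)$, and because $\tau\mapsto(\tau+2)^{-q}$ is decreasing, a sum-to-integral comparison yields $\sum_{\tau=t+1}^{T-1}\eta_\tau\ge\frac{2^q}{1-q}\big[(T+2)^{1-q}-(t+3)^{1-q}\big]$. Inserting $\gamma_t=\gamma_0 2^p(t+2)^{-p}$ and pulling out the $T$-dependent exponential, the sum is at most $\gamma_0 2^p e^{-\frac{2^q}{1-q}(T+2)^{1-q}}\sum_{t=0}^{T-1}\phi(t)$, where $\phi(s)\eqdef(s+2)^{-p}\exp\big(\frac{2^q}{1-q}(s+3)^{1-q}\big)$. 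As the target $C\gamma_T\eta_T^{-1}$ equals $C\gamma_0 2^{p-q}(T+2)^{q-p}$, everything reduces to $\sum_{t=0}^{T-1}\phi(t)\le \mathrm{const}\cdot(T+2)^{q-p}\exp\big(\frac{2^q}{1-q}(T+2)^{1-q}\big)$; the mismatch between $(T+2)^{1-q}$ and the $(T+3)^{1-q}$ inside $\phi$ is harmless, since by concavity of $x\mapsto x^{1-q}$ the ratio of the two exponentials is at most $e^{\eta_T}\le e$.

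Next I would split at the threshold $t_0$, chosen large enough that (a) $\log\phi$ is non-decreasing on $[t_0,\infty)$, i.e. $2^q(s+2)\ge p(s+3)^q$ there, and (b) the absorption coefficient in Step three is $\le\frac12$; both are of the form $(t_0+2)^{1-q}\ge \mathrm{const}(p,q)$, which is precisely why $t_0$ enters as a power $1/(1-q)$. On the main range $t\ge t_0$, monotonicity of $\phi$ gives $\sum_{\lceil t_0\rceil\le t<T}\phi(t)\le\int_{t_0}^{T}\phi(s)\,ds=:I$. For the early range $t<t_0$ I would bound each term crudely by $\phi(t)\le 2^{-p}\exp\big(\frac{2^q}{1-q}(t_0+3)^{1-q}\big)$; after reinstating $e^{-\frac{2^q}{1-q}(T+2)^{1-q}}$ this contributes at most a constant times $\gamma_0(T+2)^{q-p}$, because $\sup_{x\ge2}x^{p-q}e^{-\frac{2^q}{1-q}x^{1-q}}$ is finite and explicit. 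This is the origin of the first summand of $C$, and it is exactly what makes the estimate valid for all $T\ge1$ (and in particular when $T\le t_0$, where only the early range survives), upgrading the ``large $T$'' statement of Gadat et al.

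The crux is bounding $I$ so that the polynomial weight is anchored at $T$ rather than at $t_0$: the naive $I\le\max_{[t_0,T]}(s+2)^{-p}(s+3)^q\cdot e^{g(T)}$ evaluates the polynomial at the far endpoint and is too lossy. Writing $g(s)\eqdef\frac{2^q}{1-q}(s+3)^{1-q}$, $h(s)\eqdef 2^{-q}(s+2)^{-p}(s+3)^q$ and $G(s)\eqdef h(s)e^{g(s)}$, a direct differentiation gives $G'(s)=\phi(s)+h'(s)e^{g(s)}$, hence $I=G(T)-G(t_0)-\int_{t_0}^T h'(s)e^{g(s)}\,ds$. Now $G(T)=h(T)e^{g(T)}\le(T+2)^{q-p}e^{g(T)}$ already carries the desired local prefactor (using $T+3\le2(T+2)$), $G(t_0)\ge0$ is dropped, and the residual integral is controlled by $|h'(s)|\le h(s)\frac{\max(p,q)}{s+2}$ together with $\frac{h(s)}{s+2}e^{g(s)}\le(s+2)^{q-1}\phi(s)\le(t_0+2)^{q-1}\phi(s)$, giving $\int_{t_0}^T|h'|e^{g}\le\max(p,q)(t_0+2)^{q-1}I$. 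By the choice of $t_0$ this coefficient is $\le\frac12$, so the term is absorbed and $I\le 2(T+2)^{q-p}e^{g(T)}$.

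Finally I would assemble: multiplying $I$ back by $\gamma_0 2^p e^{-\frac{2^q}{1-q}(T+2)^{1-q}}$ and applying the $e^{\eta_T}\le e$ bound turns the main part into a constant multiple of $\gamma_0 2^{p-q}(T+2)^{q-p}=\gamma_T\eta_T^{-1}$, while the early part contributes the first summand of $C$; the two sum-to-integral comparisons each supply a factor $(1-q)^{-1}$, accounting for the $(1-q)^{-2}$. The main obstacle is exactly the self-absorbing estimate of the third paragraph — keeping the polynomial factor pinned at $T$ while integrating an exponential whose local decay rate $g'(s)=2^q(s+3)^{-q}$ degrades with $T$ — and this is what forces both the threshold $t_0$ and the passage to a bound uniform in $T\ge1$.
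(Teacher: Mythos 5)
Your proof follows essentially the same route as the paper's: the $1-x\le e^{-x}$ reduction followed by sum-to-integral comparisons, the split at a threshold $t_0$ past which the integrand is increasing (early range bounded crudely, giving the first summand of $C$), and — crucially — your identity $I = G(T)-G(t_0)-\int_{t_0}^{T} h'(s)e^{g(s)}\,ds$ is exactly the paper's integration by parts, followed by the same self-absorption step where the residual integral reappears with coefficient at most $\tfrac12$ by the choice of $t_0$. The only differences are bookkeeping (your explicit handling of the $(s+2)$ versus $(s+3)$ shifts, and numerical constants of the same form but not identical to the stated $C(p,q)$ and $t_0$), so this is the paper's argument.
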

\begin{proof} We start the proof by writing 
    \begin{eqnarray}
        \sum_{t = 0}^{T-1} \g_{t}  \prod_{\tau = t+1 }^{T-1} (1 - \eta_{\tau}) &\leq& \sum_{t = 0}^{T-1} \g_{t}  \exp\rb{ -\sum_{\tau = t+1 }^{T-1}   \eta_{\tau}}  \notag \\
        &=& \exp\rb{ -\sum_{\tau = 0 }^{T-1}   \eta_{\tau}} \sum_{t = 0}^{T-1} \g_{t} \exp\rb{ \sum_{\tau = 0 }^{t}   \eta_{\tau}}  \notag \\
        &\leq& \exp\rb{ -\sum_{\tau = 0 }^{T-1}   \eta_{\tau}} \sum_{t = 0}^{T-1} \g_{t} \exp\rb{  \int_{0 }^{t}   \eta_{\tau-1} d \tau}  \notag \\
        &\leq& \g_0 2^p \exp\rb{ -\sum_{\tau = 0 }^{T-1}   \eta_{\tau}} \sum_{t = 0}^{T-1} (t+2)^{-p} \exp\rb{  2^q (1-q)  (t+2)^{1-q} }  \notag .
    \end{eqnarray}
    For $t \geq \max\cb{\rb{ \fr{p}{(1-q)  2^q} }^{\fr{1}{1-q}}, 2 \rb{\fr{p-q}{ (1-q)^2}}^{\fr{1}{1-q}}} \eqdef t_0 $ the function $\phi(t) = t^{-p} \exp\rb{  2^q (1-q)  t^{1-q} } $ is increasing, therefore, the corresponding sum is bounded by
    \begin{eqnarray}
    2^{-p} t_0 \exp\rb{  2^q (1-q)  t_0^{1-q} } & + & \int_{t_0}^{T+1} \tau^{-p} \exp\rb{  2^q (1-q)  \tau^{1-q} } d \tau \notag .
    %&\leq& 2^{-p} t_0 + \int_{t_0}^{T+1} \tau^{-p} \exp\rb{ \eta_0 2^q (1-q)  \tau^{1-q} } d \tau \notag .
    \end{eqnarray}
    Notice that integrating by parts, we have
    \begin{eqnarray}
        I &\eqdef& \int_{t_0}^{T+1} \tau^{-p} \exp\rb{ 2^q (1-q)  \tau^{1-q} } d \tau \notag \\
        &=& \int_{t_0}^{T+1}  2^{-q} (1-q)^{-2} \tau^{q-p} d \rb{ \exp\rb{ 2^q (1-q)  \tau^{1-q} } } \notag \\
        &=&  2^{-q} (1-q)^{-2} \exp\rb{ 2^q (1-q)  (T+1)^{1-q} } (T+1)^{q-p} \notag \\
        && \qquad +  2^{-q} (1-q)^{-2} (p-q ) \int_{t_0}^{T+1} \tau^{-p+q-1} \exp\rb{ 2^q (1-q)  \tau^{1-q} } d \tau \notag \\
         &=&  2^{-q} (1-q)^{-2} \exp\rb{ 2^q (1-q)  (T+1)^{1-q} } (T+1)^{q-p} \notag \\
        && \qquad + \ 2^{-q} (1-q)^{-2} (p-q ) t_0^{-(1-q)} \int_{t_0}^{T+1} \tau^{-p} \exp\rb{ 2^q (1-q)  \tau^{1-q} } d \tau \notag \\
        &\leq&  2^{-q} (1-q)^{-2} \exp\rb{  2^q (1-q)  (T+1)^{1-q} } (T+1)^{q-p} + \fr{1}{2} I \notag ,
    \end{eqnarray}
    where the last inequality is satisfied by the choice of $t_0$ when $p \geq q$, and holds trivially for $p \leq q$. Thus,
    $$
    I \leq 2^{1-q} (1-q)^{-2} \exp\rb{  2^q (1-q)  (T+1)^{1-q} } (T+1)^{q-p}.
    $$
    Finally, combining everything together, we have
    \begin{eqnarray}
        \sum_{t = 0}^{T-1} \g_{t}  \prod_{\tau = t+1 }^{T-1} (1 - \eta_{\tau})  &\leq& \g_0 2^p \exp\rb{ -  2^q (1-q)  (T+1)^{1-q} } 2^{-p} t_0 \exp\rb{  2^q (1-q)  t_0^{1-q} } \notag \\
        && \qquad + \g_0  2^{p+1-q} (1-q)^{-2} (T+1)^{q-p}   \notag \\
        &\leq& \g_0 \rb{  2^{-q} (1-q)^{-1}  t_0 \exp\rb{  2^q (1-q)  t_0^{1-q} }  +  2^{p+1-q} (1-q)^{-2} } (T+1)^{q-p} \notag \\
        &\leq& C \g_T \eta_{T}^{-1} \notag ,
    \end{eqnarray}
    where $C = C(p,q) \eqdef 2^{p-q} (1-q)^{-1}  t_0 \exp\rb{  2^q (1-q)  t_0^{1-q} }  + 2^{ 2 p + 1 - q} (1-q)^{-2}$.
\end{proof}

\end{document}